\def\BibTeX{{\rm B\kern-.05em{\sc i\kern-.025em b}\kern-.08em
    T\kern-.1667em\lower.7ex\hbox{E}\kern-.125emX}}
\makeatletter\usepackage{apptools}
\newcommand*{\rom}[1]{\expandafter\@slowromancap\romannumeral #1@}
\newcommand*{\rv}[1]{\MakeUppercase{#1}}
\let\quant\relax
\DeclareMathOperator{\quant}{\mathcal{Q}}
\newcommand{\round}{n}
\DeclarePairedDelimiter\floor{\lfloor}{\rfloor}
\DeclareMathOperator{\EXP}{\mathbb{E}} 
\DeclareMathOperator{\R}{\mathbb{R}}
\let\Z\relax
\DeclareMathOperator{\Z}{\boldsymbol{Z}}
\let\x\relax
\DeclareMathOperator{\x}{\boldsymbol{x}}
\let\y\relax
\DeclareMathOperator{\y}{\boldsymbol{y}}
\let\g\relax
\DeclareMathOperator{\g}{\boldsymbol{g}}
\let\b\relax
\DeclareMathOperator{\b}{\boldsymbol{b}}
\let\c\relax
\DeclareMathOperator{\c}{c} 
\let\q\relax
\DeclareMathOperator{\q}{\boldsymbol{q}}
\let\r\relax
\DeclareMathOperator{\r}{\boldsymbol{r}}
\let\w\relax
\DeclareMathOperator{\w}{\boldsymbol{w}}
\let\z\relax
\DeclareMathOperator{\z}{\boldsymbol{z}}
\let\h\relax
\DeclareMathOperator{\h}{\boldsymbol{h}}
\let\pifunc\relax
\DeclareMathOperator{\pifunc}{\boldsymbol{\pi}}
\let\Pifunc\relax
\DeclareMathOperator{\Pifunc}{\boldsymbol{\Pi}}
\let\pivec\relax
\DeclareMathOperator{\pivec}{\underline{\boldsymbol{\pi}}} 
\let\Pivec\relax
\DeclareMathOperator{\Pivec}{\underline{\boldsymbol{\Pi}}} 
\let\conv\relax
\DeclareMathOperator{\conv}{\textnormal{conv}} 
\let\X\relax
\DeclareMathOperator{\X}{\boldsymbol{X}}
\let\C\relax
\DeclareMathOperator{\C}{C} 
\let\Q\relax
\DeclareMathOperator{\Q}{\boldsymbol{Q}}
\let\F\relax
\DeclareMathOperator{\F}{\mathcal{F}}
\newcommand{\norm}[1]{\left\lVert#1\right\rVert}
\newtheorem{prop}{Proposition}
\newtheorem{lemma}{Lemma}
\newtheorem{claim}{Claim}
\newtheorem{theorem}{Theorem}
\newtheorem{remark}{Remark}
\newtheorem{definition}{Definition}
\newtheorem*{notation}{Notation}
\newtheorem{assumption}{Assumption}
\begin{document}

\title{Network Adaptive Federated Learning: \\
Congestion and Lossy Compression
\thanks{
This material is based upon work of Hegde and de Veciana supported by the National Science Foundation (NSF) under grant No. 2148224 and is supported in part by funds from OUSD R\&E, NIST,  and industry partners as specified in the Resilient \& Intelligent NextG Systems (RINGS) program and the WNCG/6G@UT industrial affiliates. The work of Mokhtari is supported in part by the NSF AI Institute for Future Edge Networks and Distributed Intelligence (AI-EDGE) via NSF grant 2112471, the Machine Learning Lab (MLL) at UT Austin, and the Wireless Networking and Communications Group (WNCG) Industrial Affiliates Program.
}
}

\author{\IEEEauthorblockN{Parikshit Hegde}
\IEEEauthorblockA{\textit{Electrical and Computer Engineering} \\
\textit{The University of Texas at Austin}\\
Austin, Texas, USA \\
hegde@utexas.edu}
\and
\IEEEauthorblockN{Gustavo de Veciana}
\IEEEauthorblockA{\textit{Electrical and Computer Engineering} \\
\textit{The University of Texas at Austin}\\
Austin, Texas, USA \\
gustavo@ece.utexas.edu}
\and
\IEEEauthorblockN{Aryan Mokhtari}
\IEEEauthorblockA{\textit{Electrical and Computer Engineering} \\
\textit{The University of Texas at Austin}\\
Austin, Texas, USA \\
mokhtari@austin.utexas.edu}
}

\maketitle
\thispagestyle{plain}
\pagestyle{plain}

\begin{abstract}
In order to achieve the dual goals of privacy and  learning across distributed data, Federated Learning (FL) systems rely on frequent exchanges of large files (model updates) between a set of clients and the server. As such FL systems are exposed to, or indeed the cause of, congestion across a wide set of network resources. 
Lossy compression can be used to reduce the size of exchanged files and associated delays, at the cost of adding noise to model updates. By judiciously adapting clients' compression to varying network congestion, an FL application can reduce wall clock training time. To that end, we propose a Network Adaptive Compression (NAC-FL) policy, which dynamically varies the client's lossy compression choices to network congestion variations. We prove, under appropriate assumptions, that NAC-FL is asymptotically optimal in terms of directly minimizing the expected wall clock training time. Further, we show via simulation that NAC-FL achieves robust performance improvements with higher gains in settings with positively correlated delays across time. 
\end{abstract}

\begin{IEEEkeywords}
federated learning, rate adaptation, resilience
\end{IEEEkeywords}

\section{Introduction}
\label{sec:introduction}
Communication costs and delays of sending model 
updates from clients to the server are a known bottleneck in training Federated Learning (FL) systems \cite{mcmahan2017communication, li2020federated, kairouz2021advances, konevcny2016federated}. Two common techniques used to alleviate this issue are: 1) \emph{local computations} where clients perform several local steps before communicating with the server, and 2) \emph{(lossy) compression} where clients communicate quantized/compressed updates to the server. The eventual end goal of these approaches is to minimize the \emph{wall clock time} for convergence of the training algorithm (hereon referred to as FL algorithm) by reducing the amount of data communicated from clients to the server.

To this end, several works have analyzed the relationship between compression, local computations and the number of rounds needed by FL algorithms to converge \cite{alistarh2017qsgd, basu2019qsparse, stich2018local, stich2018sparsified, ghadimi2013stochastic, reisizadeh2020fedpaq, haddadpour2020federated, sattler2019robust}. However, these works ignore the impact of changing network congestion, \emph{both across clients and across time}, on the wall clock time to converge. For instance, a client may choose a high degree of compression when it sees high network congestion, while a client seeing lower congestion may  \emph{opportunistically} choose not to  compress as much.  In this work, we ask the following question: ``Can we design a policy that adapts the amount of compression across clients and time according to changing network conditions in order to optimize the wall clock time?'' To answer this question, we first characterize the impact that changing network congestion and an adaptive compression policy have on the wall clock time. Second, we propose the Network Adaptive Compression for Federated Learning (NAC-FL) policy that judiciously chooses compression levels based on network congestion to minimize the wall clock time. Crucially, NAC-FL does not rely on the prior knowledge of the distribution of network congestion. Instead, it learns to optimize its compression decisions on-the-fly based on the congestion seen by clients.

NAC-FL works in an opportunistic manner by adaptively choosing high or low amounts of compression across clients and across time based on low or high network congestion.  It further considers two effects that compression has on the wall clock time. First, with increasing amount of compression, the FL algorithm would require more communication rounds to converge, as the server receives ``noisier'', and hence inaccurate, model updates. Second, with higher degrees of compression, the duration of each round would decrease as a smaller model update is communicated. Since the wall clock time is affected by both the number of rounds and the duration of each round (it is effectively the product of the two quantities), a policy for choosing compression levels should consider these jointly. Fig.~\ref{fig:motivation} provides an illustrative visualization. Hence, NAC-FL aims to find the ``sweet-spot'' compression levels over time varying network congestion.




\begin{figure}[t!]
	\centering
	\includegraphics[width=.45\textwidth]{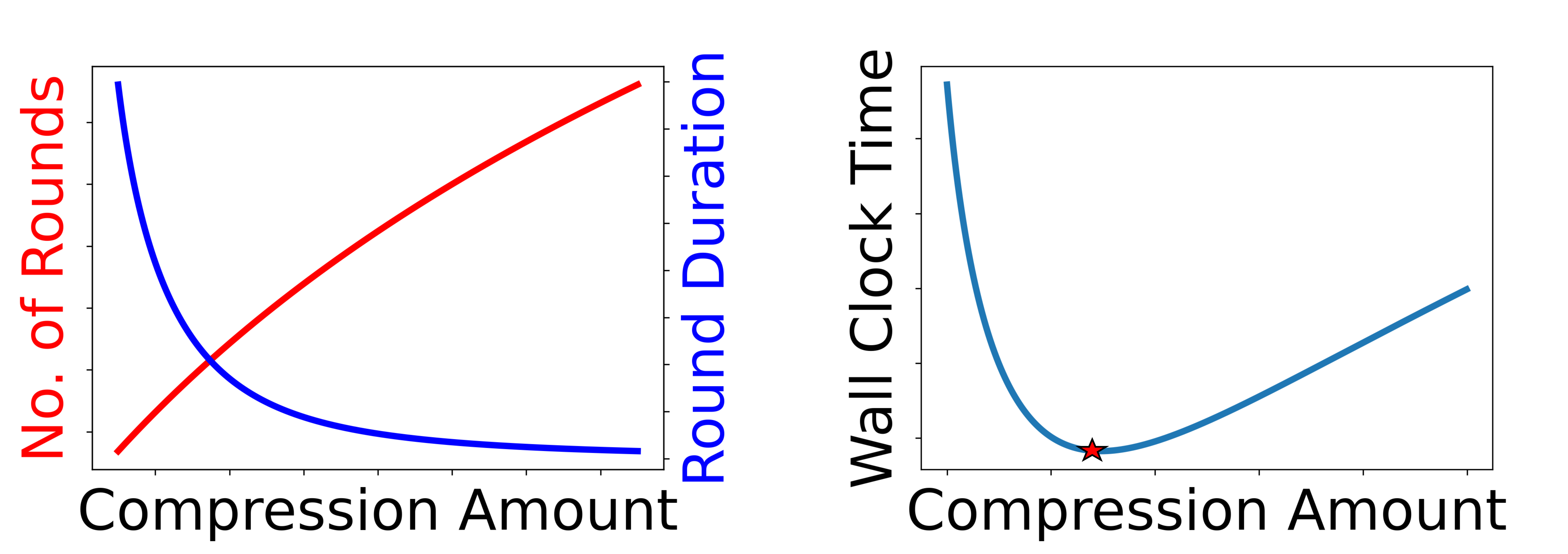}
	\caption{Illustration of how compression level affects round duration, number of rounds and wall clock time.}
	\label{fig:motivation}
	\vspace{-2mm}
\end{figure}


\vspace{2mm}
\noindent \textbf{Contributions.} 
 We propose a general framework to study how to best adapt compression of client model updates. Assuming a stationary Markov model for the underlying network congestion state, we show that optimal policies are state dependent and characterize the expected stopping time for  convergence to a predefined model accuracy.

This characterization provides the underlying insight for our proposed NAC-FL policy. To our knowledge this is the first policy for compression that adapts to the stochastic variations of the underlying network congestion process. 
Under appropriate assumptions on the FL algorithm and underlying network congestion and delays, we provide a proof of the asymptotic optimality of NAC-FL in terms of minimizing the mean time until the convergence criterion is met. To our knowledge this is the first  theoretical result of this type.

 Finally we demonstrate via simulation the performance gains and robustness of NAC-FL vs alternative fixed compression and/or fixed error per round policies. We explore a variety of models for network congestion, finding that in particular NAC-FL excels in the practically relevant setting where the network sees  positive correlations in the network congestion accross time.



\subsection{Related Work}
\label{ssec:related-work}
%
%
Perhaps the most related papers to our work are \cite{bouzinis2022wireless, zhang2020federated, sun2020adaptive,jhunjhunwala2021adaptive, honig2022dadaquant} which explored adaptive compression schemes for FL settings. In \cite{bouzinis2022wireless, zhang2020federated, sun2020adaptive} the authors propose adapting compression to network congestion. In these works, the algorithm to select compression has a per round budget, e.g., a budget on delay (or compression error) per round, and possibly heterogeneous compression levels are chosen across the clients based on the current network congestion to minimize the compression error (or delay) for the round. These works exploit the diversity of network congestion across the clients, but \emph{not across time}. Meanwhile \cite{jhunjhunwala2021adaptive, honig2022dadaquant} have observed that using a higher amount of compression at the start and gradually reducing compression through time may improve the wall clock time. Our proposed policy is novel in that it learns how to best exploit congestion variation across clients and across time to optimize the wall clock time.

Another line of work that aims to reduce the overall communication cost is client sampling \cite{chen2020optimal, chen2020joint, ribero2020communication, perazzone2022communication}, where at each round, only a subset of the clients are chosen to participate. The authors of \cite{perazzone2022communication} propose a client sampling and power control policy that adapts to time varying channels of clients sharing a single base station and optimizes a proxy for wall clock time. Overall we veiw  lossy compression and client sampling as alternative approaches geared at addressing communication bottlenecks. A study of how to  jointly adapt lossy compression and client sampling to changing network congestion is left for future work.

\subsection{Paper Organization}

In Section \ref{sec:model_setup}, we introduce our system model. In Section \ref{sec:adapt_quant}, we propose our NAC-FL algorithm for lossy compression and under appropriate assumptions prove it is asymptotically optimal. Section \ref{sec:simulations} is devoted to exploring the method for several problem instances and in particular for various models for the underlying network congestion in terms of correlation across clients and time. In Section \ref{sec:flac_practice}, we comment on the practical aspects of estimating the file transfer delay of clients when deploying NAC-FL. Finally,  in Section \ref{sec:conclusion}, we close the paper with some concluding remarks.

\begin{notation}
Throughout this document, unless otherwise mentioned, quantities denoted with lowercase letters  correspond to constants, and uppercase letters correspond to random variables. Bold symbols correspond to vectors, and regular symbols indicate scalars. For example, $\x$ is a constant vector, $\X$ is a random vector, $x$ is a constant scalar, and $X$ is a random scalar/variable. Lowercase and uppercase forms of the same letter correspond to constant and random variable notions of the same quantity. A sequence indexed by $\round$ will be denoted as $(x^{\round})_{\round}$.
\end{notation}

\section{Model Setup}
\label{sec:model_setup}

In this paper, we focus on a federated architecture, where a server aims to find a model that performs well with respect to the data of a group of $m$ clients, and in which nodes exchange updates based on their local information with only  the server. 
More precisely, suppose the loss function associated with client $j$ is denoted by $f_j(\w)$, where $\w$ represents the weights of the model, e.g., the weights of a neural network. The goal is to find the model that minimizes the average loss across clients
\begin{equation*}
    f(\w) = \frac{1}{m} \sum_{j=1}^m f_j(\w).
\end{equation*}

The FL algorithm proceeds in rounds. Each round consists of two stages: (i) a local stage in which each client updates the most recent model received from the server  via gradient-based updates based on its local data and (ii), an  aggregation stage in which the server updates the global model by aggregating the local updates received from clients. 
We shall let  $\w^{\round}$ denote the global model at the server at round $\round$. Further, we let  $\tau^{\round}$ denote the total number of local steps (such as gradient steps)  that each client performs at round $\round$, and let $\w_j^{\tau^{\round},\round}$ denote the resulting local model at node $j$. 

In this paper, we are interested in the setting where 
each client sends a compressed version $\tilde{\g}_{Qj}^{\round}$
of its local model $\w_j^{\tau^{\round}, \round}$ 
 to the server using a lossy compression algorithm (or, \emph{compressor}) $\quant(\cdot, \cdot)$. The compressor accepts a vector $\x$ and a parameter $q \in [0, q_{\max}]$ indicating the amount of compression with the maximum value being $q_{\max}$, and outputs $\hat{\X} = \quant(\x, q)$ which is an approximation of $\x$, but has a decreased file size as compared to $\x$. $\hat{\X}$ is capitalized to highlight that the compressor $\quant(\cdot, \cdot)$ may use randomness in its compression.  We shall denote by $q_j^{\round}$ the compression parameter used by client $j$ for round $\round$, and denote by $\q^{\round} \triangleq (q_j^{\round})_{j=1}^m$ the vector of parameters used by the clients in  round $\round$. After receiving updates from all the clients, the server aggregates the compressed local models and produces the next global model $\w^{\round+1}$.

Given a target tolerance $\varepsilon>0$, the goal of FL is to generate a sequence of global models until on some round $r_\varepsilon$ 
a prespecified {\em stopping criterion} is first met, e.g., the norm of the global loss function gradient is at most $\epsilon $, i.e.,  $\norm{\nabla f(\w^{r_\varepsilon})}  \leq \varepsilon$.
Our goal is to find an adaptive compression policy that dynamically adapts to the possibly time varying 
network states such that the target accuracy is achieved with a minimum overall wall clock time. 


We  formalize the \emph{overall wall clock time}, denoted $t_{\varepsilon}$, required  to achieve the target accuracy as follows. 
The duration $d( \tau^{\round}, \q^{\round}, \rv{\c}^{\round} )$ of a round $\round$ depends on:
\begin{itemize}
\item $\tau^{\round}$, the number of local computations performed by clients which we will assume to be the same across clients;
\item $\q^{\round}$, an $m$ dimensional vector of clients' compression parameters ;
\item $\c^{\round}$, the \emph{network state} which models network congestion and is assumed to be an element of a finite set ${\cal C}.$  
\end{itemize}
This allows some flexibility, e.g., the round's duration may depend on the max delay to deliver the model update from clients to server, or the sum of the delays if clients share a single resource in TDMA (Time Division Multiple Access) fashion. 
The total wall clock time is then given by 
\begin{equation}
t_{\varepsilon} = \sum_{\round=1}^{r_\varepsilon} d\left( \tau^{\round}, \q^{\round}, \c^{\round} \right).
\label{eq:wall_clock_time}
\end{equation}


In our system model, the sequence of network states, $( \c^{\round} )_\round$, is assumed to be exogenous, i.e.,  not be controlled by the server or the clients nor their choices 
of  $\tau^{\round}$ and $\q^{\round}.$
The delays associated with the server multicasting global models to clients are assumed to be exogeneous i.e., can not be controlled by the FL server/clients and are not compressed, whence are not part of the model.
Still, in this work, based on observing  the network state we will devise an approach to select the clients compression parameters so as to minimize the wall clock time. As discussed in Section \ref{sec:flac_practice},
in practice observation of the network state may involve light weight in band estimation by probing delays of message bits as they are delivered in a given round.

A policy for choosing compression parameters is called a \emph{state dependent stationary policy} if it can be expressed as a function $\pifunc$ of the current network state, i.e., $\q^{\round} = \pifunc(\c^{\round})$ for all rounds $\round \in \mathbb{N}$. Such a policy will be referred to simply as policy $\pifunc$. Given a random sequence of network states, $(\C^{\round})_{\round}$, let $R_{\varepsilon}^{\pifunc}$ be the random variable denoting the minimum number of rounds needed to converge to error tolerance $\varepsilon$ under policy $\pifunc$. Then, the corresponding wall clock time, denoted by $T_{\varepsilon}^{\pifunc}$, is expressed as, 
\[
T_{\varepsilon}^{\pifunc} = \sum_{\round=1}^{R_{\varepsilon}^{\pifunc}} d\left(\tau^{\round}, \pifunc\left(\C^{\round}\right), \C^{\round}\right).
\]



\section{Network Adaptive Compression for Federated Learning (NAC-FL)}
\label{sec:adapt_quant}

Our approach to designing a policy to adapt 
clients' compression parameters centers on recognizing that
 the expected wall clock time can be broken up into a product 
 of the \emph{expected number of rounds}  $r_\varepsilon$ needed to converge to an error tolerance $\varepsilon$ and the \emph{average duration of each round} $\hat{d}$.  We start by characterizing the relationship between $r_\varepsilon$, $\hat{d}$, and the sequence of selected quantization parameters $(\q^{\round})_{\round}$ and network states $(\c^{\round})_{\round}$ for a given FL algorithm.

Below we state an assumption relating $r_\varepsilon$ to $(\q^{\round})_{\round}.$ 
To that end we introduce a strictly increasing, continuous and bounded  scalar function 
$h_{\varepsilon}: [0, q_{\max}] \rightarrow \mathbb{R}^+$ of compression parameter $q$ and an associated vector function
$\boldsymbol{h}_{\varepsilon}: [0, q_{\max}]^{\times m} \rightarrow \mathbb{R}_+^m$
of a compression vector $\boldsymbol{q}$ where 
$\boldsymbol{h}_{\varepsilon,j}(\q) = h_{\varepsilon}(q_j).$ 
We let  $\boldsymbol{h}_{\varepsilon}^{-1}$ denote the inverse of this vector function. 
\begin{assumption}
For a given FL algorithm there exists a strictly increasing, continuous and bounded function $h_{\varepsilon}(q)$ and norm $\norm{\cdot}$ such that
given a sequence of compression parameters $\left(\q^{\round}\right)_{\round}$, the FL algorithm has reached the desired error tolerance $\varepsilon$ by round $r$ if and only if,
$$
r > \frac{1}{r} \sum_{\round=1}^{r} \norm{ \boldsymbol{h}_\varepsilon \left( \q^{\round} \right)}
$$
for some norm. 
\label{ass:q_suff_stat}
\end{assumption}


The above assumption implies that the expected number of rounds can be written as the average of an increasing function of the sequence of selected quantization parameters. Roughly speaking, given a lossy compression policy that generates a stationary parameter sequence  $( \Q^{\round})_{\round}$ whose marginal distribution is the same as the random vector $\Q$, the above criterion means that the expected number of rounds to converge to the desired error tolerance is approximately $\EXP[
 \norm{ \boldsymbol{h}_\varepsilon \left( \Q \right)}].$

This is a general condition that is motivated by convergence bounds of several FL algorithms with compression, including, \cite{alistarh2017qsgd, stich2018sparsified, haddadpour2020federated}. In particular in Appendix \ref{sec:flaq}, we illustrate this motivation for an extension of the FedCOM algorithm  \cite{haddadpour2020federated}, when $q$ indicates the normalized-variance introduced by the compressor, the scalar function is $h_\varepsilon(q) = O(\sqrt{q+1}/\varepsilon)$ and the norm is the $L_2$ norm.  


\begin{assumption}
For any sequence of compression parameters $\left(\q^{\round}\right)_{\round}$ the minimum number of rounds $r_\varepsilon$ needed to converge to an error tolerance $\varepsilon$ is such that $r_\varepsilon = \Theta(1/poly(\varepsilon))$, where $poly(\varepsilon)$ denotes a polynomial of $\varepsilon$.
\label{ass:asymptotic_convergence}
\end{assumption}
Assumption \ref{ass:asymptotic_convergence} is a natural assumption for gradient based optimization algorithms. It requires the convergence guarantees for the FL algorithm to be such that when we require a more accurate solution, the number of required communication rounds grows.  This argument indeed holds even for the settings that we do not exchange compressed signals.

We also make the following additional assumption about the round duration function. 

\begin{assumption}
  Given a network state $\c$, number of local computations $\tau$, and compression parameters $\q = \boldsymbol{h}_{\varepsilon}^{-1}(\r)$, the round duration $d\left(\tau, \q, \c \right) = d\left(\tau, \boldsymbol{h}_{\varepsilon}^{-1}(\r), \c \right)$ is bounded, convex in $\r$ and decreasing in every coordinate of $\r$. 
  \label{ass:comm_delay}
\end{assumption}
In Assumption \ref{ass:comm_delay}, the round duration being decreasing in $\r$ is reasonable, since we expect more rounds as well as smaller file sizes with higher compression. The convexity is motivated by the notion that we use a ``good compressor'' as illustrated next. Consulting Fig. \ref{fig:delay_convexity}, for any two parameters $q_1,q_2$ and $0<\alpha<1$, a new time-sharing compressor $\mathcal{Q}'$ may be derived which outputs $\mathcal{Q}(\x,q_1)$ with probability $\alpha$ and outputs $\mathcal{Q}(\x, q_2)$ with probability  $(1-\alpha)$. This compressor has expected round duration $\alpha d(\tau,q_1,\c) + (1-\alpha)d(\tau, q_2, \c)$. And, in certain cases, its compression parameter is $q_\alpha = \alpha q_1 + (1-\alpha)q_2$ (such as when the stochastic quantizer parameterized by its normalized variance \cite{alistarh2017qsgd} is used). If $\mathcal{Q}$ is a ``good compressor'', then its round duration, $d(\tau, q_{\alpha}, \c)$, should be lower compared to that of the simple time-shared compressor, $\alpha d(\tau, q_1, \c)+(1-\alpha)d(\tau, q_2, \c)$. Therefore, the convexity of the round duration function is a reasonable assumption for ``good compressors''  (considering $h_{\varepsilon}(q) \propto q$ for simplicity). 
%
%
\begin{figure}[t!]
	\centering
	\includegraphics[width=.35\textwidth]{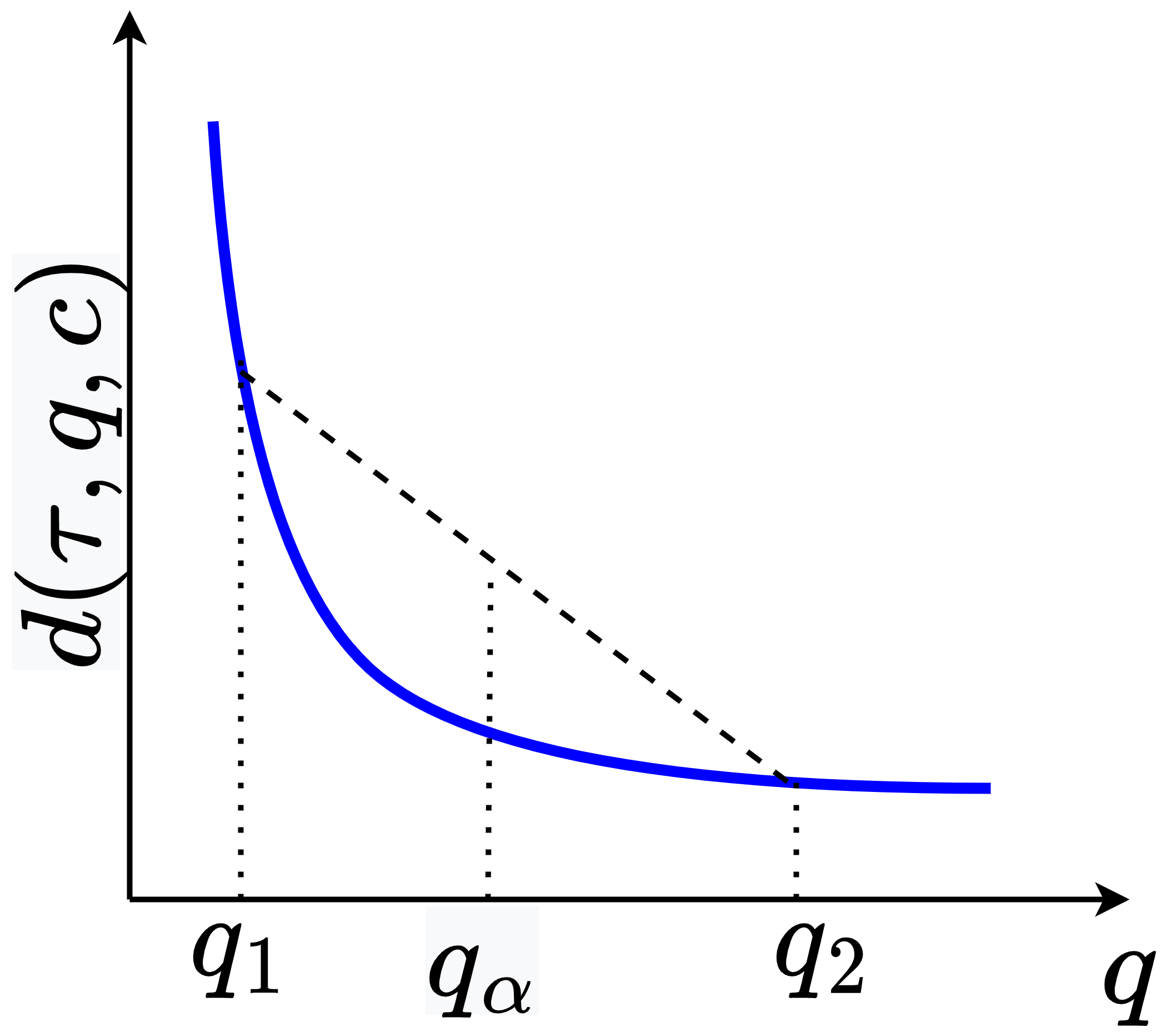}
	\caption{Illustration of a round duration as a function of compression parameter $q$ for a fixed local computation $\tau$ and network state $\c$. }
	\label{fig:delay_convexity}
	\vspace{-2mm}
\end{figure}

\begin{assumption}
The sequence of network states $(\C^{\round})_{\round}$ forms an irreducible aperiodic stationary Markov Chain on a finite state space $\mathcal{C}$ with invariant distribution $\mu$.
\label{ass:markov}
\end{assumption}
Assumption \ref{ass:markov} is a natural assumption made to facilitate the analysis of algorithms (see e.g., \cite{stolyar2005asymptotic}). 

\subsection{Expected Wall Clock Time Formulation}
\label{ssec:wall-clock}
Given the above mentioned assumptions, we are now ready to introduce the proposed framework. We begin by showing that we need only consider state dependent stationary policies for choosing compression parameters when optimizing the overall wall clock time.  

\begin{lemma}
Under Assumptions \ref{ass:q_suff_stat}-\ref{ass:markov}
there exists a state dependent stationary policy to select compression parameters which is asymptotically optimal in terms of minimizing the wall clock time to reach a desired error tolerance of $\varepsilon$ as $\varepsilon \to 0$.
\label{lm:stationary_q}
\end{lemma}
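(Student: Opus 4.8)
The plan is to show that restricting to state dependent stationary policies incurs no asymptotic loss, by collapsing the dynamic problem to a static optimization over the invariant distribution $\mu$ and exploiting convexity. First I would reparametrize each round's choice through $\r^{\round} = \boldsymbol{h}_{\varepsilon}(\q^{\round})$, so that by Assumption \ref{ass:comm_delay} the round duration $\tilde{d}(\r,\c) := d(\tau, \boldsymbol{h}_{\varepsilon}^{-1}(\r), \c)$ is convex in $\r$, while by Assumption \ref{ass:q_suff_stat} the stopping round $R_\varepsilon$ is the first $r$ with $r^2 > \sum_{\round=1}^r \norm{\r^{\round}}$. Since $\norm{\cdot}$ is a norm it is also convex, and the feasible set $\{\boldsymbol{h}_{\varepsilon}(\q): \q \in [0,q_{\max}]^m\}$ is a box, hence convex; these three convexity properties drive the whole argument.

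Next I would establish the product decomposition of the wall clock time. Writing $\bar d = \tfrac{1}{R_\varepsilon}\sum_{\round=1}^{R_\varepsilon}\tilde d(\r^{\round},\c^{\round})$ and $\bar h = \tfrac{1}{R_\varepsilon}\sum_{\round=1}^{R_\varepsilon}\norm{\r^{\round}}$ for the empirical per-round duration and norm, evaluating the stopping criterion at $R_\varepsilon$ gives $R_\varepsilon \approx \bar h$, so that $T_\varepsilon = R_\varepsilon\,\bar d \approx \bar h\,\bar d$. Because $h_{\varepsilon}$ is bounded, $R_\varepsilon$ is in fact deterministically bounded (the criterion must fire once $r$ exceeds $\max_\q\norm{\boldsymbol{h}_{\varepsilon}(\q)}$), which keeps all expectations finite and lets me pass between almost sure and expected statements; Assumption \ref{ass:asymptotic_convergence} guarantees $R_\varepsilon\to\infty$ as $\varepsilon\to0$, which is precisely what makes the ergodic approximations exact in the limit.

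I would then prove the lower bound for an arbitrary (history dependent, randomized) policy. Grouping the $R_\varepsilon$ rounds by the visited network state and writing $\bar\r(\c)$ for the empirical conditional mean of $\r^{\round}$ over rounds spent in state $\c$ (feasible since the box is convex), Jensen's inequality applied separately to the convex maps $\norm{\cdot}$ and $\tilde d(\cdot,\c)$ shows that both $\bar h$ and $\bar d$ are no smaller than the corresponding quantities of the stationary policy $\pifunc(\c)=\boldsymbol{h}_{\varepsilon}^{-1}(\bar\r(\c))$; as both factors are nonnegative, the product, and hence the wall clock time, can only decrease under this stationarization. By Assumption \ref{ass:markov} the empirical state frequencies $N_\c/R_\varepsilon$ converge to $\mu(\c)$, so in the limit the bound becomes $\big(\sum_\c\mu(\c)\norm{\bar\r(\c)}\big)\big(\sum_\c\mu(\c)\tilde d(\bar\r(\c),\c)\big) \geq \min_{\pifunc}\EXP_\mu[\norm{\boldsymbol{h}_{\varepsilon}(\pifunc(\C))}]\,\EXP_\mu[\tilde d(\boldsymbol{h}_{\varepsilon}(\pifunc(\C)),\C)]$. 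Finally, since $\mathcal{C}$ is finite and each action lives in the compact set $[0,q_{\max}]$, this static objective is continuous on a compact domain and attains its minimum at some $\pifunc^\star$; the product decomposition of paragraph two shows $\EXP[T_\varepsilon^{\pifunc^\star}]$ matches this minimum asymptotically, and since $\pifunc^\star$ is itself an admissible policy the two sided comparison closes, so $\pifunc^\star$ is asymptotically optimal.

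The main obstacle I anticipate is making the ``$\approx$'' rigorous: decoupling the wall clock sum $\sum_{\round=1}^{R_\varepsilon}\tilde d$ into the product $\bar h\cdot\bar d$ up to the data dependent stopping time $R_\varepsilon$, uniformly over all policies, and showing the resulting correction terms are $o(1)$ relative to the leading $\Theta(1/poly(\varepsilon))$ cost. This calls for a random-time ergodic theorem (or concentration bound) for the Markov chain that controls the \emph{joint} fluctuation of the stopping time and the accumulated duration, rather than each in isolation; the boundedness in Assumptions \ref{ass:q_suff_stat} and \ref{ass:comm_delay} together with the growth in Assumption \ref{ass:asymptotic_convergence} are exactly the ingredients needed to push these fluctuations below the leading order.
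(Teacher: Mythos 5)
Your proposal is correct and follows essentially the same route as the paper's proof: stationarize an arbitrary sequence by taking the conditional mean of $\boldsymbol{h}_{\varepsilon}(\q^{\round})$ given the visited state (the paper's Proposition on types), apply Jensen via the convexity of the norm and of $d(\tau,\boldsymbol{h}_{\varepsilon}^{-1}(\cdot),\c)$ to dominate both the round count and the cumulative duration, relate $R_\varepsilon$ to $\bar h$ by evaluating the stopping criterion at $R_\varepsilon$ and $R_\varepsilon-1$, and invoke concentration of the empirical state distribution around $\mu$ (the paper uses typical sets plus a Markov-chain Chernoff bound) together with the boundedness of $h_\varepsilon$ and $d$ to control atypical sample paths. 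The technical obstacle you flag at the end is exactly the part the paper resolves with its typical-set machinery, so no new idea is missing.
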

The proof of Lemma \ref{lm:stationary_q} depends on two critical observations. First, since by Assumption \ref{ass:asymptotic_convergence} the number of rounds needed to converge grows large as $\varepsilon \to 0$, one can expect the empirical distribution of the network states modelled by the finite state Markov Chain to concentrate around the invariant prior to the stopping time.  
Second, due to the convexity of the round duration function in Assumption \ref{ass:comm_delay},  given a sequence of network states there exists a state dependent stationary policy that is near optimal and depends solely on the  empirical distribution of the sequence. The proof is in Appendix \ref{app:proof_stationary_q}.

Here, we will focus on the setting where $\varepsilon$ is small, hence by Lemma \ref{lm:stationary_q}, we only need to consider state dependent stationary policies, $\q^{\round}=\pifunc(\c^{\round})$. 

\begin{lemma}
Under Assumptions \ref{ass:q_suff_stat}-\ref{ass:markov} and a fixed number of local computations  per round $\tau$, for every $\delta>0$, there exists an $\varepsilon_{th} > 0$ such that, for all $\varepsilon < \varepsilon_{th}$ and any state-dependent stationary policy $\pifunc$, the expected wall clock time is bounded as,
\begin{equation}
  1-\delta \leq \frac{\EXP\left[T_{\varepsilon}^{\pifunc} \right]}{
    \EXP[\norm{ \boldsymbol{h}_\varepsilon \left( \pifunc(\C) \right)}]
      \EXP[d\left(\tau, \pifunc(\C), \C \right)]} \leq 1+\delta,
    \label{eq:exp_wall_clock_time}
\end{equation}
where, $\C$ denotes a random variable whose distributions is $\mu$ (see Assumption \ref{ass:comm_delay}).
\label{lm:exp_time}
\end{lemma}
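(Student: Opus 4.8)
The plan is to factor the wall clock time as the product of the number of rounds and the empirical average round duration, and to show that, because the stopping round grows without bound as $\varepsilon \to 0$ (Assumption~\ref{ass:asymptotic_convergence}), both factors concentrate \emph{multiplicatively} around their stationary means. Write $R := R_\varepsilon^\pifunc$, and define the bounded functions $\phi(\c) := d(\tau, \pifunc(\c), \c)$ and $g_\varepsilon(\c) := \norm{\boldsymbol{h}_\varepsilon(\pifunc(\c))}$ on the finite state space $\mathcal{C}$ (bounded by Assumptions~\ref{ass:comm_delay} and~\ref{ass:q_suff_stat} respectively; $\phi$ is also strictly positive, being a duration). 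Then $T_\varepsilon^\pifunc = \sum_{\round=1}^R \phi(\C^\round) = R \cdot \big( \tfrac{1}{R} \sum_{\round=1}^R \phi(\C^\round) \big)$, and the denominator in \eqref{eq:exp_wall_clock_time} is exactly $\bar{g}_\varepsilon \bar{\phi}$, where $\bar{g}_\varepsilon := \EXP[g_\varepsilon(\C)]$ and $\bar{\phi} := \EXP[\phi(\C)]$. I would therefore prove that $R \approx \bar{g}_\varepsilon$ and $\tfrac{1}{R}\sum_{\round=1}^R \phi(\C^\round) \approx \bar{\phi}$, with relative errors driven below any prescribed level uniformly over $\pifunc$.

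The workhorse is the ergodic theorem for the finite, irreducible, aperiodic chain of Assumption~\ref{ass:markov}: the empirical distribution $\hat{\mu}_r(\c) = \tfrac{1}{r}\sum_{\round=1}^r \mathbbm{1}\{\C^\round = \c\}$ converges to $\mu$ almost surely. Writing any time average as $\tfrac{1}{r}\sum_{\round=1}^r f(\C^\round) = \sum_\c \hat{\mu}_r(\c) f(\c)$ and using $\bar{g}_\varepsilon \ge \mu_{\min}\max_\c g_\varepsilon(\c)$ with $\mu_{\min} := \min_\c \mu(\c) > 0$, the relative error obeys $\big| \tfrac{1}{r}\sum_{\round=1}^r g_\varepsilon(\C^\round) / \bar{g}_\varepsilon - 1 \big| \le (|\mathcal{C}|/\mu_{\min}) \norm{\hat{\mu}_r - \mu}_\infty$, a bound independent of the magnitude of $g_\varepsilon$ and of the policy. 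By Assumption~\ref{ass:q_suff_stat}, $R$ is the first $r$ with $r > \tfrac{1}{r}\sum_{\round=1}^r g_\varepsilon(\C^\round)$; sandwiching this average between $(1\pm\eta)\bar{g}_\varepsilon$ on the event $\{\norm{\hat{\mu}_r - \mu}_\infty \le \eta'\}$ (with $\eta = (|\mathcal{C}|/\mu_{\min})\eta'$) pins $R$ into $[(1-\eta)\bar{g}_\varepsilon, (1+\eta)\bar{g}_\varepsilon + 1]$. The same concentration applied to $\phi$ gives $\tfrac{1}{R}\sum_{\round=1}^R \phi(\C^\round) \in [(1-\eta)\bar{\phi}, (1+\eta)\bar{\phi}]$, where $\bar{\phi}_{\max}/\bar{\phi} \le 1/\mu_{\min}$ again keeps the constant uniform over $\pifunc$. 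Since both estimates are driven by the single event $\{\hat{\mu}_r \approx \mu\}$, the dependence between $R$ and the durations it sums is handled at once.

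Two deterministic bounds make this rigorous and control the expectation rather than a mere high-probability statement. Because $\underline{g}_\varepsilon := \min_\c g_\varepsilon(\c) \le g_\varepsilon(\C^\round) \le \overline{g}_\varepsilon := \max_\c g_\varepsilon(\c)$, stopping at round $r$ forces $\underline{g}_\varepsilon < r \le \overline{g}_\varepsilon + 2$, so $\underline{g}_\varepsilon < R \le \overline{g}_\varepsilon + 2$ surely; with $\overline{g}_\varepsilon \le \bar{g}_\varepsilon/\mu_{\min}$ this yields the crude a.s. bound $T_\varepsilon^\pifunc \le (\bar{g}_\varepsilon/\mu_{\min} + 2)\bar{\phi}_{\max}$, which lets me bound the contribution of the rare non-concentrated paths to $\EXP[T_\varepsilon^\pifunc]$ by $O(\bar{g}_\varepsilon\bar{\phi}) \cdot \P(\text{bad})$. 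The lower bound rules out early stopping: since $g_\varepsilon(\c) \ge h_\varepsilon(0)\norm{\mathbf{1}}$ for every $\c$ ($h_\varepsilon$ increasing, the norm monotone), and since Assumptions~\ref{ass:q_suff_stat}--\ref{ass:asymptotic_convergence} applied to the constant sequence $\q^\round \equiv \mathbf{0}$ force $h_\varepsilon(0) \to \infty$ as $\varepsilon \to 0$, we get $\underline{g}_\varepsilon \to \infty$ uniformly in $\pifunc$; hence for small $\varepsilon$ the chain cannot stop before any fixed round $r_0$, so $R$ always lands in the regime where $\hat{\mu}_r$ has already concentrated.

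To assemble the expectation I would fix $\delta > 0$, pick $\eta'$ small and $r_0$ large so that, by the almost-sure convergence of $\hat{\mu}_r$, the ``bad'' event $\{\exists\, r \ge r_0 : \norm{\hat{\mu}_r - \mu}_\infty > \eta'\}$ has arbitrarily small probability — note this event and its probability depend only on the chain, not on $\varepsilon$ or $\pifunc$ — and then split $\EXP[T_\varepsilon^\pifunc]$ over this event and its complement. On the good event the product estimate gives $T_\varepsilon^\pifunc/(\bar{g}_\varepsilon\bar{\phi}) \in [1 - c\delta, 1 + c\delta]$; on the complement the deterministic bound times the small probability is a negligible multiple of $\bar{g}_\varepsilon\bar{\phi}$; choosing $\varepsilon_{th}$ small enough that $\underline{g}_\varepsilon \ge r_0$ then yields \eqref{eq:exp_wall_clock_time}, uniformly over $\pifunc$. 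The main obstacle I anticipate is precisely the interaction of three demands — uniformity over all policies, the blow-up of $g_\varepsilon$ as $\varepsilon \to 0$, and upgrading an almost-sure/high-probability statement to a bound on $\EXP[T_\varepsilon^\pifunc]$ — which is why the multiplicative form of the ergodic concentration (whose constant $|\mathcal{C}|/\mu_{\min}$ is free of both the magnitude of $g_\varepsilon$ and the policy) and the two-sided deterministic bounds on $R$ are indispensable; additive concentration estimates would not survive the $\varepsilon \to 0$ limit.
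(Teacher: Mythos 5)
Your proposal is correct and follows the same skeleton as the paper's proof: decompose $T_\varepsilon^{\pifunc}$ into the stopping round times the empirical average duration, use concentration of the empirical distribution of the chain to sandwich both factors multiplicatively around their stationary means, pin the stopping round via Assumption \ref{ass:q_suff_stat} (the paper's Proposition \ref{prop:r_ub} is exactly your $(1\pm\eta)$ sandwich on $R$), use Assumption \ref{ass:asymptotic_convergence} to force the stopping round into the concentrated regime, and handle the atypical event with a worst-case bound times a small probability. The one genuine difference is the concentration tool and, consequently, how the bad event is absorbed. The paper invokes a Chernoff-type bound for Markov chains (Proposition \ref{prop:ergodicity}, via mixing times) to get $P(\text{bad}) \leq \kappa_1 e^{-\kappa_2\nu^2 h_\varepsilon^{\min}}$, which it \emph{needs} because it bounds the atypical contribution by the absolute worst case $h_\varepsilon^{\max}d^{\max}$, and only the exponential decay in $h_\varepsilon^{\min}$ guarantees this is eventually negligible relative to $h_\varepsilon^{\min}d^{\min}$. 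You instead observe that $\max_{\c} \norm{\boldsymbol{h}_\varepsilon(\pifunc(\c))} \leq \EXP[\norm{\boldsymbol{h}_\varepsilon(\pifunc(\C))}]/\mu_{\min}$ and likewise for the duration, so the almost-sure worst case is a \emph{constant} multiple (uniform in $\varepsilon$ and $\pifunc$) of the target product $\bar{g}_\varepsilon\bar{\phi}$; this lets you get away with the plain ergodic theorem and a bad-event probability that merely vanishes as $r_0\to\infty$, independent of $\varepsilon$. Your route is more elementary (no mixing-time machinery) and arguably cleaner on this step; the paper's exponential bound is stronger and would be needed if one wanted quantitative rates in $\varepsilon$ rather than just the existence of $\varepsilon_{th}$. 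One small point to make explicit when writing this up: for rounds $r$ between $r_0$ and $(1-\eta)\bar{g}_\varepsilon$ you need the good event to rule out early stopping, while for $r < r_0$ you need the deterministic bound $\frac{1}{r}\sum_{\round\le r}\norm{\boldsymbol{h}_\varepsilon(\q^{\round})} \geq \underline{g}_\varepsilon \geq r_0 > r$; you gesture at both but should state the case split cleanly.
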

Lemma \ref{lm:exp_time} is proved in Appendix \ref{app:lemma_2_proof}. Define,
\begin{equation}
    \hat{t}_{\varepsilon}^{\pifunc} \triangleq 
        \EXP[\norm{ \boldsymbol{h}_\varepsilon \left( \pifunc(\C) \right)}]
        \EXP[d\left(\tau, \pifunc(\C), \C \right)].
        \label{eq:wct_proxy}
\end{equation}
Due to Lemma \ref{lm:exp_time}, for small enough $\varepsilon$, $\hat{t}_{\varepsilon}^{\pifunc}$ provides an accurate approximation for $\EXP[T_{\varepsilon}^{\pifunc}]$. Therefore, from here onwards, we shall assume implicitly that that a small $\varepsilon$ is considered and focus on finding a policy to optimize $\hat{t}_{\varepsilon}^{\pifunc}$.

Suppose the distribution of $\C$ is known. Then, one could compute expected wall clock time as given in \eqref{eq:wct_proxy} for any state dependent stationary policy $\pifunc$. In this case, we could determine an optimal policy $\pifunc^*$ by solving the optimization problem,
\begin{equation}
\min_{\pifunc \in \mathcal{Q}_{m\lvert \mathcal{C} \rvert}} \quad
    \hat{t}_{\varepsilon}^{\pifunc} \:=\:\EXP[\norm{ \boldsymbol{h}_\varepsilon \left( \pifunc(\C) \right)}]
\EXP\left[{d}\left(\tau, \pifunc(\C), \C \right)\right],
\label{eq:offline_opt0}
\end{equation}
where $\mathcal{Q}_{m\lvert \mathcal{C} \rvert}$ is the set of all state-dependent stationary policies.

Alas, in practice, we often cannot directly solve the above problem, as the distribution of $\C$ is unknown. Hence, below, we propose a stochastic approximation like algorithm that achieves the optimal wall clock time of $\pifunc^*$ asymptotically.


\subsection{NAC-FL: Informal Description}
The idea underlying NAC-FL is to keep running estimates for $\EXP\left[\norm{ \boldsymbol{h}_\varepsilon \left( \Q \right)}\right]$ and $\EXP\left[ d(\tau, \Q, \C) \right]$ i.e.,
\begin{equation*}
\begin{split}
    \hat{r}_\varepsilon^{\round} = \frac{1}{\round}\sum_{k=1}^\round 
    \norm{ \boldsymbol{h}_\varepsilon \left( \q^{(k)}\right)}, \quad
    \hat{d}^{\round} = \frac{1}{\round}\sum_{k=1}^\round d\left(\tau, \q^{(k)}, \c^{(k)} \right).
\end{split}
\end{equation*}
Given a network state of $\c^{\round+1}$ at round $\round + 1$, and, a possible choice for compression parameters $\q$, the running averages would be updated as follows,
\begin{equation}
    \begin{split}
        \hat{r}_\varepsilon^{\round+1} &= \frac{\round}{\round+1} \hat{r}_\varepsilon^{\round} + \frac{1}{\round + 1} \norm{ \boldsymbol{h}_\varepsilon \left( \q\right)}, \\
        \hat{d}^{\round+1} &= \frac{\round}{\round+1} \hat{d}^{\round} + \frac{1}{\round + 1} d\left(\tau, \q, \c^{\round+1} \right).
    \end{split}
\end{equation}
As seen in \eqref{eq:wct_proxy}, to minimize the wall clock time one should minimize $\hat{r}_\varepsilon^{\round+1}\hat{d}^{\round+1}$, which can be expanded as,
\begin{align*}
\hat{r}_\varepsilon^{\round+1}\hat{d}^{\round+1}\! = &\frac{\round}{(\round+1)^2} \Big[r_\varepsilon^{\round} d\left(\tau, \q, \c^{\round+1}\right)\! +\! \hat{d}^{\round} \norm{ \boldsymbol{h}_\varepsilon \!\left( \q\right)} \Big] \\
&+ \frac{\round^2}{(\round+1)^2}r_\varepsilon^{\round}\hat{d}^{\round} + O\left(\frac{1}{(\round+1)^2}\right).
\end{align*}
Given the fact  that $\hat{r}_\varepsilon^{\round}$ and $\hat{d}^{\round}$ are constants, and neglecting the term $O\left(1/(n+1)^2 \right)$, an optimal choice for $\q^{\round+1}$ is
\begin{equation}
\q^{\round+1} = \underset{\q}{\text{argmin} }\quad  \hat{r}_\varepsilon^{\round} d\left(\tau, \q, \c^{\round+1} \right) + \hat{d}^{\round} \norm{ \boldsymbol{h}_\varepsilon \left( \q\right)}.
\label{eq:nac-fl-1}
\end{equation}

The NAC-FL policy is summarized in Algorithm \ref{algo:gradient-based}.
To retrieve policy informally described above the tunable parameters 
$\left( \beta_n \right)_{\round}$ and $\alpha$ should be set to 
$\beta_n = \frac{1}{n}$ and $\alpha=1$.

Consider two possible network states $\c$ and $\c'$ at a round $\round$. If the delay under state $\c$ is higher compared to $\c'$ for any compression parameters, then NAC-FL would choose a higher compression amount $\q$ for state $\c$ compared to compression amount $\q'$ for state $\c'$, i.e., $\q > \q'$ elementwise. This may be concluded from the selection policy of \eqref{eq:nac-fl-1}, and noting that $r_{\varepsilon}(\q)$ is increasing in $\q$ (Assumption \ref{ass:q_suff_stat}), and $d(\tau, \q, \c)$ is decreasing in $\q$ (Assumption \ref{ass:comm_delay}). 

Observe that since the estimates $\hat{r}_\varepsilon^{\round}$ and $\hat{d}^{\round}$ will initially change across rounds, NAC-FL may choose different compression parameters in two rounds for which the network was in the same state, i.e., NAC-FL is not a state-dependent stationary policy. Still, we will show NAC-FL is asymptotically near optimal. To develop this result we shall next present NAC-FL in a more formal manner. 

\begin{algorithm}[h]
\caption{NAC-FL}
\label{algo:gradient-based}
\SetKwInOut{Input}{Input}
\SetKwInOut{Output}{Output}
\SetKwInOut{Parameter}{Parameter}
\SetKwComment{Comment}{/* }{ */}
\Input{Initialization: $\hat{r}_\varepsilon^{(0)}, \hat{d}^{(0)}$ ; step size schedule $\{\beta_{\round}\}_{\round=1}^\infty$; parameter $\alpha$.}

\For{$\round = 1, \dots, $ \textbf{until termination}}{
    Server observes network state $\c^{\round}$ \;
    $\q^{\round} = \underset{\q}{\text{argmin} }\quad  \alpha \hat{r}_\varepsilon^{(\round-1)} d\left(\tau, \q, \c^{\round} \right) + \hat{d}^{(\round-1)} \norm{ \boldsymbol{h}_\varepsilon \left( \q\right)}$\;
    $\hat{r}_\varepsilon^{\round} = (1-\beta_{\round}) \hat{r}_\varepsilon^{(\round-1)} + \beta_{\round} \norm{ \boldsymbol{h}_\varepsilon \left( \q^{\round}\right)}$ \;
    $\hat{d}^{\round} = (1-\beta_{\round}) \hat{d}^{(\round-1)} + \beta_{\round} d(\tau, \q^{\round}, \c^{\round})$\;
}
\end{algorithm}

\subsection{NAC-FL: Formal Description}
Our NAC-FL approach is also inspired by the Frank-Wolfe Algorithm \cite{frank1956algorithm}.
We start by reformulating the optimization program in \eqref{eq:offline_opt0}. Denote by set $V_{\varepsilon}$ all possible pairs of expectations $(\hat{r}_\varepsilon, \hat{d})$,
\begin{equation}
\begin{aligned}
V_{\varepsilon} = \Big\{ (\hat{r}_\varepsilon, \hat{d}): \exists \pifunc \in \mathcal{Q}_{m\lvert \mathcal{C} \rvert} ~\text{s.t.}~ & \hat{r}_\varepsilon = \EXP\left[\norm{ \boldsymbol{h}_\varepsilon \left( \pifunc(\C)\right)}\right], \\
 &\hat{d} = \EXP\left[d\left(\tau, \pifunc(\C), \C \right)\right] \Big\}.
\end{aligned} 
\label{eq:feasible_set}
\end{equation}

Using the set $V_{\varepsilon}$, and denoting $H(r,d) \triangleq rd$, we may write the optimization \eqref{eq:offline_opt0}
characterizing the optimal policy $\pifunc^*$ as
\begin{equation}
\min_{\hat{r}_\varepsilon, \hat{d}} \{  H(\hat{r}_\varepsilon, \hat{d}) : 
 (\hat{r}_\varepsilon,\hat{d}) \in V_{\varepsilon} \}.
\label{eq:offline_opt}
\end{equation}
In this case, from a point $(\hat{r}_\varepsilon^{\round}, \hat{d}^{\round})$, the Frank-Wolfe update would be given as,
\begin{align}
(\hat{r}_\varepsilon, \hat{d}) &= \underset{(r,d) \in V_{\varepsilon}}{\text{argmin}} \quad \nabla H\left(\hat{r}_\varepsilon^{\round}, \hat{d}^{\round}\right)^\top \begin{pmatrix}
r \\
d
\end{pmatrix}, \label{eq:FW_update}\\
\hat{r}_\varepsilon^{\round+1} &= (1-\beta)\hat{r}_\varepsilon^{\round} + \beta \hat{r}_\varepsilon, \nonumber\\
\hat{d}^{\round+1} &= (1-\beta)\hat{d}^{\round} + \beta \hat{d}. \nonumber
\end{align}
The gradient  $\nabla H(\hat{r}_\varepsilon, \hat{d})$ is, $
\nabla H(\hat{r}_\varepsilon, \hat{d}) = \left(
\hat{d} \quad
\hat{r}_\varepsilon \right)^\top$. $V_{\varepsilon}$ is a set of feasible averages of $\hat{r}_{\varepsilon}$ and $\hat{d}$. Therefore, at round $(\round+1)$, not all the pairs $(r,d) \in V_{\varepsilon}$ may be achievable.
Hence, NAC-FL approximates equation \eqref{eq:FW_update} as,
\begin{equation*}
\q^{\round+1} = \underset{\q}{\text{argmin} }\quad  \hat{r}_\varepsilon^{\round} d\left(\tau, \q, \c^{\round+1} \right) + \hat{d}^{\round} \norm{ \boldsymbol{h}_\varepsilon \left( \q\right)}.
\end{equation*}

We have thus retrieved our proposed NAC-FL algorithm based on the Frank-Wolfe update, with 
one difference. The above derivation suggests the use of a fixed 
step-size $\beta$ at all rounds while the previously derived algorithm used
a decaying the step-size  $\beta_n = 1/n$. In our simulations, we will embrace the latter.

The following assumption is required to show the asymptotic optimality of NAC-FL. A state dependent stationary policy $\pifunc$ maps from a domain of finite size $\left\lvert \mathcal{C} \right\rvert$, to a range positive-real vectors of dimension $m$. Therefore, the policy may be represented by a positive-real vector, $\pivec$, of dimension $m\left\lvert \mathcal{C} \right\rvert$. Further, a vector $\r^{\pivec}$ may be obtained by applying $h_{\varepsilon}(\cdot)$ elementwise to the policy vector $\pivec$, $\r^{\pivec} \triangleq \boldsymbol{h}_{\varepsilon}(\pivec)$.  This representation is used in the following assumption.

\begin{assumption}
 The objective function $\hat{t}_{\varepsilon}^{\pifunc}$ of the optimization problem in \eqref{eq:offline_opt0} is a strictly quasiconvex function in $\pifunc$ in the following sense,
 \begin{equation}
 {\r^\pivec}^\top \left( \nabla_{\r^{\pivec}} \hat{t}_{\varepsilon}^{\pifunc}  \right) = 0 \; \implies \; {\r^{\pivec}}^\top \left(\nabla^2_{\r^{\pivec}} \hat{t}_{\varepsilon}^{\pifunc} \right) \r^{\pifunc} > 0.
 \label{eq:assumpt_unique_stationary}
 \end{equation}
\label{ass:unique_stationary}
\end{assumption}
Assumption \ref{ass:unique_stationary} ensures that there is a unique state dependent stationary policy $\pivec^*$ which optimizes \eqref{eq:offline_opt0}. 
  We have observed that the considered network model, compression model and the $\norm{ \boldsymbol{h}_\varepsilon \left( \q\right)}$ function associated with the FedCOM algorithm indeed satisfy this assumption.

Next we shall establish an optimality property for NAC-FL. To that end we shall consider executing NAC-FL  without termination with $\beta_\round =\beta$ for all $\round$ and let $\left(\Q_{\beta}^{\round}\right)_{\round}$,
$\hat{R}_{\varepsilon,\beta}^{\round}$ and $\hat{D}_\beta^{\round}$ 
be the corresponding sequence of compression parameters and
the associated estimates.  

\begin{theorem}
Let $\pifunc^*$ be the solution and
$\hat{t}_{\varepsilon}^{\pifunc^*}$
the minimum of the optimization problem in \eqref{eq:offline_opt0}. If Assumptions \ref{ass:q_suff_stat}-\ref{ass:unique_stationary} hold, then there exists a positive sequence $(\beta_i)_{i=1}^\infty$ with $\beta_i \to 0$ as $i\to\infty$, such that for every $\rho > 0$, there exists a thereshold $n_{th}(\rho)$ such that,
\begin{align*}
\lim_{i \to \infty} \sup_{\round \geq n_{th}(\rho)/\beta_i} &P\left( \norm{ \begin{pmatrix}
    \hat{R}_{\varepsilon, \beta_i}^{\round} - \EXP[\norm{ \boldsymbol{h}_\varepsilon \left( \pifunc^*(\C)\right)}] \\
    \hat{D}_{\beta_i}^{\round} - \EXP[d \left( \tau, \pifunc^*(\C), \C \right)]
\end{pmatrix}}
%
> \rho \right)
 = 0,
\end{align*}
\label{th:asympt_opt}
\end{theorem}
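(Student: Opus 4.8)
\emph{Plan.} The idea is to view Algorithm~\ref{algo:gradient-based}, run with the constant step $\beta_\round = \beta$ and $\alpha = 1$, as a constant step-size stochastic approximation recursion with Markovian noise whose mean field is exactly the continuous-time Frank--Wolfe flow associated with \eqref{eq:offline_opt}, and then to transfer the global stability of that flow to the iterates. Writing $Z^\round \triangleq (\hat R_{\varepsilon,\beta}^\round, \hat D_\beta^\round)$, the update steps read $Z^\round = (1-\beta) Z^{\round-1} + \beta\, F(Z^{\round-1}, \c^\round)$, where $F(z, \c) = (\norm{\boldsymbol{h}_\varepsilon(\q(z,\c))}, d(\tau, \q(z,\c), \c))$ and $\q(z,\c)$ is the minimizer computed in Algorithm~\ref{algo:gradient-based}. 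Because the linear functional $\nabla H(z)^\top(r,d) = \hat d\, r + \hat r\, d$ separates across network states, minimizing it state-by-state and then averaging over $\mu$ shows that the stationary-averaged direction is precisely the Frank--Wolfe vertex, $\EXP_\mu[F(z,\C)] = s(z) \triangleq \argmin_{(r,d)\in V_\varepsilon} \nabla H(z)^\top (r,d)$. Hence the natural limiting ODE is $\dot z = s(z) - z$ on $V_\varepsilon$, whose equilibrium I aim to identify with $z^* = (\EXP[\norm{\boldsymbol{h}_\varepsilon(\pifunc^*(\C))}], \EXP[d(\tau,\pifunc^*(\C),\C)])$.

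First I would assemble the regularity facts that make this ODE well posed. Assumptions~\ref{ass:q_suff_stat} and \ref{ass:comm_delay} make $\boldsymbol{h}_\varepsilon$ and $d$ bounded, so all iterates and the set $V_\varepsilon$ lie in a fixed compact box; convexity of $V_\varepsilon$ follows from the time-sharing construction of Fig.~\ref{fig:delay_convexity} together with the convexity of $d$ in $\r$ (Assumption~\ref{ass:comm_delay}), since randomizing policies realizes convex combinations of expectation pairs. Next, in the coordinates $\r = \boldsymbol{h}_\varepsilon(\q)$ the per-state objective $\hat r\, d(\tau, \boldsymbol{h}_\varepsilon^{-1}(\r), \c) + \hat d\, \norm{\r}$ is convex in $\r$; assuming it is strictly convex, its minimizer $\q(z,\c)$ is single valued and continuous in $z$ by Berge's maximum theorem, so the mean field $s(z)$ is continuous and the flow has a unique solution. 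Finally, Assumption~\ref{ass:markov} gives a finite, irreducible, aperiodic chain, hence geometric ergodicity; this is the ingredient that lets the Markovian noise be averaged to its stationary expectation $s(z)$, with the averaging error controlled by the mixing rate.

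The crux is to show that $z^*$ is the unique, globally asymptotically stable equilibrium of $\dot z = s(z) - z$. I would use $H$ itself as a Lyapunov function: along the flow $\tfrac{d}{dt} H(z) = \nabla H(z)^\top (s(z) - z) = -g(z)$, where $g(z) = \nabla H(z)^\top(z - s(z)) \ge 0$ is the Frank--Wolfe gap, nonnegative because $s(z)$ minimizes the linear functional over $V_\varepsilon$. Thus $H$ is nonincreasing, and by LaSalle's invariance principle every trajectory converges to the largest invariant subset of $\{z : g(z) = 0\}$, the set of Frank--Wolfe stationary points of \eqref{eq:offline_opt}. The difficulty is that $H(r,d) = rd$ is \emph{not} convex in $(r,d)$, so stationarity does not immediately give global optimality; this is exactly where Assumption~\ref{ass:unique_stationary} enters. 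Translating stationarity into the policy representation $\r^{\pivec}$ and invoking the strict quasiconvexity \eqref{eq:assumpt_unique_stationary}, the program \eqref{eq:offline_opt0} has a \emph{unique} stationary policy, which must therefore be the global minimizer $\pifunc^*$; consequently $\{g = 0\} = \{z^*\}$ and the flow is globally attracted to $z^*$. I expect reconciling the quasiconvexity in policy space with stationarity in the $(r,d)$ space to be the main obstacle, since one must rule out spurious stationary points on the efficient frontier of $V_\varepsilon$.

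With a continuous mean field, a compact domain, a geometrically ergodic Markov noise, and a globally asymptotically stable equilibrium in hand, the conclusion follows from the weak-convergence theory of constant step-size stochastic approximation with Markov noise (e.g.\ Kushner--Yin or Borkar). That theory gives that the piecewise-constant interpolation of $(Z^\round)$ on the ODE time scale $t = \round\beta$ converges, as $\beta \to 0$, to the solution of $\dot z = s(z) - z$, and that the asymptotic occupation of the iterates concentrates on the attractor: for every $\rho>0$, $\limsup_{\beta\to 0} \limsup_{\round\to\infty} P(\norm{Z^\round - z^*} > \rho) = 0$. Choosing a sequence $\beta_i \to 0$ along which this double limit is realized, and noting that the ODE transient to accuracy $\rho$ is traversed in ODE time $n_{th}(\rho)$, hence in $n_{th}(\rho)/\beta_i$ iterations, yields exactly the uniform-over-tail statement of Theorem~\ref{th:asympt_opt}. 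The remaining bookkeeping is to verify the step-size and boundedness hypotheses of the invoked theorem, which are immediate from the compactness established above.
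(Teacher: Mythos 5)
Your plan follows essentially the same route as the paper's proof: a fluid/ODE limit of the constant-step recursion on the time scale $s=\round\beta$ (the paper's Proposition \ref{prop:fluid_limit}, proved via relative compactness and the Markov-chain law of large numbers in the style of \cite{stolyar2005asymptotic}), the use of $H(r,d)=rd$ as a Lyapunov function whose derivative along the flow is the nonpositive Frank--Wolfe gap, and a transfer back to the discrete iterates by weak convergence. The only real divergence is cosmetic: you invoke LaSalle and off-the-shelf Kushner--Yin/Borkar machinery where the paper argues convergence of the flow by hand (Claims 1 and 2, using a compactness argument to bound the gap away from zero off a neighborhood of $\x^*$, plus Proposition \ref{prop:conv_zeta} to handle initializations outside $\conv(V_{\varepsilon})$).

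The one substantive step you defer is exactly where the paper spends most of its effort. You correctly identify that $H$ is not convex on $\conv(V_{\varepsilon})$, so Frank--Wolfe stationarity does not immediately imply global optimality, and you state that Assumption \ref{ass:unique_stationary} resolves this ``by translating stationarity into the policy representation.'' That translation is not a one-liner: one must first show that any fixed point lies on the lower boundary $\hat{d}=\bar{d}(\hat{r}_\varepsilon)$ of $\conv(V_{\varepsilon})$ (ruling out interior points via the nonnegativity of $\nabla H$), then establish that $\bar{d}$ is strictly convex and $\hat{t}(\hat{r}_\varepsilon)=\hat{r}_\varepsilon\bar{d}(\hat{r}_\varepsilon)$ strictly quasiconvex in the multi-client, multi-state case (where $\bar{d}$ is itself a value function of a constrained minimization over policies and need not be differentiable), and finally connect one-sided derivatives of $\hat{t}$ to the quasiconvexity condition \eqref{eq:assumpt_unique_stationary} stated in policy space. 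This is the content of the paper's Proposition \ref{prop:unique_stationary} and the whole of Appendix E (Propositions \ref{prop:mcmns_det_opt}--\ref{prop:mcmns_t_local_eq_global}). Since you flag this as the main obstacle but do not carry it out, your proposal is a faithful outline of the paper's argument rather than a complete alternative proof; everything else you describe would go through as stated.
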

The proof of Theorem \ref{th:asympt_opt} is included in Appendix \ref{app:proof-sketch}.
\begin{remark}
Theorem \ref{th:asympt_opt} should be interpreted with some subtlety.
Say the desired error-tolerance $\varepsilon$ is very small such that
the number of rounds needed to converge under any compression policy is such that $r_{\varepsilon} \gg n_{th}(\rho)/\beta$. 
Then, based on Theorem \ref{th:asympt_opt}, one can show that NAC-FL compression choices will be near optimal after $n_{th}(\rho)/\beta$ rounds. Thereafter, since $r_\varepsilon$ is large, NAC-FL will make near optimal choices for long enough leading to a near optimal expected wall clock time.
\end{remark}

We further remark on the meaning of the asymptotic result in the context of minimizing the wall clock time. In applications that require a very low error-tolerance $\varepsilon$, one needs to have a large  number (i.e., in the asymptotic region) of communication rounds $r_\varepsilon$ for convergence. Therefore, even though the wall clock time obtained by using NAC-FL may be large in this setting, it is near-optimal compared to other methods of choosing compression parameters.

\section{Simulation}
\label{sec:simulations}

In this section, we present our simulation results.  We begin by describing additional model details used in our simulations. 

\subsection{Additional Model Details}

\subsubsection{Compression Model}
\label{ssec:compression_model}
We shall use the \emph{stochastic quantizer} in \cite{alistarh2017qsgd} which we will denote as $\quant_q(\cdot, b)$. The quantizer has a parameter $b \in \{1,\ldots,32\}$ corresponding to the number of bits used to represent each co-ordinate, in addition to the bit used to denote signs. When input a vector $\x$, it outputs,
\begin{equation}
\quant_q(\x, b) = \norm{\x}_\infty \mathsf{sign}(\x) \zeta(\x, b)
\label{eq:stochastic_quantization}
\end{equation}
where $\mathsf{sign}(\x)$ is the element-wise sign operator and where the function $\zeta(\x, b)$ uniformly quantizes each co-ordinate amongst $2^b-1$ levels between 0 and 1. That is, if $x_i/\norm{\x}_{\infty} \in \left[\frac{l}{2^b-1}, \frac{l+1}{2^b-1} \right)$, then it is quantized as,
\[
\zeta_i(\x, b) = \begin{cases}
\frac{l+1}{2^b-1}, & \text{with prob. } \frac{|x_i|}{\norm{x}_\infty}(2^b-1) -l, \\
\frac{l}{2^b-1}, & \text{otherwise.}
\end{cases}
\]
When $\x$ is quantized to $b$-bits per co-ordinate, its file size is given by the function, $s(b) = \norm{\x}_0 (b+1)+32$ bits. Here, the zero-norm, $\norm{\x}_0$, gives the length of the vector, the $1$ indicates the bit used to denote the sign, and the 32 bits are for a floating point number denoting the norm, $\norm{\x}_\infty$. Finally, if client $j$ uses the parameter $b_j$, then the vector of parameters used by the clients is denoted as, $\b = \left( b_j \right)_{j=1}^m$.
\subsubsection{Network Congestion Model} 

For purposes of evaluating the performance of various algorithms over different types of network congestion we propose the following general, albeit idealized, model. 
We let $\boldsymbol{C}^{\round}$ be a $m$ dimensional random vector denoting the \emph{Bit Transmission Delay} (BTD) for
clients during round $\round$. 
We further let $\boldsymbol{C}^{\round} = \text{exp}\left(\boldsymbol{Z}^{\round}\right)$ 
 i.e., coordinate-wise exponentiation of an $m$ dimensional first order autoregressive process given by
 $\left(\Z_i\right)_{i=0}^{\infty}$  where $\Z_0 = \boldsymbol{0}$, where 
\begin{equation}
    \Z^{\round} = A\Z^{(\round-1)} + \boldsymbol{E}^{\round}, \quad \forall \round \geq 1,
    \label{eq:autoregressive_evolution}
\end{equation}
where $A$ is an $m \times m$ deterministic matrix, and $\boldsymbol{E}^{\round} \sim \mathcal{N}(\boldsymbol{\mu}, \Sigma)$ are i.i.d.,  $m$ dimensional normal random vectors. 
Different correlations across time and clients may be modelled by varying $A$, $\boldsymbol{\mu}$ and $\Sigma$.
The marginal distributions of $\boldsymbol{C}^{\round}$
are thus log-normal but can be correlated in different ways based on the underlying autoregressive process. In particular:
\begin{description}
    \item [Homogeneous Independent:] the parameters are set to $A=0$, $
\boldsymbol{\mu}=\boldsymbol{1}$, and $\Sigma = \sigma^2 I.$ This results in a process which is independent and identically distributed across clients and time. 
\item [Heterogeneous Independent:]the parameters are set to $A=0$, $\mu_i = 0$ for $i \in \{1, \dots, 5\}$ and $\mu_i = 2$ for $i \in \{6, \dots, 10\}$, and $\Sigma = I.$ This results in a process which is independent across clients and time, with the BTD being lower for the first 5 clients compared to the rest. 
    \item[Perfectly correlated:] the parameters are set to $A$ such that $A_{i,j}=\frac{a}{m}$ where $a\in (0,1)$,  $
\boldsymbol{\mu}=\boldsymbol{0}$, and $\Sigma$ such that $\Sigma_{i,j} = \sigma^2=1.$ This results in a process where all clients see the same positively correlated time-varying delays. 
\item[Partially correlated:] the parameters are set to $A$ such that $A_{i,j}=\frac{a}{m}$, $
\boldsymbol{\mu}=\boldsymbol{0}$, and 
$\Sigma$ such that 
$\Sigma_{i,i} = 1$ and $\Sigma_{i,j} = 1/2$ for $i \neq j.$ 
This results in a process where delays are positively correlated accross clients and time. 
\end{description}


\subsubsection{Model for Round Durations}  
We will model the duration of a round  as the maximum across clients' delays, i.e.,  
\[
d(\tau, \b, \c) = \max_j[ \theta\tau +  c_j s(b_j)], 
\]
where $\theta$ represents the compute time per local computation, and $c_j s(b_j)$ the BTD of client $j$ times the size of the client $j$'s file capturing the time taken to communicate its update.  For simplicity we will set $\theta=0$. 

\subsubsection{Compression Level Choice Policies}
We compare NAC-FL to the following policies,

\paragraph{Fixed Bit} Here, a number $b$ is fixed, and all the clients use the stochastic quantizer $\quant_q(\x,b)$ from \eqref{eq:stochastic_quantization} with the parameter $b$. We present results for $b \in \{1,2,3\}$, as we didn't notice a performance improvement for larger parameters in our experiments.

\paragraph{Fixed Error} This method was suggested in \cite{bouzinis2022wireless} and is parameterized by a number $q$. At each round $\round$, the parameters $\b^{\round}$ of the stochastic quantizers are such that the average normalized-variance $\bar{q}^{\round}$ (see equation \eqref{eq:avg_norm_variance}) is smaller than $q$, and the duration of the round $d(\tau, \q^{\round}, \c^{\round})$ is minimized. We fix $q=5.25$ in all our experiments after finding it to be performing well across different settings.


\subsubsection{Machine Learning Model}
 We consider $m=10$ clients. We consider the MNIST dataset \cite{lecun1998gradient} which may be distributed homogeneously or heterogeneously amongst the clients. Since data is heterogeneous across clients in most FL applications, we consider the heterogenous data case. That is, each client has data corresponding to 1 unique label. The MNIST dataset has 60,000 training samples, 10,000 test samples and 10 labels. The clients and the server aim to train a fully connected neural network with the architecture $(784, 250, 10)$ with the sigmoid activation for the hidden layer. The learning rate is initialized to $\eta_{0} = 0.07$, and is decayed by a factor $0.9$ every 10 rounds. The aggregation rate and local computations per round are fixed throughout the training to $\gamma = 1$ and $\tau = 2$ respectively. As for the parameters of the NAC-FL policy, we set $\beta_\round = \frac{1}{\round}$, and $\alpha = 2$.  
 
 We measure the performance of the global model using the following,
 \paragraph{Training Loss} The training loss of the global model is the empirical cross entropy loss across the entire set of training samples. 
 \paragraph{Test Accuracy} The test accuracy is measured over all the test samples. Here, in some experiments, we run 20 simulations with different random seeds, and report the mean, 90th percentile and 10th percentile times to reach a test accuracy of 90\%. The 90th and 10th percentile scores are reported to capture the variation in performance across the 20 simulations. We also report a \emph{gain} metric, which is sample mean of the time gained to reach 90\% accuracy by NAC-Fl compared to a another policy reported in percentage. For instance, let $x_i$, $y_i$ be the times under NAC-FL and another policy for a random seed $i$, then the gain is $100*\left(\sum_{i=1}^{20} y_i/x_i - 1\right)/20$.  



\subsection{Simulation Results}
\subsubsection{Homogeneous Independent BTD} We simulated over $\sigma^2 \in \{1,2,3\}$ in order to study the change in performance over increasing variance. We observe that in all the cases, NAC-FL and the Fixed Error policy have very similar performance across all the considered statistics. This is because the Fixed Error policy was designed to operate well in the i.i.d., network delay case. However, both NAC-FL and Fixed Error policy perform better than all the Fixed Bit policies according to all the statistics across all the considered parameters. Moreover, we observed that the gap in the performance to Fixed Bit policies increased with increasing variance. For instance, the gain of the best Fixed Bit policy increased from 145\% to 250\% when the variance was increased from 1 to 3, while the gain of the worst fixed bit policy increased from 314\% to 881\%. This is as expected because both NAC-FL and Fixed Error policy adapt to the heterogenous delay of clients at any given time. Surprisingly, NAC-FL lagged behind Fixed Error policy in some metrics, but it performed better in terms of the gain metric in all the 3 cases, with the gain over Fixed Error policy ranging from 1\% to 8\%. 

\begin{table}[h]
\centering
\begin{tabular}{|l|l|l|l|l|l|l|}
\hline
                                                        $\sigma^2$            &         & 1 bit & 2 bits & 3 bits & Fixed Error   & NAC-FL        \\ \hline
\multirow{4}{*}{$1 $} & Mean    & 6.31  & 3.82   & 4.15   & \textbf{1.58} & 1.60          \\ \cline{2-7} 
                                                                    & 90th & 6.95  & 4.72   & 5.00   & \textbf{1.86} & 2.05          \\ \cline{2-7} 
                                                                    & 10th & 5.63  & 3.20   & 3.38   & 1.20          & \textbf{1.14} \\ \cline{2-7} 
                                                                    & Gain & 314\%  & 145\%   & 168\%   & 3\%          & -             \\ \hline
\multirow{4}{*}{$2 $} & Mean    & 54.8  & 32.5   & 34.9   & 12.5          & \textbf{12.2}          \\ \cline{2-7} 
                                                                    & 90th & 70.6  & 44.7   & 43.1   & \textbf{19.0}          & 20.8          \\ \cline{2-7} 
                                                                    & 10th & 42.5  & 19.2   & 21.0   & 6.26          & \textbf{5.82}          \\ \cline{2-7} 
                                                                    & Gain & 522\%  & 216\%   & 240\%   & 8\%          & -             \\ \hline
\multirow{4}{*}{$3 $} & Mean    & 799   & 430    & 458    & \textbf{165}           & 168           \\ \cline{2-7} 
                                                                    & 90th & 1430  & 752    & 665    & \textbf{318}           & 320           \\ \cline{2-7} 
                                                                    & 10th & 418   & 157    & 148    & \textbf{46.2}          & 57.9          \\ \cline{2-7} 
                                                                    & Gain & 881\%  & 270\%    & 250\%    & 1\%          & -             \\ \hline
\end{tabular}
\caption{Performance comparison of policies with homogeneous independent  BTD in terms of the mean, 90th percentile and 10th percentile times to reach 90\% test accuracy under the different policies, and their average sample-path gain compared to NAC-FL. All the numbers represented are in $10^7$ seconds. }
\label{table:iid_delay}
\end{table}

\subsubsection{Heterogeneous Independent BTD}
We considered this case since the first 5 clients would have consistently worse delay, NAC-FL and the Fixed Error policy would consistently compress the updates of those clients heavily. Since the data distribution is heterogeneous, it may be possible heavy compression of updates from specific clients throughout the training may hurt the performance. On the other hand, the Fixed Bit policies use the same amount of compression across all clients equally irrespective of their delays. Still, we observed that NAC-FL and the Fixed Error policy perform better than the Fixed Bit policies as can be seen in Table \ref{table:hetero}. In fact, performance in terms of the gain metric is very comparable to the i.i.d., network delay case with $\sigma^2=1$ in Table \ref{table:iid_delay}.

\begin{table}[h]
\centering
\begin{tabular}{|l|l|l|l|l|l|}
\hline
        & 1 bit & 2 bits & 3 bits & Fixed Error   & NAC-FL        \\ \hline
Mean    & 9.49  & 5.85   & 6.46   & 2.49          & \textbf{2.48} \\ \hline
90th & 11.5  & 7.16   & 8.09   & \textbf{3.48} & 3.54          \\ \hline
10th & 8.30  & 4.37   & 4.98   & 1.74          & \textbf{1.54} \\ \hline
Gain & 319\%  & 146\%   & 173\%   & 4\%          &  - \\ \hline
\end{tabular}
\caption{Performance comparison of policies with heterogenous independent BTD. The numbers shown are the mean, 90th percentile and 10th percentile times to reach 90\% test accuracy under the different policies, and their average sample-path gain compared to NAC-FL. All the numbers represented are in $10^8$ seconds. }
\label{table:hetero}
\end{table}

\subsubsection{Perfectly Correlated BTD}
 We will demonstrate that NAC-FL performs better than Fixed Error and Fixed Bit policies under increasing correlated delay across time since they are not designed to optimize the wall clock time under this case.

To study the variation of network delay across rounds, consider the marginal auto-regressive process of 1 client which may be represented by the following scalar autoregressive process,
\begin{equation}
Z^{\round} = a'Z^{(\round-1)} + E^{\round},
\label{eq:scalar_ar}
\end{equation}
where $E^{\round} \sim \mathcal{N}(0,1) $. We define metric called \emph{asymptotic variance}, denoted $\sigma_\infty^2$, which is designed to capture the variance, and long and short term correlations of a random process,
\begin{equation}
        \sigma_\infty^2 \triangleq \lim_{n \to \infty} \frac{\EXP\left[\left(Z^{(1)} + \dots + Z^{\round}\right)^2 \right]}{n}.
\end{equation}
For the autoregressive process in \eqref{eq:scalar_ar}, it may be computed to be, $\sigma_\infty^2 = 1/(1-a')^2$.

Table \ref{table:var_time_eq_client} shows the performance of the different policies under varying asymptotic variance of the marginals.  We observe that in addition to beating the baseline fixed bit policies on all the metrics, the NAC-FL performs better than the Fixed Error policy in most metrics as well. Considering the gain metric, we observe gain of 13\% over the Fixed Error policy for low asymptotic variance of $\sigma_{\infty}^2=1.56$, and is as large as 27\% for higher asymptotic variance of $\sigma_{\infty}^2=4$. Notably, in terms of the 10th percentile time to reach 90\% accuracy, the Fixed Error policy required 40\%, 23\% and 32\% more time compared to NAC-FL in the $\sigma_{\infty}^2$=1.56, 4 and 16 cases respectively.

\begin{table}[h]
\centering
\begin{tabular}{|l|l|l|l|l|l|l|}
\hline
                          $\sigma_\infty^2$     &   & 1 bit & 2 bits & 3 bits & Fixed Error & NAC-FL \\ \hline
\multirow{4}{*}{1.56}  & Mean    & 5.14  &  3.04   & 3.47   & 2.21 & {\bf 2.11}  \\ \cline{2-7} 
                          & 90th & 5.94  & 3.65   & 4.43   & {\bf 2.66} & 3.32   \\ \cline{2-7} 
                          & 10th & 3.88  & 2.38   & 2.18   & 1.43 & {\bf 1.02}   \\ \cline{2-7}
                          & Gain & 191\%  & 58\%   & 75\%   & 13\% & -   \\ \hline
\multirow{4}{*}{4}  & Mean    & 5.82  & 3.49   & 4.03   & 2.47 & {\bf 2.23} \\ \cline{2-7} 
                          & 90th & 7.43  & 4.77   & 6.28   & {\bf 3.94}  & 4.00 \\ \cline{2-7} 
                          & 10th & 3.88  & 2.22   & 1.98   & 1.21  & {\bf 0.981} \\ \cline{2-7}
                          & Gain & 252\%  & 82\%   & 107\%   & 27\%  & \\ \hline
\multirow{4}{*}{16} & Mean    & 8.42  & 5.19   & 6.15   & 3.75  & {\bf 3.36} \\ \cline{2-7} 
                          & 90th & 12.8  & 10.3   & 13.4   & 7.94  & {\bf 7.2} \\ \cline{2-7} 
                          & 10th & 4.34  & 1.40   & 1.67   & 1.15  & {\bf 0.87} \\ \cline{2-7}
                          & Gain & 316\%  & 72\%   & 98\%   & 21\%  & - \\ \hline
\end{tabular}
\caption{Performance comparison of policies with perfectly correlated BTD in terms of the mean, 90th percentile and 10th percentile times to reach 90\% test accuracy under the different policies, and their average sample-path gain compared to NAC-FL. All the numbers represented are in 10\textsuperscript{7} seconds. }
\label{table:var_time_eq_client}
\end{table}

\subsubsection{Partially Correlated BTD}
In Table \ref{table:var_time_var_client}, we show results for the partially correlated BTD case with  asymptotic variance $\sigma_{\infty}^2=4$. We consider this case to demonstrate that NAC-FL is effective with positive (but, not 100\%) correlation across clients as well. Indeed, we observe NAC-FL performing better compared to all the other policies across all the considered metrics, with a gain of 10\% over the Fixed Error policy, and 129\% over the best fixed bit policy. Notably, in terms of the 10th percentile and 90th percentile metrics, NAC-FL outperformed Fixed Error policy by 30\% and 15\% respectively. 
\begin{table}[h]
\centering
\begin{tabular}{|l|l|l|l|l|l|}
\hline
        & 1 bit & 2 bits & 3 bits & Fixed Error & NAC-FL        \\ \hline
Mean    & 13.6  & 8.33   & 9.51   & 4.22        & \textbf{3.83} \\ \hline
90th & 15.9  & 10.5   & 13.9   & 6.24        & \textbf{5.46} \\ \hline
10th & 9.51  & 5.47   & 5.80   & 2.64        & \textbf{2.02} \\ \hline
Gain & 307\%  & 129\%   & 159\%   & 10\%          &  - \\ \hline
\end{tabular}
\caption{Performance comparison of policies with partially correlated BTD in terms of the mean, 90th percentile and 10th percentile times to reach 90\% test accuracy under the different policies, and their average sample-path gain compared to NAC-FL. All the numbers represented are in $10^7$ seconds.}
\label{table:var_time_var_client}
\end{table}

Figure \ref{figabc} contains sample path plots of Training Loss and Accuracy vs Wall Clock Time for the independent homogeneous ($\sigma^2=2$), heterogeneous and perfectly correlated ($\sigma_\infty^2=4)$ BTD cases. Both accuracy and loss plots for NAC-FL and Fixed Error are overlapping in the independent homogeneous and heterogeneous BTD cases, as expected. However, in the perfectly correlated BTD case, NAC-FL dominates the performance of Fixed Error policy.

In summary, we observe that NAC-FL's performance is robust under a range of network models considered. NAC-FL vastly outperformed the baseline Fixed Bit policies in all the network models. The performance of NAC-FL was observed to be similar to that of Fixed Error policy in the independent BTD setting, albeit, it outperformed Fixed Error policy in terms of the gain metric under all the network models. Notably, the gap between NAC-FL and Fixed Error policy was observed to be noticeably high in the perfectly and paritally correlated BTD settings, where NAC-FL was able to adapt to positive correlations of BTD across time, whereas Fixed Error could not.

\begin{figure*}[h]%
\centering
\begin{subfigure}{.65\columnwidth}
\includegraphics[width=\columnwidth]{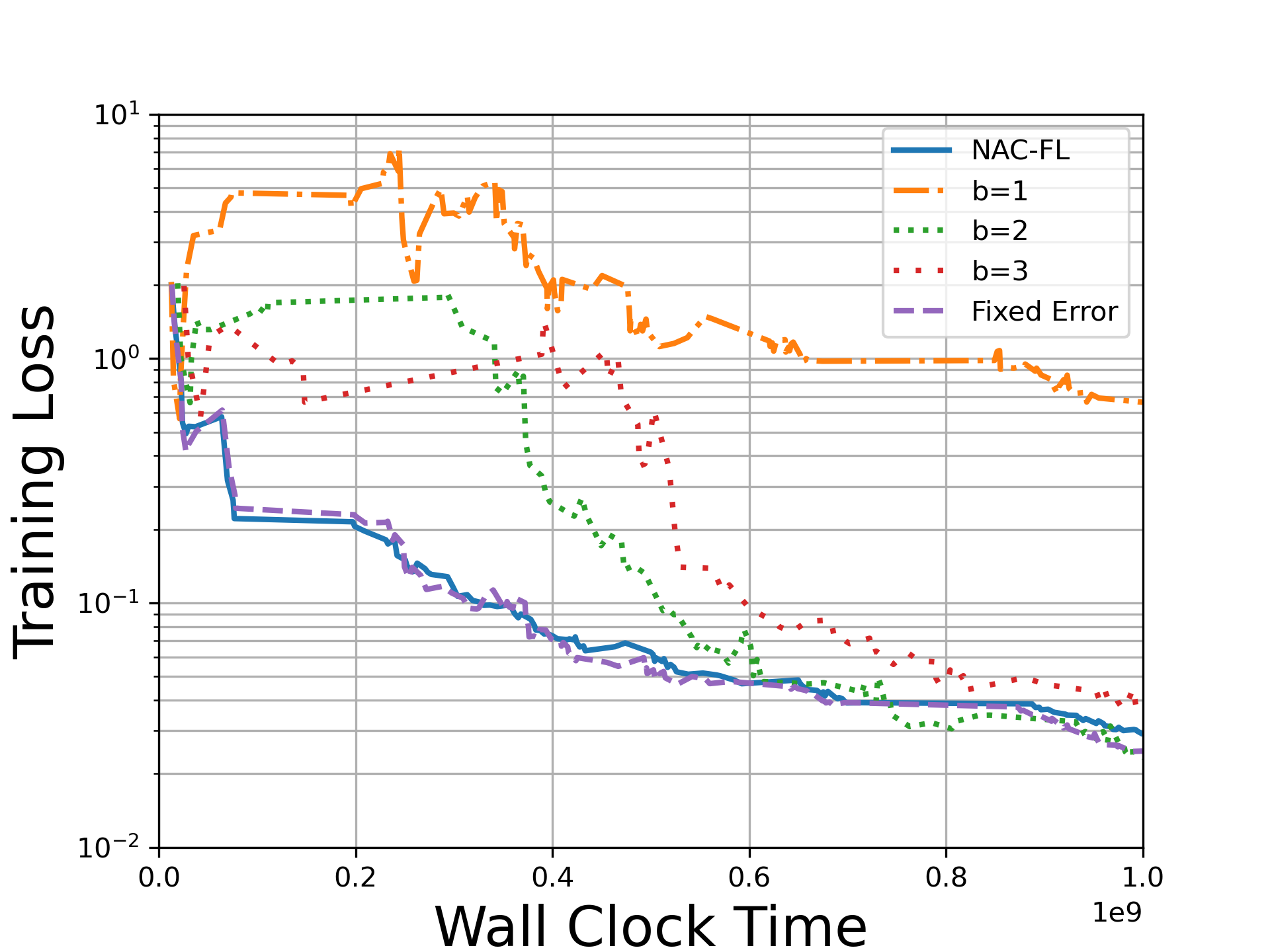}%
\caption{}%
\label{subfig:a}%
\end{subfigure}\hfill%
\begin{subfigure}{.65\columnwidth}
\includegraphics[width=\columnwidth]{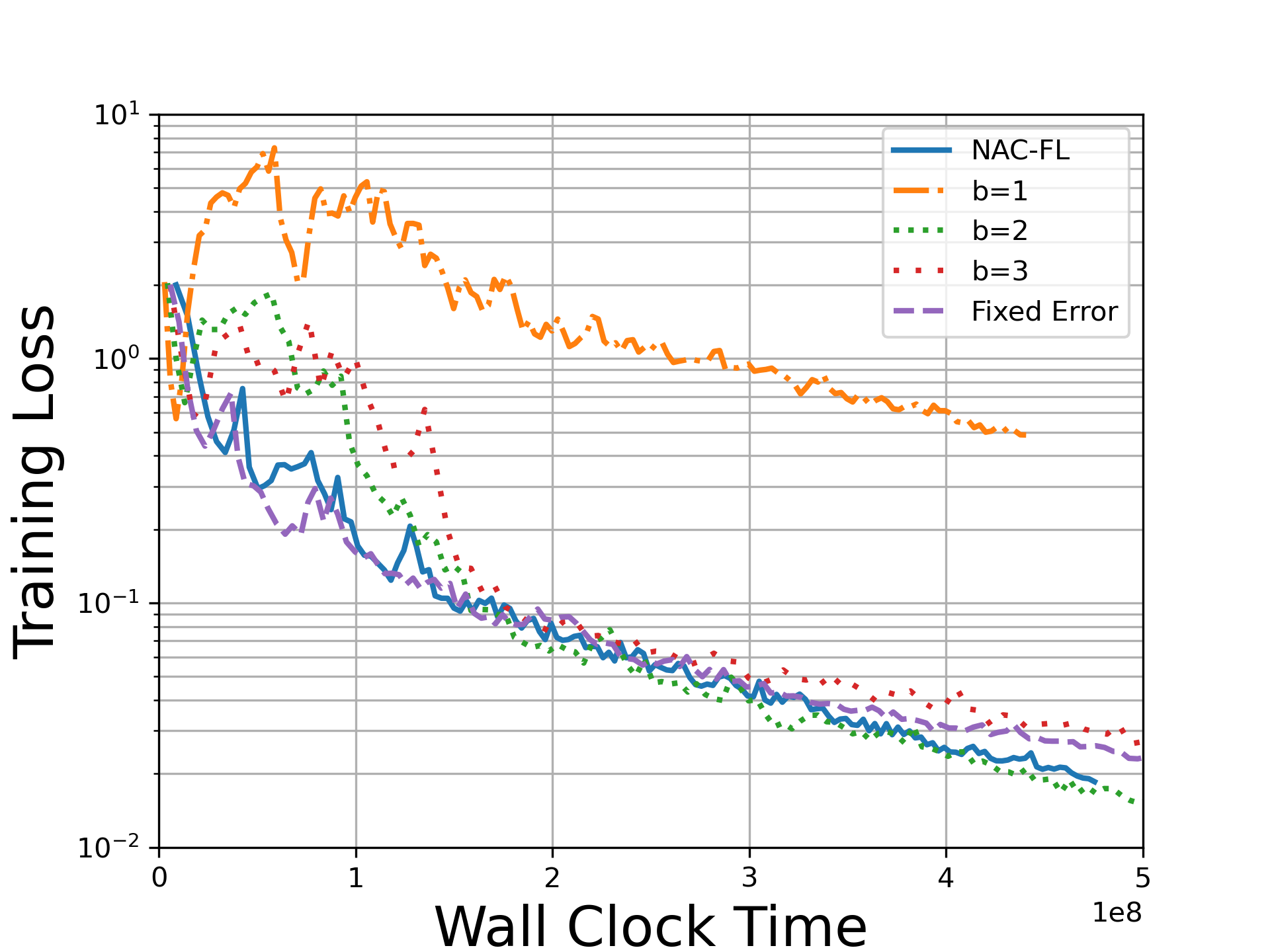}%
\caption{}%
\label{subfig:b}%
\end{subfigure}\hfill%
\begin{subfigure}{.65\columnwidth}
\includegraphics[width=\columnwidth]{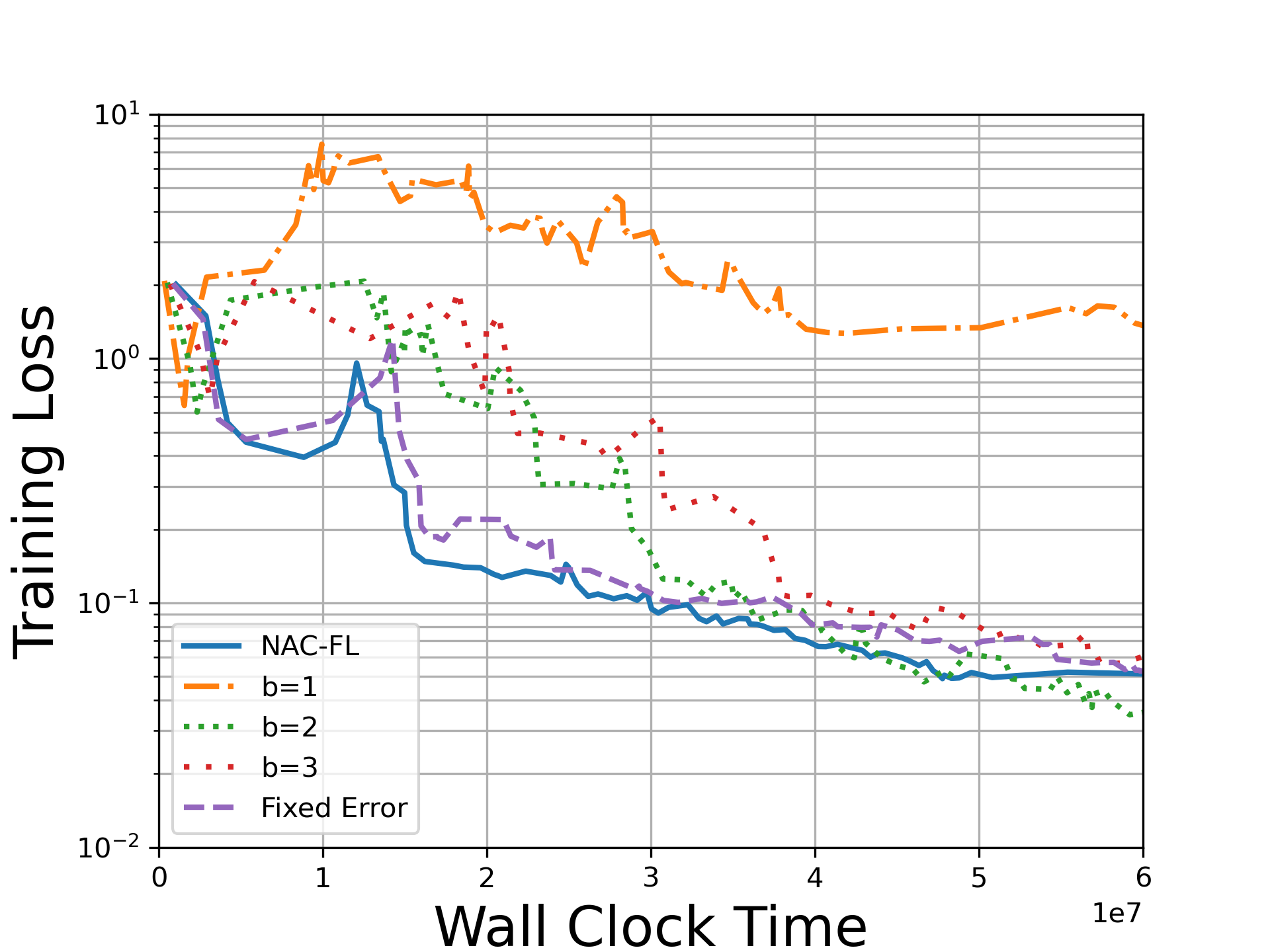}%
\caption{}%
\label{subfig:c}%
\end{subfigure}%

\bigskip
\begin{subfigure}{.65\columnwidth}
\includegraphics[width=\columnwidth]{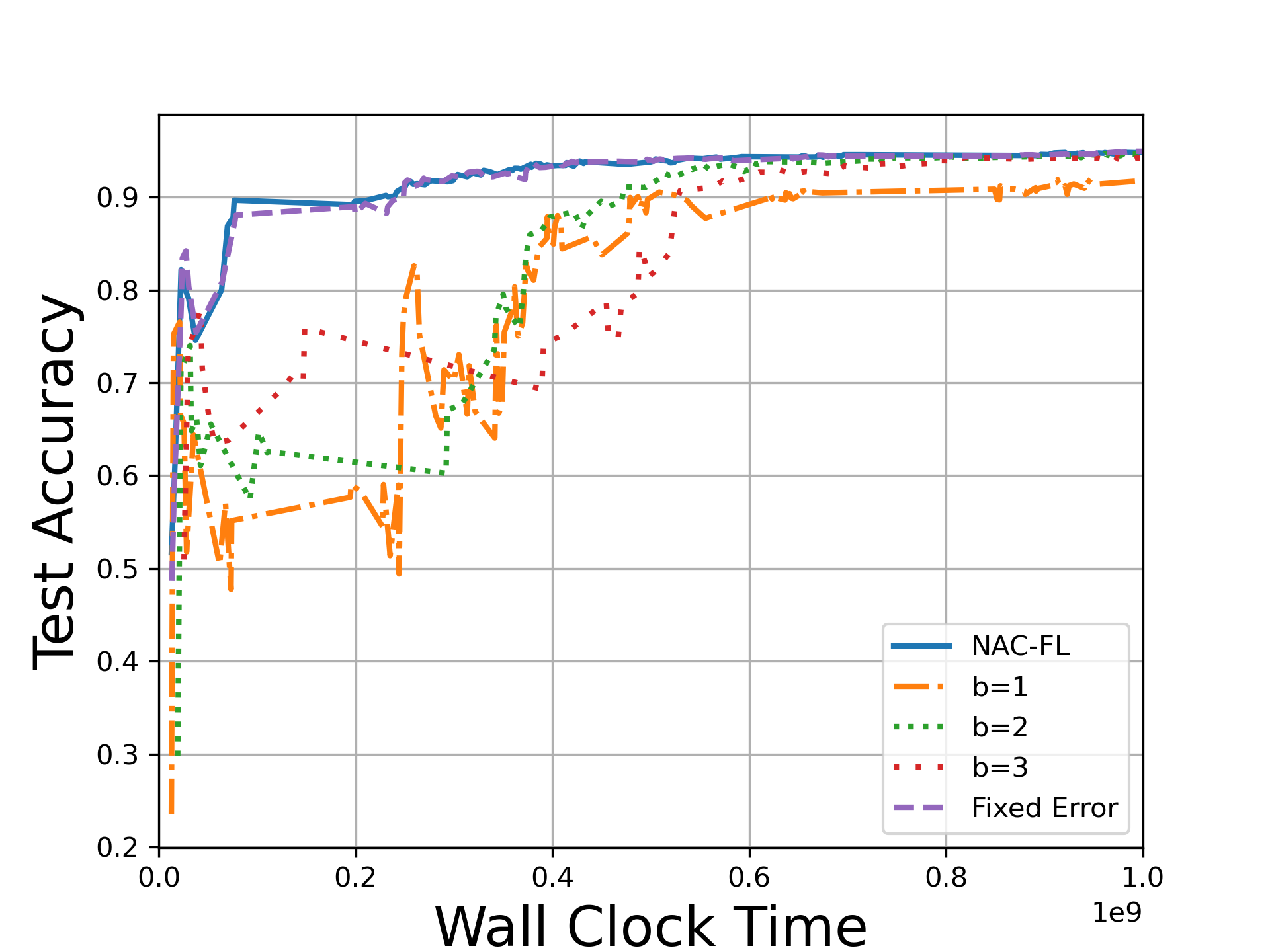}%
\caption{}%
\label{subfig:d}%
\end{subfigure}\hfill%
\begin{subfigure}{.65\columnwidth}
\includegraphics[width=\columnwidth]{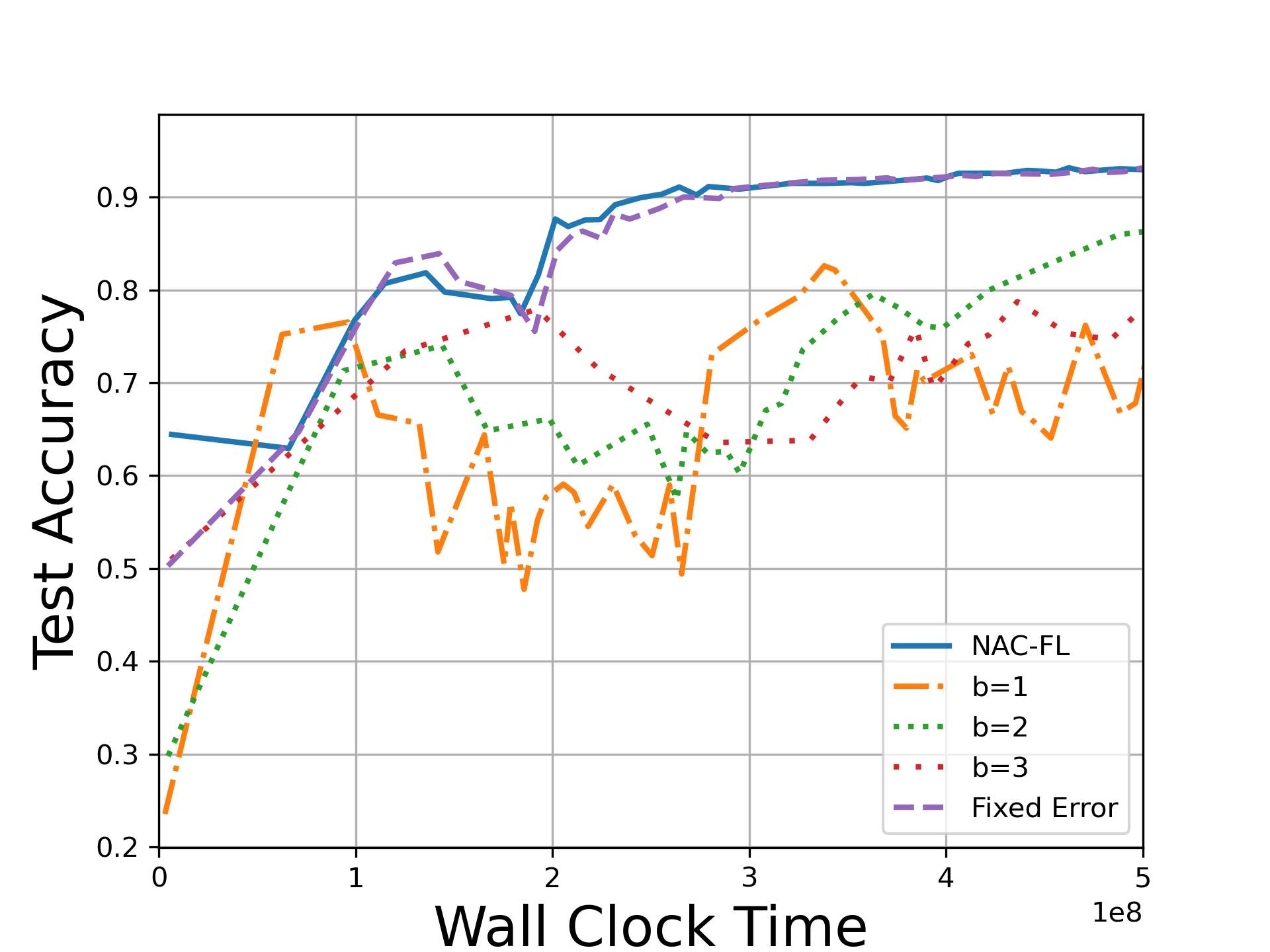}%
\caption{}%
\label{subfig:e}%
\end{subfigure}\hfill%
\begin{subfigure}{.65\columnwidth}
\includegraphics[width=\columnwidth]{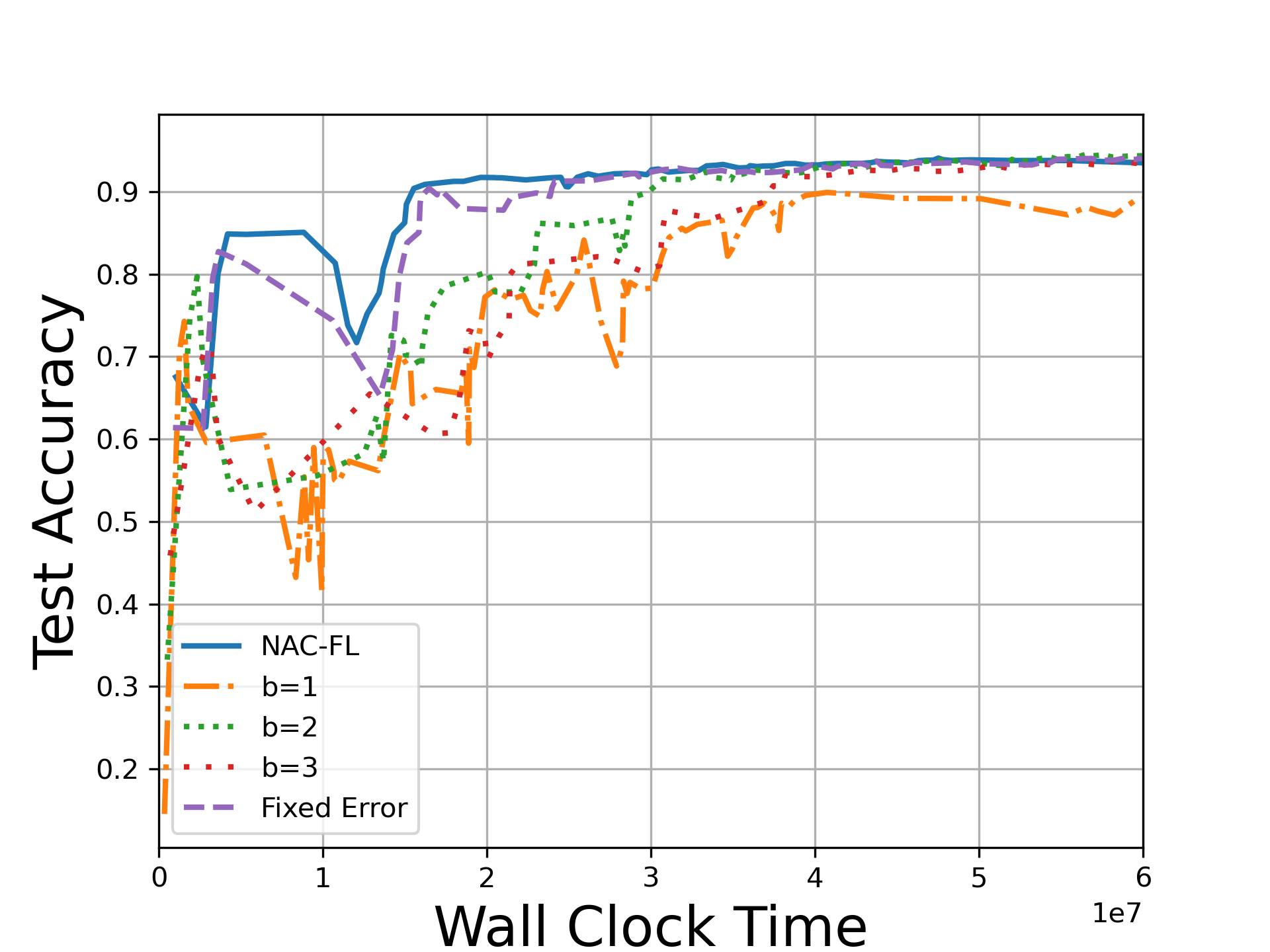}%
\caption{}%
\label{subfig:f}%
\end{subfigure}%
\caption{Plots of Training Loss and Test Accuracy vs Wall Clock time on different network models. Figures (a) and (d) correspond to homogeneous independent BTD case ($\sigma^2=2$), Figures (b) and (e) correspond to the heterogeneous independent BTD case, and Figures (c) and (f) correspond to the perfectly correlated BTD case ($\sigma_\infty^2=4$).}
\label{figabc}
\end{figure*}

\section{NAC-FL in Practice}
\label{sec:flac_practice}
In this section we briefly comment on some practical aspects underlying estimating model update delays. This involves estimating the network's current average BTD to each client. A simple approach to doing so is to observe that for the stochastic quantizer described in Section \ref{ssec:compression_model}, clients always send the vector of signs of their updates, no matter what are the bits per coordinate that will be chosen. So, as the clients send their signs, the server may probe the delay characteristics to estimate the BTD of clients without having to request vacuous (non update related) bits to do so. It may then use these estimates to perform the optimization in \eqref{eq:nac-fl-1} for the round.

\section{Conclusion}
\label{sec:conclusion}

Due to their distributed character FL algorithms are exposed to congestion across a potentially large number of network resources, whence one might say they are exposed to network congestion and variability at scale. Building adaptive algorithms that minimize the impact of time varying congestion across clients presents a significant challenge, particularly when the aim is to directly optimize the expected wall clock time. NAC-FL exemplifies a new class of robust algorithms to optimally adapt clients' lossy compression. This paper further provides the technical roadmap to formalizing and showing asymptotic optimality
for such algorithms.

\bibliographystyle{IEEEtran}
\bibliography{refs}

\onecolumn
\renewcommand{\thesection}{\Roman{section}}
\begin{appendices}

\setcounter{lemma}{0}
\renewcommand{\thelemma}{\Alph{section}.\arabic{lemma}}
\setcounter{prop}{0}
\renewcommand{\theprop}{\Alph{section}.\arabic{prop}}

\section{Federated Learning with Adaptive Compression (FLAC)}
\label{sec:flaq}
 In this section, we consider a variant of the FedCOM algorithm\cite{haddadpour2020federated}, which we will call FedCOM-V. FedCOM is based on fixing a quantization parameter throughout run of the FL algorithm. On the other hand, FedCOM-V allows for an arbitrary sequence of quantization parameters $\left(\q^{\round}\right)_{\round}$, in order to account for adaptive compression policies such as NAC-FL. FedCOM-V is presented in Algorithm \ref{algo:fedcom}. 

\begin{algorithm}[h]
\caption{FedCOM-V}
\label{algo:fedcom}
\SetKwInOut{Input}{Input}
\SetKwInOut{Output}{Output}
\SetKwInOut{Parameter}{Parameter}
\SetKwComment{Comment}{/* }{ */}
\Input{number of local computations schedule $\left(\tau_{\round}\right)_{\round=1}^\infty$, local learning rate schedule $\left(\eta_{\round} \right)_{\round=1}^\infty$, adaptively chosen global learning rate schedule $\left(\gamma_{\round} \right)_{\round=1}^\infty$, adaptively chosen number of rounds $r$, initial global model $\w^{1}$.}
\KwResult{$\w^{r+1}$: Final model}

\For{$\round = 1, \dots, {r}$}{
    \For{each client $j\in[m]$}{
        Set $\w_j^{1,\round} = \w^{\round}$ \;
        \For{$a = 1, \dots, \tau_{\round}$}{
            Sample a minibatch $\mathcal{Z}_j^{a,\round}$ and compute $\tilde{\g}_j^{a,\round} \triangleq \nabla f(\w_j^{a,\round}; \mathcal{Z}_j^{a,\round})$ \;
            $\w_j^{a+1, \round} = \w_j^{a,\round} - \eta_{\round} \tilde{\g}_j^{a,\round}$\;
        }
        Device sends $\tilde{\g}_{Qj}^{\round} = \quant((\w^{\round}-\w_j^{\tau_{\round}+1, \round})/\eta_{\round}, q_j^{\round})$ back to the server\;
    }
    Server computes, $\tilde{\g}_{Q}^{\round} = \frac{1}{m} \sum_{j=1}^m \tilde{\g}_{Qj}^{\round}$ \;
    Server computes $\w^{\round+1} = \w^{\round} - \eta_{\round}\gamma_{\round} \tilde{\g}_{Q}^{\round}$ and broadcasts to all devices\;
}
\end{algorithm}

In order to study the convergence properties of FedCOM-V, we make the following standard assumptions.
\begin{assumption}[Smoothness and Lower Boundedness]
The objective function $f(\cdot)$ is differentiable and $L$-smooth. That is, $\norm{\nabla f(\x) - \nabla f(\y)} \leq L \norm{\x -\y}$, for every $\x, \y \in \R^d$. Moreover, the optimal value of $f$ is lower bounded, $f^* = \min_\w f(\w) > -\infty$.
\label{ass:L-smooth}
\end{assumption}
\begin{assumption}[Bounded Variance]
For all clients $j \in [m]$ and rounds $n$ and local step $a$, we can sample an independent mini-batch $\mathcal{Z}_{j}^{a,n}$ of size $|\mathcal{Z}_{j}^{a,n}| = b$ and compute an unbiased stochastic gradient $\tilde{\g}_{j}^{a,n} = \nabla f(\w ; \mathcal{Z}_{j}^{a,n})$, i.e., $\EXP_{\mathcal{Z}_{j}^{a,n}}[\tilde{\g}_j] = \nabla f(\w_j^{a,n})$. Moreover, the variance is bounded by a constant $\sigma^2$, i.e., $\EXP_{\mathcal{Z}_{j}^{a,n}} \left[\norm{\tilde{\g}_{j}^{a,n} -\nabla f \left(\w_j^{a,n} \right)}^2 \right] \leq \sigma^2$.
\label{ass:sg_variance}
\end{assumption}
\begin{assumption}[Compression Model]
The output of the compressor $\quant(\x, q)$ is an unbiased estimator of $\x$, i.e., $\EXP[\quant(\x, q)|\x] = \x$, and, its variance is bounded as, $\EXP[\norm{\quant(\x, q)-\x}^2|x] \leq q\norm{\x}^2$. 
\label{ass:quantization}
\end{assumption}

We denote the \emph{maximum normalized-variance} by $q_\max$  and  the \emph{average normalized-variance} used at round $\round$ by
\begin{equation}
    \bar{q}^{\round} = \frac{1}{m} \sum_{j=1}^m q_j^{\round}.
    \label{eq:avg_norm_variance}
\end{equation}

The following Theorem states the relationship between $\left(\q^{\round}\right)_n$, $\varepsilon$ and $r_{\varepsilon}$ and is proved in Appendix \ref{app:fedcom_proof}.
\begin{theorem}
Let Algorithm \ref{algo:fedcom} be run with a sequence of compressors such that the average normalized-variance at round $\round$ is $\bar{Q}^{\round}$. Further, assume that the sequence $\left(\bar{Q}^{\round}\right)_n$ forms a stationary process with the stationary distribution represented by a random variable $Q$. 
To obtain $\EXP[\norm{\nabla f(\w)}^2] \leq \varepsilon$, we can choose,
\begin{equation*}
    r_{\varepsilon} = O\left(\log(1/\varepsilon)\frac{\EXP\left[\sqrt{Q+1}\right]}{\varepsilon}  \right), \quad \tau^{\round} = O\left( \round \right).
\end{equation*}

\label{th:fedcom_informal}
\end{theorem}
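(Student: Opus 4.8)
The plan is to run a standard descent-lemma analysis for nonconvex stochastic optimization, augmented to track the two extra sources of noise in FedCOM-V — the variance injected by the compressor (through $\bar{Q}^{\round}$) and the local client drift (through $\tau_{\round}$) — and then convert the resulting per-round recursion into an iteration-complexity bound by choosing the schedules appropriately. First I would invoke $L$-smoothness (Assumption \ref{ass:L-smooth}) on consecutive global iterates,
\[
f(\w^{\round+1}) \le f(\w^{\round}) - \eta_{\round}\gamma_{\round}\langle \nabla f(\w^{\round}), \tilde{\g}_Q^{\round}\rangle + \frac{L}{2}\eta_{\round}^2\gamma_{\round}^2 \norm{\tilde{\g}_Q^{\round}}^2,
\]
and take conditional expectation given the state at the start of round $\round$. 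Using unbiasedness of the stochastic gradients and of the compressor (Assumptions \ref{ass:sg_variance}, \ref{ass:quantization}), the cross term reduces to an inner product of $\nabla f(\w^{\round})$ with the averaged expected local gradients, while I would split the second moment as $\EXP[\norm{\tilde{\g}_Q^{\round}}^2] = \EXP[\norm{\tilde{\g}_Q^{\round} - \bar{\g}^{\round}}^2] + \EXP[\norm{\bar{\g}^{\round}}^2]$, with $\bar{\g}^{\round}$ the uncompressed aggregate. The first piece is controlled by the compression variance bound $\EXP[\norm{\quant(\x,q)-\x}^2\mid\x]\le q\norm{\x}^2$, which, together with independence of the compressors across the $m$ clients and the $\tfrac1m$ averaging, yields a term proportional to $(\bar{Q}^{\round}+1)/m$. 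This is precisely the channel through which the normalized variance enters.

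Next I would bound the \emph{client drift} $\EXP[\norm{\w_j^{a,\round}-\w^{\round}}^2]$ accumulated over the local steps $a=1,\dots,\tau_{\round}$; using $L$-smoothness and bounded variance, this grows like $\tau_{\round}^2\eta_{\round}^2$ times gradient and noise magnitudes. Substituting the second-moment and drift bounds back produces a one-round recursion of the schematic form
\[
\EXP[f(\w^{\round+1})] \le \EXP[f(\w^{\round})] - c_1\,\eta_{\round}\gamma_{\round}\tau_{\round}\,\EXP[\norm{\nabla f(\w^{\round})}^2] + c_2\,\eta_{\round}^2\gamma_{\round}^2(\bar{Q}^{\round}+1)\frac{\sigma^2}{m} + (\text{drift terms}).
\]
I would telescope this over $\round=1,\dots,r$ and use $f(\w^{r+1})\ge f^*$ (Assumption \ref{ass:L-smooth}) to cancel the boundary, thereby isolating a weighted average of $\EXP[\norm{\nabla f(\w^{\round})}^2]$ against a sum of the noise and drift contributions.

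Since $(\bar{Q}^{\round})_{\round}$ is assumed stationary, the empirical time-average $\frac1r\sum_{\round}\sqrt{\bar{Q}^{\round}+1}$ concentrates on $\EXP[\sqrt{Q+1}]$, which is the origin of the factor $\EXP[\sqrt{Q+1}]$ in the stated rate; the chosen schedule is arranged so that the dominant noise term enters through $\sqrt{\bar{Q}^{\round}+1}$ rather than $\bar{Q}^{\round}+1$. Carrying the telescoped inequality through the schedule choices gives a bound of the form $\min_{\round}\EXP[\norm{\nabla f(\w^{\round})}^2] = O(\log(r)\,\EXP[\sqrt{Q+1}]/r)$, and setting the right-hand side at most $\varepsilon$ and solving for $r$ yields $r_{\varepsilon}=O(\log(1/\varepsilon)\,\EXP[\sqrt{Q+1}]/\varepsilon)$.

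The decisive and most delicate step is the joint choice of the schedules $\eta_{\round}$, $\gamma_{\round}$, and $\tau_{\round}=O(\round)$. The growing number of local computations is what drives the effective per-round progress while keeping the accumulated variance summable, acting much like a growing batch size; pairing $\tau_{\round}=O(\round)$ with a suitably decaying learning rate is what collapses the naive $1/\varepsilon^2$ behavior into a single $1/\varepsilon$ factor with only logarithmic overhead. I expect the main obstacle to be exactly this balancing: simultaneously controlling the drift terms (which scale up with $\tau_{\round}$) and the compression/stochastic variance (which must be made summable) under a \emph{growing} $\tau_{\round}$ and a merely stationary — not i.i.d. — compression sequence, so that the square-root dependence on $Q+1$ and the $\log(1/\varepsilon)/\varepsilon$ rate emerge cleanly rather than a weaker guarantee.
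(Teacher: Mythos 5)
Your proposal is correct and follows essentially the same route as the paper: a descent-lemma recursion with a mean--variance split of $\EXP[\norm{\tilde{\g}_Q^{\round}}^2]$ isolating the $(\bar{Q}^{\round}+1)/m$ compression term, a client-drift bound over the $\tau_{\round}$ local steps, telescoping against $f^*$, and schedules $\eta_{\round}\propto 1/\round$, $\tau_{\round}\propto \round$, $\gamma_{\round}\propto 1/\sqrt{\bar{Q}^{\round}+1}$ so that stationarity turns the time-average into $\EXP[\sqrt{Q+1}]$. The balancing act you flag as the delicate step is exactly what the paper's condition $1 \geq 2\tau_{\round}^2 L^2\eta_{\round}^2 + 2(q_{\max}/m+1)\eta_{\round}\gamma_{\round}L\tau_{\round}$ and its explicit constants resolve.
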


 The upper bound on $r_\varepsilon$ in Theorem \ref{th:fedcom_informal} provides a justification for Assumption \ref{ass:q_suff_stat} with $h_{\varepsilon}(q)=O(\sqrt{q+1})$. Here, $\tau^{\round}$ is a function of $\round$, but for the purposes of NAC-FL we may use the average of $\tau^{(1)}$ to $\tau^{(r_\varepsilon)}$ in the expression of the duration function. One may obtain a similar expression for other popular FL algorithms \cite{stich2018sparsified, alistarh2017qsgd}.

\newpage
\section{Proof of Theorem \ref{th:asympt_opt}}
\label{app:proof-sketch}

In this section we show that NAC-FL converges to the optimal solution asymptotically as $\beta \downarrow 0$. 

In order to consider the effect of $\beta \downarrow 0$ on NAC-FL estimates $\hat{R}_{\varepsilon}^{n}$ and $\hat{D}^{n}$ in \eqref{eq:FW_update}, we shall denote these as $\hat{R}_{\varepsilon, \beta}^{n}$ and $\hat{D}_{\beta}^{n}$ respectively. Let $\conv(V_{\varepsilon})$ be the convex hull of the set $V_{\varepsilon}$ defined in \eqref{eq:feasible_set}. Recall the positive sequence $(\beta_i)_i$ with $\beta_i \to 0$ from the statement of Theorem \ref{th:asympt_opt}. Letting $\X_{\beta}^{\round} \triangleq ( \hat{R}_{\varepsilon, \beta}^{\round} \: \hat{D}_{\beta}^{\round} )^\top$, and $H(\x) \triangleq x_1x_2$ over the domain $\mathbb{R}_{+}^{2}$, we have the following result.
\begin{prop}
Let the initialization $\X_{\beta}^{0}$ be equal to $\x^{0} \in \mathbb{R}_{+}^{2}$ almost surely for any $0<\beta<1$, then, for any $s > 0$, $\lim_{i \to \infty} \X_{\beta_i}^{\floor*{s/\beta_i}}$ exists, is almost surely deterministic and denoted as $\x(s) \triangleq \lim_{i \to \infty} \; \X_{\beta_i}^{(\floor*{s/\beta_i})}$. Further, for any $\x^{0} \in \mathbb{R}_{+}^{2}$, $\x(s)$ obeys the following differential equation,
\begin{equation}
    \begin{split}
        \x(0) &= \x^{0}, \\
        \dot{\x}(s) &= \boldsymbol{v}(s) - \x(s), \quad s > 0,\\
        \boldsymbol{v}(s) &= \underset{\boldsymbol{v} \in \conv(V_{\varepsilon})}{\textnormal{argmin}} \quad \nabla H\left(\x(s)\right)^\top \boldsymbol{v}, \quad s>0.
    \end{split}
    \label{eq:differential_equation}
\end{equation}
\label{prop:fluid_limit}
\end{prop}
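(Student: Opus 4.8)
The plan is to read the NAC-FL iteration \eqref{eq:FW_update} as a constant-step-size stochastic approximation scheme and to establish its fluid limit by the ODE method. Writing $\x=(x_1,x_2)$ for the running estimates $(\hat{r}_\varepsilon,\hat{d})$, and setting $F(\x,\c) \triangleq \big(\norm{\boldsymbol{h}_\varepsilon(\q^*(\x,\c))},\, d(\tau,\q^*(\x,\c),\c)\big)$ with $\q^*(\x,\c) \triangleq \argmin_{\q}\, x_1\, d(\tau,\q,\c) + x_2\,\norm{\boldsymbol{h}_\varepsilon(\q)}$ the per-round minimizer used by the algorithm, the update reads $\X_\beta^{\round+1} = \X_\beta^{\round} + \beta\big(F(\X_\beta^{\round},\C^{\round+1}) - \X_\beta^{\round}\big)$. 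The first thing I would verify is that the averaged field equals the Frank--Wolfe direction: since $\nabla H(\x)=(x_2\ \ x_1)^\top$ and minimizing the linear functional $\nabla H(\x)^\top\boldsymbol{v}$ over all state-dependent stationary policies decouples across network states, the per-state minimizer $\q^*(\x,\c)$ exactly realizes the linear-minimization oracle defining $V_\varepsilon$ in \eqref{eq:feasible_set}. Hence $\EXP_\mu[F(\x,\C)] = \argmin_{\boldsymbol{v}\in\conv(V_\varepsilon)} \nabla H(\x)^\top\boldsymbol{v} =: \boldsymbol{v}^*(\x)$, using that a linear functional attains the same minimum over $V_\varepsilon$ and over its convex hull. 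This identifies the candidate drift as $\boldsymbol{v}^*(\x)-\x$, matching \eqref{eq:differential_equation}.

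Next I would secure the compactness needed to extract limits. By Assumptions~\ref{ass:q_suff_stat} and~\ref{ass:comm_delay} both $\boldsymbol{h}_\varepsilon$ and $d$ are bounded, so $F$ is uniformly bounded; consequently the iterates remain in a fixed compact subset of $\R_+^2$ and the piecewise-linear interpolations $s\mapsto \X_{\beta_i}^{\floor{s/\beta_i}}$ are uniformly Lipschitz, hence relatively compact in $C([0,\infty),\R^2)$ by Arzel\`a--Ascoli. Any subsequential limit is absolutely continuous; the substance of the proof is to show it solves \eqref{eq:differential_equation}.

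The main obstacle is the Markovian averaging, since $(\C^{\round})_{\round}$ is correlated across rounds and no i.i.d.\ law of large numbers applies. Here I would exploit the geometric ergodicity granted by Assumption~\ref{ass:markov}: for each fixed $\x$ the centered increment $g_\x(\cdot) \triangleq F(\x,\cdot) - \boldsymbol{v}^*(\x)$ is $\mu$-mean-zero on the finite state space, so the Poisson equation $(I-P)\hat\phi_\x = g_\x$, with $P$ the transition matrix, admits a bounded solution $\hat\phi_\x$. Substituting this decomposes the accumulated noise $\sum \beta\, g_{\X^{\round}}(\C^{\round+1})$ into a telescoping part of order $\beta$, a martingale-difference sum whose quadratic variation scales like $\beta^2\cdot(s/\beta)=\beta s\to 0$, and a slow-variation remainder controlled through the Lipschitz dependence of $\hat\phi_\x$ on $\x$ together with the $O(\beta)$ per-step drift of the iterate, which again contributes $O(\beta s)$. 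A maximal inequality then shows the cumulative noise vanishes uniformly on compact time intervals as $\beta_i\to 0$. Continuity and single-valuedness of $\hat\phi_\x$ and $\boldsymbol{v}^*$ in $\x$ are inherited from continuity of $F(\x,\cdot)$, which rests on good behavior of the per-state minimizer $\q^*(\x,\c)$; this is where the convexity of $d$ in $\r$ from Assumption~\ref{ass:comm_delay} is used, with ties resolved by a measurable selection where the oracle is flat.

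Finally, passing to the limit along any convergent subsequence yields the integral equation $\x(s) = \x^{0} + \int_0^s\big(\boldsymbol{v}^*(\x(u)) - \x(u)\big)\,du$, i.e.\ the ODE \eqref{eq:differential_equation}. Continuity of $\boldsymbol{v}^*$ on the compact set, together with the Lipschitz structure of linear minimization over the fixed polytope $\conv(V_\varepsilon)$, gives a unique solution from $\x^{0}$; therefore every subsequential limit coincides with this deterministic trajectory, which simultaneously proves existence of $\lim_{i\to\infty}\X_{\beta_i}^{\floor{s/\beta_i}}$ and that it is almost surely deterministic. I expect the Markovian-noise control to be the crux; the remaining steps are routine once the Poisson-equation bound and the regularity of the Frank--Wolfe oracle are in place.
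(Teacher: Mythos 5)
Your proposal is correct and shares the paper's overall architecture --- establish relative compactness of the time-rescaled iterates, then identify every subsequential limit with the deterministic ODE \eqref{eq:differential_equation} --- but the crux step, averaging out the Markovian network-state noise, is handled by a genuinely different device. The paper follows \cite{stolyar2005asymptotic}: it exploits the pathwise ``asymptotic Lipschitz'' bound $\norm{\x_\beta(s_1)-\x_\beta(s_2)}\leq 2K\max(\beta,\lvert s_1-s_2\rvert)$ to freeze the gradient direction over a block of $\Delta/\beta$ rounds, applies the law of large numbers for Markov chains to the per-state optimal values $Y^*_{\C^{\round}}$ over that block, and identifies $\sum_{\C}\mu(\C)Y^*_{\C}$ with $\min_{\boldsymbol{v}\in\conv(V_\varepsilon)}\nabla H(\x(s))^\top\boldsymbol{v}$; relative compactness is obtained via the Ethier--Kurtz criterion in $D_{\mathbb{R}^2}[0,\infty)$ rather than Arzel\`a--Ascoli on interpolations. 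You instead center the increment via the Poisson equation $(I-P)\hat\phi_{\x}=g_{\x}$ and split the accumulated noise into a telescoping term, a martingale with quadratic variation $O(\beta s)$, and a slow-variation remainder. Your route is the standard stochastic-approximation-with-Markov-noise decomposition and buys quantitative control of the fluctuation (useful if one wanted rates), at the cost of needing regularity of $\hat\phi_{\x}$ and of the oracle $\q^*(\x,\c)$ in $\x$; the paper's block-averaging avoids the Poisson equation entirely by leaning on the finiteness of $\mathcal{C}$ and the slow motion of the iterate. One caveat for either route: continuity and single-valuedness of $\boldsymbol{v}^*(\x)$, which you invoke both for the averaged field and for uniqueness of the ODE solution, is not automatic for a linear-minimization oracle over a general convex set; here it is rescued by the strict convexity of the lower boundary of $\conv(V_\varepsilon)$ (established in the paper via Assumption~\ref{ass:comm_delay}, cf.\ the strict convexity of $\bar{d}$ in Appendix~D), and you should cite that rather than generic ``Lipschitz structure of linear minimization over a polytope,'' since $\conv(V_\varepsilon)$ is not a polytope and argmins over polytopes are in fact discontinuous at degenerate directions.
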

The proof of Proposition \ref{prop:fluid_limit} is very similar to that of the main result of \cite{stolyar2005asymptotic}. For the sake of completeness, we briefly show the proof at the end of this section. From hereon, \eqref{eq:differential_equation} will be referred to as the Fluid-Frank-Wolfe (FFW) process.

\begin{prop}
Under Assumption \ref{ass:unique_stationary}, the FFW process in \eqref{eq:differential_equation} has a unique fixed point $\x^* \in \conv(V_{\varepsilon})$ such that,
\[
\x^* = \underset{\x \in \conv(V_{\varepsilon})}{\textnormal{argmin}} \quad \nabla H\left(\x^*\right)^\top\x.
\]
Moreover, $\x^* \in V_{\varepsilon}$, and $\x^*$ is the minimizer of $H$ over the set $V_{\varepsilon}$.
\label{prop:unique_stationary}
\end{prop}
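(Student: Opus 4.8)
The plan is to identify the fixed points of the FFW flow \eqref{eq:differential_equation} with the solutions of a variational inequality, localize them to $V_{\varepsilon}$ using the planar geometry of $H(r,d)=rd$, and then invoke Assumption \ref{ass:unique_stationary} to pin them down to a single point. I would first record the standing structural facts. By the boundedness of $h_\varepsilon$ in Assumption \ref{ass:q_suff_stat} and of $d$ in Assumption \ref{ass:comm_delay}, the set $V_{\varepsilon}$ in \eqref{eq:feasible_set} is bounded; since it is the image of the compact policy cube $[0,q_{\max}]^{m\lvert\mathcal{C}\rvert}$ under the continuous maps $\pifunc\mapsto\EXP[\norm{\boldsymbol h_\varepsilon(\pifunc(\C))}]$ and $\pifunc\mapsto\EXP[d(\tau,\pifunc(\C),\C)]$, it is compact, so $\conv(V_{\varepsilon})$ is a compact convex subset of $\R_{++}^{2}$ (positivity from $h_\varepsilon>0$ and durations being strictly positive). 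Continuity of $H$ then yields a minimizer on $\conv(V_{\varepsilon})$. A point $\x^*$ is a fixed point of \eqref{eq:differential_equation} exactly when $\dot\x=0$, i.e. $\x^*=\boldsymbol v(s)=\argmin_{v\in\conv(V_{\varepsilon})}\nabla H(\x^*)^\top v$, which is equivalent to the variational inequality $\nabla H(\x^*)^\top(v-\x^*)\ge 0$ for all $v\in\conv(V_{\varepsilon})$.

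Next I would show that every solution of this variational inequality is a vertex of $\conv(V_{\varepsilon})$ lying in $V_{\varepsilon}$, which is the geometric heart of the argument. Since $\nabla H(\x)=(x_2,x_1)^\top$ is strictly positive on $\R_{++}^{2}$, no interior point can satisfy the inequality, so $\x^*$ lies on the boundary. If $\x^*$ were in the relative interior of an edge $[v_1,v_2]$ with $v_1,v_2\in V_{\varepsilon}$ (extreme points of a planar convex hull belong to $V_{\varepsilon}$ by Milman's theorem), then $v_1-\x^*$ and $v_2-\x^*$ are antiparallel, so the inequality forces $\nabla H(\x^*)^\top(v_1-v_2)=0$; with $\nabla H(\x^*)\in\R_{++}^{2}$ this makes the edge strictly decreasing, and $H$ restricted to it is the quadratic $t\mapsto(tr_1+(1-t)r_2)(td_1+(1-t)d_2)$ with second derivative $2(r_1-r_2)(d_1-d_2)<0$, hence strictly concave. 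A strictly concave function on a segment attains its minimum only at an endpoint, contradicting optimality of $\x^*$ on $[v_1,v_2]$. Applying this to the minimizer of $H$ over $\conv(V_{\varepsilon})$ — whose optimality gives the variational inequality as a first-order necessary condition — shows that minimizer is a vertex in $V_{\varepsilon}$, so it also minimizes $H$ over $V_{\varepsilon}$; and the same reasoning shows any fixed point is such a vertex.

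Finally I would upgrade ``fixed point'' to ``unique fixed point'' using Assumption \ref{ass:unique_stationary}. Each vertex $\x^*\in V_{\varepsilon}$ solving the variational inequality is realized by a state-dependent stationary policy whose representative $\r^{\pivec}=\boldsymbol h_\varepsilon(\pivec)$ is a stationary point of the objective $\hat t_{\varepsilon}^{\pifunc}$ in \eqref{eq:offline_opt0}; the strict quasiconvexity encoded in \eqref{eq:assumpt_unique_stationary} guarantees this objective admits a single stationary policy, necessarily its global minimizer. Identifying the FFW fixed points with these stationary policies then yields uniqueness of $\x^*$, and the minimizing property over $V_{\varepsilon}$ established above completes the claim.

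I expect the principal obstacle to be precisely this last bridge. The function $H(r,d)=rd$ is neither convex nor quasiconvex on $\R_{++}^{2}$ — its sublevel sets $\{rd\le\alpha\}$ fail to be convex — so one cannot read ``stationary point $\Rightarrow$ global optimum'' off the planar objective, and the inference must instead be transported through the policy parameterization and Assumption \ref{ass:unique_stationary}. Making the correspondence between planar FFW-stationarity and stationarity of $\hat t_{\varepsilon}^{\pifunc}$ in the $m\lvert\mathcal{C}\rvert$-dimensional variable $\r^{\pivec}$ fully rigorous is the delicate part, since the map from policies to $(\hat r_{\varepsilon},\hat d)$ is nonlinear and first-order conditions must be carried through its Jacobian; by comparison the planar concavity argument that confines fixed points to $V_{\varepsilon}$ is comparatively routine.
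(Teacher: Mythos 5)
Your setup (fixed point $\iff$ variational inequality $\nabla H(\x^*)^\top(\boldsymbol{v}-\x^*)\ge 0$ on $\conv(V_{\varepsilon})$, compactness, existence of a minimizer of $H$ satisfying the VI) is sound, and your strict-concavity-along-a-face argument is a legitimately different and rather slick way to show that the \emph{minimizer} of $H$ over $\conv(V_{\varepsilon})$ is an extreme point, hence lies in $V_{\varepsilon}$. But the extension ``and the same reasoning shows any fixed point is such a vertex'' does not go through. For a general VI solution $\x^*$ in the relative interior of a face $[\boldsymbol{v}_1,\boldsymbol{v}_2]$, the VI only gives $\nabla H(\x^*)^\top(\boldsymbol{v}_1-\boldsymbol{v}_2)=0$, i.e.\ $\x^*$ is a \emph{critical point} of the strictly concave restriction of $H$ to the face --- which makes it the interior \emph{maximizer} of $H$ there. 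That is perfectly consistent with the first-order condition and contradicts nothing, because a fixed point is not assumed to minimize $H$ on the face. To exclude non-extreme fixed points you need a different direction of descent: the paper uses $\nabla H(\x)^\top\x=2x_1x_2>0$, so the ray direction $\alpha\x$ with $\alpha<1$ strictly decreases the linear objective whenever $\alpha\x\in\conv(V_{\varepsilon})$; that in turn requires characterizing the non-extreme points via the lower envelope $\bar{d}(r)=\min\{d':(r,d')\in\conv(V_{\varepsilon})\}$ and proving $\bar{d}$ is strictly convex (so the lower boundary contains no segments). Equivalently, you could compare $\x^*=(r^*,d^*)$ with $(r^*,\bar{d}(r^*))$ directly. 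Either way, a fact about the geometry of $V_{\varepsilon}$ beyond what you have stated is needed.

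The more serious gap is uniqueness, which you flag yourself but do not close; it is the entire point of invoking Assumption \ref{ass:unique_stationary}. Your third paragraph asserts that a planar fixed point corresponds to a policy at which $\hat{t}_{\varepsilon}^{\pifunc}$ is ``stationary'' and that \eqref{eq:assumpt_unique_stationary} ``guarantees this objective admits a single stationary policy.'' Neither claim is established, and the second is not what the assumption says: \eqref{eq:assumpt_unique_stationary} is a \emph{radial} second-order condition (positive curvature along the ray $\r^{\pivec}$ wherever the radial derivative vanishes), not a statement that $\nabla_{\r^{\pivec}}\hat{t}_{\varepsilon}^{\pifunc}=0$ has a unique solution. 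The missing bridge is exactly the paper's chain: (i) define $\bar{d}(r)$ as the value of the constrained problem $\min\{\tilde{d}(\pivec):\tilde{r}_{\varepsilon}(\pivec)=r\}$ and prove it is decreasing and strictly convex (using convexity of $d(\tau,\boldsymbol{h}_{\varepsilon}^{-1}(\cdot),\c)$ and of the norm, plus monotonicity to relax the equality constraint); (ii) show the scalar function $\hat{t}(r)=r\,\bar{d}(r)$ inherits strict quasiconvexity from Assumption \ref{ass:unique_stationary}, which requires carrying the first-order conditions through the (nonsmooth, possibly boundary-constrained) argmin map $r\mapsto\r(r)$; and (iii) translate the planar VI at an extreme point $(r,\bar{d}(r))$ into the one-sided conditions $\hat{t}'_L(r)\le 0\le \hat{t}'_R(r)$, so that strict quasiconvexity of $\hat{t}$ on a compact interval yields exactly one such $r$. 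Without (i)--(iii) the proposal proves existence of a fixed point minimizing $H$ over $V_{\varepsilon}$, but not that it is the \emph{only} fixed point, which is what Theorem \ref{th:asympt_opt} ultimately relies on.
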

Proposition \ref{prop:unique_stationary} is proved in Appendix \ref{app:theorem_1_proof}.

Denote, $G(\x) = \min_{\boldsymbol{v} \in \conv(V_{\varepsilon})} \nabla H(\x)^\top (\boldsymbol{v}-\x).$
Since, $\nabla H$ is a continuous function of $\x$, $G(\x)$ is a continuous function of $\x$ as well.

As a consequence of Proposition \ref{prop:unique_stationary}, there exists a unique point $\x^{*} \in \conv(V)_{\varepsilon}$ such that $G(\x^{*})=0$. For all other $\x \in \conv(V_{\varepsilon})$, $G(\x) < 0$. We will, in fact, prove a stronger result that $G(\x)$ is bounded away from 0 for points that are a distance away from $\x^*$.

\noindent\textbf{Claim 1:} for any $\omega > 0$, there exists a $\xi > 0$ such that if $\x \in \conv(V_{\varepsilon})$ and  $\norm{\x - \x^*} \geq \omega$, then $G(\x) < -\xi$.
\begin{proof}
 We prove this claim by contradiction. Suppose there exists an $\omega > 0$ such that for all $\xi > 0$, the set, \begin{align*}
 \mathcal{X}^{\xi} &\triangleq \left\{\x^\xi: \x^{\xi} \in \conv(V_{\varepsilon}), \norm{\x^{\xi}-\x^*}\geq \omega \text{ and } G(\x^{\xi}) \geq -\xi \right\}, \\
    &= \conv(V_{\varepsilon}) \bigcap \left\{\x^\xi: \norm{\x^{\xi}-\x^*}\geq \omega \right\} \bigcap \left\{ \x^\xi: G(\x^{\xi}) \geq -\xi \right\},
 \end{align*}
 is non-empty. 
 
 $\conv(V_{\varepsilon})$ is a compact set because it is the convex hull of a compact set, $V_{\varepsilon}$. Further, the sets $\left\{\x^\xi: \norm{\x^{\xi}-\x^*}\geq \omega \right\}$ and $\left\{ \x^\xi: G(\x^{\xi}) \geq -\xi \right\}$ are also closed because they are the pre-image of continuous functions over closed sets. Therefore, $\mathcal{X}^{\xi}$ is a closed set since it is the intersection of three closed sets. Further, it is also bounded because $\conv(V_{\varepsilon})$ is bounded. Therefore, $\mathcal{X}^{\xi}$ is a compact set.
 
 Consider $\xi_1 > \xi_2 > 0$. Since, $G(\x) \geq -\xi_2$ implies that $G(\x) \geq -\xi_1$, we have that $\mathcal{X}^{\xi_1} \supset \mathcal{X}^{\xi_2}$. Consider a decreasing sequence $(\xi_i)_{i\in \mathbb{N}}$ with $\lim_{i \to \infty} \xi_i = 0$. Then, $(\mathcal{X}^{\xi_i})_{i \in \mathbb{N}}$ is a decreasing sequence of compact and non-empty sets. We know that a decreasing sequence of non empty compact sets has a limit, and the limit is non-empty \cite{sparling}. Therefore,
 \[
 \mathcal{X}^0 \triangleq  \bigcap_{i=1}^\infty \mathcal{X}^{\xi_i},
 \]
 exists and is non-empty. Since $\xi_i \downarrow 0$, this means that any $\x \in \mathcal{X}^0$ satisfies $G(\x)=0$. Since $\norm{\x - \x^*} \geq \omega$ and $\x \in \conv(V_{\varepsilon})$ for any $\x \in \mathcal{X}^0$, this is a contradiction to the fact that $\x^*$ is a unique point in $\conv(V_{\varepsilon})$ with $G(\x)=0$. Therefore, there must exist some $\xi>0$ for which $\mathcal{X}^{\xi}$ is empty.
\end{proof}

Next we proceed to study the asymptotic convergence of the process $\x(\cdot)$. Note that since $V_{\varepsilon}$ is apriori unknown, the initialization $\x^{0}$ may not be in the set $V_{\varepsilon}$. Nevertheless, the FFW process $\x(\cdot)$ eventually reaches the set $\conv(V_{\varepsilon})$. In order to formalize this, let $\conv^{\zeta}(V_{\varepsilon})$ denote the $\zeta$-thickening of the set $\conv(V_{\varepsilon})$,
\[
\conv^{\zeta}(V_{\varepsilon}) = \{\y: \exists \x \in \conv(V_{\varepsilon}) \text{ such that } \norm{\y-\x}_2 \leq \zeta \}.
\]
\begin{prop}
Consider the FFW process defined in \eqref{eq:differential_equation}. For every $\zeta > 0$, there exists an $s_{\zeta}>0$ such that, $\x(s) \in \conv^{\zeta}(V_{\varepsilon})$ for all $s > s_{\zeta}$.
\label{prop:conv_zeta}
\end{prop}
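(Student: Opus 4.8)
The plan is to exhibit the squared Euclidean distance from $\x(s)$ to the convex compact set $\conv(V_{\varepsilon})$ as a Lyapunov function and show it decays exponentially along the FFW trajectory. The structural fact driving everything is that, by its defining argmin in \eqref{eq:differential_equation}, the velocity target $\boldsymbol{v}(s)$ lies in $\conv(V_{\varepsilon})$ for every $s>0$; since $\dot{\x}(s)=\boldsymbol{v}(s)-\x(s)$, the trajectory is always being pulled toward a point of this convex set. First I would let $P(\cdot)$ denote the Euclidean projection onto $\conv(V_{\varepsilon})$, which is single valued since the set is closed and convex, and define $g(s)\triangleq \tfrac12\norm{\x(s)-P(\x(s))}^2$, so that $\sqrt{2g(s)}$ is precisely the distance of $\x(s)$ to $\conv(V_{\varepsilon})$.

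Next I would differentiate $g$ using the standard fact that the squared distance to a closed convex set is continuously differentiable on all of $\R^2$ with gradient $\x\mapsto \x-P(\x)$. The chain rule then gives $\dot{g}(s)=(\x(s)-P(\x(s)))^\top(\boldsymbol{v}(s)-\x(s))$. Inserting $P(\x(s))$ and splitting,
\begin{equation*}
\dot{g}(s)=\bigl(\x(s)-P(\x(s))\bigr)^\top\bigl(\boldsymbol{v}(s)-P(\x(s))\bigr)-\norm{\x(s)-P(\x(s))}^2.
\end{equation*}
The first term is nonpositive by the variational inequality characterizing projection onto a convex set, using $\boldsymbol{v}(s)\in\conv(V_{\varepsilon})$; the second term equals $-2g(s)$. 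Hence $\dot{g}(s)\leq -2g(s)$, and Grönwall's inequality yields $g(s)\leq g(0)e^{-2s}$, i.e. $\operatorname{dist}(\x(s),\conv(V_{\varepsilon}))\leq \operatorname{dist}(\x^0,\conv(V_{\varepsilon}))\,e^{-s}$.

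Finally I would conclude by taking $s_{\zeta}=\max\{0,\log(\operatorname{dist}(\x^0,\conv(V_{\varepsilon}))/\zeta)\}$, so that the exponential bound forces $\operatorname{dist}(\x(s),\conv(V_{\varepsilon}))\leq \zeta$, equivalently $\x(s)\in\conv^{\zeta}(V_{\varepsilon})$, for all $s>s_{\zeta}$. I expect no serious difficulty; the only point needing care is the differentiability of the squared distance at boundary points of $\conv(V_{\varepsilon})$ (if $\x(s)$ has already entered the set then $g(s)=0$ and the claim is immediate), which is covered by the cited $C^1$ smoothness of the squared distance to a convex set. One may also note that $\conv(V_{\varepsilon})\subseteq\R_+^2$, so the trajectory remains in the domain of $H$ throughout, keeping $\boldsymbol{v}(s)$ well defined.
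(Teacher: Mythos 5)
Your proposal is correct. Note that the paper does not give a self-contained proof here; it simply defers to Corollary 2 of the cited Stolyar reference, where the standard argument integrates the ODE explicitly: since $\dot{\x}(s)=\boldsymbol{v}(s)-\x(s)$ with $\boldsymbol{v}(s)\in\conv(V_{\varepsilon})$, variation of constants gives $\x(s)=e^{-s}\x^{0}+(1-e^{-s})\bar{\boldsymbol{v}}(s)$ with $\bar{\boldsymbol{v}}(s)=\frac{1}{1-e^{-s}}\int_{0}^{s}e^{-(s-u)}\boldsymbol{v}(u)\,du$ an average of points of the convex compact set, hence itself in $\conv(V_{\varepsilon})$, so that $\operatorname{dist}(\x(s),\conv(V_{\varepsilon}))\leq e^{-s}\norm{\x^{0}-\bar{\boldsymbol{v}}(s)}$. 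Your Lyapunov route — differentiating $\tfrac12\operatorname{dist}^2$ and invoking the projection variational inequality to get $\dot{g}\leq-2g$ — is an essentially equivalent derivation of the same exponential contraction, resting on the same single structural fact that the velocity target always lies in $\conv(V_{\varepsilon})$; it buys you a proof that does not require writing the solution in closed form, at the cost of needing the $C^1$ smoothness of the squared distance function (which you correctly flag). The only technicality worth a sentence in either version is that $\boldsymbol{v}(\cdot)$ need not be continuous (the argmin can jump), so $\x(\cdot)$ is only Lipschitz and the differential inequality holds almost everywhere; Grönwall for absolutely continuous functions still applies, so nothing breaks.
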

The proof is the same as that of Corollary 2 in \cite{stolyar2005asymptotic}.

\noindent\textbf{Claim 2:} $\x(s) \to \x^*$ as $s \to \infty$. 
\begin{proof}
First we prove that $\underset{s \to \infty}{\text{lim inf}} \: \norm{\x(s) -\x^* } = 0 $ by contradiction. As a contradiction assume that there exists an $\omega > 0$ and $s^{\omega}>0$ such that $\norm{\x(s)-\x^*}>\omega$ for all $s>s^{\omega}$. 

Let $\xi > 0$ be the constant according to Claim 1 which ensures that $G(\x)<-\xi$ for all $\x$ in $\conv(V_{\varepsilon})$ satisfying $\norm{\x - \x^*}>\omega$. Moreover, due to continuity of $G(\cdot)$, there exists a $\xi'>0$ and a small enough $\zeta>0$ such that $G(\x)<-\xi'$ for all $\x$ in $\conv^{\zeta}(V_{\varepsilon})$ that satisfy $\norm{\x-\x^*} \geq \omega$. Due to Proposition \ref{prop:conv_zeta}, there exists a constant $s_{\zeta}>0$ such that $\x(s)\in \conv^{\zeta}(V_{\varepsilon})$ for all $s>s_{\zeta}$.

Define, $s^{\omega}_* = s^{\omega} + s_{\zeta} + (H(\x(s_{\zeta}+s^{\omega}))+1)/\xi'$. Then,
\begin{align*}
    H\left(\x\left(s^{\omega}_{*}\right)\right) 
    &= H(\x(s_{\zeta}+s^{\omega})) + \int_{s_{\zeta}+s^{\omega}}^{s^{\omega}_*} dH(\x(s)), \\
    &= H(\x(s_{\zeta}+s^{\omega})) +\int_{s_{\zeta}+s^{\omega}}^{s^{\omega}_*}\nabla H(\x(s))^\top \dot{\x}(s) ds, \\
    &= H(\x(s_{\zeta}+s^{\omega})) +\int_{s_{\zeta}+s^{\omega}}^{s^{\omega}_*}\nabla H(\x(s))^\top (\boldsymbol{v}(s)-\x(s))ds, \\
    &= H(\x(s_{\zeta}+s^{\omega})) +\int_{s_{\zeta}+s^{\omega}}^{s^{\omega}_*} G(\x(s))ds, \\
    &< H(\x(s_{\zeta}+s^{\omega})) + \int_{s_{\zeta}+s^{\omega}}^{s^{\omega}_*} -\xi' ds, \\
    &= H(\x(s_{\zeta}+s^{\omega})) - H(\x(s_{\zeta}+s^{\omega})) - 1 <0.
\end{align*}
Since, $H$ is a positive function, this is a contradiction. Therefore, there exists a time $s> s^{\omega}+s_{\zeta}$ such that $\norm{\x(s)-\x^*} < \omega$. Since this is true for every $\omega>0$ and $s^{\omega}>0$, we have proved that $ \underset{s \to \infty}{\text{lim inf}} \: \norm{\x(s)-\x^*} = 0$.

Next we prove that $\lim_{s \to \infty} \x(s) = \x^*$. Define,
\[
H^{\omega} = \max_{\substack{\x \in \conv^{\zeta}(V_{\varepsilon}) 
                        \\ \norm{\x-\x^*}\leq \omega}
                        } H(\x).
\]

Since $\underset{s \to \infty}{\text{lim inf}} \: \norm{\x(s)-\x^*} = 0$, there exists a constant $s^{\omega}_{th} > s_{\zeta}$ such that $\norm{\x(s^{\omega}_{th}) - \x^*} \leq \omega$.  Due to Proposition \ref{prop:conv_zeta}, for all $s>s^{\omega}_{th}$, we have $\x(s) \in \conv^{\zeta}(V_{\varepsilon})$. Therefore, if for any $s > s^{\omega}_{th}$, $H(\x(s)) > H^{\omega}$ is true, then $\x(s)$ satisfies $\x(s)\in \conv^{\zeta}(V_{\varepsilon})$ and $\norm{\x(s) - \x^*} > \omega$. Therefore, due to Claim 1 at all such points, the gradient satisfies, $dH(\x(s))/ds = G(\x(s)) < 0$. This implies that $H(\x(s)) \leq H^{\omega}$ for all $s > s^{\omega}_{th}$.

Moreover, by the continuity of $H(\cdot)$, $H^{\omega} \to H(\x^*)$ as $\omega \downarrow 0$. And, by definition of the minimum $\x^*$, $H(\x(s))\geq H(\x^*)$ for any $s > 0$. Therefore, by the Sandwich Theorem, $\lim_{s \to \infty} H(\x(s)) = H(\x^*)$.

Further, by the continuity of $H(\cdot)$ and the uniqueness of the minimum $\x^*$, $\lim_{s \to \infty} H(\x(s)) = H(\x^*)$ implies that $\lim_{s \to \infty} \x(s) = \x^*$.
\end{proof}

Claim 2 proves that the Fluid-Frank-Wolfe process converges to the optimal solution $\x^*$ asymptotically. In particular, for any $\rho>0$, there exists an $n_{th}(\rho)>0$ such that,
\[
\sup_{s>n_{th}(\rho)} \norm{\x(s)-\x^*} \leq \rho.
\]
Denote, $\x_{\beta}(s) = \X_\beta^{\floor*{s/\beta}}$. Then, since the functions converge as follows, $(\x_{\beta_i}) \to \x$ as $i\to \infty$, from the Continous Mapping Theorem \cite[Theorem 2.7]{billingsley2013convergence}, we have,
\[
\lim_{i \to \infty} \sup_{s > n_{th}(\rho)} P\left(\norm{\x_{\beta_i}(s)-\x^*}>\rho\right) = 0.
\]
The above implies the Theorem statement,
\[
\lim_{i \to \infty} \sup_{n > n_{th}(\rho)/\beta_i} P\left(\norm{\X_{\beta_i}^{n}-\x^*}>\rho\right) = 0.
\]

\subsection{Proof of Proposition \ref{prop:fluid_limit}}

Define the ``scaled process'' as, $\x_\beta(s) \triangleq \X_{\beta}^{\floor*{s/\beta}}$. Denote $D_{\mathbb{R}^2}[0,\infty)$ as the set of functions with domain $[0,\infty)$, range $\mathbb{R}^2$, and which are right continuous with left limits. Observe that $\x_\beta$ has sample paths in $D_{\mathbb{R}^2}[0,\infty)$ for any $0 < \beta < 1$.

Denote, $\boldsymbol{V}_\beta^n \triangleq \begin{bmatrix}
    \norm{\boldsymbol{h}_\varepsilon(\q^{n})} \\
    d(\tau, \q^{n}, \c)
\end{bmatrix}$, which is the \emph{action} taken by the NAC-FL algorithm (Algorithm \ref{algo:gradient-based}) at round $n$, and $\boldsymbol{v}_\beta(s) \triangleq \boldsymbol{V}_\beta^{\floor*{s/\beta}}$. 
Defining, 
\[
K = \max\left( 
\x^0, \max_{q\in [0,q_\max],\C\in \mathcal{C}} \norm{
\begin{bmatrix}
    \norm{\boldsymbol{h}_\varepsilon(\q)} \\
    d(\tau, \q, \C)
\end{bmatrix}
}
\right),
\]
by the update rule of NAC-FL, $
\x_\beta(s) = (1-\beta) \x_\beta(s-\beta) + \beta \boldsymbol{v}_\beta(s-\beta)$,  we have $\x_\beta(s) \leq K$ for any $0<\beta<1$ and $s>0$. Further, rearranging the NAC-FL update rule as,
\[
\x_\beta(s) - \x_\beta(s-\beta) = \beta \left( \boldsymbol{v}_\beta(s-\beta) - \x_\beta(s-\beta)  \right)
\]
we obtain, $\norm{\x_\beta(s) - \x_\beta(s-\beta)} \leq 2 \beta K$. More generally, for any $s_1, s_2 >0$, we have,
\[
\norm{\x_\beta(s_1)-\x_\beta(s_2)} \leq 2K \max(\beta, \lvert s_1 - s_2 \rvert).
\]
This implies the ``asymptotic Lipschitz'' property,
\[
\lim_{\beta \to 0} \norm{\x_\beta(s_1)-\x_\beta(s_2)} \leq 2K \lvert s_1 - s_2\rvert, \quad \forall s_1, s_2 >0.
\]
Then, by Corollary 7.4 in Chapter 3 of \cite{ethier2009markov}, the set of stochastic processes $\left\{\x_\beta(\cdot)\right\}_{0<\beta<1}$ is \emph{relatively compact}. Therefore, there exists a sequence $\left(\beta_i\right)_{i}$ with $\beta_i \to 0$ as $i \to \infty$ such that $\x_{\beta_i}(\cdot) \to \x(\cdot)$ as $i \to \infty$ for some stochastic process $\x(\cdot)$ with sample paths in $D_{\mathbb{R}^2}[0,\infty)$.

Next, we need to prove that $\x(\cdot)$ behaves according to \eqref{eq:differential_equation}. To do so, observe that due to the ``continuity property'' (i.e., $\norm{\X_\beta^n - \X_\beta^{n-1}}\leq 2K\beta$), for any $\delta > 0$, there exists a small enough $\beta>0$ and $\Delta>0$ such that, for any integer $n$ in the range $[s/\beta, (s+\Delta)/\beta]$, we have,
\[
\lvert \left(\nabla H(\X_\beta^n) \right)^\top \boldsymbol{V}_\beta^n - Y^*_{\C^n} \rvert \leq \delta,
\]
where,
\[
Y^*_{\C} \triangleq \min_{q \in [0,q_\max]} \left(\nabla H(\x_\beta(s)) \right)^\top \begin{bmatrix}
    \norm{\boldsymbol{h}_{\varepsilon}(\q)} \\
    d(\tau, \q, \C)
\end{bmatrix}, \quad \C \in \mathcal{C}.
\]
The above equations say that the optimal action at any round in the considered range is very close to the optimal action at the start of the range, for an appropriate selection of parameters. Summing across $n$ in the range $[s/\beta, (s+\Delta)/\beta]$ we obtain,
\[
\left\lvert \sum_{s/\beta \leq n \leq (s+\Delta)/\beta} \left(\nabla H(\X_\beta^n) \right)^\top \boldsymbol{V}_\beta^n - \sum_{s/\beta \leq n \leq (s+\Delta)/\beta} Y^*_{\C^n} \right\rvert \leq \delta \Delta/\beta.
\]
Multiplying by $\beta$ on both sides, from the definition of the scaled process, we have,
\[
\left\lvert \int_{s}^{s+\Delta} \left(\nabla H(\x_\beta(\xi)) \right)^\top \boldsymbol{v}_\beta(\xi) d\xi - \sum_{s/\beta \leq n \leq (s+\Delta)/\beta} \beta Y^*_{\C^n} \right\rvert \leq \delta \Delta.
\]
From the Law of Large Numbers for Markov Chains, we have $\lim_{\beta \to 0} \sum_{s/\beta \leq n \leq (s+\Delta)/\beta} \beta Y^*_{\C^n} = \Delta \sum_{\C \in \mathcal{C}} \mu(\C) Y^*_{\C}$. Similar to the convergence of $\x_\beta$ shown above, one can prove convergence of $\boldsymbol{v}_\beta$ to a process $\boldsymbol{v}$. Therefore, taking limit $i\to \infty$ along the sequence $(\beta_i)_i$, we get,
\[
\left\lvert \int_{s}^{s+\Delta} \left(\nabla H(\x(\xi)) \right)^\top \boldsymbol{v}(\xi) d\xi - \Delta \sum_{\C \in \mathcal{C}} \mu(\C) Y^*_{\C} \right\rvert \leq \delta \Delta.
\]
Observe that $\sum_{\C \in \mathcal{C}} \mu(\C) Y^*_{\C} = \min_{\boldsymbol{v} \in \conv(V_\varepsilon)} \left(\nabla H(\x(s)) \right)^\top \boldsymbol{v}$. Therefore, by choosing a $\Delta$ small enough, we get,
\[
\left\lvert \left(\nabla H(\x(s)) \right)^\top \boldsymbol{v}(s) - \min_{\boldsymbol{v} \in \conv(V_\varepsilon)} \left(\nabla H(\x(s)) \right)^\top \boldsymbol{v}  \right\rvert \leq \delta.
\]
Since $\delta$ can also be chosen arbitrarily small, we have,
\[
\left(\nabla H(\x(s)) \right)^\top \boldsymbol{v}(s) = \min_{\boldsymbol{v} \in \conv(V_\varepsilon)} \left(\nabla H(\x(s)) \right)^\top \boldsymbol{v}.
\]
\setcounter{lemma}{0}
\renewcommand{\thelemma}{\Alph{section}.\arabic{lemma}}
\setcounter{prop}{0}
\renewcommand{\theprop}{\Alph{section}.\arabic{prop}}
\setcounter{claim}{0}
\renewcommand{\theclaim}{\Alph{section}.\arabic{claim}}
\newpage
\section{Proof of Lemma \ref{lm:stationary_q}}
\label{app:proof_stationary_q}

In this section we show that a state-dependent stationary policy asymptotically optimizes the wall clock time. To do so, we first define the notion of a \emph{type} for sequences of network states and compression parameters. Then, we show that for a given network state sequence, a policy for choosing compression parameters which depends on the sequence type optimizes the wall clock type. Finally, because the type asymptotically concentrates for markov processes, we show that a state-dependent stationary policy asymptotically optimizes the wall clock time.

We start by defining the notion of an empirical distribution, called type, and its associated expectation and conditional expectations.
\begin{definition}[Type]
The type of a finite sequence, $x^{[r]} \triangleq \left(x^{\round}\right)_{\round=1}^r$ with elements in domain $\mathcal{X}$, is a function, $\hat{p}\left( \cdot \; ; x^{[r]} \right): \mathcal{X} \to [0,1]$, defined as,
\[
\hat{p}\left( x \; ; \; x^{[r]}\right) = \frac{\sum_{\round=1}^r \mathbbm{1}\left(x^{\round} = x \right)}{r}, \qquad \forall x \in \mathcal{X},
\]
where $\mathbbm{1}(x = y)=1$ if $x=y$, and $0$ otherwise.
\label{def:type}
\end{definition}
Similarly, the \emph{conditional type} and the \emph{joint type} are defined as follows.
\begin{definition}[Joint Type and Conditional Type]
The joint type of two finite sequences, $x^{[r]}$ and $y^{[r]}$ with domains $\mathcal{X}$ and $\mathcal{Y}$ respectively, is a function, $\hat{p}\left( \cdot \; ; x^{[r]}, y^{[r]} \right): \mathcal{X} \times \mathcal{Y} \to [0,1]$, defined as,
\[
\hat{p}\left( x, y \; ; \; x^{[r]}, y^{[r]}\right) = \frac{\sum_{\round=1}^r \mathbbm{1}\left(x^{\round} = x \; , \; y^{\round} = y \right)}{r}, \qquad \forall x \in \mathcal{X}, \; y \in \mathcal{Y}.
\]
The conditional type $\hat{p}\left( \cdot \vert \cdot \; ; x^{[r]}, y^{[r]} \right): \mathcal{X} \times \mathcal{Y} \to [0,1]$ is defined as,
\begin{align*}
  \hat{p}\left( x \vert y \; ; \; x^{[r]}, y^{[r]}\right) &= \frac{\sum_{\round=1}^r \mathbbm{1}\left(x^{\round} = x \; , \; y^{\round} = y \right)}{\sum_{\round=1}^r \mathbbm{1}\left(y^{\round} = y\right)}, \qquad \forall x \in \mathcal{X}, \; y \in \mathcal{Y} \text{ such that } \hat{p}(y;y^{[r]}) > 0, \\
    &= \frac{\hat{p}\left( x, y ; x^{[r]}, y^{[r]} \right)}{\hat{p}\left( y ; y^{[r]} \right)}.
\end{align*}
\label{def:joint-type}
\end{definition}
Then, the expectation and conditional expectation with respect to the type may be defined as follows.
\begin{definition}[Expectation and Conditional Expectation]
The expectation of a non-negative function $g: \mathcal{X} \to \mathbb{R}^{+}$ with respect to type $\hat{p}(\cdot; x^{[r]})$ is defined as\footnote{If $\mathcal{X}$ is uncountably infinite, then, $\sum_{x \in \mathcal{X}} g(x) \triangleq \sup\left\{\sum_{x \in \mathcal{F}} g(x): \mathcal{F}\subset \mathcal{X}, \: \mathcal{F} \text{ is finite}\right\}$.},
\[
\hat{\EXP}\left[ g(X) ; x^{[r]} \right] \triangleq \sum_{x \in \mathcal{X}} g(x)\hat{p}(x; x^{[r]}),
\]
where $X$ denotes a random variable with distribution $\hat{p}(x; x^{[r]})$.
Similarly, the conditional expectation of a non-negative function $l: \mathcal{X} \to \mathbb{R}^{+}$ with respect to the type $\hat{p}(\cdot|\cdot; x^{[r]}, y^{[r]})$ is defined as,
\[
\hat{\EXP}\left[ l(X)|Y=y ; x^{[r]}, y^{[r]} \right] \triangleq \sum_{x \in \mathcal{X}} l(x)\hat{p}(x|y; x^{[r]}, y^{[r]}), \quad \forall y \in \mathcal{Y} \text{ such that } \hat{p}(y;y^{[r]}) > 0,
\]
where the random variable pair $(X,Y)$ has joint distribution $\hat{p}(x, y; x^{[r]}, y^{[r]})$.
\end{definition}

\begin{prop}
Suppose Assumptions~\ref{ass:q_suff_stat} and \ref{ass:comm_delay} hold, and let $(\c^{\round})_{\round}$ denote an observed sequence of network states and $(\q^{\round})$ denote a sequence of compression parameters. Then, for any positive integer $r$ and positive $\varepsilon$ with the associated function $\boldsymbol{h}_{\varepsilon}(\cdot)$ defined in Assumption \ref{ass:q_suff_stat}, there exists a sequence dependent, state dependent stationary policy $\pifunc$ such that,
    \begin{equation}
        \sum_{\round = 1}^{r} \norm{\boldsymbol{h}_\varepsilon\left(\q^{\round}\right)} 
            \geq  
            \sum_{\round = 1}^{r} \norm{\boldsymbol{h}_\varepsilon\left(\pifunc\left(\c^{\round}\right)\right)},
    \label{eq:delay_convexity_1}
    \end{equation}
    and,
    \begin{equation}
        \sum_{\round = 1}^{r} d\left(\tau, \q^{\round}, \c^{\round}\right) 
            \geq 
            \sum_{\round = 1}^{r} d\left(\tau, \pifunc\left(\c^{\round}\right), \c^{\round}\right).
    \label{eq:delay_convexity_2}
    \end{equation}
    \label{prop:delay_convexity}
\end{prop}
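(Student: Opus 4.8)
The plan is to pass to the transformed coordinates $\r^{\round} \triangleq \boldsymbol{h}_\varepsilon(\q^{\round})$, in which Assumption \ref{ass:comm_delay} guarantees that the round duration $d(\tau, \boldsymbol{h}_\varepsilon^{-1}(\r), \c)$ is convex in $\r$, while the norm $\norm{\r}$ is of course also convex. The central observation is that a single averaging operation, performed state-by-state, simultaneously improves both the cumulative norm in \eqref{eq:delay_convexity_1} and the cumulative duration in \eqref{eq:delay_convexity_2}, so there is no tension between the two inequalities and one policy serves both.

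Concretely, for each network state $\c \in \mathcal{C}$ let $N_\c \triangleq \{\round \le r : \c^{\round} = \c\}$ denote the set of rounds spent in state $\c$, and define the policy by averaging the transformed parameters over those rounds,
\[
\boldsymbol{h}_\varepsilon(\pifunc(\c)) \triangleq \frac{1}{\lvert N_\c \rvert} \sum_{\round \in N_\c} \boldsymbol{h}_\varepsilon(\q^{\round}),
\]
for every $\c$ that appears in the sequence (the value on states that never appear is immaterial and may be set to any feasible parameter). This is exactly the conditional-type expectation $\hat{\EXP}[\boldsymbol{h}_\varepsilon(\Q) \mid \C = \c]$ of the preceding definitions, applied coordinatewise, so $\pifunc$ is a sequence-dependent, state-dependent stationary policy as required. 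First I would check that $\pifunc$ is well-defined and feasible: since $h_\varepsilon$ is strictly increasing and continuous on $[0,q_{\max}]$ by Assumption \ref{ass:q_suff_stat}, its range is the interval $[h_\varepsilon(0), h_\varepsilon(q_{\max})]$; every coordinate of each $\r^{\round}$ lies in this interval, and a convex combination of points of an interval remains in it, so $\boldsymbol{h}_\varepsilon^{-1}$ applied to the average returns a valid parameter in $[0,q_{\max}]^m$.

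The two inequalities then follow from Jensen's inequality applied within each state. For the duration, joint convexity of $\r \mapsto d(\tau, \boldsymbol{h}_\varepsilon^{-1}(\r), \c)$ gives $\lvert N_\c \rvert \, d(\tau, \pifunc(\c), \c) \le \sum_{\round \in N_\c} d(\tau, \q^{\round}, \c)$; summing over $\c$ and regrouping the right-hand side back into a sum over rounds yields \eqref{eq:delay_convexity_2}. For the norm, convexity of $\norm{\cdot}$ gives $\lvert N_\c \rvert \, \norm{\boldsymbol{h}_\varepsilon(\pifunc(\c))} \le \sum_{\round \in N_\c} \norm{\boldsymbol{h}_\varepsilon(\q^{\round})}$; summing over $\c$ yields \eqref{eq:delay_convexity_1}.

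I do not anticipate a genuine obstacle, since the result is at heart a double application of Jensen. The step I would present most carefully, and which is the only conceptual content, is the simultaneity: one must recognize that moving to the $\r$-coordinates renders the duration convex without disturbing convexity of the norm, so a single averaged policy is improving for both objectives at once. The only other point needing care is the feasibility bookkeeping above, which rests entirely on the monotonicity and continuity of $h_\varepsilon$ from Assumption \ref{ass:q_suff_stat}.
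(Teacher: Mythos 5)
Your proposal is correct and follows essentially the same route as the paper: the policy is defined as the per-state conditional average of $\boldsymbol{h}_\varepsilon(\q^{\round})$ (the paper phrases this via the conditional type), feasibility is justified by continuity and strict monotonicity of $h_\varepsilon$, and both inequalities follow by applying Jensen's inequality state-by-state to the convex maps $\r \mapsto \norm{\r}$ and $\r \mapsto d(\tau, \boldsymbol{h}_\varepsilon^{-1}(\r), \c)$ and then summing over states. Your explicit check that the averaged point remains in the range of $\boldsymbol{h}_\varepsilon$ is a slightly more careful rendering of a step the paper states without elaboration.
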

\begin{proof}[Proof of Proposition \ref{prop:delay_convexity}]
Given the sequence $(\q^{\round}, \c^{\round})_{\round=1}^{r}$, we obtain the joint type $p(\cdot, \cdot ; \q^{[r]}, \c^{[r]})$. Thus, one may interpret the sequence as given an observed network state $\c$, the policy plays the compression parameters $\q$ with probability  $\hat{p}(\q|\c \; ; \; \q^{[r]}, \c^{[r]})$. Define the state-dependent stationary policy $\pifunc$ as playing the conditional mean (w.r.t., the function $\boldsymbol{h}_{\varepsilon}$) given any network state $\c$. That is,
\begin{equation}
    \pifunc(\c) 
        = \boldsymbol{h}_{\varepsilon}^{-1}\left(\hat{\EXP}\left[\boldsymbol{h}_{\varepsilon}(\Q) \big\vert \C=\c; \q^{[r]}, \c^{[r]} \right]\right), \quad \forall \c \in \mathcal{C} \text{ such that } \hat{p}(\c; \c^{[r]}) > 0.
\label{eq:pi_opt_seq_depdt}
\end{equation}
 Such a choice for $\pifunc(\c)$ always exists because $\boldsymbol{h}_{\varepsilon}(\cdot)$ is continuous, bounded and strictly increasing coordinate-wise applied function which implies that the inverse operator of $\boldsymbol{h}_{\varepsilon}(\cdot)$ is well-defined. Note that $\boldsymbol{h}_{\varepsilon}(\pifunc(\c)) 
        = \hat{\EXP}\left[\boldsymbol{h}_{\varepsilon}(\Q) \big\vert \C=\c; \q^{[r]}, \c^{[r]} \right]$. So, due to the convexity of $\norm{\cdot}$,
\begin{equation}
    \norm{\boldsymbol{h}_{\varepsilon}\left(\pifunc(\c)\right)} 
    \leq 
    \hat{\EXP}\left[\norm{\boldsymbol{h}_{\varepsilon}\left(\Q\right)}|\C=\c \; ; \; \q^{[r]}, \c^{[r]}\right], \quad \forall \c \in \mathcal{C}.
\label{eq:norm_convexity}
\end{equation}
Then,
\begin{equation}
    \begin{split}
        \sum_{\round=1}^{r} \norm{\boldsymbol{h}_{\varepsilon}\left(\pifunc(\c^{\round})\right)} 
    &= 
    r \hat{\EXP}\left[ \norm{\boldsymbol{h}_{\varepsilon}(\pifunc(\C))} \; ; \; \c^{[r]} \right], \\
    &\stackrel{(a)}{\leq} 
    r \hat{\EXP}\left[ \hat{\EXP}\left[ \norm{\boldsymbol{h}_{\varepsilon}(\Q)} \vert \C=\c \; ; \; \q^{[r]}, \c^{[r]} \right] \; ; \; \c^{[r]} \right], \\ 
    &\stackrel{(b)}{=}
    r \hat{\EXP}\left[ \norm{\boldsymbol{h}_{\varepsilon}(\Q)} \; ; \; \q^{[r]}  \right], \\
    &= 
    \sum_{\round=1}^{r} \norm{\boldsymbol{h}_{\varepsilon}\left(\q^{\round})\right)},
    \end{split}
    \label{eq:norm_dominance}
\end{equation}
where (a) follows from \eqref{eq:norm_convexity}, and (b) follows from the Tower-rule of expectations.

Next we bound $d(\tau,\pifunc(\c), \c)$. By the definition of $\pifunc$ in \eqref{eq:pi_opt_seq_depdt}, for all $\c \in \mathcal{C}$,
\begin{align}
    d\left(\tau, \pifunc(\c), \c \right) 
    &= 
    d\left(\tau, \boldsymbol{h}_{\varepsilon}^{-1}\left(\boldsymbol{h}_{\varepsilon}(\pifunc(\c))\right), \c \right), \nonumber \\
    &\stackrel{(a)}{=} 
    d\left(\tau, \boldsymbol{h}_{\varepsilon}^{-1}
    \left( \hat{\EXP}\left[\boldsymbol{h}_{\varepsilon}(\Q) \big\vert \C=\c; \q^{[r]}, \c^{[r]} \right] \right)
    , \c \right), \nonumber \\
    &\stackrel{(b)}{\leq} 
    \hat{\EXP}\left[d\left( \tau, \boldsymbol{h}_{\varepsilon}^{-1}\left(\boldsymbol{h}_{\varepsilon}(\Q) \right), \c \right)\vert \C=\c \; ; \; \q^{[r]}, \c^{[r]} \right], \nonumber \\
    &= 
    \hat{\EXP}\left[d\left( \tau, \Q, \c \right)\vert \C=\c \; ; \; \q^{[r]}, \c^{[r]} \right], \label{eq:lm_1_delay_convexity}
\end{align}
where (a) follows from the definition of policy $\pifunc$, and (b) follows from the convexity of $d(\tau, \boldsymbol{h}_{\varepsilon}^{-1}(\cdot), \c)$ (Assumption~\ref{ass:comm_delay}). \eqref{eq:lm_1_delay_convexity} is analogous to \eqref{eq:norm_convexity}. Therefore, we may repeat the same calculation in \eqref{eq:norm_dominance} for the delay,
\begin{align*}
    \sum_{\round=1}^{r} d(\tau, \pifunc(\c^{\round}), \c^{\round}) 
    &= 
    r \hat{\EXP}\left[ d(\tau, \pifunc(\C), \C) \; ; \; \c^{[r]} \right], \\
    &\leq
    r \hat{\EXP}\left[ \hat{\EXP}\left[ d(\tau, \Q, \C) \vert \C=\c \; ; \; \q^{[r]}, \c^{[r]} \right] \; ; \; \c^{[r]} \right], \\ 
    &=
    r \hat{\EXP}\left[ d(\tau, \Q, \C) \; ; \; \q^{[r]}, \c^{[r]}  \right], \\
    &= 
    \sum_{\round=1}^{r} d(\tau, \q^{\round}, \c^{\round}),
\end{align*}
\end{proof}

Equation \eqref{eq:delay_convexity_1} in Proposition \ref{prop:delay_convexity} states that if the FL algorithm with a sequence of compression parameters $(\q^{\round})_{\round}$ has reached an error tolerance of $\varepsilon$ by round $r$, then, under Assumption \ref{ass:q_suff_stat}, it has also reached error tolerance $\varepsilon$ under  sequence of compression parameters $(\pifunc(\c^{\round}))_{\round}$ by around $r$. Moreover, \eqref{eq:delay_convexity_2} states that $(\pifunc(\c^{\round}))_{\round}$ takes lesser amount of time up to round $r$ compared to $(\q^{\round})_{\round}$. However, this construction of $\pifunc$ is sequence dependent. More specifically, it is dependent on the type $\hat{p}(\cdot \; ; \; \c^{[r]})$  of the network state sequence observed. In order to prove Lemma \ref{lm:stationary_q}, we need to construct a state-dependent but sequence-independent stationary policy that is near-optimal in minimizing the expected wall clock time. Therefore, in the following, we first define the notion of a \emph{typical set} and show in Proposition \ref{prop:ergodicity} that the type of an observed network state sequence concentrates around its mean with high probability.

\begin{definition}[Typical Set]
For a distribution $p$ on network sets, a typical set with parameters $(r, \nu)$, is defined as,
\[
\mathcal{T}_{\nu}^{r}(p) \triangleq \left\{ \c^r: \left\lvert \hat{p}(\c\vert \c^r) - p(\c) \right\rvert \leq \nu p(\c), \textnormal{ for all } \c \in \mathcal{C} \right\}.
\]
\end{definition}

We will use the following result called the \emph{Typical Averaging Lemma} for typical sets\cite[Section 2.4]{el2011network}.
\begin{lemma}
Let $\c^r \in \mathcal{T}_{\nu}^{r}(p)$. Then, for any non-negative function $g:\mathcal{C} \to \mathbb{R}^+$,
\[
(1-\nu) \EXP\left[g(\C)\right] \leq \frac{1}{r} \sum_{\round=1}^r g\left(\c^{\round}\right) \leq (1+\nu) \EXP\left[g(\C)\right],
\]
where $\C$ is a random variable with distribution $p$.
\label{lm:typical_averaging_lemma}
\end{lemma}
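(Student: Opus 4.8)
The plan is to express the empirical average directly in terms of the type of the sequence $\c^r$ and then apply the defining inequality of the typical set coordinate-wise. First I would group the terms of the sum by their state value: by the definition of the type, $\frac{1}{r}\sum_{\round=1}^r \mathbbm{1}(\c^{\round}=\c) = \hat{p}(\c; \c^r)$, so that
\[
\frac{1}{r}\sum_{\round=1}^r g(\c^{\round}) = \sum_{\c \in \mathcal{C}} g(\c)\,\hat{p}(\c; \c^r).
\]
By the same token $\EXP[g(\C)] = \sum_{\c \in \mathcal{C}} g(\c)\,p(\c)$. Both sums are finite and well defined because $\mathcal{C}$ is finite under Assumption \ref{ass:markov}, so no convergence issues arise.

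Next I would invoke membership $\c^r \in \mathcal{T}_{\nu}^{r}(p)$, which by definition gives, for every $\c \in \mathcal{C}$, the two-sided estimate $(1-\nu)\,p(\c) \leq \hat{p}(\c; \c^r) \leq (1+\nu)\,p(\c)$. The key observation — and the precise reason the lemma is stated only for non-negative $g$ — is that multiplying this chain of inequalities by $g(\c)\geq 0$ preserves the direction of both bounds, yielding
\[
(1-\nu)\,g(\c)\,p(\c) \leq g(\c)\,\hat{p}(\c; \c^r) \leq (1+\nu)\,g(\c)\,p(\c).
\]
I would then sum this over all $\c \in \mathcal{C}$, use linearity of the finite sum, and substitute the two identities from the first step to obtain
\[
(1-\nu)\,\EXP[g(\C)] \leq \frac{1}{r}\sum_{\round=1}^r g(\c^{\round}) \leq (1+\nu)\,\EXP[g(\C)],
\]
which is exactly the claimed bound.

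There is no genuine obstacle here; the result is a short rearrangement rather than a deep argument. The only point requiring any care is the sign condition: non-negativity of $g$ is precisely what allows the per-state inequalities to survive multiplication by $g(\c)$, and the finiteness of $\mathcal{C}$ is what lets me interchange the bound with the summation without an absolute-convergence argument. Were $g$ permitted to take negative values, the two-sided estimate would break, and an infinite state space would demand extra justification before summing; under the standing assumptions, neither complication occurs, so the proof reduces to the three lines above.
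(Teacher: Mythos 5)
Your proof is correct and is precisely the standard argument: the paper does not prove this lemma itself but cites it from El Gamal and Kim (Section 2.4 of \cite{el2011network}), where the proof is exactly the type-decomposition-plus-per-state-bound computation you give. Your remarks on why non-negativity of $g$ and finiteness of $\mathcal{C}$ are needed are accurate and consistent with the paper's standing assumptions.
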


Next, due to ergodicity of stationary  Markov chains, we have the following proposition which is proved at the end of this section.
\begin{prop}
Let Assumption \ref{ass:markov} be true. Then, there exist positive constants $\kappa_1$ and $\kappa_2$ such that, for every $\nu>0$, and $r \in \mathbb{N}$,
\[
P\left( \exists r' \geq r \text{ such that } \C^{r'} \not{\in} \mathcal{T}_{\nu}^{r'}(\mu) \right) \leq   \kappa_1 \exp \left(-\kappa_2\nu^2 r\right).
\]
\label{prop:ergodicity}
\end{prop}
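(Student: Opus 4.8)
The plan is to reduce the uniform-in-horizon deviation bound to a per-state maximal concentration inequality for partial sums of the chain, and then sum over dyadic blocks of horizons. First I would remove the quantifier over states: since $\mathcal{C}$ is finite and, by irreducibility of the chain in Assumption \ref{ass:markov}, $\mu_{\min} \triangleq \min_{\c \in \mathcal{C}} \mu(\c) > 0$, a union bound gives
\[
P\left(\exists r' \geq r: \C^{r'} \notin \mathcal{T}_{\nu}^{r'}(\mu)\right) \leq \sum_{\c \in \mathcal{C}} P\left(\exists r' \geq r: \left|\hat{p}(\c \mid \C^{r'}) - \mu(\c)\right| > \nu \mu(\c)\right),
\]
so it suffices to bound each summand by $\kappa_1' \exp(-\kappa_2 \nu^2 r)$ and absorb $|\mathcal{C}|$ into $\kappa_1$. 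Writing $S_{r'}(\c) = \sum_{\round=1}^{r'}\left(\mathbbm{1}(\C^{\round}=\c) - \mu(\c)\right)$, each summand concerns the event $\{\sup_{r'\geq r}|S_{r'}(\c)|/r' > \nu\mu(\c)\}$, i.e.\ a uniform law-of-large-numbers deviation.

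The key technical input is a maximal concentration inequality for $S_{r'}(\c)$, which I would obtain from the martingale decomposition afforded by the Poisson equation. Since the chain is finite, irreducible and aperiodic it has a spectral gap, so $g_\c \triangleq \mathbbm{1}(\cdot = \c) - \mu(\c)$ admits a bounded solution $\hat g_\c$ of $\hat g_\c - P\hat g_\c = g_\c$, yielding $S_{r'}(\c) = M_{r'}^{(\c)} + O(1)$, where $M^{(\c)}$ is a martingale with bounded increments and the remainder is uniformly bounded by a constant $B_\c$. The bounded remainder contributes at most $B_\c/r' \leq B_\c/r$ to $|S_{r'}(\c)|/r'$ for $r'\geq r$, which is dominated by $\nu\mu(\c)/2$ once $\nu^2 r$ is large and is otherwise harmless since we fall back on $P \leq 1$. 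For the martingale part I would invoke the maximal Azuma--Hoeffding inequality $P(\max_{r' \leq N}|M_{r'}^{(\c)}| \geq u) \leq 2\exp(-u^2/(2 N b^2))$ together with a dyadic peeling over $r'$: on the block $r' \in [2^k, 2^{k+1})$ the event $|M_{r'}^{(\c)}|/r' > \nu\mu(\c)/2$ forces $\max_{r' < 2^{k+1}}|M_{r'}^{(\c)}| > (\nu\mu(\c)/2)\, 2^k$, which the maximal inequality bounds by $2\exp(-c\, \nu^2\mu(\c)^2 2^k)$.

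Summing the block bounds over $k$ with $2^k \geq r$ gives $\sum_{j \geq 0}\exp(-m_\c 2^j)$ with $m_\c = c\,\nu^2\mu(\c)^2 r$; using $2^j \geq j+1$ this is at most $\exp(-m_\c)/(1-\exp(-m_\c))$, a clean multiple of $\exp(-m_\c)$ once $m_\c \geq \ln 2$. In the complementary regime $m_\c < \ln 2$ the exponent $\nu^2\mu(\c)^2 r$ is bounded, so $\exp(-\kappa_2 \nu^2 r)$ is bounded below by a positive constant and the trivial bound $P \leq 1$ already yields the claim for a large enough $\kappa_1$. Combining the two regimes, taking $\kappa_2 = c\,\mu_{\min}^2$, and collecting the state-independent constants into $\kappa_1$ then establishes the proposition.

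The hard part, and the step I would treat most carefully, is precisely the maximal, uniform-in-horizon nature of the statement. A naive union bound that applies a one-shot Markov Hoeffding bound $P(|S_{r'}(\c)| > \nu\mu(\c) r') \leq C\exp(-c\,\nu^2\mu(\c)^2 r')$ to each $r' \geq r$ and sums is \emph{insufficient}: the resulting geometric series carries a factor $(1-\exp(-c\,\nu^2\mu_{\min}^2))^{-1} \sim \nu^{-2}$ for small $\nu$, which cannot be absorbed into constants uniform in $\nu$. The dyadic block structure is exactly what removes this spurious polynomial factor, since doubling the horizon makes the per-block exponents grow geometrically and the block sum collapses to a constant multiple of its leading term; the Poisson-equation martingale decomposition is what makes a maximal inequality available despite the lack of independence.
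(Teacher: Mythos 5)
Your proof is correct, but it takes a genuinely different route from the paper, and the difference is substantive. The paper's proof applies a Chernoff bound for finite ergodic Markov chains (Theorem 3.1 of the cited Chung et al.\ reference) to $f=\mathbbm{1}(\cdot=\c)$, union-bounds over the finitely many states, and then union-bounds over all horizons $r'\geq r$, summing the geometric series $\sum_{r'\geq r}\kappa\exp(-\kappa_2\nu^2 r')$ to obtain $\kappa_1=\kappa/(1-\exp(-\kappa_2\nu^2))$. This is exactly the ``naive union bound'' you warn against: the paper's own $\kappa_1$ carries the factor $(1-\exp(-\kappa_2\nu^2))^{-1}\sim\nu^{-2}$ and therefore depends on $\nu$, whereas the proposition as stated quantifies $\kappa_1,\kappa_2$ before $\nu$. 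Your Poisson-equation martingale decomposition, maximal Azuma--Hoeffding inequality, and dyadic peeling remove that factor and deliver constants genuinely uniform in $\nu$, at the cost of a heavier technical apparatus (spectral gap, bounded solution of the Poisson equation, Doob's maximal inequality) in place of a single citation; your exponent degrades from $\nu^2\mu(\c)$ to $\nu^2\mu(\c)^2$, which only affects the value of $\kappa_2$. It is worth noting that the paper's weaker, $\nu$-dependent version is all that is actually used downstream (in the proofs of Lemmas \ref{lm:stationary_q} and \ref{lm:exp_time}, the typicality parameter $\nu$ or $\delta'$ is fixed first and only $r=h_{\varepsilon}^{\min}\to\infty$ varies), so the discrepancy is harmless in context --- but your proof is the one that establishes the proposition as literally stated.
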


 Lemma \ref{lm:typical_averaging_lemma} will be used to argue that if two network-state sequences have similar types (they belong to $\mathcal{T}_{\nu}^{r}(\mu)$), then a state dependent stationary policy $\pifunc$ will have a similar expected wall clock to converge under both sequences. Then, Proposition \ref{prop:ergodicity} will be used to argue that one observes a typical network state sequence with high probability. We proceed to prove this formally in the following proof of Lemma \ref{lm:stationary_q}.

\begin{proof}[Proof of Lemma \ref{lm:stationary_q}]
Denote $h_{\varepsilon}^{\min}$ and $h_{\varepsilon}^{\max}$ as the minimum and maximum of the bounded function $h_{\varepsilon}(\cdot)$. Then, Assumption \ref{ass:q_suff_stat} implies that the number of rounds needed to converge to an error tolerance $\varepsilon$ under any sequence of compression parameters is bounded between $h_{\varepsilon}^{\min}$ and $h_{\varepsilon}^{\max}$.

For a positive $\nu$, let $\varepsilon$ be small enough such that $h_{\varepsilon}^{\min} > 2(1+\nu)/\nu$. First, we will consider network state sequences which are typical with repsect to $\nu$. Specifically, we consider a sequence $\left(\c^{\round}\right)_{\round}$ such that $\c^r$ belongs to  $\mathcal{T}_{\nu}^{r}(p)$ for every $r > h_{\varepsilon}^{\min}$. 

Due to Proposition \ref{prop:delay_convexity}, there exists a state-dependent stationary policy that optimizes the wall clock time to reach error tolerance $\varepsilon$ with respect to the sequence $(\c^{\round})_{\round}$. Let $\pifunc'$ represent this policy, and $r_{\varepsilon}^{\pifunc'}$ be the minimum number of rounds taken to converge by $\pifunc'$. Then, the wall clock time for $\pifunc'$ can be lower bounded as,
\begin{align}
    \sum_{\round = 1}^{r_{\varepsilon}^{\pifunc'}} d\left(\tau, \pifunc'\left(\c^{\round}\right), \c^{\round}\right)
        &=  \left(\frac{1}{r_{\varepsilon}^{\pifunc'}} \sum_{\round = 1}^{r_{\varepsilon}^{\pifunc'}} d\left(\tau, \pifunc'\left(\c^{\round}\right), \c^{\round}\right)\right)r_{\varepsilon}^{\pifunc'}, \nonumber \\
        &\stackrel{(a)}{\geq} (1-\nu)   \EXP\left[ d(\tau, \pifunc'(\C), \C) \right] r_{\varepsilon}^{\pifunc'}, \nonumber \\
        &\stackrel{(b)}{\geq} (1-\nu)   \EXP\left[ d(\tau, \pifunc'(\C), \C) \right] \left( \frac{1}{r_{\varepsilon}^{\pifunc'}} \sum_{\round=1}^{r_{\varepsilon}^{\pifunc'}} \norm{\boldsymbol{h}_\varepsilon \left( \pifunc\left(\c^{\round}\right)\right)}  \right), \nonumber\\
        &\stackrel{(c)}{\geq} (1-\nu)^2 \EXP\left[ d(\tau, \pifunc'(\C), \C) \right] \EXP\left[ \norm{\boldsymbol{h}_{\varepsilon}\left(\pifunc'(\C)\right)} \right], \nonumber\\
        &\stackrel{(d)}{\geq} (1-\nu)^2 \EXP\left[ d(\tau, \pifunc^*(\C), \C) \right] \EXP\left[ \norm{\boldsymbol{h}_{\varepsilon}\left(\pifunc^*(\C)\right)} \right]. \label{eq:lower_bound}
\end{align}
(a) and (c) follow from Lemma \ref{lm:typical_averaging_lemma}, (b) follows from Assumption \ref{ass:q_suff_stat}, and (d) follows by the following definition,
\[
\pifunc^* = \underset{\pifunc \in \Pifunc}{\text{arg min}} \quad \EXP\left[ d(\tau, \pifunc(\C)), \C) \right] \EXP\left[ \norm{\boldsymbol{h}_{\varepsilon}\left(\pifunc(\C)\right)} \right].
\]

Performing a similar calculation for $\pifunc^*$,
\begin{align}
    \sum_{\round = 1}^{r_{\varepsilon}^{\pifunc^*}} d\left(\tau, \pifunc^*\left(\c^{\round}\right), \c^{\round}\right)
        &=  \left(\frac{1}{r_{\varepsilon}^{\pifunc^*}} \sum_{\round = 1}^{r_{\varepsilon}^{\pifunc^*}} d\left(\tau, \pifunc^*\left(\c^{\round}\right), \c^{\round}\right)\right)r_{\varepsilon}^{\pifunc^*}, \nonumber \\
        &\stackrel{(a)}{\leq} (1+\nu)   \EXP\left[ d(\tau, \pifunc^*(\C), \C) \right] r_{\varepsilon}^{\pifunc^*}, \nonumber \\
        &\stackrel{(b)}{\leq} (1+\nu)^2   \EXP\left[ d(\tau, \pifunc^*(\C), \C) \right] \left( \frac{1}{r_{\varepsilon}^{\pifunc^*}} \sum_{\round=1}^{r_{\varepsilon}^{\pifunc^*}} \norm{\boldsymbol{h}_\varepsilon \left( \pifunc^*\left(\c^{\round}\right)\right)}  \right), \nonumber \\
        &\stackrel{(c)}{\leq} (1+\nu)^3 \EXP\left[ d(\tau, \pifunc^*(\C), \C) \right] \EXP\left[ \norm{\boldsymbol{h}_{\varepsilon}\left(\pifunc^*(\C)\right)} \right]. \label{eq:upper_bound}
\end{align}
(a) and (c) follow from Lemma \ref{lm:typical_averaging_lemma} and (b) follows from Proposition \ref{prop:r_ub} proved at the end of this section. 

The expected wall clock time to reach error tolerance $\varepsilon$ under the state-dependent stationary policy $\pifunc^*$ can be upper bounded as,
\begin{align}
    \EXP\left[T_{\varepsilon}^{\pifunc^*}\right] &\stackrel{(a)}{\leq} 
    P\left(
        \C^{R_{\varepsilon}^{\pifunc^*}} \in \mathcal{T}_{\nu}^{R_{\varepsilon}^{\pifunc^*}}(\mu) 
    \right) 
    (1+\nu)^3 \EXP\left[ d(\tau, \pifunc^*(\C), \C) \right] \EXP\left[ \norm{\boldsymbol{h}_{\varepsilon}\left(\pifunc^*(\C)\right)} \right] 
        + P\left(
        \C^{R_{\varepsilon}^{\pifunc^*}} \not{\in} \mathcal{T}_{\nu}^{R_{\varepsilon}^{\pifunc^*}}(\mu) 
        \right) 
        h_{\varepsilon}^{\max}d^{\max}, \nonumber \\
    &\stackrel{(b)}{\leq} (1+\nu)^3 \EXP\left[ d(\tau, \pifunc^*(\C), \C) \right] \EXP\left[ \norm{\boldsymbol{h}_{\varepsilon}\left(\pifunc^*(\C)\right)} \right] + P\left(\exists r' \geq h_{\varepsilon}^{\min}, \; \C^{r'} \not{\in} \mathcal{T}_{\nu}^{r'}(\mu) \right) h_{\varepsilon}^{\max}d^{\max}, \nonumber \\
    &\stackrel{(c)}{\leq} (1+\nu)^3 \EXP\left[ d(\tau, \pifunc^*(\C), \C) \right] \EXP\left[ \norm{\boldsymbol{h}_{\varepsilon}\left(\pifunc^*(\C)\right)} \right] +  \kappa_1\exp\left( -\kappa_2\nu^2 h_{\varepsilon}^{\min} \right) h_{\varepsilon}^{\max}d^{\max}, \label{eq:et_ub}
\end{align}
where, (a) follows by using \eqref{eq:et_ub} for typical sequences, and upper bounding $R^{\pifunc^*}_{\varepsilon}$ by $h_{\varepsilon}^{\max}$ and the round duration by $d^{\max}$ for non-typical sequences. (b) follows by upper bounding $P\left(\C^{R_{\varepsilon}^{\pifunc^*}} \in \mathcal{T}_{\nu}^{R_{\varepsilon}^{\pifunc^*}}(\mu) \right)$ by 1, and upper bounding $P\left(\C^{R_{\varepsilon}^{\pifunc^*}} \not{\in} \mathcal{T}_{\nu}^{R_{\varepsilon}^{\pifunc^*}}(\mu) \right)$ by $P\left(\exists r' \geq h_{\varepsilon}^{\min}, \; \C^{r'} \not{\in} \mathcal{T}_{\nu}^{r'}(\mu) \right)$ because, almost surely, $R^{\pifunc^{*}}_{\varepsilon} \geq h_{\varepsilon}^{\min}$. (c) follows from Proposition \ref{prop:ergodicity}.

 Let $T_{\varepsilon}^*$ be the random variable representing the wall-clock time to reach error tolerance $\varepsilon$ when one uses the optimal sample-path sequence dependent policy on random network-state sequence $(\C^{\round})_{\round}$.  Then, denoting $R_{\varepsilon}^*$ as the corresponding random variable denoting the number of rounds needed to reach error tolerance $\varepsilon$, we have,
\begin{align}
    \EXP\left[T_{\varepsilon}^{*}\right] &\stackrel{(a)}{\geq} P\left(\C^{R_{\varepsilon}^{*}} \in \mathcal{T}_{\nu}^{R_{\varepsilon}^{*}}(\mu) \right) (1-\nu)^2 \EXP\left[ d(\tau, \pifunc^*(\C), \C) \right] \EXP\left[ \norm{\boldsymbol{h}_{\varepsilon}\left(\pifunc^*(\C)\right)} \right], \nonumber \\
    &\stackrel{(b)}{\geq} P\left(\forall r' \geq h_{\varepsilon}^{\min}, \; \C^{r'} \in \mathcal{T}_{\nu}^{r'}(\mu) \right) (1-\nu)^2 \EXP\left[ d(\tau, \pifunc^*(\C), \C) \right] \EXP\left[ \norm{\boldsymbol{h}_{\varepsilon}\left(\pifunc^*(\C)\right)} \right], \nonumber \\
    &\stackrel{(c)}{\geq} (1-\nu)^2\left(1 - \kappa_1\exp\left(-\kappa_2\nu^2 h_{\varepsilon}^{\min}\right)\right) \EXP\left[ d(\tau, \pifunc^*(\C), \C) \right] \EXP\left[ \norm{\boldsymbol{h}_{\varepsilon}\left(\pifunc^*(\C)\right)} \right]. \label{eq:et_lb}
\end{align}
(a) follows due to \eqref{eq:lower_bound}, and (b) follows since, almost surely, $R_{\varepsilon}^{*} \geq h_{\varepsilon}^{\min}$. (c) follows from Proposition \ref{prop:ergodicity}.

Due to Assumption \ref{ass:asymptotic_convergence}, both $h_{\varepsilon}^{\min}$ and $h_{\varepsilon}^{\max}$ are $\Theta(1/poly(\varepsilon))$. And, $\nu$ can be made as small as desired. Therefore, from \eqref{eq:et_ub} and \eqref{eq:et_lb}, 
\[
\EXP\left[T_{\varepsilon}^{\pifunc^*}\right] \to \EXP\left[T_{\varepsilon}^*\right] \quad \text{as} \quad \varepsilon \to 0.
\]
\end{proof}

\begin{prop}
Under Assumptions \ref{ass:q_suff_stat} and \ref{ass:asymptotic_convergence}, for $\nu>0$, if $\varepsilon$ is small enough such that $h_{\varepsilon}^{\min} > 2(1+\nu)/\nu$, then, for any state-dependent stationary policy $\pifunc$, the minimum number of rounds $r_{\varepsilon}^{\pifunc}$ to reach an error tolerance $\varepsilon$ is such that,
\[
r_{\varepsilon}^{\pifunc} \leq (1+\nu) \frac{1}{r_{\varepsilon}^{\pifunc}} \sum_{\round=1}^{r_{\varepsilon}^{\pifunc}} \norm{\boldsymbol{h}_{\varepsilon}\left(\q^{\round} \right)}.
\]
\label{prop:r_ub}
\end{prop}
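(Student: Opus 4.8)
The plan is to translate the convergence criterion of Assumption \ref{ass:q_suff_stat} into a statement about the partial sums $S_r \triangleq \sum_{\round=1}^{r}\norm{\boldsymbol{h}_\varepsilon(\q^\round)}$ and then argue purely algebraically. Write $r^\star \triangleq r_\varepsilon^{\pifunc}$ for the minimum number of rounds to converge. By Assumption \ref{ass:q_suff_stat}, convergence by round $r$ is equivalent to $r > S_r/r$, i.e. $r^2 > S_r$. Since $r^\star$ is the \emph{first} such round, the criterion fails at round $r^\star-1$, which gives $(r^\star-1)^2 \le S_{r^\star-1}$. Because every summand $\norm{\boldsymbol{h}_\varepsilon(\q^\round)}$ is non-negative, the partial sums are non-decreasing, so $S_{r^\star-1}\le S_{r^\star}$ and therefore $(r^\star-1)^2 \le S_{r^\star}$. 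This one-sided lower bound on $S_{r^\star}$ is the workhorse of the whole argument.

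Next I would reduce the target inequality $r^\star \le (1+\nu)S_{r^\star}/r^\star$, equivalently $(r^\star)^2 \le (1+\nu)S_{r^\star}$, to a purely numerical inequality. Using $(r^\star-1)^2 \le S_{r^\star}$ from the previous step, it suffices to prove $(r^\star)^2 \le (1+\nu)(r^\star-1)^2$. Taking square roots (both sides positive since $r^\star \ge 2$), this is equivalent to $r^\star \ge \sqrt{1+\nu}/(\sqrt{1+\nu}-1) = ((1+\nu)+\sqrt{1+\nu})/\nu$, so the claim holds as soon as $r^\star$ exceeds this threshold.

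Finally I would verify that the hypothesis forces $r^\star$ above that threshold. Treating $h_\varepsilon^{\min}$ as the lower bound on $\norm{\boldsymbol{h}_\varepsilon(\cdot)}$ used in the proof of Lemma \ref{lm:stationary_q}, the average satisfies $S_{r^\star}/r^\star \ge h_\varepsilon^{\min}$, and since the criterion holds at $r^\star$ we get $r^\star > S_{r^\star}/r^\star \ge h_\varepsilon^{\min}$. The assumed smallness of $\varepsilon$ gives $h_\varepsilon^{\min} > 2(1+\nu)/\nu$; combining this with the elementary inequality $\sqrt{1+\nu}\le 1+\nu$, which yields $((1+\nu)+\sqrt{1+\nu})/\nu \le 2(1+\nu)/\nu$, we obtain $r^\star > ((1+\nu)+\sqrt{1+\nu})/\nu$, closing the chain. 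Assumption \ref{ass:asymptotic_convergence} enters only to guarantee that such a small $\varepsilon$, hence a large enough $h_\varepsilon^{\min}$, exists in the first place.

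I expect the only real subtlety, rather than a genuine obstacle, to be the careful bookkeeping at the boundary: correctly invoking the ``not-yet-converged at round $r^\star-1$'' direction of the iff together with monotonicity of $S_r$ to obtain $(r^\star-1)^2\le S_{r^\star}$, and then checking that the multiplicative slack $(1+\nu)$ is exactly enough precisely because $r^\star$ is guaranteed to be large by the threshold condition on $h_\varepsilon^{\min}$. Everything else reduces to routine algebra.
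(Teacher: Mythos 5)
Your proposal is correct and follows essentially the same route as the paper's proof: both use the failure of the convergence criterion at round $r_\varepsilon^{\pifunc}-1$ together with monotonicity of the partial sums to get $(r_\varepsilon^{\pifunc}-1)^2 \le \sum_{\round=1}^{r_\varepsilon^{\pifunc}} \norm{\boldsymbol{h}_\varepsilon(\q^{\round})}$, and then the lower bound $r_\varepsilon^{\pifunc} > 2(1+\nu)/\nu$ inherited from $h_\varepsilon^{\min}$ to absorb the factor $\bigl(r_\varepsilon^{\pifunc}/(r_\varepsilon^{\pifunc}-1)\bigr)^2$ into $(1+\nu)$. Your reduction to the numerical threshold $r_\varepsilon^{\pifunc} \ge ((1+\nu)+\sqrt{1+\nu})/\nu$ is an algebraically equivalent rearrangement of the paper's bound $r_\varepsilon^{\pifunc}/(r_\varepsilon^{\pifunc}-1) \le \sqrt{1+\nu}$.
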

\begin{proof}
Due to Assumption \ref{ass:q_suff_stat}, $r_{\varepsilon}^{\pifunc}$ satisfies,
\[
r_{\varepsilon}^{\pifunc} > \frac{1}{r_{\varepsilon}^{\pifunc}} \sum_{\round=1}^{r_{\varepsilon}^{\pifunc}} \norm{\boldsymbol{h}_{\varepsilon}\left(\q^{\round} \right)}.
\]
Moreover, since $r_{\varepsilon}^{\pifunc}$ is the earliest round at which error tolerance $\varepsilon$ is reached, due to Assumption \ref{ass:q_suff_stat}, round $r_{\varepsilon}^{\pifunc} - 1$ satisfies,
\begin{align}
    r_{\varepsilon}^{\pifunc} - 1 &\leq \frac{1}{r_{\varepsilon}^{\pifunc}-1} \sum_{\round=1}^{r_{\varepsilon}^{\pifunc}-1} \norm{\boldsymbol{h}_{\varepsilon}\left(\q^{\round} \right)}, \nonumber \\
    &\leq \frac{1}{r_{\varepsilon}^{\pifunc}-1} \sum_{\round=1}^{r_{\varepsilon}^{\pifunc}} \norm{\boldsymbol{h}_{\varepsilon}\left(\q^{\round} \right)}, \nonumber \\
    &= \left( \frac{r_{\varepsilon}^{\pifunc}}{r_{\varepsilon}^{\pifunc}-1} \right)\frac{1}{r_{\varepsilon}^{\pifunc}} \sum_{\round=1}^{r_{\varepsilon}^{\pifunc}} \norm{\boldsymbol{h}_{\varepsilon}\left(\q^{\round} \right)}, \nonumber \\
    \implies r_{\varepsilon}^{\pifunc} &\leq \left( \frac{r_{\varepsilon}^{\pifunc}}{r_{\varepsilon}^{\pifunc}-1} \right)^2 \frac{1}{r_{\varepsilon}^{\pifunc}} \sum_{\round=1}^{r_{\varepsilon}^{\pifunc}} \norm{\boldsymbol{h}_{\varepsilon}\left(\q^{\round} \right)}. \label{eq:r_ub_1}
\end{align}
Now, we upper bound $r_{\varepsilon}^{\pifunc}/(r_{\varepsilon}^{\pifunc}-1)$,
\begin{align}
    \frac{r_{\varepsilon}^{\pifunc}}{r_{\varepsilon}^{\pifunc}-1} &= 1 + \frac{1}{r_{\varepsilon}^{\pifunc}-1}, \nonumber \\
    &\stackrel{(a)}{<} 1 + \frac{\nu}{2+\nu}, \nonumber \\
    &= \frac{1+\nu}{1+\nu/2}, \nonumber \\
    &\stackrel{(b)}{\leq} \sqrt{1+\nu}. \label{eq:r_ub_2}
\end{align}
(a) follows by rearranging the assumption $r_{\varepsilon}^{\pifunc} > 2(1+\nu)/\nu$ to obtain $r_{\varepsilon}^{\pifunc}- 1 > (2+\nu)/\nu$. (b) follows since $1+\nu/2 \geq \sqrt{1+\nu}$ for $\nu > 0$. Substituting \eqref{eq:r_ub_2} in \eqref{eq:r_ub_1} completes the result.
\end{proof}

In order to prove Proposition \ref{prop:ergodicity}, we use Theorem 3.1 from \cite{chung2012chernoff} which we re-state below for clarity.
\begin{theorem}
Let Assumption \ref{ass:markov} hold. Define $r_{\text{mix}}$ as the $1/8$ mixing time\footnote{Denoting $M$ as the transition-matrix of the Markov chain, and $\mu$ as its stationary distribution, $r_{\text{mix}} \triangleq \max_{\psi} \left\{r: \norm{M^r\psi - \mu}_{TV} \leq 1/8 \right\}$, where $\norm{\cdot}_{TV}$ denotes the TV-norm. Due to Theorem 4.9 of \cite{levin2017markov}, $r_{\text{mix}}$ is finite for an aperiodic and irreducible Markov chain.} of the Markov chain $(\C^{\round})_{\round}$ and $f: \mathcal{C} \to [0,1]$ be a function. Let, $\mu_{f} \triangleq \EXP\left[ f(\C^{(1)}) \right]$. Then, there exists a constant $\kappa_{\c}$ such that for every $0\leq \nu \leq 1$ and $r \in \mathbb{N}$,
\[
P\left( \left\lvert \frac{1}{r} \sum_{\round = 1}^{r} f(\C^{\round}) - \mu_{f}  \right\rvert \geq \nu \mu_{f} \right) \leq \kappa_{\c} \exp\left(-\frac{\nu^2\mu_f r}{72r_{\text{mix}}}\right).
\]
\label{th:markov_chain_chernoff}
\end{theorem}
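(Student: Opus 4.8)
The plan is to prove this multiplicative concentration inequality by the exponential-moment (Chernoff) method adapted to Markov chains, following the template of \cite{chung2012chernoff}. First I would split the two-sided event into the upper tail $P\big(\frac1r\sum_{\round=1}^r f(\C^{\round})\ge(1+\nu)\mu_f\big)$ and the lower tail $P\big(\frac1r\sum_{\round=1}^r f(\C^{\round})\le(1-\nu)\mu_f\big)$, treating each by Markov's inequality applied to $\exp\big(\pm s\sum_{\round} f(\C^{\round})\big)$ for a free parameter $s\ge0$ to be optimized, and recombining by a union bound (a factor $2$ absorbed into $\kappa_{\c}$). Everything then reduces to controlling the matrix moment generating function. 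Writing $M$ for the transition matrix, $\mu$ for the stationary (row) vector from Assumption \ref{ass:markov}, and $D_s\triangleq\mathrm{diag}(e^{sf(\c)})_{\c\in\mathcal C}$, the stationarity of the initial law gives the exact identity
\[
\EXP\Big[\exp\big(s\textstyle\sum_{\round=1}^r f(\C^{\round})\big)\Big]=\mu^\top D_s\,(M D_s)^{r-1}\mathbf 1,
\]
so the problem becomes one of bounding the growth of the perturbed nonnegative matrix $M D_s$.

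Second, I would exploit the hypothesis $f(\c)\in[0,1]$ to linearize the diagonal perturbation: since $e^{sx}$ is convex and hence lies below its chord on $[0,1]$, we have $e^{sx}\le 1+(e^s-1)x$ there, so entrywise $D_s\le \tilde D\triangleq I+\theta F$ with $\theta\triangleq e^s-1$ and $F=\mathrm{diag}(f(\c))$. Because $\mu$, $M$, and $\mathbf 1$ all have nonnegative entries, this entrywise domination propagates through the whole product, giving $\EXP[\exp(s\sum_{\round} f)]\le \mu^\top \tilde D\,(M\tilde D)^{r-1}\mathbf 1$. It therefore suffices to control the linearly perturbed operator $M\tilde D=M+\theta MF$, whose first-order perturbation is charged against $F$, which averages to $\mu^\top F\mathbf 1=\mu_f$ under stationarity.

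Third, and this is the main obstacle, I would bound the growth rate of $M+\theta MF$ in terms of $\mu_f$ and the mixing time $r_{\mathrm{mix}}$ rather than a spectral gap, so that the estimate survives for the possibly non-reversible chains permitted by Assumption \ref{ass:markov}. The idea is to work in the $\mu$-weighted inner product $\langle u,v\rangle_\mu=\sum_{\c}\mu(\c)u(\c)v(\c)$ and to decompose vectors into a component along the stationary direction, which $M$ preserves, and an orthogonal remainder, which $M$ contracts after $r_{\mathrm{mix}}$ steps; converting the defining $1/8$ total-variation mixing into such a weighted-$\ell_2$ contraction is itself a delicate step that must not lose the right constants. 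Grouping the $r$ factors into $r/r_{\mathrm{mix}}$ blocks of length $r_{\mathrm{mix}}$, one shows each block multiplies the relevant norm by at most $\exp\!\big(s\mu_f r_{\mathrm{mix}}+c\,(s\,r_{\mathrm{mix}})^2\mu_f\big)$: the leading term comes from the averaged perturbation $\mu_f$ per step once the block has mixed, while the quadratic correction of order $(s\,r_{\mathrm{mix}})^2\mu_f$ accounts for the pre-mixing transient and the second-order cross terms. The careful accounting ensuring the perturbation is charged against $\mu_f$ rather than the larger $\norm{f}_\infty$, and that non-reversibility does not degrade the contraction, is where all the technical effort and the explicit constants reside.

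Finally, multiplying the per-block bounds over the $r/r_{\mathrm{mix}}$ blocks yields $\log\EXP[\exp(s\sum_{\round} f)]\le s\mu_f r + c\,s^2 r_{\mathrm{mix}}\,\mu_f\, r$, whose quadratic coefficient is proportional to $r_{\mathrm{mix}}$ and is exactly what places the mixing time in the final denominator. Combining with Markov's inequality makes the upper-tail exponent $-s\nu\mu_f r + c\,s^2 r_{\mathrm{mix}}\mu_f r$; using $\theta=e^s-1\le(e-1)s$ for $s\le1$ and choosing $s=\Theta(\nu/r_{\mathrm{mix}})$ to optimize this quadratic in $s$ produces the claimed exponent $-\nu^2\mu_f r/(72\,r_{\mathrm{mix}})$, valid precisely on the stated range $0\le\nu\le1$ (which guarantees the chosen $s\le1$); the lower tail is handled symmetrically. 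The prefactor $\kappa_{\c}$ then absorbs the universal constant from the union bound together with the norm-equivalence factors incurred in passing between the weighted norm and the raw moment generating function, and depends only on the chain, as asserted.
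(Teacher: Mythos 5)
First, a point of reference: the paper does not prove this statement at all. Theorem \ref{th:markov_chain_chernoff} is imported verbatim as Theorem 3.1 of \cite{chung2012chernoff} (the appendix says ``we use Theorem 3.1 from \cite{chung2012chernoff} which we re-state below for clarity'') and is then used as a black box in the proof of Proposition \ref{prop:ergodicity}. So your proposal can only be measured against the proof in that reference, whose overall architecture --- Chernoff/moment-generating-function method, the identity $\EXP\bigl[e^{s\sum_{\round} f(\C^{\round})}\bigr]=\mu^\top D_s (M D_s)^{r-1}\mathbf{1}$, the chord bound $e^{sx}\le 1+(e^s-1)x$ on $[0,1]$, reduction to the $r_{\text{mix}}$-step chain, and optimization over $s$ --- your outline does mirror.

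As a proof, however, your proposal has a genuine gap: every step you actually carry out is the routine part, while the two estimates that constitute the content of the theorem are asserted rather than proven. (i) Your third step needs that $1/8$-mixing in total variation forces a contraction, with a \emph{universal} constant, of the $\mu$-weighted $\ell_2$ norm on the complement of the stationary direction for the $r_{\text{mix}}$-step chain. For the non-reversible chains allowed by Assumption \ref{ass:markov} this is not a soft norm conversion: passing from TV ($\ell_1$) to the weighted $\ell_2$ norm generically costs a factor $1/\sqrt{\min_{\c}\mu(\c)}$, which, incurred once per block, would destroy the constant $72$ in the exponent. The contraction is in fact true, but a proof must use more than TV closeness of the rows of $M^{r_{\text{mix}}}$ to $\mu$: writing $E=M^{r_{\text{mix}}}-\mathbf{1}\mu^\top$, one needs simultaneously the row bound $\norm{E(i,\cdot)}_1\le 1/4$, the entrywise bound $E(i,l)\ge-\mu(l)$ (nonnegativity of the transition matrix), and the column balance $\mu^\top E=0$ (stationarity); a Cauchy--Schwarz argument combining these three gives an operator-norm bound of $1/\sqrt{2}$ on zero-mean functions. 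Your sketch flags this conversion as ``delicate'' but supplies no argument, and without the three structural facts above the step would simply fail. (ii) The per-block bound $\exp\bigl(s\mu_f r_{\text{mix}}+c\,(s r_{\text{mix}})^2\mu_f\bigr)$, with the quadratic term proportional to $\mu_f$ rather than to $\norm{f}_\infty$, is exactly the hard estimate; it requires the cancellations $E\mathbf{1}=0$ and $\mu^\top E=0$ so that every occurrence of the deviation operator is charged against $\norm{\mu^\top(D_s-I)}_1\le(e^s-1)\mu_f$ (equivalently, a Kato-type perturbation analysis of the top eigenvalue of $MD_s$ showing both first- and second-order terms are $O(\mu_f)$). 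You explicitly defer this (``where all the technical effort and the explicit constants reside''), but it is precisely where the theorem, and the constant $72$, live. As written, then, your text is a correct plan that matches the literature's strategy, not a proof: the conclusion-bearing steps are missing.
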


\begin{proof}[Proof of Proposition \ref{prop:ergodicity}]
For some $\c \in \mathcal{C}$, define $f(\c') = \mathbbm{1}(\c'=\c)$. Then, due to Theorem \ref{th:markov_chain_chernoff}, there exist a constant $\kappa_{\c}$ such that,
\begin{equation}
P\left( \left\lvert \frac{1}{r'} \sum_{\round = 1}^{r'} \mathbbm{1}(\C^{\round}=\c) - \mu(\c)  \right\rvert \geq \nu \mu(\c) \right) \leq \kappa_{\c} \exp\left(-\frac{\nu^2\mu(\c) r'}{72r_{\text{mix}}}\right).
\label{eq:prop_ergod_1}
\end{equation}
Denote, $\kappa \triangleq \sum_{\c \in \mathcal{C}} \kappa_{\c}$ and $\mu_{\min} = \min_{\c \in \mathcal{C}} \mu(\c)$. Since the Markov chain is irreducible, $\mu_{\min} > 0$. Then, using \eqref{eq:prop_ergod_1} and taking a union bound over all $\c \in \mathcal{C}$, we obtain,
\[
P\left(\C^{r'} \not{\in} \mathcal{T}_{\nu}^{r'}(\mu) \right) \leq \kappa \exp\left(-\frac{\nu^2\mu_{\min} r'}{72r_{\text{mix}}}\right).
\]
Define $\kappa_2 \triangleq \mu_{\min}/(72r_{\textnormal{mix}})$. Taking a further union bound over all $r' \geq r$, 
\[
P\left(\exists r' \geq r \text{ such that } \C^{r'} \not{\in} \mathcal{T}_{\nu}^{r'}(\mu) \right) \leq \frac{\kappa}{1 - \exp\left( -\kappa_2\nu^2 \right)}  \exp\left( -\kappa_2\nu^2r \right)
\]
Defining $\kappa_1 \triangleq \kappa/(1-\exp(-\kappa_2\nu^2))$ completes the proof.
\end{proof}

\newpage
\section{Proof of Lemma \ref{lm:exp_time}}
\label{app:lemma_2_proof}

Here we prove Lemma \ref{lm:exp_time} which states that the expected wall clock is approximately equal to the product of the expected number of rounds and the expected round duration asymptotically.

The proof is very similar to the proof of Lemma \ref{lm:stationary_q} in Appendix \ref{app:proof_stationary_q}. As such, we will use the notation introduced in Appendix \ref{app:proof_stationary_q}.

Denote $h^{\min}_{\varepsilon}$ and $h^{\max}_{\varepsilon}$ as the minimum and maximum of the bounded function $h_{\varepsilon}(\cdot)$. And, let $d^{\min}$ and $d^{\max}$ denote the minimum and maximum of the positive, bounded function $d(\cdot, \cdot, \cdot)$. 

Similar to the proof of Lemma \ref{lm:stationary_q}, we consider network state sequences which are typical. In order to define the parameters for the typical set, for any given $\delta > 0$, we choose a small enough $\varepsilon_{th}>0$ and $\delta' > 0$ such that,
\begin{enumerate}
    \item $\varepsilon_{th}$ is small enough such that $h^{\min}_{\varepsilon_{th}} > 2(1+\delta')/\delta'$.
    \item $\delta'>0$ is such that for all $0<\varepsilon \leq \varepsilon_{th}$,
    \[
    (1-\delta) < (1-\delta')^2\left(1 - \kappa_1\exp\left(-\kappa_2({\delta'}^2 h^{\min}_{\varepsilon})\right)\right),
    \]
    and,
    \[
    \max \left\{ (1+\delta')^3, 1+\frac{\kappa_1\exp\left( -\kappa_2{\delta'}^2 h^{\min}_{\varepsilon} \right) h^{\max}_{\varepsilon}d^{\max}}{h^{\min}_{\varepsilon}d^{\min}} \right\}< (1+\delta/2).
    \]
    Such a choice of $\delta'$ is possible because $h^{\min}_{\varepsilon} = \Theta(1/poly(\varepsilon))$, and $\exp(-\kappa_2{\delta'}^2h_{\varepsilon}^{\min})h^{\max}_{\varepsilon}/h^{\min}_{\varepsilon}= \exp(-\Omega({\delta'}^{2}/poly(\varepsilon)))$.
\end{enumerate}

We consider a sequence $\left(\c^{\round}\right)_{\round}$ such that $\c^r \in \mathcal{T}_{\delta'}^{r}(\mu)$ for every $r \geq h^{\min}_{\varepsilon_{th}}$. 

For a policy $\pifunc$, let $r^{\pifunc}_{\varepsilon}$ denote the minimum number of rounds needed to converge to error tolerance $\varepsilon < \varepsilon_{th}$ given network state sequence $\left(\c^{\round}\right)_{\round}$. Then, the wall clock time for $\pifunc$ can be lower bounded as,
\begin{align}
    \sum_{\round = 1}^{r_{\varepsilon}^{\pifunc}} d\left(\tau, \pifunc\left(\c^{\round}\right), \c^{\round}\right)
        &=  \left(\frac{1}{r_{\varepsilon}^{\pifunc}} \sum_{\round = 1}^{r_{\varepsilon}^{\pifunc}} d\left(\tau, \pifunc\left(\c^{\round}\right), \c^{\round}\right)\right)r_{\varepsilon}^{\pifunc}, \nonumber \\
        &\stackrel{(a)}{\geq} (1-\delta')   \EXP\left[ d(\tau, \pifunc(\C), \C) \right] r_{\varepsilon}^{\pifunc}, \nonumber \\
        &\stackrel{(b)}{\geq} (1-\delta')   \EXP\left[ d(\tau, \pifunc(\C), \C) \right] \left( \frac{1}{r_{\varepsilon}^{\pifunc}} \sum_{\round=1}^{r_{\varepsilon}^{\pifunc}} \norm{\boldsymbol{h}_\varepsilon \left( \pifunc\left(\c^{\round}\right)\right)}  \right), \nonumber\\
        &\stackrel{(c)}{\geq} (1-\delta')^2 \EXP\left[ d(\tau, \pifunc(\C), \C) \right] \EXP\left[ \norm{\boldsymbol{h}_{\varepsilon}\left(\pifunc(\C)\right)} \right], \label{eq:lm2_lower_bound}
\end{align}
(a) and (c) follow from Lemma \ref{lm:typical_averaging_lemma} since $r^{\pifunc}_{\varepsilon} > h^{\min}_{\varepsilon_{th}}$, and (b) follows from Assumption \ref{ass:q_suff_stat}.

Performing a similar calculation for the upper bound,
\begin{align}
    \sum_{\round = 1}^{r_{\varepsilon}^{\pifunc}} d\left(\tau, \pifunc \left(\c^{\round}\right), \c^{\round}\right)
        &=  \left(\frac{1}{r_{\varepsilon}^{\pifunc}} \sum_{\round = 1}^{r_{\varepsilon}^{\pifunc}} d\left(\tau, \pifunc\left(\c^{\round}\right), \c^{\round}\right)\right)r_{\varepsilon}^{\pifunc}, \nonumber \\
        &\stackrel{(a)}{\leq} (1+\delta')   \EXP\left[ d(\tau, \pifunc(\C), \C) \right] r_{\varepsilon}^{\pifunc}, \nonumber \\
        &\stackrel{(b)}{\leq} (1+\delta')^2   \EXP\left[ d(\tau, \pifunc(\C), \C) \right] \left( \frac{1}{r_{\varepsilon}^{\pifunc}} \sum_{\round=1}^{r_{\varepsilon}^{\pifunc}} \norm{\boldsymbol{h}_\varepsilon \left( \pifunc\left(\c^{\round}\right)\right)}  \right), \nonumber \\
        &\stackrel{(c)}{\leq} (1+\delta')^3 \EXP\left[ d(\tau, \pifunc(\C), \C) \right] \EXP\left[ \norm{\boldsymbol{h}_{\varepsilon}\left(\pifunc(\C)\right)} \right] . \label{eq:lm_2_upper_bound}
\end{align}
(a) and (c) follow from Lemma \ref{lm:typical_averaging_lemma}, and (b) follows from Proposition \ref{prop:r_ub}.

Let $R^{\pifunc}_{\varepsilon}$ be the random variable denoting the minimum number number of rounds needed to converge to an error tolerance $\varepsilon<\varepsilon_{th}$ when policy $\pifunc$ is used on the sequence $(\C^{\round})_{\round}$. The expected wall clock times of $\pifunc$ can be lower bounded as,
\begin{align}
    \EXP\left[T_{\varepsilon}^{\pifunc}\right] &\geq P\left(\C^{R_{\varepsilon}^{\pifunc}} \in \mathcal{T}_{\delta'}^{R_{\varepsilon}^{\pifunc}}(\mu) \right) (1-\delta')^2 \EXP\left[ d(\tau, \pifunc(\C)), \C) \right] \EXP\left[ \norm{\boldsymbol{h}_{\varepsilon}\left(\pifunc(\C)\right)} \right], \nonumber \\
    &\stackrel{(a)}{\geq} P\left(\forall r' \geq h_{\varepsilon}^{\min}, \; \C^{r'} \in \mathcal{T}_{\nu}^{r'}(\mu) \right)(1-\delta')^2 \EXP\left[ d(\tau, \pifunc(\C), \C) \right] \EXP\left[ \norm{\boldsymbol{h}_{\varepsilon}\left(\pifunc(\C)\right)} \right] \nonumber \\
    &\stackrel{(b)}{\geq} (1-\delta')^2\left(1 - \kappa_1\exp\left(-\kappa_2{\delta'}^2 h^{\min}_{\varepsilon}\right)\right) \EXP\left[ d(\tau, \pifunc(\C), \C) \right] \EXP\left[ \norm{\boldsymbol{h}_{\varepsilon}\left(\pifunc(\C)\right)} \right] \nonumber \\
    &\stackrel{(c)}{\geq} (1-\delta) \EXP\left[ d(\tau, \pifunc(\C), \C) \right] \EXP\left[ \norm{\boldsymbol{h}_{\varepsilon}\left(\pifunc(\C)\right)} \right]. \label{eq:lm2_et_lb}
\end{align}
(a) follows since $R^{\pifunc}_{\varepsilon}\geq h^{\min}_{\varepsilon}$ almost surely, (b) follows from Proposition \ref{prop:ergodicity}  and (c) follows from the choice of $\delta'$.

The expected wall clock times of $\pifunc$ can be upper bounded as,
\begin{align}
    \EXP\left[T_{\varepsilon}^{\pifunc}\right] 
        &\stackrel{(a)}{\leq} P\left(
            \C^{R_{\varepsilon}^{\pifunc}} \in \mathcal{T}_{\delta'}^{R_{\varepsilon}^{\pifunc}}(\mu) 
            \right) 
            (1+\delta')^3 \EXP\left[ d(\tau, \pifunc(\C), \C) \right] \EXP\left[ \norm{\boldsymbol{h}_{\varepsilon}\left(\pifunc(\C)\right)} \right] 
            + P\left(
                \C^{R_{\varepsilon}^{\pifunc}} \not{\in} \mathcal{T}_{\delta'}^{R_{\varepsilon}^{\pifunc}}(\mu) 
            \right) h^{\max}_{\varepsilon}d^{\max}, \nonumber \\
            &\stackrel{(b)}{\leq}  
            (1+\delta')^3 \EXP\left[ d(\tau, \pifunc(\C), \C) \right] \EXP\left[ \norm{\boldsymbol{h}_{\varepsilon}\left(\pifunc(\C)\right)} \right] 
            + P\left(
                \exists r' \geq h_{\varepsilon}^{\min} \C^{r'} \not{\in} \mathcal{T}_{\delta'}^{r'}(\mu)
            \right) h^{\max}_{\varepsilon}d^{\max}, \nonumber \\
        &\stackrel{(c)}{\leq} (1+\delta')^3 \EXP\left[ d(\tau, \pifunc(\C), \C) \right] \EXP\left[ \norm{\boldsymbol{h}_{\varepsilon}\left(\pifunc(\C)\right)} \right] 
        +  \kappa_1\exp\left( -\kappa_2{\delta'}^2 h^{\min}_{\varepsilon} \right) h^{\max}_{\varepsilon}d^{\max}, \nonumber \\
    &\stackrel{(d)}{\leq} (1+\delta) \EXP\left[ d(\tau, \pifunc(\C), \C) \right] \EXP\left[ \norm{\boldsymbol{h}_{\varepsilon}\left(\pifunc(\C)\right)} \right]. \label{eq:lm2_et_ub}
\end{align}
(a) follows by using \eqref{eq:lm_2_upper_bound} to upper bound the wall clock time for typical sequences, and using the worst-case upper bound $h_{\varepsilon}^{\max}d^{\max}$ for non-typical sequences. (b) follows because, almost surely, $R_{\varepsilon}^{\pifunc} \geq h_{\varepsilon}^{\min}$. (c) follows from Proposition \ref{prop:ergodicity}.   (d) follows from the choice of $\delta'$. 

\eqref{eq:lm2_et_lb} and \eqref{eq:lm2_et_ub}, jointly, conclude the proof.

\newpage
\section{Proof of Proposition \ref{prop:unique_stationary}}
\label{app:theorem_1_proof}

In this section, we prove Proposition \ref{prop:unique_stationary} which states that the Fluid-Frank-Wolfe (FFW) process has a unique stationary point in the set $\conv(V_{\varepsilon})$. It further states that the stationary point lies in the set $V_{\varepsilon}$. In order to demonstrate the main arguments of the proof, we will first consider the case with a single client ($m=1$) and a single network state, $\mathcal{C} = \{c\}$. Later, in Subsection B, we will generalize these arguments to complete the proof of Proposition \ref{prop:unique_stationary}.

\subsection{Warmup: 1 Client and 1 Network State (1C1NS) Case}
\label{app:1_user_1_state}
Recall that the set $V_{\varepsilon}$ is the set of pairs of achievable expected number of rounds $\hat{r}_{\varepsilon}$ and expected round duration $\hat{d}$ of state-dependent stationary policy. Here, we observe that $\conv(V_{\varepsilon})$ may be interpreted as the corresponding feasibility set for (possibly random) stationary policies. We start by making this observation precise. In the 1C1NS case, a stationary policy may be represented by a one-dimensional random variable $\Pi$ that denotes the possibly randomly selected compression parameter. The space of possible stationary policies is denoted by the set of distributions $\mathcal{Q}_1$,
\[
\mathcal{Q}_1 = \{f_\Pi: \text{ $f_\Pi$ is a distribution over }[0, q_{\max}] \}.
\]

Under a policy corresponding to $\Pi$ and a small enough error tolerance $\varepsilon$, due to Lemma \ref{lm:exp_time}, the expression for the expected wall clock time for stationary policies is given by,
\[
\EXP[T_{\varepsilon}^{\Pi}] \: \approx \: \hat{t}_{\varepsilon}^{\Pi} \: = \: \EXP\left[ h_\varepsilon(\Pi) \right] \EXP\left[ d(\tau, \Pi, c) \right],
\]
where $T_{\varepsilon}^\Pi$ is the wall clock time to reach error tolerance $\varepsilon$ under compression parameter $\Pi$. Then, the feasible set $\conv(V_{\varepsilon})$ for stationary policies may be written for this case as,
\[
\conv(V_{\varepsilon}) = \left\{ (\hat{r}_\varepsilon, \hat{d}): \exists f_{\Pi} \in \mathcal{Q}_1 \text{ s.t., } \Pi \sim f_{\Pi} \text{ satisfies }  \hat{r}_{\varepsilon} = \EXP\left[ h_{\varepsilon}(\Pi) \right], \quad \hat{d} = \EXP\left[ d(\tau, \Pi, c) \right] \right\}.
\]

Here, stationary policies may be separated into two categories,
\begin{itemize} 
    \item deterministic policies: here, $\Pi$ is a constant.
    \item stochastic policies: here, $\Pi$ is non-deterministic random variable.
\end{itemize}

Our aim is to prove that there exists a unique fixed-point of the FFW update in $\conv(V_{\varepsilon})$. As we will see, this will prove Proposition \ref{prop:unique_stationary} for the special case of 1 client and 1 network state. Before delving into the proof of this statement, we state a couple of properties of $\conv(V_{\varepsilon})$ which are useful in proving the existence of a unique fixed-point of the FFW update. 

\begin{prop}
For any $h_{\varepsilon}^{\min}\leq h \leq h_{\varepsilon}^{\max}$, there exists a deterministic policy that minimizes $\hat{t}_{\varepsilon}^{\Pi}$ given a constraint $\EXP[h_{\varepsilon}(\Pi)] = h$.
\label{prop:constr_det_pol_opt}
\end{prop}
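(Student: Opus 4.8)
The plan is to reduce the statement to a single application of Jensen's inequality after the right change of variables. First I would observe that the constraint $\EXP[h_{\varepsilon}(\Pi)] = h$ pins down the first factor of the objective, so that $\hat{t}_{\varepsilon}^{\Pi} = \EXP[h_{\varepsilon}(\Pi)]\,\EXP[d(\tau, \Pi, c)] = h\,\EXP[d(\tau, \Pi, c)]$; hence minimizing $\hat{t}_{\varepsilon}^{\Pi}$ over all feasible $\Pi$ is equivalent to minimizing $\EXP[d(\tau, \Pi, c)]$ alone.

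The key move is to reparameterize the policy by $R \triangleq h_{\varepsilon}(\Pi)$. Since by Assumption \ref{ass:q_suff_stat} the map $h_{\varepsilon}$ is continuous, bounded, and strictly increasing on $[0, q_{\max}]$, it is a bijection onto $[h_{\varepsilon}^{\min}, h_{\varepsilon}^{\max}]$ with a well-defined inverse, so $R$ ranges over $[h_{\varepsilon}^{\min}, h_{\varepsilon}^{\max}]$ and $\Pi = h_{\varepsilon}^{-1}(R)$. Writing $g(r) \triangleq d(\tau, h_{\varepsilon}^{-1}(r), c)$, the objective becomes $\EXP[g(R)]$ subject to the constraint $\EXP[R] = h$. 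Crucially, Assumption \ref{ass:comm_delay} asserts precisely that $d(\tau, h_{\varepsilon}^{-1}(\cdot), c)$ is convex, i.e., $g$ is convex in $r$; note that this convexity holds in the $r$-variable, not in $q$, which is exactly why the reparameterization is needed.

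With $g$ convex, Jensen's inequality gives $\EXP[g(R)] \geq g(\EXP[R]) = g(h)$ for every feasible $R$. The lower bound $g(h)$ is attained by the degenerate law $R \equiv h$, which corresponds to the deterministic policy $\Pi^* = h_{\varepsilon}^{-1}(h)$. This policy is admissible because $h \in [h_{\varepsilon}^{\min}, h_{\varepsilon}^{\max}]$ guarantees $h_{\varepsilon}^{-1}(h) \in [0, q_{\max}]$, and it clearly satisfies $\EXP[h_{\varepsilon}(\Pi^*)] = h$. Thus $\Pi^*$ minimizes $\EXP[d(\tau, \Pi, c)]$, and therefore $\hat{t}_{\varepsilon}^{\Pi}$, over all policies meeting the constraint, which is the assertion.

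I do not anticipate a genuine obstacle here: the argument is a clean Jensen bound once convexity is transported into the $r$-coordinate. The only point requiring care is to perform the change of variables \emph{before} invoking convexity, since Assumption \ref{ass:comm_delay} only guarantees convexity of the delay as a function of $r = h_{\varepsilon}(q)$ and not as a function of $q$ itself; applying Jensen directly in the $q$-variable would be unjustified.
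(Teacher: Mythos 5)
Your proposal is correct and follows essentially the same route as the paper: both arguments transport the problem into the $r = h_{\varepsilon}(q)$ coordinate, invoke the convexity of $d(\tau, \boldsymbol{h}_{\varepsilon}^{-1}(\cdot), c)$ from Assumption~\ref{ass:comm_delay}, and conclude that the deterministic policy $h_{\varepsilon}^{-1}(h)$ placed at the mean dominates any randomized policy with the same constraint value. The only cosmetic difference is that you state this directly as a Jensen bound, whereas the paper phrases it as a contradiction using strict convexity of the duration function.
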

\begin{proof}
As a contradiction, assume that there is no deterministic policy that minimizes the expected wall clock time given a constraint $\EXP\left[h_{\varepsilon}(\Pi)\right] = h$. Let $\Pi^*$ be a stochastic policy that minimizes the expected wall clock time with the constraint $\EXP[h_{\varepsilon}(\Pi)]=h$. Then, consider an alternate policy with deterministic compression parameter $\pi$ chosen as, $h_{\varepsilon}(\pi) = \EXP\left[ h_{\varepsilon}(\Pi^*) \right]$. Such a $\pi$ exists due to the Intermediate Value Theorem since $h_{\varepsilon}(\cdot)$ is a continuous function. In this case, by the strict convexity of the duration function assumed in Assumption \ref{ass:comm_delay}, we have,
\[
\EXP\left[ d(\tau, \pi, c) \right] < \EXP\left[d\left( \tau, \Pi^*, c \right) \right].
\]
Therefore, the relation between their expected wall clock times is,
\[
\hat{t}_{\varepsilon}^{\pi} = \EXP\left[h_{\varepsilon}(\pi)\right]\EXP\left[ d(\tau, \pi, c) \right] < \EXP\left[ h_{\varepsilon}(\Pi^*) \right]\EXP\left[d\left( \tau, \Pi^*, c \right) \right] = \hat{t}_{\varepsilon}^{\Pi^*}.
\]
This is a contradiction to the assumption that a stochastic policy minimizes the expected wall clock time given the constraint.
\end{proof}

\begin{figure}[h]
    \centering
    \includegraphics[width=.4\textwidth]{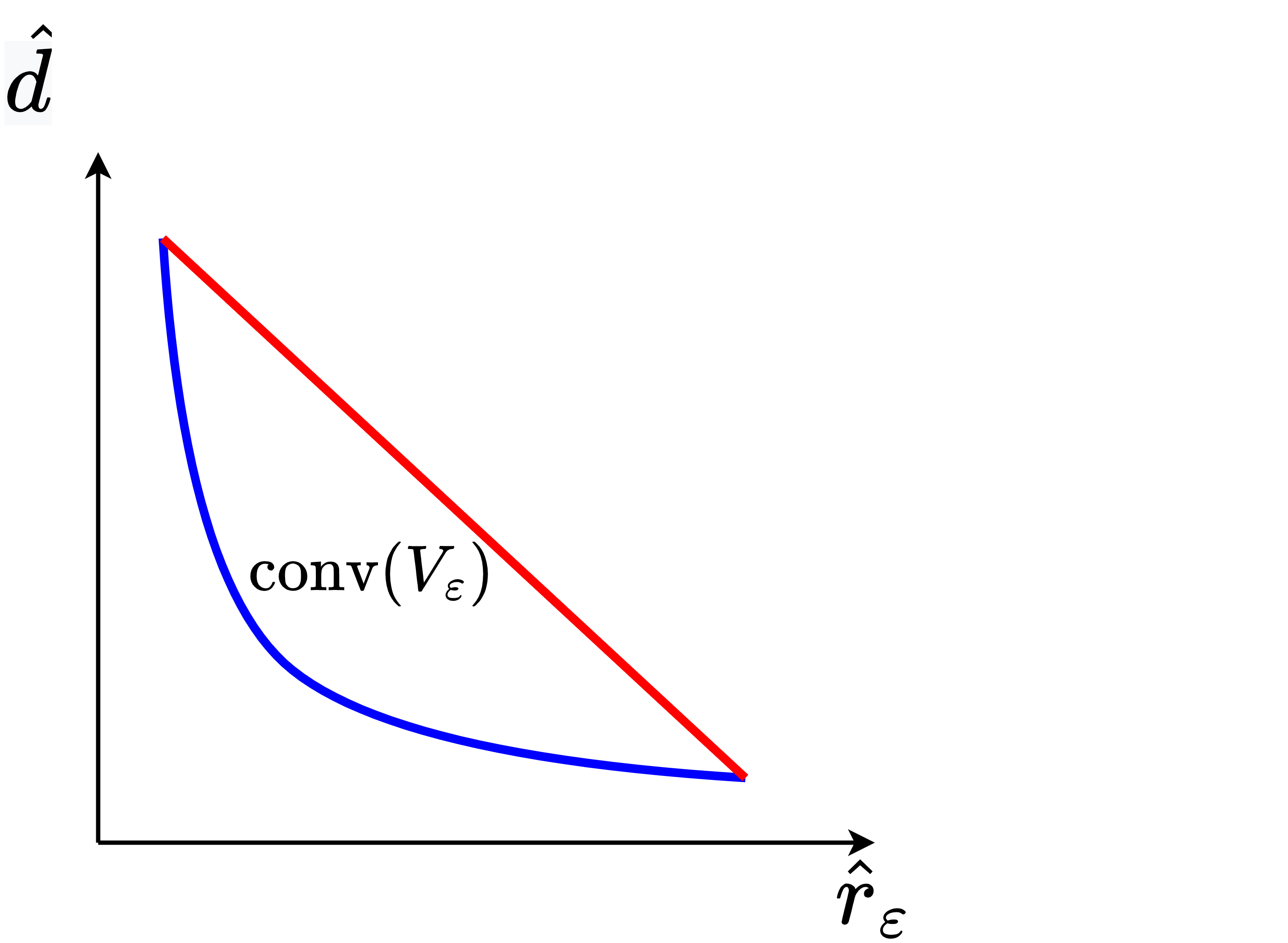}
    \caption{Feasibility set for stationary policies $\conv(V_{\varepsilon})$. In the 1C1NS case, a point $(\hat{r}_{\varepsilon}, \hat{d}) \in \conv(V_{\varepsilon})$ corresponds to a policy $\Pi$ such that, $\hat{r}_{\varepsilon}= \EXP[h_{\varepsilon}(\Pi)]$ and $\hat{d} = \EXP[d(\tau, \Pi, \c)]$. The function $\bar{d}(\hat{r}_{\varepsilon})$ is represented by the blue curve.}
    \label{fig:feasibility_set}
\end{figure}

For notational brevity, denote,
\[
\bar{d}(\hat{r}_{\varepsilon}) = d(\tau, h^{-1}_{\varepsilon}(\hat{r}_{\varepsilon}), c).
\]
Recall that we may denote points $(\hat{r}_{\varepsilon}, \hat{d}) \in \conv(V_{\varepsilon})$ by a two-dimensional vector $\x = (\hat{r}_{\varepsilon} \; \hat{d})^{\top}$. Also, recall the function $H(\x)=x_1x_2$. The following proposition states several equivalent ways of describing a point $\x$ in the set $V_{\varepsilon}$.

\begin{prop}
The following statements are equivalent
\begin{enumerate}
    \item[I.] $\x \in \conv(V_{\varepsilon})$ is of the form $(\hat{r}_{\varepsilon}, \bar{d}(\hat{r}_{\varepsilon}))$ for some $h_{\varepsilon}^{\min} \leq \hat{r}_{\varepsilon} \leq h_{\varepsilon}^{\max}$.
    \item[II.]  $\x \in \conv(V_{\varepsilon})$ is such that $\alpha \x \not{\in} \conv(V_{\varepsilon})$ for any $0<\alpha<1$.
    \item[III.] $\x = (\hat{r}_{\varepsilon}, \hat{d}) \in \conv(V_{\varepsilon})$ is such that, $\hat{d} = \min \{d': (\hat{r}_{\varepsilon}, d') \in \conv(V_{\varepsilon}) \}$. 
    \item[IV.] $\x = (\hat{r}_{\varepsilon}, \hat{d}) \in \conv(V_{\varepsilon})$ is such that, $\hat{r}_{\varepsilon}\hat{d} = \min \{\hat{r}_{\varepsilon}d': (\hat{r}_{\varepsilon}, d') \in \conv(V_{\varepsilon}) \}$.
    \item[V.] $\x$ is an extreme point\footnote{A point $\x \in \conv(V_{\varepsilon})$ is called an \emph{extreme point} if it cannot be written as the convex combination of two other points in $\conv(V_{\varepsilon})$.} of $\conv(V_{\varepsilon})$.
\end{enumerate}
\label{prop:1c1ns_equiv}
\end{prop}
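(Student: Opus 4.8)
The plan is to prove that all five conditions are equivalent by showing each is equivalent to the single geometric statement that $\x$ lies on the graph of $\bar{d}$, i.e. $\hat{d} = \bar{d}(\hat{r}_{\varepsilon})$ with $h_{\varepsilon}^{\min} \le \hat{r}_{\varepsilon} \le h_{\varepsilon}^{\max}$ (which is statement I verbatim). The crucial first step is to pin down the geometry of $\conv(V_{\varepsilon})$. Since the only deterministic state-dependent stationary policies here are the constants $\pi \in [0,q_{\max}]$, the set $V_{\varepsilon}$ is exactly the graph $\{(\hat{r}, \bar{d}(\hat{r})) : \hat{r} \in [h_{\varepsilon}^{\min}, h_{\varepsilon}^{\max}]\}$, and Assumption \ref{ass:comm_delay} makes $\bar{d}$ a strictly convex, decreasing, positive function. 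I would then argue that $\conv(V_{\varepsilon})$ is precisely the planar region lying on or above the graph of $\bar{d}$ and on or below the chord joining its endpoints $(h_{\varepsilon}^{\min}, \bar{d}(h_{\varepsilon}^{\min}))$ and $(h_{\varepsilon}^{\max}, \bar{d}(h_{\varepsilon}^{\max}))$; in particular, for each fixed $\hat{r}_{\varepsilon}$ the smallest feasible second coordinate is $\bar{d}(\hat{r}_{\varepsilon})$, which is exactly the content of Proposition \ref{prop:constr_det_pol_opt}.

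Granting this description, the equivalences among I, III, IV are immediate. Statement III asserts $\hat{d} = \min\{d' : (\hat{r}_{\varepsilon}, d') \in \conv(V_{\varepsilon})\}$, and by the previous paragraph this minimum equals $\bar{d}(\hat{r}_{\varepsilon})$, so III is equivalent to I. Statement IV is III with the objective $d'$ replaced by $\hat{r}_{\varepsilon} d'$; since $\hat{r}_{\varepsilon} > 0$ is held fixed, multiplying the objective by it does not change the minimizer, so IV is equivalent to III.

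Next I would handle II, the first genuinely geometric step. For the direction I $\Rightarrow$ II, take $\x = (\hat{r}_{\varepsilon}, \bar{d}(\hat{r}_{\varepsilon}))$ and $0 < \alpha < 1$: if $\alpha \hat{r}_{\varepsilon} < h_{\varepsilon}^{\min}$ then $\alpha\x$ lies outside the abscissa range of $\conv(V_{\varepsilon})$, while if $\alpha \hat{r}_{\varepsilon} \ge h_{\varepsilon}^{\min}$ then $\alpha \bar{d}(\hat{r}_{\varepsilon}) < \bar{d}(\hat{r}_{\varepsilon}) < \bar{d}(\alpha \hat{r}_{\varepsilon})$ (using positivity and strict monotonicity of $\bar{d}$), so $\alpha\x$ falls strictly below the lower envelope; either way $\alpha\x \notin \conv(V_{\varepsilon})$. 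For the contrapositive, suppose $\hat{d} > \bar{d}(\hat{r}_{\varepsilon})$, which forces $\hat{r}_{\varepsilon} \in (h_{\varepsilon}^{\min}, h_{\varepsilon}^{\max})$ since the set degenerates to a single curve point at the endpoints. I would show $\alpha\x \in \conv(V_{\varepsilon})$ for $\alpha$ slightly below $1$ by checking that $\alpha\x$ stays above $\bar{d}$ (true near $\alpha=1$ by continuity, as $\hat{d} > \bar{d}(\hat{r}_{\varepsilon})$) and below the chord. Writing the chord as $\ell(r) = a + br$, the key computation is $\ell(\alpha \hat{r}_{\varepsilon}) - \alpha\,\ell(\hat{r}_{\varepsilon}) = a(1-\alpha) > 0$, valid because the intercept $a$ is positive, which yields $\alpha \hat{d} \le \alpha\,\ell(\hat{r}_{\varepsilon}) \le \ell(\alpha \hat{r}_{\varepsilon})$; hence $\alpha\x$ is feasible and II fails.

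Finally I would establish I $\Leftrightarrow$ V. Using strict convexity of $\bar{d}$, every graph point is extreme: if $(\hat{r}_{\varepsilon}, \bar{d}(\hat{r}_{\varepsilon}))$ were a nontrivial convex combination of two feasible points, comparing first coordinates and invoking strict convexity of $\bar{d}$ together with the fact that every feasible point lies on or above the graph forces the two points to coincide with $\x$. Conversely, any point with $\hat{d} > \bar{d}(\hat{r}_{\varepsilon})$ is a nontrivial convex combination and hence not extreme: of $(\hat{r}_{\varepsilon}, \bar{d}(\hat{r}_{\varepsilon}))$ and a higher point with the same abscissa when it lies strictly below the chord, or of the two chord endpoints when it lies in the relative interior of the chord. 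The main obstacle throughout is the very first step, namely rigorously justifying that $\conv(V_{\varepsilon})$ is exactly the region between the graph of $\bar{d}$ and its endpoint chord and that $\bar{d}$ inherits strict convexity and monotonicity from Assumption \ref{ass:comm_delay}; once this characterization and Proposition \ref{prop:constr_det_pol_opt} are in hand, the five equivalences reduce to the elementary convex-geometry arguments above.
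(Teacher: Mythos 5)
Your proof is correct and takes essentially the same route as the paper's (quite terse) proof sketch: both rest on identifying $\conv(V_{\varepsilon})$ as the region between the strictly convex, decreasing, positive curve $\bar{d}$ and its endpoint chord, and then obtain I $\iff$ III via Proposition \ref{prop:constr_det_pol_opt}, III $\iff$ IV trivially, II from monotonicity and positivity, and V from strict convexity. You in fact supply more detail than the paper, which only gestures at the structure of Fig.~\ref{fig:feasibility_set}; in particular your chord-intercept computation $\ell(\alpha \hat{r}_{\varepsilon}) - \alpha\,\ell(\hat{r}_{\varepsilon}) = a(1-\alpha) > 0$ (needed to keep the scaled point below the chord in the $\lnot$I $\Rightarrow$ $\lnot$II direction) is a step the paper's sketch omits entirely.
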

\begin{proof}[Proof Sketch]
The equivalences may be inferred from the structure of the feasibility set $\conv(V_{\varepsilon})$ as shown in Fig. \ref{fig:feasibility_set}. We briefly describe the arguments required to prove the equivalences. I $\iff$ III because $(\hat{r}_{\varepsilon}, \bar{d}_{\varepsilon})$ corresponds to a deterministic policy by definition, and, due to part b, a deterministic policy minimizes the wall clock time $\hat{r}_\varepsilon\hat{d}$ amongst the set of policies $\{f_{\Pi} \in \mathcal{Q}_1: \text{ s.t., } \Pi \sim f_{\Pi} \text{ satisfies } \EXP[h_{\varepsilon}(\Pi)] = \hat{r}_{\varepsilon} \}$. I, III $\iff$ II because $\bar{d}(\hat{r}_{\varepsilon})$ is a strictly decreasing function. I $\iff$ V because $\bar{d}$ is a strictly convex function. III and IV are trivially equivalent.
\end{proof}
From here on, we will call points of the form described in Proposition \ref{prop:1c1ns_equiv} as extreme points, and all other points in $\conv(V_{\varepsilon})$ as non-extreme points. Note that in the 1C1NS case, the set of extreme points is equal to the set $V_{\varepsilon}$. However, we refrain from using this fact here because in the general case of multiple clients and multiple network states, this is no longer true.

Next, we prove the existence of a unique fixed point of the FFW update in two steps. First, we show that a non-extreme point cannot be a fixed point of the FFW update. 
\begin{prop}
If a point $\x \in \conv(V_{\varepsilon})$ is not an extreme point of $\conv(V_{\varepsilon})$, then it is not a fixed-point of the FFW update.
\label{prop:non_ext_pt}
\end{prop}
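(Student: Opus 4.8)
The plan is to argue by contraposition, showing that any non-extreme point of $\conv(V_{\varepsilon})$ fails the fixed-point condition of the FFW flow. Reading off \eqref{eq:differential_equation}, a point $\x$ is a fixed point exactly when $\dot{\x} = \boldsymbol{v} - \x = \boldsymbol{0}$, i.e.\ when $\x$ itself attains the minimum of the linear functional $\boldsymbol{v} \mapsto \nabla H(\x)^\top \boldsymbol{v}$ over $\conv(V_{\varepsilon})$. It therefore suffices to exhibit, for an arbitrary non-extreme $\x$, a feasible point at which this functional is strictly smaller than its value at $\x$; this certifies $\x$ is not a minimizer, hence not a fixed point (and the conclusion is insensitive to whether the $\argmin$ is a singleton, since $\x$ will not even belong to the minimizing set).

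First I would record two elementary facts. The gradient of $H(\x) = x_1 x_2$ is $\nabla H(\x) = (\hat{d}, \hat{r}_{\varepsilon})^\top$, whose entries are strictly positive because $\hat{r}_{\varepsilon} = \EXP[h_{\varepsilon}(\Pi)] > 0$ (as $h_{\varepsilon}$ is $\mathbb{R}^+$-valued) and $\hat{d} = \EXP[d(\tau, \Pi, c)] > 0$. Consequently the functional evaluated at $\x$ itself satisfies $\nabla H(\x)^\top \x = 2\,\hat{r}_{\varepsilon}\hat{d} > 0$.

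Next I would invoke the equivalences established in Proposition \ref{prop:1c1ns_equiv}. Since $\x$ is non-extreme, the negation of statement~II guarantees a scalar $\alpha \in (0,1)$ with $\alpha\x \in \conv(V_{\varepsilon})$. Evaluating the linear functional at this feasible point yields $\nabla H(\x)^\top(\alpha\x) = \alpha\,\nabla H(\x)^\top\x < \nabla H(\x)^\top\x$, the strict inequality following from $\alpha < 1$ and $\nabla H(\x)^\top\x > 0$. Thus $\x$ does not attain the minimum over $\conv(V_{\varepsilon})$, so $\x \neq \argmin_{\boldsymbol{v} \in \conv(V_{\varepsilon})} \nabla H(\x)^\top \boldsymbol{v}$ and $\x$ cannot be a fixed point.

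I do not anticipate a genuine obstacle: the argument is a direct consequence of the linearity of the Frank--Wolfe subproblem together with the positivity of $\nabla H$. The only point deserving care is the faithful translation of ``fixed point of the FFW flow'' into ``minimizer of the frozen linear functional,'' and the remark that a single strictly improving feasible direction already rules out minimality. As an alternative to statement~II, one could use statement~III of Proposition \ref{prop:1c1ns_equiv}: a non-extreme point admits a feasible $\y = (\hat{r}_{\varepsilon}, d^\ast)$ with $d^\ast < \hat{d}$ lying directly beneath it, whence $\nabla H(\x)^\top\y - \nabla H(\x)^\top\x = \hat{r}_{\varepsilon}(d^\ast - \hat{d}) < 0$ gives the same conclusion.
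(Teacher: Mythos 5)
Your proof is correct and follows essentially the same route as the paper's: use the equivalence (statement II of Proposition \ref{prop:1c1ns_equiv}) to obtain $\alpha\x \in \conv(V_{\varepsilon})$ for some $\alpha \in (0,1)$, then observe that the frozen linear functional strictly decreases at $\alpha\x$, so $\x$ cannot be the Frank--Wolfe minimizer. Your version is in fact slightly more careful than the paper's, which only asserts non-negativity of $\nabla H(\x)$ while the strict inequality $\nabla H(\x)^\top(\alpha\x) < \nabla H(\x)^\top\x$ actually requires $\nabla H(\x)^\top\x > 0$, a point you justify via the strict positivity of $\hat{r}_{\varepsilon}$ and $\hat{d}$.
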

\begin{proof}
Due Proposition \ref{prop:1c1ns_equiv}, $\x$ being a non-extreme point implies that there exists a constant $0 < \alpha < 1$ such that $\alpha\x \in \conv(V_{\varepsilon})$.

Observe that $\nabla H(\x) = \left(
x_2 \; 
x_1
\right)^\top$. Since, $h_{\varepsilon}()$ and $d()$ are non-negative functions, we have that $\nabla H(\x)$ has non-negative entries for any $\x \in V_{\varepsilon}$.

Recall that $\x$ is a fixed-point of the FFW update if and only if $\x = \underset{\x' \in \conv(V_{\varepsilon}) }{\text{arg min}} \; \nabla H(\x)^\top \x'$. Due to the elementwise non-negativity of $\nabla H$, $\nabla H(\x)^\top (\alpha \x) < \nabla H(\x)^\top \x$. Since $\alpha \x \in \conv(V_{\varepsilon})$, $\x$ is not a fixed point of the FFW-update.
\end{proof}

Due to Proposition \ref{prop:non_ext_pt}, we focus on only extreme points in the next step. Before proving the existence of a unique fixed point of the FFW update amongst the set of extreme points, we state a result about an equivalent description of a fixed point. For this purpose, define $\hat{t}(\hat{r}_{\varepsilon}) = \hat{r}_{\varepsilon}\bar{d}(\hat{r}_{\varepsilon})$.
\begin{prop}
Under Assumption \ref{ass:unique_stationary}, an extreme-point $\x = \left( \hat{r}_{\varepsilon}, \bar{d}(\hat{r}_{\varepsilon}) \right)^\top$ with $h_{\varepsilon}^{\min} < \hat{r}_{\varepsilon} < \hat{h}_{\varepsilon}^{\max}$ is a fixed point for the FFW-update if and only if $\hat{t}'(\hat{r}_{\varepsilon}) = 0$.
\label{prop:1c1ns_1}
\end{prop}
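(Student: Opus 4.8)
The plan is to translate the fixed-point condition for the FFW update into a one-dimensional stationarity condition and then exploit convexity of the round-duration function. Recall that $\x$ is a fixed point of the FFW update if and only if $\x = \underset{\x' \in \conv(V_{\varepsilon})}{\text{argmin}}\; \nabla H(\x)^\top \x'$. Since $H(\x) = x_1 x_2$, at the extreme point $\x = (\hat{r}_{\varepsilon}, \bar{d}(\hat{r}_{\varepsilon}))^\top$ the gradient is $\nabla H(\x) = (\bar{d}(\hat{r}_{\varepsilon}),\, \hat{r}_{\varepsilon})^\top$, whose entries are strictly positive because $h_{\varepsilon}$ and $d$ are positive (and $\hat{r}_{\varepsilon} > h_{\varepsilon}^{\min} > 0$). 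First I would argue that, since the objective is linear with a strictly positive gradient and $\conv(V_{\varepsilon})$ is compact and convex with its extreme-point set equal to the lower-left frontier curve $\{(\hat{r}', \bar{d}(\hat{r}'))\}$ (Proposition \ref{prop:1c1ns_equiv}), the minimizer must lie on this curve. This reduces the two-dimensional linear program to minimizing the scalar function
$$
g(\hat{r}') \;:=\; \bar{d}(\hat{r}_{\varepsilon})\,\hat{r}' + \hat{r}_{\varepsilon}\,\bar{d}(\hat{r}')
$$
over $\hat{r}' \in [h_{\varepsilon}^{\min}, h_{\varepsilon}^{\max}]$, and the fixed-point condition becomes the self-consistency requirement that $\hat{r}' = \hat{r}_{\varepsilon}$ itself minimizes $g$.

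The crucial calculation is that $g'(\hat{r}') = \bar{d}(\hat{r}_{\varepsilon}) + \hat{r}_{\varepsilon}\,\bar{d}'(\hat{r}')$, so evaluating at the self-consistent point gives
$$
g'(\hat{r}_{\varepsilon}) = \bar{d}(\hat{r}_{\varepsilon}) + \hat{r}_{\varepsilon}\,\bar{d}'(\hat{r}_{\varepsilon}) = \hat{t}'(\hat{r}_{\varepsilon}),
$$
the last equality being the product rule applied to $\hat{t}(\hat{r}_{\varepsilon}) = \hat{r}_{\varepsilon}\,\bar{d}(\hat{r}_{\varepsilon})$. Next I would note that $g$ is strictly convex, since $\bar{d} = d(\tau, h_{\varepsilon}^{-1}(\cdot), c)$ is convex by Assumption \ref{ass:comm_delay} and $\hat{r}_{\varepsilon} > 0$ scales it. For a convex function on an interval, an interior point is the global minimizer if and only if its derivative there vanishes; since $h_{\varepsilon}^{\min} < \hat{r}_{\varepsilon} < h_{\varepsilon}^{\max}$ is interior, $\hat{r}_{\varepsilon}$ minimizes $g$ if and only if $g'(\hat{r}_{\varepsilon}) = 0$. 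Combining this with the identity above yields exactly the claim: $\x$ is a fixed point if and only if $\hat{t}'(\hat{r}_{\varepsilon}) = 0$. Strict convexity, which Assumption \ref{ass:unique_stationary} is designed to enforce in the general case, additionally guarantees that such a stationary point is unique, a fact used in the subsequent uniqueness argument of Proposition \ref{prop:unique_stationary}.

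The step I expect to be the main obstacle is the reduction in the first paragraph: carefully justifying that the linear minimization over the two-dimensional body $\conv(V_{\varepsilon})$ is attained on the one-dimensional extreme-point curve, and that the resulting implicit, self-referential fixed-point equation is faithfully captured by $g$. The positivity of both components of $\nabla H(\x)$ is precisely what forces the minimizer onto the lower-left frontier characterized in Proposition \ref{prop:1c1ns_equiv}, and the clean identity $g'(\hat{r}_{\varepsilon}) = \hat{t}'(\hat{r}_{\varepsilon})$ is what converts the implicit fixed-point condition into a stationarity condition for the original objective $\hat{t}$. The only remaining technical care is the differentiability of $\bar{d}$ at the interior point $\hat{r}_{\varepsilon}$, which is implicit in the statement (through $\hat{t}'$) and is consistent with the convexity and smoothness of $d$.
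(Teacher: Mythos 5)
Your proof is correct, but it takes a genuinely different and more direct route than the paper. The paper argues both directions by contradiction: it assumes the point is (resp.\ is not) a fixed point, constructs a nearby point $\w_\theta$ on the curve $\bar{d}(\cdot)$ along which the linearized objective strictly decreases, and then uses a Taylor expansion of $H$ together with Assumption \ref{ass:unique_stationary} (the strict quasiconvexity condition $\hat{t}''(\hat{r}_{\varepsilon})>0$ at stationary points) to reach a contradiction. You instead observe that the Frank--Wolfe linear subproblem $\min_{\x'\in\conv(V_{\varepsilon})}\nabla H(\x)^\top\x'$ is attained on the extreme-point curve (Proposition \ref{prop:1c1ns_equiv}), collapse it to the one-dimensional problem of minimizing $g(\hat{r}')=\bar{d}(\hat{r}_{\varepsilon})\hat{r}'+\hat{r}_{\varepsilon}\bar{d}(\hat{r}')$, and note the identity $g'(\hat{r}_{\varepsilon})=\hat{t}'(\hat{r}_{\varepsilon})$; the equivalence then follows from the first-order characterization of interior minimizers of a convex function. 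This buys two things: the argument is shorter and avoids the Taylor-expansion bookkeeping, and the ``if'' direction uses only the convexity of $\bar{d}$ from Assumption \ref{ass:comm_delay} rather than Assumption \ref{ass:unique_stationary} (which the paper invokes there), so your version actually isolates which hypothesis does the work; plain convexity of $g$ suffices for the iff, so your passing claim of strict convexity is not load-bearing. The one point to state explicitly (you flag it yourself) is differentiability of $\bar{d}$ at the interior point, which the proposition's statement already presupposes through $\hat{t}'$, and the routine fact that the minimum of a linear functional over $\conv(V_{\varepsilon})$ equals its minimum over the extreme points, so that $\x$ is a minimizer iff $\hat{r}_{\varepsilon}$ minimizes $g$.
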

\begin{proof}
First, as a contradiction, assume that there exists an $\hat{r}_{\varepsilon}$ with $\hat{t}'(\hat{r}_{\varepsilon}) = 0$, but $\x = \left( \hat{r}_{\varepsilon} \; \bar{d}(\hat{r}_{\varepsilon}) \right)^\top$ is not a fixed point of the FFW update. Recall that $\x$ is a fixed point of the FFW update if $\x = \underset{\x' \in \conv(V_{\varepsilon}) }{\text{arg min}} \; \nabla H(\x)^\top \x'$.  Therefore, this implies that there exists a different point $\z = (\hat{r}' \; \bar{d}(\hat{r}'))^\top$ in $\conv(V_{\varepsilon})$ such that,
\[
\nabla H(\x)^\top (\z - \x) \leq 0.
\]
Refer to Fig. \ref{fig:theorem_1_proof_motivation} for an illustration of the following argument. Due to strict convexity of the curve $\bar{d}(\cdot)$, there exists a point $\w = (\hat{r} \; \bar{d}(\hat{r}'))^\top$ with $\hat{r}$ being in between $\hat{r}_{\varepsilon}$ and $\hat{r}'$ such that $\nabla H(\x)^\top (\w - \x) = -\xi < 0$.  Then, for any $0<\theta<1$, denote $\w_{\theta}^{c} = (1-\theta)\x + \theta \w$ and $\w_\theta = \left((1-\theta)\hat{r}_\varepsilon + \theta \hat{r}, \bar{d}((1-\theta)\hat{r}_\varepsilon + \theta \hat{r}) \right)$. It is easily verified that $\nabla H(\x)^\top (\w_{\theta}^{c}-\x) = -\xi \theta$. And, due the the convexity of $\bar{d}(\cdot)$, $\nabla H(\x)^\top (\w_{\theta}-\x) \leq \nabla H(\x)^\top (\w_{\theta}^{c}-\x)$. That is, 
\begin{equation}
\nabla H(\x)^\top (\w_\theta - \x) \leq - \xi \theta,
\label{eq:1c1ns_1}
\end{equation}
\begin{figure}
    \centering
    \includegraphics[width=.45\textwidth]{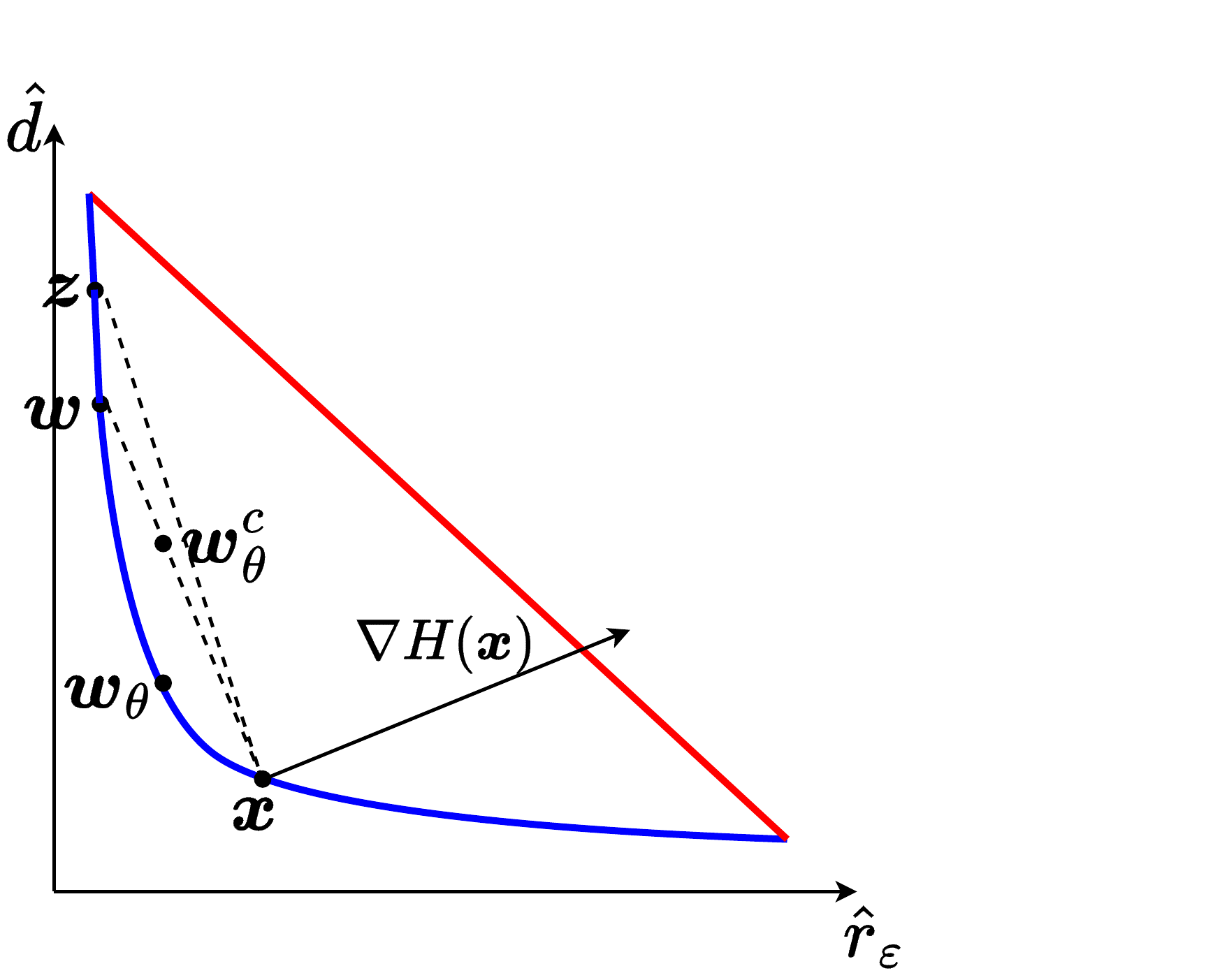}
    \caption{Illustration of the construction for a part of the proof of Proposition \ref{prop:1c1ns_1} and \ref{prop:mcmns_1}. Specifically, proof of the statement, $\hat{t}'(\hat{r}_{\varepsilon})=0$ implies a fixed point for the FW-update.}
    \label{fig:theorem_1_proof_motivation}
\end{figure}
By a Taylor Series expansion of $H$, we have $\hat{t}((1-\theta)\hat{r}_{\varepsilon}+\theta \hat{r} = \hat{t}(\hat{r}_\varepsilon)+ \nabla H(\x)^\top (\w_\theta - \x) + O(\theta^2)$. Therefore, for small enough $\theta$, \eqref{eq:1c1ns_1} implies that $\hat{t}((1-\theta)\hat{r}_{\varepsilon}+\theta \hat{r}) < \hat{t}(\hat{r}_{\varepsilon})$. This is a contradiction to Assumption \ref{ass:unique_stationary} which states that $\hat{t}''(\hat{r}_{\varepsilon})>0$.

Conversely, consider for contradiction that $\x = \left( \hat{r}_{\varepsilon}, \bar{d}(\hat{r}_{\varepsilon}) \right)^\top$ is a fixed point of the FFW update, but $\hat{t}'(\hat{r}_{\varepsilon}) \neq 0$. Define $\hat{r}^\delta = \hat{r}_{\varepsilon}+\delta$, and $\z^{\delta} = (\hat{r}^\delta, \bar{d}(\hat{r}^\delta))^\top$. Then, for small enough $\delta>0$, since $\hat{t}'(\hat{r}_{\varepsilon})\neq 0$, we have that 
\begin{equation}
    \begin{split}
        \hat{t}(\hat{r}^\delta) &< \hat{t}(\hat{r}_{\varepsilon}) - \Omega(\delta), \\
        &\text{(OR)} \\
        \hat{t}(\hat{r}^{-\delta}) &< \hat{t}(\hat{r}_{\varepsilon}) - \Omega(\delta).
    \end{split}
    \label{eq:1c1ns_2}
\end{equation}
Again, by Taylor series expansion, 
\begin{align}
    H(\z^{\delta}) &= H(\x) + \nabla H(\x)(\z^{\delta} - \x) + O(\delta^2), \nonumber \\
    \iff \hat{t}(\hat{r}^\delta) &= \hat{t}(\hat{r}_{\varepsilon}) + \nabla H(\x)(\z^{\delta} - \x) + O(\delta^2).
    \label{eq:1c1ns_3}
\end{align}
Comparing \eqref{eq:1c1ns_2} and \eqref{eq:1c1ns_3}, we have that $\nabla H(\x)(\z^{\delta} - \x)<0$ or $\nabla H(\x)(\z^{-\delta} - \x)<0$. This is a contradiction to the premise that $\x$ is a fixed point for the FFW update.
\end{proof}

\begin{remark}
$\x = \left(h_{\varepsilon}^{\min}, \bar{d}(h_{\varepsilon}^{\min})\right)$ is a fixed point of the FFW update if $t'(h_{\varepsilon}^{\min}) \geq 0$. Similarly, $\x = \left(h_{\varepsilon}^{\max}, \bar{d}(h_{\varepsilon}^{\max})\right)$ is a fixed point of the FFW update if $t'(h_{\varepsilon}^{\max}) \leq 0$. The proof of these statements is very similar to that of Proposition \ref{prop:1c1ns_equiv} We don't prove these statements for the 1C1NS case for succinctness. But, we prove it rigorously in the multiple client and multiple network state case. 
\label{rm:endpoints_fp}
\end{remark}

\begin{prop}
For the 1C1NS case, there exists a unique fixed point $\x$ of the FFW update in $\conv(V_{\varepsilon})$.
\end{prop}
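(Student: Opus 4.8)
The plan is to leverage the two structural results already in hand—Proposition~\ref{prop:non_ext_pt}, that no non-extreme point can be a fixed point, and Proposition~\ref{prop:1c1ns_1}, that an interior extreme point is a fixed point exactly when $\hat{t}'(\hat{r}_{\varepsilon})=0$—to collapse the problem onto the one-dimensional curve of extreme points. By Proposition~\ref{prop:non_ext_pt} every candidate fixed point must be an extreme point, hence of the form $(\hat{r}_{\varepsilon},\bar{d}(\hat{r}_{\varepsilon}))$ with $\hat{r}_{\varepsilon}\in[h_{\varepsilon}^{\min},h_{\varepsilon}^{\max}]$. So it suffices to study the scalar function $\hat{t}(\hat{r}_{\varepsilon})=\hat{r}_{\varepsilon}\,\bar{d}(\hat{r}_{\varepsilon})$ on this compact interval and show that exactly one value $\hat{r}_{\varepsilon}$ produces an extreme point meeting the FFW fixed-point condition.

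For existence, I would use that $\hat{t}$ is continuous on the compact interval $[h_{\varepsilon}^{\min},h_{\varepsilon}^{\max}]$ and therefore attains a global minimum at some $\hat{r}^{*}$. If $\hat{r}^{*}$ is interior, first-order optimality gives $\hat{t}'(\hat{r}^{*})=0$, so Proposition~\ref{prop:1c1ns_1} makes $(\hat{r}^{*},\bar{d}(\hat{r}^{*}))$ a fixed point. If instead $\hat{r}^{*}$ is an endpoint, the relevant one-sided derivative has the correct sign ($\hat{t}'(h_{\varepsilon}^{\min})\geq 0$ or $\hat{t}'(h_{\varepsilon}^{\max})\leq 0$), and Remark~\ref{rm:endpoints_fp} again delivers a fixed point. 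This covers existence in every case.

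For uniqueness, I would first specialize Assumption~\ref{ass:unique_stationary} to this setting. Here the policy collapses to a scalar, so $\r^{\pivec}=h_{\varepsilon}(\pi)=\hat{r}_{\varepsilon}>0$, and the assumption reads $\hat{r}_{\varepsilon}\,\hat{t}'(\hat{r}_{\varepsilon})=0\Rightarrow \hat{r}_{\varepsilon}^{2}\,\hat{t}''(\hat{r}_{\varepsilon})>0$, which—since $\hat{r}_{\varepsilon}\geq h_{\varepsilon}^{\min}>0$—is equivalent to $\hat{t}'(\hat{r}_{\varepsilon})=0\Rightarrow \hat{t}''(\hat{r}_{\varepsilon})>0$: every stationary point of $\hat{t}$ is a strict local minimum. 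The plan is then to rule out two distinct fixed points by examining the restriction of $\hat{t}$ to the interval between their abscissae. Each fixed point forces $\hat{t}$ to rise strictly as one moves into the interior away from it—an interior fixed point because it is a strict local minimum, a boundary fixed point because of the sign condition in Remark~\ref{rm:endpoints_fp}—so the maximum of $\hat{t}$ over that interval is attained at an interior point. That point is a stationary point and a local maximum, contradicting the strict-local-minimum property just derived. Hence the fixed point is unique, and combining with existence proves the claim.

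The main obstacle I anticipate is the bookkeeping in the uniqueness step. The contradiction must be verified across all three configurations of a hypothetical pair of fixed points—both interior, one interior and one on the boundary, and both on the boundary—and in each configuration one must confirm that $\hat{t}$ genuinely rises on both sides so that the intervening maximum is strictly interior (in particular handling the degenerate boundary case $\hat{t}'=0$, where the scalar reduction of Assumption~\ref{ass:unique_stationary} again forces a strict local minimum). Closing this cleanly requires combining the strict convexity and monotonicity of $\bar{d}$ with the precise endpoint derivative conditions, with the quasiconvexity assumption doing the final work of excluding an interior maximum.
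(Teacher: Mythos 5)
Your proposal is correct and follows essentially the same route as the paper: rule out non-extreme points, parameterize the extreme points by $\hat{r}_{\varepsilon}$, identify fixed points with local minima of $\hat{t}(\hat{r}_{\varepsilon})=\hat{r}_{\varepsilon}\bar{d}(\hat{r}_{\varepsilon})$ via the derivative characterization and the endpoint remark, and invoke the quasiconvexity assumption for uniqueness. The only difference is that where the paper simply asserts that strict quasiconvexity on a bounded domain yields a unique local minimum, you supply the explicit mountain-pass contradiction (two local minima would force an interior stationary local maximum, violating the ``stationary $\Rightarrow$ strict local min'' reading of the assumption), which is a welcome filling-in of detail rather than a different argument.
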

\begin{proof}
Due to Proposition \ref{prop:non_ext_pt}, a non-extreme point of $\conv(V_{\varepsilon})$ cannot be a fixed point of the FFW update.

Proposition \ref{prop:1c1ns_1} and Remark \ref{rm:endpoints_fp} jointly characterize the fixed point of the FFW update amongst the set of extreme points in terms of $\hat{t}(\cdot)$. In particular, they imply that a point $\x=(\hat{r}_{\varepsilon}, \bar{d}(\hat{r}_{\varepsilon}))$ is a fixed-point if and only if $\hat{r}_{\varepsilon}$ is a local-minimum of $\hat{t}(\cdot)$ in the domain $h_{\varepsilon}^{\min} \leq \hat{r}_{\varepsilon} \leq h_{\varepsilon}^{\max}$ (since $\hat{t}(\cdot)$ is quasiconvex, it does not have any local maximum in the interior of its domain). Since $\hat{t}$ is a strictly quasiconvex function (Assumption \ref{ass:unique_stationary}) on a bounded domain, it has a unique local minimum. Therefore, the FFW update has a unique fixed point among the set of extreme points of $\conv(V_{\varepsilon})$. 
\end{proof}

\subsection{Multiple Clients and Multiple Network States (MCMNS) Case}
This section contains the complete proof of Proposition \ref{prop:unique_stationary}. The argument is a generalization of what we saw in the previous subsection to the case with multiple clients and multiple network states. We start by introducing notation to describe policies in this general setting.


We may denote the policy $\pifunc$ as a function, $\pifunc:\mathcal{C} \to [0,q_{\max}]^{ m}$, or as a vector $\pivec$ of dimension $m\lvert \mathcal{C} \rvert$. In specific, enumerating the elements of $\mathcal{C}$ as, $\mathcal{C} = \{ \c_1, \c_2, \dots, \c_{\lvert \mathcal{C}\rvert} \}$, the vector $\pivec$ is represented as,
\[
\pivec = \left(\pifunc_1(\c_1), \dots, \pifunc_m(\c_1), \pifunc_1(\c_2), \dots, \pifunc_m(\c_2), \cdots\cdots\cdots, \pifunc_1(\c_{\lvert \mathcal{C} \rvert}), \dots, \pifunc_m(\c_{\lvert \mathcal{C} \rvert})  \right)^\top,
\]
where $\pifunc_i(\c)$ indicates the $i$\textsuperscript{th} entry of $m$-dimensional vector $\pifunc(\c)$.

Next, we explain how to define the quantities, $\EXP\left[\norm{\h_{\varepsilon}(\pifunc(\C))} \right]$ and $\EXP\left[ d\left(\tau, \pifunc(\C), \C \right) \right]$ in terms of the vector representation of $\pifunc$. Denote $\boldsymbol{e}^{i}$ as an $m\lvert \mathcal{C} \rvert$ dimensional vector with,
\[
e^{i}_j = \begin{cases}
1 & \text{, if } (i-1)\lvert \mathcal{C} \rvert \leq j < i \lvert \mathcal{C} \rvert, \\
0 & \text{, otherwise.}
\end{cases}
\]
Recall that $\mu$ denotes the stationary distribution of the Markov chain of network states. Then, for a deterministic policy $\pifunc$, we may write,
\begin{align}
    \tilde{r}_{\varepsilon}(\pivec) &\triangleq \EXP\left[ \norm{\h_{\varepsilon}(\pifunc(\C))}  \right] = \sum_{i=1}^{\lvert \mathcal{C} \rvert} \mu(\c_i)\norm{\h_\varepsilon(\pivec) \odot \boldsymbol{e}^{i}}, \label{eq:tilde_r} \\
    \tilde{d}(\pivec) &\triangleq \EXP\left[ d(\tau, \pifunc(\C), \C)  \right] = \sum_{i=1}^{\lvert \mathcal{C} \rvert} \mu(\c_i) d\left(\tau, \pivec_{(i-1)\lvert \mathcal{C} \rvert}^{i\lvert \mathcal{C}\rvert - 1}, \c_i\right), \label{eq:tilde_d}
\end{align}
where $\odot$ denotes the elementwise product of two vectors, and $\pivec_{j}^{k}$ denotes the vector $\left( \pivec_{j}, \pivec_{j+1}, \cdots, \pivec_{k} \right)^\top$.

Similar to the 1C1NS case, $\conv(V_{\varepsilon})$ may be interpreted as the set of achievable pairs of expected rounds and expected round duration of (possibly, stochastic) stationary policies. Therefore, a stationary policy may be represented by a random vector, $\Pivec$. Then, we will denote the set of all (deterministic and stochastic) stationary policies as,
\[
\mathcal{Q}_{m\lvert \mathcal{C} \rvert} = \left\{ f_{\Pivec}: \text{ $f_{\Pivec}$ is a $m\lvert \mathcal{C} \rvert$ dimensional distribution over $[0, q_{\max}]^{m\lvert\mathcal{C}\rvert}$} \right\}.
\]
The feasible set $\conv(V_{\varepsilon})$ may be defined as,
\[
\conv(V_{\varepsilon}) = \left\{ (\hat{r}_\varepsilon, \hat{d}): \exists f_{\Pivec} \in \mathcal{Q}_{m\lvert \mathcal{C} \rvert} \text{ s.t., }\Pivec \sim f_{\Pivec} \text{ satisfies } \hat{r}_{\varepsilon} = \EXP\left[ \tilde{r}_{\varepsilon}(\Pivec) \right], \quad \hat{d} = \EXP\left[ \tilde{d}(\Pivec) \right] \right\}.
\]

Next, we prove an analogous result of Proposition \ref{prop:constr_det_pol_opt}.
\begin{prop}
For any $h_{\varepsilon}^{\min} \leq h \leq h_{\varepsilon}^{\max}$, there exists a deterministic policy $\pivec$ that minimizes $\EXP\left[\tilde{d}(\Pivec)\right]$ over the constrained domain $\left\{\Pivec \in \mathcal{Q}_{m\lvert\mathcal{C}\rvert}: \EXP\left[ \tilde{r}_{\varepsilon}(\Pivec) \right] = h \right\}$. 
\label{prop:mcmns_det_opt}
\end{prop}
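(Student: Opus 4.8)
The plan is to lift the derandomization argument of Proposition~\ref{prop:constr_det_pol_opt} from the 1C1NS case to the general setting, the new ingredient being a \emph{decoupling across network states}. First I would observe that both functionals depend on a (possibly stochastic) policy $\Pivec$ only through the marginal laws of its per-state blocks: writing $\Pifunc(\c_i)$ for the block of $\Pivec$ associated with state $\c_i$, equations \eqref{eq:tilde_r} and \eqref{eq:tilde_d} give
\begin{align*}
\EXP\left[\tilde{r}_{\varepsilon}(\Pivec)\right] &= \sum_{i=1}^{\lvert\mathcal{C}\rvert}\mu(\c_i)\,\EXP\left[\norm{\h_{\varepsilon}(\Pifunc(\c_i))}\right], \\
\EXP\left[\tilde{d}(\Pivec)\right] &= \sum_{i=1}^{\lvert\mathcal{C}\rvert}\mu(\c_i)\,\EXP\left[d(\tau,\Pifunc(\c_i),\c_i)\right].
\end{align*}
Hence the single scalar constraint $\EXP[\tilde r_\varepsilon(\Pivec)]=h$ couples the states only through the weighted sum of the per-state round contributions $h_i \triangleq \EXP[\norm{\h_\varepsilon(\Pifunc(\c_i))}]$, and any correlation across blocks is irrelevant. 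This reduces the problem to derandomizing each state's block while preserving its own $h_i$.

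The second step is to introduce, for each state $\c$, the per-state value function
\[
G_{\c}(h') \triangleq \min\left\{ d\left(\tau,\h_{\varepsilon}^{-1}(\r),\c\right) : \norm{\r} \leq h' \right\},
\]
where the minimization runs over the (box) range of $\h_\varepsilon$. Two facts about $G_\c$ are needed. (i) Since $d(\tau,\h_\varepsilon^{-1}(\cdot),\c)$ is decreasing in every coordinate (Assumption~\ref{ass:comm_delay}), increasing any coordinate strictly lowers the duration, so the budget is used fully and the minimizer lies on the boundary $\norm{\r}=h'$; the relaxed inequality constraint is therefore tight. (ii) $G_\c$ is convex in $h'$: it is the optimal-value (perturbation) function of a convex program, namely minimizing the convex function $d(\tau,\h_\varepsilon^{-1}(\cdot),\c)$ subject to the convex constraint $\norm{\r}\le h'$, and such value functions are convex in the constraint level $h'$ by the standard argument that a convex combination of points feasible for $h'$ and $h''$ is feasible for the corresponding combination of levels.

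Third, I would derandomize each block. For state $\c_i$ with the random choice $\Pifunc(\c_i)$, set $\boldsymbol{R} \triangleq \h_\varepsilon(\Pifunc(\c_i))$; then pointwise $d(\tau,\Pifunc(\c_i),\c_i) = d(\tau,\h_\varepsilon^{-1}(\boldsymbol{R}),\c_i) \ge G_{\c_i}(\norm{\boldsymbol{R}})$, and taking expectations and applying Jensen's inequality to the convex $G_{\c_i}$ yields
\[
\EXP\left[d(\tau,\Pifunc(\c_i),\c_i)\right] \ge \EXP\left[G_{\c_i}(\norm{\boldsymbol{R}})\right] \ge G_{\c_i}\!\left(\EXP[\norm{\boldsymbol{R}}]\right) = G_{\c_i}(h_i).
\]
The value $G_{\c_i}(h_i)$ is attained by a deterministic vector $\q_i^{\ast}=\h_\varepsilon^{-1}(\r_i^{\ast})$ with $\norm{\h_\varepsilon(\q_i^{\ast})}=h_i$ by fact~(i). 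Assembling the blocks into one deterministic policy $\pivec^{\ast}$ then gives $\tilde r_\varepsilon(\pivec^{\ast})=\sum_i\mu(\c_i)h_i=h$, so the constraint is preserved, while $\tilde d(\pivec^{\ast})=\sum_i\mu(\c_i)G_{\c_i}(h_i)\le \EXP[\tilde d(\Pivec)]$.

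Finally, to conclude existence of a \emph{minimizing} deterministic policy, I would note that the deterministic feasible set $\{\pivec\in[0,q_{\max}]^{m\lvert\mathcal{C}\rvert}: \tilde r_\varepsilon(\pivec)=h\}$ is a nonempty (for $h$ in the stated range) closed subset of a compact box, on which the continuous objective $\tilde d$ attains its minimum $V_{\det}$ at some $\pivec^{\ast\ast}$. The derandomization above shows $V_{\det}\le \EXP[\tilde d(\Pivec)]$ for every feasible stochastic $\Pivec$, so $V_{\det}$ coincides with the stochastic optimum and $\pivec^{\ast\ast}$ is the desired minimizer. I expect the main obstacle to be fact~(ii): unlike the one-dimensional case, the constraint binds the \emph{mean of a norm} $\EXP[\norm{\h_\varepsilon(\cdot)}]$ rather than a scalar mean, so Jensen cannot be applied to the duration directly. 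The convexity of the value function $G_\c$ in the norm-budget $h'$ is precisely what bridges the two convex layers, the norm and the duration, and makes the block-wise derandomization valid.
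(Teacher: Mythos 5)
Your proposal is correct, and it reaches the result by a genuinely different route than the paper. The paper's proof is a single global contradiction argument: take a stochastic minimizer $\Pivec^*$, form the deterministic policy $\pivec$ with $\boldsymbol{h}_{\varepsilon}(\pivec) = \EXP[\boldsymbol{h}_{\varepsilon}(\Pivec^*)]$, apply Jensen to the convex map $\r \mapsto \tilde{d}(\boldsymbol{h}_{\varepsilon}^{-1}(\r))$ to get $\tilde{d}(\pivec) \leq \EXP[\tilde{d}(\Pivec^*)]$, observe that convexity of the norm yields only $\tilde{r}_{\varepsilon}(\pivec) \leq h$, and then restore the equality constraint by increasing $\pivec$ coordinatewise, which by monotonicity can only decrease the duration. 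You instead decouple across network states, introduce the per-state value function $G_{\c}(h')$ (optimal duration under a norm budget), prove it convex as the perturbation function of a convex program, and apply Jensen to $G_{\c}$ in the scalar budget $h_i$; your boundary-attainment fact (i) plays exactly the role of the paper's monotone push-up step. Both arguments rest on the same three ingredients from Assumptions \ref{ass:q_suff_stat} and \ref{ass:comm_delay} --- convexity of $d(\tau, \boldsymbol{h}_{\varepsilon}^{-1}(\cdot), \c)$, coordinatewise monotonicity, and convexity of the norm --- but your $G_{\c}$ is essentially a per-state, norm-ball analogue of the curve $\bar{d}(\cdot)$ in \eqref{eq:extreme_curve}, whose convexity the paper establishes separately in Proposition \ref{prop:opt_delay_convexity}; you are re-deriving that machinery locally. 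What your route buys: it does not presuppose that a minimizer exists over the infinite-dimensional class of stochastic policies (you obtain the deterministic minimizer by compactness and then show it dominates every feasible $\Pivec$), whereas the paper's contradiction silently assumes such a $\Pivec^*$ exists. What it costs: it is longer, and the attainment of the minimum defining $G_{\c}$ on the closed box deserves a word (a convex bounded function is continuous only on the interior), though the paper glosses over the same point.
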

\begin{proof}
The proof is similar to that of Proposition \ref{prop:constr_det_pol_opt}. 

As a contradiction, assume that there is no deterministic policy that minimizes the expected wall clock time given a constraint $\EXP\left[\tilde{r}_{\varepsilon}(\Pivec)\right] = h$. Let $\Pivec^*$ be a stochastic policy that minimizes the expected wall clock time with the constraint $\EXP[\tilde{r}_{\varepsilon}(\Pivec)]=h$. Then, consider an alternate policy with deterministic compression parameters $\pivec$ chosen as, $\boldsymbol{h}_{\varepsilon}(\pivec) = \EXP\left[ \boldsymbol{h}_{\varepsilon}(\Pivec^*) \right]$. Such a $\pivec$ exists due to the Intermediate Value Theorem since $h_{\varepsilon}(\cdot)$ is a continuous function. In this case, by the strict convexity of the duration function assumed in Assumption \ref{ass:comm_delay}, we have,
\[
\EXP\left[ \tilde{d}(\pivec) \right] < \EXP\left[\tilde{d}\left( \Pivec^* \right) \right].
\]
By the convexity of the norm operator, $\tilde{r}_{\varepsilon}(\pivec) \leq h$. Notice that $\tilde{r}_{\varepsilon}(\cdot)$ is increasing in every co-ordinate, whereas $\tilde{d}(\cdot)$ is decreasing in every co-ordinate. Therefore, for any $\pivec' \geq \pivec$ (elementwise inequality) with $\tilde{r}_{\varepsilon}(\pivec') = h$, we have, 
\[
\EXP\left[ \tilde{d}(\pivec') \right] \leq \EXP\left[ \tilde{d}(\pivec) \right] < \EXP\left[\tilde{d}\left( \Pivec^* \right) \right].
\]
This is a contradiction to the assumption that a stochastic policy minimizes the expected wall clock time given the constraint.
\end{proof}


At this point in the proof of the 1C1NS case, we defined a function $\bar{d}(\hat{r}_{\varepsilon})$ which was the round duration of the deterministic policy whose number of rounds for convergence was $\hat{r}_{\varepsilon}$. In the MCMNS case, since there could be multiple deterministic policies corresponding to a rounds for convergence $\hat{r}_{\varepsilon}$, we define $\bar{d}(\hat{r}_{\varepsilon})$ with respect to the policy that minimizes the round duration.
\begin{equation}
\bar{d}(r_{\varepsilon}) = \min_{\pivec: \tilde{r}_{\varepsilon}(\pivec) = r_{\varepsilon}} \tilde{d}(\pivec).
\label{eq:extreme_curve}
\end{equation}

In the rest of the proof, we need to use the fact that $\bar{d}(\hat{r}_{\varepsilon})$ is strictly convex, and that $\hat{t}(\hat{r}_{\varepsilon}) \triangleq \hat{r}_{\varepsilon}\bar{d}(\hat{r}_{\varepsilon})$ is strictly quasiconvex. In the 1C1NS case, these facts were a direct consequence of Assumptions \ref{ass:comm_delay} and \ref{ass:unique_stationary} because $\bar{d}(\hat{r}_{\varepsilon})$ was simply $d(\tau, h_{\varepsilon}^{-1}(\hat{r}_{\varepsilon}), \c)$. For the MCMNS case we prove these results in the following two propositions.

\begin{prop}
Under Assumptions \ref{ass:q_suff_stat} and \ref{ass:comm_delay}, $\bar{d}(r_{\varepsilon})$ is decreasing and strictly convex in $r_{\varepsilon}$.
\label{prop:opt_delay_convexity}
\end{prop}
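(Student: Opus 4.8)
The plan is to pass to the reparametrization $\r = \h_\varepsilon(\pivec)$ used throughout this appendix and recognize $\bar d$ as the optimal‑value (perturbation) function of a convex program whose right‑hand side is $r_\varepsilon$. Since $h_\varepsilon(\cdot)$ is continuous and strictly increasing, $\pivec \mapsto \r=\h_\varepsilon(\pivec)$ is a bijection of $[0,q_{\max}]^{m\lvert\mathcal{C}\rvert}$ onto a box $\mathcal R$. Writing $R(\r)\triangleq\sum_i \mu(\c_i)\norm{\r\odot\boldsymbol e^i}$ for $\tilde r_\varepsilon$ and $D(\r)\triangleq\sum_i \mu(\c_i)\,d\!\left(\tau,\h_\varepsilon^{-1}(\cdot),\c_i\right)$ for $\tilde d$ in these coordinates, I record: (i) $R$ is convex (a $\mu$‑weighted sum of norms of linear images of $\r$) and coordinatewise nondecreasing on $\mathcal R$; (ii) $D$ is coordinatewise nonincreasing and \emph{strictly} convex, because by Assumption \ref{ass:comm_delay} each summand $d(\tau,\h_\varepsilon^{-1}(\cdot),\c_i)$ is strictly convex and nonincreasing, the summands act on disjoint coordinate blocks with positive weights $\mu(\c_i)>0$, and a block‑separable sum of strictly convex functions is strictly convex. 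Thus $\bar d(r_\varepsilon)=\min\{D(\r):\r\in\mathcal R,\ R(\r)=r_\varepsilon\}$.

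The next step is to relax the equality to an inequality, since $\{R(\r)=r_\varepsilon\}$ is not convex and value‑function convexity cannot be applied directly. I would show $\bar d(r_\varepsilon)=\min\{D(\r):\r\in\mathcal R,\ R(\r)\le r_\varepsilon\}$. The bound $\le$ is immediate from the larger feasible set. For $\ge$: given any $\r$ with $R(\r)<r_\varepsilon$, continuity and coordinatewise monotonicity of $R$, together with $R(\r_{\max})\ge r_\varepsilon$ on the relevant domain, let me raise coordinates and apply the intermediate value theorem to reach $\r'\ge\r$ with $R(\r')=r_\varepsilon$, while $D(\r')\le D(\r)$ by monotonicity of $D$; hence every relaxed‑feasible point is dominated by one on the equality surface. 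A by‑product is that the constraint is active at the minimizer $\r^\star(r_\varepsilon)$, which is unique since $D$ is strictly convex on the convex set $\{R\le r_\varepsilon\}$: if $R(\r^\star)<r_\varepsilon$, increasing any non‑maximal coordinate stays feasible and does not increase $D$, contradicting uniqueness.

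For convexity and strict convexity, fix $r_1\neq r_2$ in the interior of the domain with minimizers $\r_1,\r_2$, and set $r_\lambda=\lambda r_1+(1-\lambda)r_2$, $\r_\lambda=\lambda\r_1+(1-\lambda)\r_2$ for $\lambda\in(0,1)$. Convexity of $R$ and feasibility give $R(\r_\lambda)\le\lambda r_1+(1-\lambda)r_2=r_\lambda$, so $\r_\lambda$ is relaxed‑feasible at $r_\lambda$ and $\bar d(r_\lambda)\le D(\r_\lambda)$. Activeness of the constraint yields $R(\r_1)=r_1\neq r_2=R(\r_2)$, hence $\r_1\neq\r_2$; strict convexity of $D$ then gives $D(\r_\lambda)<\lambda D(\r_1)+(1-\lambda)D(\r_2)=\lambda\bar d(r_1)+(1-\lambda)\bar d(r_2)$, so $\bar d$ is strictly convex. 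Monotonicity is then cheap: enlarging the budget only enlarges the feasible set, so $\bar d$ is nonincreasing, and a nonincreasing strictly convex function on an interval is strictly decreasing (equality $\bar d(r_1)=\bar d(r_2)$ with $r_1<r_2$ would force $\bar d$ strictly below that common value on $(r_1,r_2)$, contradicting monotonicity).

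The main obstacle is precisely the nonconvexity of the equality‑constrained feasible set $\{R(\r)=r_\varepsilon\}$: convexity of $\bar d$ is not automatic, and the crux is the monotonicity‑based relaxation to the convex inequality‑constrained program, paired with the activeness of the constraint. Activeness is what guarantees that distinct budgets produce distinct minimizers, and it is exactly this ingredient that promotes ordinary convexity to the strict convexity claimed in the statement.
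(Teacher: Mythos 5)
Your proof is correct and follows essentially the same route as the paper's: reparametrize via $\r=\boldsymbol{h}_\varepsilon(\pivec)$, relax the equality constraint to $\tilde r_\varepsilon\le r_\varepsilon$ using coordinatewise monotonicity, show the convex combination of the two minimizers is relaxed-feasible via convexity of the norm, and invoke strict convexity of the duration in $\r$ to get strict convexity of $\bar d$, with decreasingness from the nested feasible sets. Your explicit treatment of constraint activeness (which guarantees $\r_1\neq\r_2$ before applying strict convexity) and the IVT argument equating the equality- and inequality-constrained values are details the paper leaves implicit, but they do not change the approach.
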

\begin{proof}
From Assumption \ref{ass:q_suff_stat}, recall that since $h_{\varepsilon}(\cdot)$ is a strictly increasing, continuous function, it has an inverse $h_{\varepsilon}^{-1}(\cdot)$. Denote by, $\boldsymbol{h}_{\varepsilon}^{-1}: \left[ h_{\varepsilon}^{\min}, h_{\varepsilon}^{\max} \right]^{m\lvert \mathcal{C}\rvert} \to \left[ 0, q_{\max} \right]^{m\lvert \mathcal{C}\rvert}$, the function that outputs a vector obtained by applying $h_{\varepsilon}^{-1}(\cdot)$ elementwise to the input vector.

$\bar{d}(r_{\varepsilon})$ from \eqref{eq:extreme_curve} may be redefined as,
\[
\bar{d}(r_{\varepsilon}) = \min_{\substack{{\r:  \r \in \left[ h_{\varepsilon}^{\min}, h_{\varepsilon}^{\max} \right]^{m\lvert \mathcal{C}\rvert}} \\ {\tilde{r}_{\varepsilon}\left( \boldsymbol{h}_{\varepsilon}^{-1}(\r) \right) = r_{\varepsilon}}}} \tilde{d}\left( \boldsymbol{h}_{\varepsilon}^{-1}(\r) \right).
\]
Due to Assumption \ref{ass:q_suff_stat}, $\tilde{r}_{\varepsilon}\left( \boldsymbol{h}_{\varepsilon}^{-1}(\r) \right)$ is an increasing function in every element of $\r$, and, due to Assumption \ref{ass:comm_delay}, $\tilde{d}\left( \boldsymbol{h}_{\varepsilon}^{-1}(\r) \right)$ is a decreasing function in every element of $\r$. Therefore, we can further reformulate $\bar{d}(r_{\varepsilon})$ as,
\begin{equation}
\bar{d}(r_{\varepsilon}) = \min_{\substack{{\r:  \r \in \left[ h_{\varepsilon}^{\min}, h_{\varepsilon}^{\max} \right]^{m\lvert \mathcal{C}\rvert}} \\ {\tilde{r}_{\varepsilon}\left( \boldsymbol{h}_{\varepsilon}^{-1}(\r) \right) \leq r_{\varepsilon}}}} \tilde{d}\left( \boldsymbol{h}_{\varepsilon}^{-1}(\r) \right).
\label{eq:extreme_curve_2}
\end{equation}
 $\bar{d}(r_{\varepsilon})$ is decreasing because the feasibility set in the minimization problem in \eqref{eq:extreme_curve_2} is an monotonically increasing set with increasing $\hat{r}_{\varepsilon}$.

Consider two points, $r_{\varepsilon, 1}, r_{\varepsilon, 2} \in \left[ h_{\varepsilon}^{\min}, h_{\varepsilon}^{\max} \right]$, and let $\r_{\varepsilon,1}$ and $\r_{\varepsilon,2}$ be their corresponding minimizers according to \eqref{eq:extreme_curve_2},
\begin{equation}
    \begin{split}
        \r_{\varepsilon,1} &\triangleq \underset{\substack{{\r:  \r \in \left[ h_{\varepsilon}^{\min}, h_{\varepsilon}^{\max} \right]^{m\lvert \mathcal{C}\rvert}} \\ {\tilde{r}_{\varepsilon}\left( \boldsymbol{h}_{\varepsilon}^{-1}(\r) \right) \leq r_{\varepsilon, 1}}}}{\text{arg min}} \tilde{d}\left( \boldsymbol{h}_{\varepsilon}^{-1}(\r) \right), \\
    \r_{\varepsilon, 2} &\triangleq \underset{\substack{{\r:  \r \in \left[ h_{\varepsilon}^{\min}, h_{\varepsilon}^{\max} \right]^{m\lvert \mathcal{C}\rvert}} \\ {\tilde{r}_{\varepsilon}\left( \boldsymbol{h}_{\varepsilon}^{-1}(\r) \right) \leq r_{\varepsilon, 2}}}}{\text{arg min}} \tilde{d}\left( \boldsymbol{h}_{\varepsilon}^{-1}(\r) \right).
    \end{split}
    \label{eq:th_1_pf_r_argmin}
\end{equation}

Let $0 < \theta < 1$, $r_{\varepsilon, \theta} = \theta r_{\varepsilon, 1} + (1-\theta) r_{\varepsilon, 2}$ and $\r_{\varepsilon, \theta} = \theta \r_{\varepsilon, 1} + (1-\theta) \r_{\varepsilon, 2}$. By the strict convexity of $\tilde{d}\left( \boldsymbol{h}_{\varepsilon}^{-1}(\cdot) \right)$ as considered in Assumption \ref{ass:comm_delay}, 
\begin{equation}
    \tilde{d}\left( \boldsymbol{h}_{\varepsilon}^{-1}(\r_{\varepsilon, \theta}) \right) < \theta \tilde{d}\left( \boldsymbol{h}_{\varepsilon}^{-1}(\r_{\varepsilon, 1}) \right) + (1-\theta) \tilde{d}\left( \boldsymbol{h}_{\varepsilon}^{-1}(\r_{\varepsilon, 2}) \right).
    \label{eq:th_1_pf_st_convexity}
\end{equation}

From the definition of $\tilde{r}_{\varepsilon}(\cdot)$ in \eqref{eq:tilde_r},
\begin{align}
    \tilde{r}\left(\boldsymbol{h}_{\varepsilon}^{-1}\left(\r_{\varepsilon, \theta} \right) \right) &= \sum_{i=1}^{\lvert \mathcal{C} \rvert} \mu(\c_i)\norm{\r_{\varepsilon, \theta} \odot \boldsymbol{e}^{(i)}}, \nonumber \\
    &\stackrel{(a)}{\leq} \theta \sum_{i=1}^{\lvert \mathcal{C} \rvert} \mu(\c_i)\norm{\r_{\varepsilon,1} \odot \boldsymbol{e}^{(i)}} + (1-\theta) \sum_{i=1}^{\lvert \mathcal{C} \rvert} \mu(\c_i)\norm{\r_{\varepsilon,2} \odot \boldsymbol{e}^{(i)}}, \nonumber \\
    &\stackrel{(b)}{=} \theta \tilde{r}_{\varepsilon}\left(\boldsymbol{h}_{\varepsilon}^{-1}\left(\r_{\varepsilon,1} \right) \right) + (1-\theta) \tilde{r}_{\varepsilon}\left(\boldsymbol{h}_{\varepsilon}^{-1}\left(\r_{\varepsilon,2} \right) \right), \nonumber \\
    &\stackrel{(c)}{\leq} \theta r_{\varepsilon, 1} + (1-\theta) r_{\varepsilon, 2} , \nonumber \\
    &\stackrel{(d)}{=} r_{\varepsilon, \theta}. \label{eq:th_1_pf_feasibility_1}
\end{align}
(a) follows from the convexity of the norm operator, and (b), (c) and (d) follow by definition.

Due to \eqref{eq:th_1_pf_feasibility_1}, $\r_{\varepsilon,\theta}$ is a feasible point in the constraint set of the minimization problem in \eqref{eq:extreme_curve_2} evaluated at $r_{\varepsilon, \theta}$. Therefore,
\begin{align*}
    \bar{d}(r_{\varepsilon, \theta}) &\leq \tilde{d}\left( \boldsymbol{h}_{\varepsilon}^{-1}(\r_{\theta}) \right), \\
    &\stackrel{(a)}{<} \theta \tilde{d}\left( \boldsymbol{h}_{\varepsilon}^{-1}(\r_1) \right) + (1-\theta) \tilde{d}\left( \boldsymbol{h}_{\varepsilon}^{-1}(\r_2) \right), \\
    &\stackrel{(b)}{=} \theta \bar{d}(r_{\varepsilon, 1 }) + (1-\theta) \bar{d}(r_{\varepsilon, 2}).
\end{align*}
(a) follows from \eqref{eq:th_1_pf_st_convexity}, and (b) follows by definition. Since this is true for any $r_{\varepsilon, 1}, r_{\varepsilon, 2} \in \left[ h_{\varepsilon}^{\min}, h_{\varepsilon}^{\max} \right]$, and $0 < \theta < 1$, $\bar{d}(\cdot)$ is strictly convex.
\end{proof}
However, unlike the 1 client and 1 network state case, $\bar{d}(r_{\varepsilon})$ may not be differentiable. But, since it is convex, it has left-derivative and right-derivative functions denoted as $\bar{d}'_L(r_{\varepsilon})$ and $\bar{d}'_R(r_{\varepsilon})$ respectively. 
\begin{align*}
    \bar{d}'_L(r_{\varepsilon}) &= \begin{cases}\lim_{\delta \downarrow 0} \frac{\bar{d}(r_{\varepsilon})-\bar{d}(r_{\varepsilon}-\delta)}{\delta}, & r_\varepsilon > h_\min(\varepsilon)  \\
    -\infty, & r_\varepsilon = h_\min(\varepsilon),
    \end{cases} \\
    \bar{d}'_R(r_{\varepsilon}) &= \begin{cases}
    \lim_{\delta \downarrow 0} \frac{\bar{d}(r_{\varepsilon}+\delta)-\bar{d}(r_{\varepsilon})}{\delta}, & r_\varepsilon < h_\max(\varepsilon)  \\
    0, & r_\varepsilon = h_\max(\varepsilon).
    \end{cases}
\end{align*}

Similar to the case of 1 client and 1 network state, given a constraint, $\tilde{r}_{\varepsilon}(\pivec) = r_{\varepsilon}$, the optimal expected wall clock time may be expressed as $\hat{t}(r_{\varepsilon}) = r_{\varepsilon}\bar{d}(r_{\varepsilon})$. Since $\bar{d}(r_{\varepsilon})$ has left and right derivatives everywhere, so does $\hat{t}(r_{\varepsilon})$. Denote them by $\hat{t}'_L(r_{\varepsilon})$ and $\hat{t}'_R(r_{\varepsilon})$ respectively. 

\begin{prop}
$\hat{t}(r_{\varepsilon})$ is strictly quasiconvex. That is, for any $r_{\varepsilon, 1}, r_{\varepsilon, 2} \in \left[ h_{\varepsilon}^{\min}, h_{\varepsilon}^{\max} \right]$, and $0< \theta < 1$,
\[
\hat{t}(\theta r_{\varepsilon, 1} + (1-\theta)r_{\varepsilon, 2}) < \max \{\hat{t}(r_{\varepsilon,1}), \hat{t}(r_{\varepsilon,2}) \}.
\]
\label{prop:mcmns_quasiconv}
\end{prop}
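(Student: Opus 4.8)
The plan is to obtain the strict quasiconvexity of the scalar frontier function $\hat{t}(r_{\varepsilon}) = r_{\varepsilon}\bar{d}(r_{\varepsilon})$ from the strict quasiconvexity of the full objective $\hat{t}_{\varepsilon}^{\pifunc}$ in the policy representation $\r^{\pivec}$ granted by Assumption \ref{ass:unique_stationary}, by pushing that property through the partial minimization that defines $\bar{d}$. Working throughout in the $\r$-coordinates, write $\mathcal{R} = [h_{\varepsilon}^{\min}, h_{\varepsilon}^{\max}]^{m\lvert\mathcal{C}\rvert}$ and regard $\hat{t}_{\varepsilon}(\r) = \tilde{r}_{\varepsilon}(\r)\tilde{d}(\r)$, where by \eqref{eq:tilde_r} the map $\tilde{r}_{\varepsilon}$ is convex and by Assumption \ref{ass:comm_delay} the map $\tilde{d}$ is convex on $\mathcal{R}$. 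The key identity is that on each level set $\{\r\in\mathcal{R} : \tilde{r}_{\varepsilon}(\r) = r_{\varepsilon}\}$ one has $\hat{t}_{\varepsilon}(\r) = r_{\varepsilon}\tilde{d}(\r)$, so minimizing $\tilde{d}$ there --- which is exactly how $\bar{d}(r_{\varepsilon})$ is defined in \eqref{eq:extreme_curve} --- is the same as minimizing $\hat{t}_{\varepsilon}(\r)$; hence $\hat{t}(r_{\varepsilon}) = \min\{\hat{t}_{\varepsilon}(\r) : \r\in\mathcal{R},\ \tilde{r}_{\varepsilon}(\r) = r_{\varepsilon}\}$.

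First I would prove plain quasiconvexity of $\hat{t}$ by identifying its sublevel sets. For any threshold $c$ the identity above gives $\{r_{\varepsilon} : \hat{t}(r_{\varepsilon})\le c\} = \tilde{r}_{\varepsilon}(S_{c})$, where $S_{c} = \{\r\in\mathcal{R} : \hat{t}_{\varepsilon}(\r)\le c\}$. Since Assumption \ref{ass:unique_stationary} makes $\hat{t}_{\varepsilon}$ strictly quasiconvex in $\r$, the set $S_{c}$ is convex (the intersection of a convex sublevel set with the box $\mathcal{R}$), and because $\tilde{r}_{\varepsilon}$ is continuous its image $\tilde{r}_{\varepsilon}(S_{c})$ is a connected subset of $\mathbb{R}$, i.e.\ an interval. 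Thus every sublevel set of $\hat{t}$ is an interval, which is precisely quasiconvexity of $\hat{t}$.

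It remains to strengthen this to the strict inequality claimed. Since $\hat{t}$ is continuous and quasiconvex, strict quasiconvexity fails only if $\hat{t}$ is constant on some nondegenerate subinterval $[a,b]$, so the crux is to rule out such a plateau. On a plateau $\hat{t}\equiv k$ one has $\bar{d}(r_{\varepsilon}) = k/r_{\varepsilon}$ --- which is itself strictly convex, so Proposition \ref{prop:opt_delay_convexity} alone does \emph{not} forbid it --- and I would instead derive a contradiction from the strict quasiconvexity of $\hat{t}_{\varepsilon}$. Picking two nearby interior levels $r_{1}, r_{2}\in(a,b)$ with constrained minimizers $\r_{1}^{*}, \r_{2}^{*}$ (distinct, since $\tilde{r}_{\varepsilon}(\r_{i}^{*}) = r_{i}$, and attained by compactness of $\mathcal{R}$), strict quasiconvexity gives $\hat{t}_{\varepsilon}\big(\tfrac12(\r_{1}^{*}+\r_{2}^{*})\big) < k$, whence $\hat{t}(r^{\dagger}) < k$ at the level $r^{\dagger} = \tilde{r}_{\varepsilon}\big(\tfrac12(\r_{1}^{*}+\r_{2}^{*})\big)$. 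I expect the main obstacle to be exactly here: $\tilde{r}_{\varepsilon}$ is only convex, not affine, along the segment, so $r^{\dagger}$ may fall below $a$ and miss the plateau. I would resolve this by using uniqueness and continuity of the constrained minimizer $\r^{*}(r_{\varepsilon})$ (which follow from the strict quasiconvexity of $\hat{t}_{\varepsilon}$ restricted to the convex level sets) so that, by shrinking $r_{1}, r_{2}$ toward a common interior point $r_{0}$, the witness level $r^{\dagger}\to r_{0}$ stays inside $(a,b)$; this forces $\hat{t}(r^{\dagger}) < k$ at a point of the supposed plateau, the desired contradiction. Ruling out the plateau then yields strict quasiconvexity.
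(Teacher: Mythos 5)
Your first step (plain quasiconvexity via sublevel sets) is sound: the identity $\{r_{\varepsilon}:\hat{t}(r_{\varepsilon})\le c\}=\tilde{r}_{\varepsilon}(S_c)$ holds, $S_c$ is convex under the paper's reading of Assumption \ref{ass:unique_stationary}, and its continuous image is an interval. But the second step, where all the work is, has a genuine gap. The plateau-elimination hinges on ``uniqueness and continuity of the constrained minimizer $\r^*(r_{\varepsilon})$,'' which you claim follows from strict quasiconvexity of $\hat{t}_{\varepsilon}$ ``restricted to the convex level sets.'' The equality level sets $\{\r:\tilde{r}_{\varepsilon}(\r)=r_{\varepsilon}\}$ are \emph{not} convex (they are slices of the boundary of a convex sublevel set), so strict quasiconvexity restricted to them gives you neither uniqueness nor, a fortiori, continuity of the argmin. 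Uniqueness can be rescued from the strict convexity of $\tilde{d}$ over the relaxed convex feasible set $\{\tilde{r}_{\varepsilon}\le r_{\varepsilon}\}$ (the reformulation \eqref{eq:extreme_curve_2} used in Proposition \ref{prop:opt_delay_convexity}), and continuity then needs a Berge-type maximum-theorem argument; none of this is in your proposal, and as written the contradiction at the plateau does not go through.

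The missing idea that makes the whole detour unnecessary is an intermediate value argument \emph{along the segment} joining the two constrained minimizers: this is what the paper does. Given $r_{\varepsilon,1},r_{\varepsilon,2}$ with minimizers $\r_{\varepsilon,1},\r_{\varepsilon,2}$, the map $\delta\mapsto\tilde{r}_{\varepsilon}\bigl(\boldsymbol{h}_{\varepsilon}^{-1}(\delta\r_{\varepsilon,1}+(1-\delta)\r_{\varepsilon,2})\bigr)$ is continuous and takes the values $r_{\varepsilon,1}$ and $r_{\varepsilon,2}$ at the endpoints, so some point $\r^{\delta}$ on the segment satisfies the equality constraint at level $\theta r_{\varepsilon,1}+(1-\theta)r_{\varepsilon,2}$ \emph{exactly}. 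That point is feasible for the minimization defining $\hat{t}$ at the target level, and strict quasiconvexity of $\hat{t}_{\varepsilon}$ along the segment gives $\hat{t}_{\varepsilon}(\r^{\delta})<\max\{\hat{t}(r_{\varepsilon,1}),\hat{t}(r_{\varepsilon,2})\}$. This yields the strict inequality directly for every pair and every $\theta\in(0,1)$, with no plateau analysis, no uniqueness, and no limiting argument. You correctly diagnosed that the midpoint of the segment may land at the wrong level; the fix is not to shrink the pair but to slide along the segment until the level is right.
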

\begin{proof}
Here, we reuse the definitions introduced in the proof of Proposition \ref{prop:opt_delay_convexity}.

Consider two points  $r_{\varepsilon,1}, r_{\varepsilon,2} \in \left[ h_{\varepsilon}^{\min}, h_{\varepsilon}^{\max} \right]$.  Let $\r_{\varepsilon, 1}$ and $\r_{\varepsilon, 2}$ be defined as in \eqref{eq:th_1_pf_r_argmin}.
Consider, $0<\theta<1$. Since $\tilde{r}_{\varepsilon}\left(\boldsymbol{h}_{\varepsilon}^{-1}(\r)\right)$ is continuous in $\r$, by the Intermediate Value Theorem, there exists a $0<\delta<1$ such that $\r^{\delta} = \delta \r_{\varepsilon,1} + (1-\delta) \r_{\varepsilon, 2}$ has $\tilde{r}_{\varepsilon}(\r^{\delta}) = \theta r_{\varepsilon, 1} + (1-\theta)r_{\varepsilon, 2}$. 

From the strict quasiconvexity of the wall clock time as in Assumption \ref{ass:unique_stationary}, we have,
\[
\tilde{r}_{\varepsilon}\left( \boldsymbol{h}_{\varepsilon}^{-1}(\r^{\delta}) \right)\tilde{d}\left(\boldsymbol{h}_{\varepsilon}^{-1}(\r^{\delta}) \right) < \max\left\{ \hat{t}(r_{\varepsilon, 1}), \hat{t}(r_{\varepsilon, 2}) \right\}.
\]
By definition,
\[
\hat{t}(\theta r_{\varepsilon, 1} + (1-\theta) r_{\varepsilon, 2}) \leq \tilde{r}_{\varepsilon}\left( \boldsymbol{h}_{\varepsilon}^{-1}(\r^{\delta}) \right)\hat{d}\left(\boldsymbol{h}_{\varepsilon}^{-1}(\r^{\delta}) \right).
\]
Therefore,
\[
\hat{t}(\theta r_{\varepsilon, 1} + (1-\theta) r_{\varepsilon, 2}) < \max\left\{ \hat{t}(r_{\varepsilon, 1}), \hat{t}(r_{\varepsilon, 2}) \right\}.
\]
\end{proof}

At this point in the proof of the 1C1NS case, we stated some equivalent descriptions of a point $\x \in V_{\varepsilon}$. The description is the same for the MCMNS case as well, which we restate here for clarity.
\begin{prop}
The following statements are equivalent
\begin{enumerate}
    \item[I.] $\x \in \conv(V_{\varepsilon})$ is of the form $(\hat{r}_{\varepsilon}, \bar{d}(\hat{r}_{\varepsilon}))$ for some $h_{\varepsilon}^{\min} \leq \hat{r}_{\varepsilon} \leq h_{\varepsilon}^{\max}$.
    \item[II.]  $\x \in \conv(V_{\varepsilon})$ is such that $\alpha \x \not{\in} \conv(V_{\varepsilon})$ for any $0<\alpha<1$.
    \item[III.] $\x = (\hat{r}_{\varepsilon}, \hat{d}) \in \conv(V_{\varepsilon})$ is such that, $\hat{d} = \min \{d': (\hat{r}_{\varepsilon}, d') \in \conv(V_{\varepsilon}) \}$. 
    \item[IV.] $\x = (\hat{r}_{\varepsilon}, \hat{d}) \in \conv(V_{\varepsilon})$ is such that, $\hat{r}_{\varepsilon}\hat{d} = \min \{\hat{r}_{\varepsilon}d': (\hat{r}_{\varepsilon}, d') \in \conv(V_{\varepsilon}) \}$.
    \item[V.] $\x$ is an extreme point of $\conv(V_{\varepsilon})$.
\end{enumerate}
\label{prop:mcmns_equiv}
\end{prop}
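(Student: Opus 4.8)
The plan is to prove that each of II--V is equivalent to statement~I, so that the lower frontier $\bar{d}$ of \eqref{eq:extreme_curve} becomes the single object organizing the whole argument. The preliminary step I would record first is that the graph of $\bar{d}$ is exactly the lower boundary of $\conv(V_{\varepsilon})$: for a fixed abscissa $\hat{r}_{\varepsilon}\in[h_{\varepsilon}^{\min},h_{\varepsilon}^{\max}]$ the points $(\hat{r}_{\varepsilon},\hat{d})\in\conv(V_{\varepsilon})$ are exactly those with $\hat{d}=\EXP[\tilde{d}(\Pivec)]$ for some policy $\Pivec$ obeying $\EXP[\tilde{r}_{\varepsilon}(\Pivec)]=\hat{r}_{\varepsilon}$, and Proposition~\ref{prop:mcmns_det_opt} guarantees that the least such $\hat{d}$ is attained by a deterministic policy and equals $\min_{\pivec:\tilde{r}_{\varepsilon}(\pivec)=\hat{r}_{\varepsilon}}\tilde{d}(\pivec)=\bar{d}(\hat{r}_{\varepsilon})$. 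This yields I $\Leftrightarrow$ III at once, and III $\Leftrightarrow$ IV follows because $\hat{r}_{\varepsilon}\ge h_{\varepsilon}^{\min}>0$ is a fixed constant, so minimizing $\hat{r}_{\varepsilon}d'$ and minimizing $d'$ over $\{d':(\hat{r}_{\varepsilon},d')\in\conv(V_{\varepsilon})\}$ produce the same minimizer.

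Next I would establish I $\Leftrightarrow$ II from the monotonicity of $\bar{d}$ (Proposition~\ref{prop:opt_delay_convexity}). The forward direction is short: for $\x=(\hat{r}_{\varepsilon},\bar{d}(\hat{r}_{\varepsilon}))$ and $0<\alpha<1$, either $\alpha\hat{r}_{\varepsilon}<h_{\varepsilon}^{\min}$, so no point of $\conv(V_{\varepsilon})$ has that abscissa, or the least ordinate at $\alpha\hat{r}_{\varepsilon}$ is $\bar{d}(\alpha\hat{r}_{\varepsilon})>\bar{d}(\hat{r}_{\varepsilon})>\alpha\bar{d}(\hat{r}_{\varepsilon})$ since $\bar{d}$ is strictly decreasing and positive; either way $\alpha\x\notin\conv(V_{\varepsilon})$. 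The converse requires genuine work: given $\x=(\hat{r}_{\varepsilon},\hat{d})\in\conv(V_{\varepsilon})$ with $\hat{d}>\bar{d}(\hat{r}_{\varepsilon})$, I would intersect the ray from the origin through $\x$ with the frontier. The slope function $\phi(r)\triangleq\bar{d}(r)/r$ is continuous and strictly decreasing, with $\phi(\hat{r}_{\varepsilon})=\bar{d}(\hat{r}_{\varepsilon})/\hat{r}_{\varepsilon}<\hat{d}/\hat{r}_{\varepsilon}$, while $\phi(h_{\varepsilon}^{\min})=\bar{d}(h_{\varepsilon}^{\min})/h_{\varepsilon}^{\min}\ge\hat{d}/\hat{r}_{\varepsilon}$ because $\bar{d}(h_{\varepsilon}^{\min})$ is the round duration under no compression and therefore upper-bounds every achievable $\hat{d}$, and $h_{\varepsilon}^{\min}\le\hat{r}_{\varepsilon}$. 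The Intermediate Value Theorem then supplies $r^{*}\in[h_{\varepsilon}^{\min},\hat{r}_{\varepsilon})$ with $\bar{d}(r^{*})/r^{*}=\hat{d}/\hat{r}_{\varepsilon}$; taking $\alpha=r^{*}/\hat{r}_{\varepsilon}<1$ gives $\alpha\x=(r^{*},\bar{d}(r^{*}))$, a frontier point and hence a member of $\conv(V_{\varepsilon})$, contradicting~II.

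The implication I $\Rightarrow$ V then follows from the strict convexity of $\bar{d}$ (Proposition~\ref{prop:opt_delay_convexity}): if $\x=(\hat{r}_{\varepsilon},\bar{d}(\hat{r}_{\varepsilon}))=\theta\y+(1-\theta)\z$ with $\y=(r_1,d_1),\z=(r_2,d_2)\in\conv(V_{\varepsilon})$ and $0<\theta<1$, then $d_i\ge\bar{d}(r_i)$ gives $\bar{d}(\hat{r}_{\varepsilon})\ge\theta\bar{d}(r_1)+(1-\theta)\bar{d}(r_2)\ge\bar{d}(\hat{r}_{\varepsilon})$, where the last step is strict unless $r_1=r_2$; hence $r_1=r_2=\hat{r}_{\varepsilon}$, then $d_1=d_2=\bar{d}(\hat{r}_{\varepsilon})$, so $\y=\z=\x$ and $\x$ is extreme.

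The main obstacle is the reverse implication V $\Rightarrow$ I, namely that \emph{every} extreme point of $\conv(V_{\varepsilon})$ lies on the graph of $\bar{d}$ and none can sit strictly above the frontier. In the single-client, single-state case this was automatic because $V_{\varepsilon}$ was itself the strictly convex frontier curve; here $V_{\varepsilon}$ is a genuinely two-dimensional achievable region, so controlling the \emph{upper} portion of $\partial\conv(V_{\varepsilon})$ is the delicate point. My approach would be to show that any achievable pair with $\hat{d}>\bar{d}(\hat{r}_{\varepsilon})$ is a proper convex combination of other achievable pairs and is therefore not extreme. Concretely, I would argue from the convexity of the round-duration function (Assumption~\ref{ass:comm_delay}) that the maximal achievable duration at a given number of rounds is itself a convex function of $\hat{r}_{\varepsilon}$, so its concave envelope over $[h_{\varepsilon}^{\min},h_{\varepsilon}^{\max}]$ is the chord joining the no-compression point $(h_{\varepsilon}^{\min},\bar{d}(h_{\varepsilon}^{\min}))$ and the full-compression point $(h_{\varepsilon}^{\max},\bar{d}(h_{\varepsilon}^{\max}))$; consequently the upper boundary of $\conv(V_{\varepsilon})$ is that line segment, whose only extreme points are its two endpoints, both of which already lie on the frontier. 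This is where the bulk of the care concentrates, and it is the only place where the multidimensionality of the policy space, rather than merely the scalar frontier $\bar{d}$, genuinely enters the argument.
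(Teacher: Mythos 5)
Your treatment of I $\iff$ II $\iff$ III $\iff$ IV and of I $\Rightarrow$ V is correct, and it is substantially more careful than the paper's own two-line sketch (which only asserts that I $\iff$ III follows from the definition of $\bar{d}$ and that the rest is as in the single-client case). In particular, your ray-intersection argument for II $\Rightarrow$ I --- using continuity and monotonicity of $\phi(r)=\bar{d}(r)/r$ together with the fact that $\bar{d}(h_{\varepsilon}^{\min})$ is the globally maximal achievable duration --- is a genuine proof of a step the paper leaves implicit, and it is exactly the implication chain (fixed point $\Rightarrow$ II $\Rightarrow$ I) that the downstream results actually rely on. You also correctly diagnose that V $\Rightarrow$ I is the only direction that does not carry over from the 1C1NS case, where it was automatic because $V_{\varepsilon}$ coincided with the graph of $\bar{d}$; the paper's proof sketch does not address this point at all.

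The gap is in your proposed argument for V $\Rightarrow$ I. You assert that the maximal achievable duration at a fixed $\hat{r}_{\varepsilon}$, say $\overline{D}(\hat{r}_{\varepsilon})=\max\{\tilde{d}(\pivec):\tilde{r}_{\varepsilon}(\pivec)=\hat{r}_{\varepsilon}\}$, is convex in $\hat{r}_{\varepsilon}$ "from the convexity of the round-duration function," so that the upper boundary of $\conv(V_{\varepsilon})$ is the single chord joining the two endpoints. This does not follow: $\overline{D}$ is the pointwise maximum of a convex function over the level sets of another convex function, and such a value function is not convex in the level --- it is typically piecewise built from maxima attained at vertices of the feasible slice and can have \emph{concave} kinks. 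Concretely, with one network state, two clients, the $L_1$ norm (so $\tilde{r}_{\varepsilon}(\pivec)=r_1+r_2$ with $\r=\boldsymbol{h}_{\varepsilon}(\pivec)$), and a separable duration $\tilde{d}=e^{-r_1}+Ke^{-r_2}$ on $[0,R]^2$ with $K>1$ (bounded, convex, strictly decreasing in each coordinate, so Assumption~\ref{ass:comm_delay} holds), one computes $\overline{D}(\rho)=K+e^{-\rho}$ for $\rho\le R$ and $\overline{D}(\rho)=e^{-R}+Ke^{R}e^{-\rho}$ for $\rho\ge R$, and the kink value $\overline{D}(R)=K+e^{-R}$ exceeds the chord value $(K+1)(1+e^{-R})/2$ by $(K-1)(1-e^{-R})/2>0$. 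Hence the upper boundary of $\conv(V_{\varepsilon})$ is not the chord, and the kink produces an extreme point lying strictly above the lower frontier (since $\bar{d}(R)\approx 2\sqrt{K}e^{-R/2}\ll K+e^{-R}$). So your concave-envelope argument fails, and indeed V $\Rightarrow$ I appears to require either additional structural assumptions or a restriction of statement V to extreme points on the lower boundary; as noted above, the implications you do prove rigorously are the ones the rest of the paper's argument actually needs.
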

\begin{proof}[Proof Sketch]
The only difference in the proof from that of Proposition \ref{prop:1c1ns_equiv} is the equivalence I$\iff$III. Here, I$\iff$III follows from the definition of $\bar{d}(\cdot)$ in \eqref{eq:extreme_curve}.  
\end{proof}

At this point in the 1C1NS case, we showed that a non-extreme point of $\conv(V_{\varepsilon})$ cannot be a fixed point of the FFW update. The result is the same in the MCMNS case. We restate the result for clarity, but skip the proof as it is the same as that of Proposition \ref{prop:non_ext_pt}.
\begin{prop}
If a point $\x \in \conv(V_{\varepsilon})$ is not an extreme point of $\conv(V_{\varepsilon})$, then it is not a fixed-point of the FFW update.
\label{prop:mcmns_non_ext_pt}
\end{prop}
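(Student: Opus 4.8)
The plan is to reduce the statement to the linear optimality condition characterizing a fixed point of the FFW update, exactly as in the single-client single-state case of Proposition \ref{prop:non_ext_pt}. Recall from \eqref{eq:FW_update} that $\x$ is a fixed point of the FFW process precisely when $\x = \argmin_{\x' \in \conv(V_\varepsilon)} \nabla H(\x)^\top \x'$. Thus, to show that a non-extreme point is \emph{not} a fixed point, it suffices to exhibit a feasible competitor $\x' \in \conv(V_\varepsilon)$ with $\nabla H(\x)^\top \x' < \nabla H(\x)^\top \x$.

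First I would invoke the characterization of extreme points in Proposition \ref{prop:mcmns_equiv}: since $\x$ is not an extreme point, equivalence II supplies a scalar $0 < \alpha < 1$ with $\alpha\x \in \conv(V_\varepsilon)$, which is the required competitor $\x' = \alpha\x$. Next I would compute the gradient of the objective $H(r,d) = rd$, namely $\nabla H(\x) = (x_2 \;\; x_1)^\top$, and observe that every point of $\conv(V_\varepsilon)$ lies in the strictly positive orthant, its first coordinate being at least $h_\varepsilon^{\min} > 0$ and its second at least $d^{\min} > 0$. Consequently $\nabla H(\x)^\top \x = 2 x_1 x_2 > 0$, and scaling gives $\nabla H(\x)^\top(\alpha\x) = \alpha\, \nabla H(\x)^\top \x < \nabla H(\x)^\top \x$. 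Since $\alpha\x$ is feasible and strictly lowers the linearized objective, $\x$ cannot be the minimizer in \eqref{eq:FW_update}, hence is not a fixed point.

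The argument transfers verbatim from the 1C1NS case because none of these steps use $m=1$ or $\lvert \mathcal{C}\rvert = 1$: the Frank--Wolfe optimality condition, the form of $\nabla H$, and the coordinate positivity all hold identically in the MCMNS setting. The only place where the multi-client, multi-state structure enters is the feasibility of $\alpha\x$, i.e.\ the validity of equivalence II of Proposition \ref{prop:mcmns_equiv} in this general setting, which has already been established there. The one genuinely substantive detail — strictness of the decrease — follows cleanly from the uniform lower bounds $h_\varepsilon^{\min}>0$ and $d^{\min}>0$ that force $H(\x)>0$, so beyond this bookkeeping I do not expect any real obstacle.
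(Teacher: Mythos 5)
Your proof is correct and is essentially the paper's own argument: the paper proves the 1C1NS version (Proposition \ref{prop:non_ext_pt}) by exactly this route -- take $\alpha\x \in \conv(V_{\varepsilon})$ from equivalence II, note $\nabla H(\x) = (x_2 \; x_1)^\top$, and conclude $\nabla H(\x)^\top(\alpha\x) < \nabla H(\x)^\top\x$ -- and then states that the MCMNS case carries over verbatim. If anything, your explicit appeal to the strict positivity of the coordinates (via $h_{\varepsilon}^{\min}>0$ and $d^{\min}>0$) to get a \emph{strict} decrease is slightly more careful than the paper, which justifies the strict inequality only from ``non-negativity'' of the gradient entries.
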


Next, similar to the 1C1NS case, we show necessary and sufficient condition for an extreme-point of $\conv(V_{\varepsilon})$ to be a fixed-point of the FFW update.
\begin{prop}
A point $\x = \left( r_{\varepsilon}, \bar{d}(r_{\varepsilon}) \right)^\top$ is a fixed point for the FFW update if and only if $\hat{t}'_L(r_{\varepsilon}) \leq 0$ and $\hat{t}'_R(r_{\varepsilon}) \geq 0$.
\label{prop:mcmns_1}
\end{prop}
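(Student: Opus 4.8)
The plan is to reduce the two-dimensional minimization that defines a fixed point of the FFW update to a one-dimensional convex minimization along the extreme-point curve $\bar{d}(\cdot)$, and then invoke the subdifferential optimality condition for convex functions on a closed interval. Recall that $\x$ is a fixed point precisely when $\x \in \argmin_{\boldsymbol{v}\in\conv(V_{\varepsilon})}\nabla H(\x)^\top\boldsymbol{v}$, equivalently $G(\x)=0$. For the candidate extreme point $\x=(r_{\varepsilon},\bar{d}(r_{\varepsilon}))^\top$ we have $\nabla H(\x)=(\bar{d}(r_{\varepsilon}),\,r_{\varepsilon})^\top$, whose entries are both nonnegative since $\bar{d}\ge 0$ and $r_{\varepsilon}\ge h_{\varepsilon}^{\min}>0$; hence the linear objective $\nabla H(\x)^\top\boldsymbol{v}$ is nondecreasing in each coordinate of $\boldsymbol{v}$.

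First I would argue that the minimizer must lie on the lower-boundary curve. For any $(\hat{r},\hat{d})\in\conv(V_{\varepsilon})$ the definition of $\bar{d}$ in \eqref{eq:extreme_curve} gives $\hat{d}\ge \bar{d}(\hat{r})$, and the point $(\hat{r},\bar{d}(\hat{r}))$ itself lies in $V_{\varepsilon}\subseteq\conv(V_{\varepsilon})$; since $r_{\varepsilon}\ge 0$, replacing $\hat{d}$ by $\bar{d}(\hat{r})$ does not increase $\nabla H(\x)^\top\boldsymbol{v}$. Therefore it suffices to minimize over the curve, so the problem collapses to minimizing $\phi(r)\triangleq \bar{d}(r_{\varepsilon})\,r + r_{\varepsilon}\,\bar{d}(r)$ over $r\in[h_{\varepsilon}^{\min},h_{\varepsilon}^{\max}]$, with $\x$ a fixed point if and only if $r_{\varepsilon}$ minimizes $\phi$. (Non-extreme points need not be considered, since they are never fixed points by Proposition \ref{prop:mcmns_non_ext_pt}.)

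Because $\bar{d}$ is strictly convex by Proposition \ref{prop:opt_delay_convexity} and $r_{\varepsilon}>0$, the function $\phi$ is convex, being an affine term plus a positive multiple of $\bar{d}$, and its one-sided derivatives exist everywhere. For a convex function on an interval, an interior $r_{\varepsilon}$ is a global minimizer if and only if $\phi'_L(r_{\varepsilon})\le 0\le\phi'_R(r_{\varepsilon})$. A one-sided product-rule computation then gives $\phi'_L(r_{\varepsilon})=\bar{d}(r_{\varepsilon})+r_{\varepsilon}\bar{d}'_L(r_{\varepsilon})=\hat{t}'_L(r_{\varepsilon})$ and $\phi'_R(r_{\varepsilon})=\bar{d}(r_{\varepsilon})+r_{\varepsilon}\bar{d}'_R(r_{\varepsilon})=\hat{t}'_R(r_{\varepsilon})$, yielding exactly the claimed condition $\hat{t}'_L(r_{\varepsilon})\le 0$ and $\hat{t}'_R(r_{\varepsilon})\ge 0$.

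The step requiring the most care is the behavior at the endpoints $r_{\varepsilon}=h_{\varepsilon}^{\min}$ and $r_{\varepsilon}=h_{\varepsilon}^{\max}$, where one of the two inequalities ought to become vacuous because the feasible interval is one-sided there. I expect the endpoint conventions $\bar{d}'_L(h_{\varepsilon}^{\min})=-\infty$ and $\bar{d}'_R(h_{\varepsilon}^{\max})=0$ to encode this exactly: at the left endpoint $\hat{t}'_L=-\infty\le 0$ holds automatically so only $\hat{t}'_R\ge 0$ is a genuine constraint, while at the right endpoint $\hat{t}'_R=\bar{d}(r_{\varepsilon})>0$ holds automatically so only $\hat{t}'_L\le 0$ is genuine, matching the one-sided minimality conditions and reconciling with Remark \ref{rm:endpoints_fp}. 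Verifying these conventions reproduce the correct one-sided optimality criteria is the only delicate bookkeeping; the rest is the clean reduction to $\phi$ together with standard convex analysis.
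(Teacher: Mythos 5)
Your proof is correct, but it takes a genuinely different route from the paper's. The paper proves both implications by contradiction: it constructs perturbation points $\w_\theta$ and $\z^{\pm\delta}$ along the curve $(r,\bar{d}(r))$, and uses Taylor expansions of $H$ with $O(\theta^2)$ and $\Omega(\delta)$ bookkeeping to relate the sign of $\nabla H(\x)^\top(\cdot-\x)$ to the one-sided derivatives of $\hat{t}$. You instead collapse the fixed-point condition to the one-dimensional convex program $\min_r \phi(r)$ with $\phi(r)=\bar{d}(r_\varepsilon)r+r_\varepsilon\bar{d}(r)$, observe the exact identities $\phi'_{L}(r_\varepsilon)=\hat{t}'_{L}(r_\varepsilon)$ and $\phi'_{R}(r_\varepsilon)=\hat{t}'_{R}(r_\varepsilon)$ (the cross terms of the product rule are shared between the linearization and $\hat{t}$ at the base point), and invoke the standard subdifferential optimality criterion for a convex function on an interval. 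This buys you an exact, expansion-free argument and a uniform treatment of the endpoints via the conventions $\bar{d}'_L(h_\varepsilon^{\min})=-\infty$ and $\bar{d}'_R(h_\varepsilon^{\max})=0$, which the paper handles only implicitly (and, in the 1C1NS warm-up, relegates to Remark~\ref{rm:endpoints_fp}). One small gloss to tighten: the inequality $\hat{d}\ge\bar{d}(\hat{r})$ for an arbitrary point of $\conv(V_{\varepsilon})$ does not follow from the definition \eqref{eq:extreme_curve} alone, since $\conv(V_{\varepsilon})$ ranges over randomized policies while $\bar{d}$ minimizes over deterministic ones; you need Proposition~\ref{prop:mcmns_det_opt} (equivalently item III of Proposition~\ref{prop:mcmns_equiv}) to identify $\bar{d}(\hat{r})$ with $\min\{d':(\hat{r},d')\in\conv(V_{\varepsilon})\}$. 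Both facts are established earlier in the appendix, so this is a citation issue rather than a gap.
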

\begin{proof}
The proof is very similar to the proof of Proposition \ref{prop:1c1ns_1}, but with some extra care because, here, $\hat{t}(\cdot)$ may not be differentiable.

Consider a point $\x = \left( r_{\varepsilon}, \bar{d}(r_{\varepsilon}) \right)^\top$ which is not a fixed point of the FFW update. This implies that there exists another point, $\z = (r, \bar{d}(r))^\top$, such that,
\[
\nabla H(\x)^\top (\z - \x) \leq 0.
\]
Refer to Fig. \ref{fig:theorem_1_proof_motivation} for an illustration of the following argument. Due to strict convexity of the curve $\bar{d}(\cdot)$(Proposition \ref{prop:opt_delay_convexity}), there exists a point $\w = (\hat{r'} \; \bar{d}(\hat{r}'))^\top$ with $\hat{r}$ being in between $\hat{r}_{\varepsilon}$ and $\hat{r}'$ such that $\nabla H(\x)^\top (\w - \x) = -\xi < 0$.  Then, for any $0<\theta<1$, consider $\w_{\theta}^{c} = (1-\theta)\x + \theta \w$ and $\w_\theta = \left((1-\theta)\hat{r}_\varepsilon + \theta \hat{r}, \bar{d}((1-\theta)\hat{r}_\varepsilon + \theta \hat{r}) \right)$. It is easily verified that $\nabla H(\x)^\top (\w_{\theta}^{c}-\x) = -\xi \theta$. And, due the the convexity of $\bar{d}(\cdot)$, $\nabla H(\x)^\top (\w_{\theta}-\x) \leq \nabla H(\x)^\top (\w_{\theta}^{c}-\x)$. That is, 
\begin{equation}
\nabla H(\x)^\top (\w_\theta - \x) \leq - \xi \theta,
\label{eq:mcmns_1}
\end{equation}
for some positive constant $\xi$.

By a Taylor Series expansion of $H$, we have $\hat{t}((1-\theta)\hat{r}_{\varepsilon}+\theta \hat{r}) = \hat{t}(\hat{r}_\varepsilon)+ \nabla H(\x)^\top (\w_\theta - \x) + O(\theta^2)$. Therefore, due to \eqref{eq:mcmns_1}, for small enough $\theta$, we have $\hat{t}((1-\theta)\hat{r}_{\varepsilon}+\theta \hat{r}) < \hat{t}(\hat{r}_{\varepsilon}) - \xi' \theta$, where $\xi'$ is some positive constant. This, in turn, implies that either $\hat{t}'_L(\hat{r}_{\varepsilon})> 0$ or $\hat{t}'_R(\hat{r}_{\varepsilon}) < 0$.

Conversely, consider for contradiction that $\x = \left( r_{\varepsilon}, \bar{d}(r_{\varepsilon}) \right)^\top$ is a fixed point of the FFW update, but $\hat{t}'_L(r_{\varepsilon}) >0$ or $\hat{t}'_R(r_{\varepsilon}) <0$. Define $r^\delta = r_{\varepsilon}+\delta$, and $\z^{\delta} = (r^\delta, \bar{d}(r^\delta))^\top$. Then, for small enough $\delta>0$, we have that 
\begin{equation}
    \begin{split}
        \hat{t}(r^\delta) &< \hat{t}(r_{\varepsilon}) - \Omega(\delta), \\
        &\text{(OR)} \\
        \hat{t}(r^{-\delta}) &< \hat{t}(r_{\varepsilon}) - \Omega(\delta).
    \end{split}
    \label{eq:mcmns_2}
\end{equation}
Again, by Taylor series expansion, 
\begin{equation}
    \hat{t}(r^\delta) = \hat{t}(r_{\varepsilon}) + \nabla H(\x)(\z^{\delta} - \x) + O(\delta^2).
    \label{eq:mcmns_3}
\end{equation}
Comparing \eqref{eq:mcmns_2} and \eqref{eq:mcmns_3}, we have that $\nabla H(\x)(\z^{\delta} - \x)<0$ or $\nabla H(\x)(\z^{-\delta} - \x)<0$. This is a contradiction to the premise that $\x$ is a fixed point for the FFW update.
\end{proof}

Next, similar to the 1C1NS case, we use the equivalence derived in Proposition \ref{prop:mcmns_equiv} to prove that the fixed point of the FFW update is unique.
\begin{prop}
There exists a unique point $\x = \left( r, \bar{d}(r) \right)^\top$ with $h_{\varepsilon}^{\min}\leq r \leq h_{\varepsilon}^{\max}$  such that $\hat{t}'_L(r) \leq 0$ and $\hat{t}'_R(r) \geq 0$.
\label{prop:mcmns_t_local_eq_global}
\end{prop}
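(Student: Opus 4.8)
The plan is to reduce everything to the one-dimensional behaviour of the scalar function $\hat{t}(r)=r\,\bar{d}(r)$ on the compact interval $[h_{\varepsilon}^{\min},h_{\varepsilon}^{\max}]$, exactly as in the single-client warm-up (Proposition \ref{prop:1c1ns_1}), and to show that the first-order condition $\hat{t}'_L(r)\le 0\le \hat{t}'_R(r)$ singles out one point, namely the global minimiser of $\hat{t}$. By Proposition \ref{prop:opt_delay_convexity}, $\bar{d}$ is convex and hence continuous on $[h_{\varepsilon}^{\min},h_{\varepsilon}^{\max}]$, so $\hat{t}$ is continuous on this compact interval and attains a global minimum at some $r^{*}$ by the extreme value theorem; this yields existence of a candidate point.

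First I would check that $r^{*}$ satisfies the stated conditions. If $r^{*}$ lies in the interior, then being a global (hence local) minimiser forces $\hat{t}'_L(r^{*})\le 0$ and $\hat{t}'_R(r^{*})\ge 0$ directly from the definitions of the one-sided derivatives. The endpoint cases are handled by the boundary conventions built into $\bar{d}'_L$ and $\bar{d}'_R$: at $r^{*}=h_{\varepsilon}^{\min}$ the definition $\bar{d}'_L(h_{\varepsilon}^{\min})=-\infty$ gives $\hat{t}'_L(h_{\varepsilon}^{\min})=-\infty\le 0$ automatically, while $\hat{t}'_R(h_{\varepsilon}^{\min})\ge 0$ comes from minimality; symmetrically, at $r^{*}=h_{\varepsilon}^{\max}$ the convention $\bar{d}'_R(h_{\varepsilon}^{\max})=0$ yields $\hat{t}'_R(h_{\varepsilon}^{\max})=\bar{d}(h_{\varepsilon}^{\max})>0$ automatically, and $\hat{t}'_L(h_{\varepsilon}^{\max})\le 0$ from minimality. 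So $r^{*}$ is always a point of the required type.

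For uniqueness I would combine two ingredients. Proposition \ref{prop:mcmns_quasiconv} shows $\hat{t}$ is strictly quasiconvex, hence strictly unimodal (strictly decreasing on $[h_{\varepsilon}^{\min},r^{*}]$ and strictly increasing on $[r^{*},h_{\varepsilon}^{\max}]$), which already excludes interior local maxima. I would then argue that any $r\ne r^{*}$ obeying $\hat{t}'_L(r)\le 0\le \hat{t}'_R(r)$ must be a strict local minimiser, invoking the strict second-order content of Assumption \ref{ass:unique_stationary} — the analogue of the step in Proposition \ref{prop:1c1ns_1} where one uses $\hat{t}''(\hat{r}_{\varepsilon})>0$. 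But a second strict local minimiser distinct from $r^{*}$ would force an interior local maximum between the two, i.e.\ a stationary point with non-positive second-order variation, contradicting both unimodality and Assumption \ref{ass:unique_stationary}. Hence $r^{*}$ is the only point satisfying the conditions. Together with Proposition \ref{prop:mcmns_non_ext_pt} (non-extreme points can never be fixed points), this isolates the unique fixed point of the FFW update.

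The main obstacle is the degenerate case in which $\hat{t}$ is strictly monotone near some $r$ yet has a vanishing one-sided derivative there (the $r\mapsto r^{3}$-type inflection): strict quasiconvexity alone does \emph{not} rule out such a spurious stationary point, and this is exactly what Assumption \ref{ass:unique_stationary} is designed to exclude. The delicate part is therefore to transfer the second-order statement of that assumption — phrased in terms of the full policy vector $\r^{\pivec}$ and the radial direction — down to the reduced scalar function $\hat{t}(r)$, and to carry the argument through with the one-sided derivatives $\hat{t}'_L,\hat{t}'_R$ rather than a genuine second derivative, since $\bar{d}$ (and hence $\hat{t}$) is only assumed convex, not differentiable.
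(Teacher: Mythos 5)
Your proposal is correct and follows essentially the same route as the paper: existence via continuity of $\hat{t}$ on the compact interval plus necessity of the one-sided first-order conditions at a minimiser, and uniqueness by showing that any point satisfying $\hat{t}'_L(r)\le 0\le \hat{t}'_R(r)$ is a strict local minimum (via Assumption \ref{ass:unique_stationary}) and that strict quasiconvexity (Proposition \ref{prop:mcmns_quasiconv}) permits at most one such point. The one step you defer --- transferring the second-order content of Assumption \ref{ass:unique_stationary} from the policy-vector level down to the scalar $\hat{t}$ through the constrained minimiser $\r(r)$ --- is exactly what the paper isolates and proves as Claim \ref{claim:tL=0}, so you have correctly identified the crux rather than missed it.
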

\begin{proof}
 Denote,
\[
\r(r_{\varepsilon}) = \underset{\substack{{\r:  \r \in \left[ h_{\varepsilon}^{\min}, h_{\varepsilon}^{\max} \right]^{m\lvert \mathcal{C}\rvert}} \\ {\tilde{r}_{\varepsilon}\left( \boldsymbol{h}_{\varepsilon}^{-1}(\r) \right) = r_{\varepsilon}}}}{\text{arg min}} \tilde{d}\left(\boldsymbol{h}_{\varepsilon}^{-1}(\r) \right).
\]
Also, denote,
\[
\tilde{t}_{\varepsilon}(\r) = \tilde{r}_{\varepsilon}(\boldsymbol{h}_{\varepsilon}^{-1}(\r)) \tilde{d}(\boldsymbol{h}_{\varepsilon}^{-1}(\r))
\]
We use the following claim which is proved at the end of this section.
\begin{claim}
If $\hat{t}'_L(r_{\varepsilon}) \leq 0$ and $\hat{t}'_R(r_{\varepsilon}) \geq 0$, then,
\[
\nabla_{\r} \left(\tilde{t}_{\varepsilon}(\r(r_{\varepsilon})) \right) = 0,
\]
or, the left and right derivatives of $\r(\cdot)$ evaluated at $r_{\varepsilon}$, $\r'_{L}(r_{\varepsilon})$ and $\r'_{R}(r_{\varepsilon})$, exist, and,
\begin{align*}
    \r'_{L}(r_{\varepsilon})^\top \nabla_{\r} \left(\tilde{t}_{\varepsilon}(\r(r_{\varepsilon})) \right) &\leq 0, \\
    (AND) \qquad &\quad \\
    \r'_{R}(r_{\varepsilon})^\top \nabla_{\r} \left(\tilde{t}_{\varepsilon}(\r(r_{\varepsilon})) \right) &\geq 0.
\end{align*}
\label{claim:tL=0}
\end{claim}

In either case, Assumption \ref{ass:unique_stationary} ensures that for all small enough $\delta > 0$, we have $\hat{t}(r_{\varepsilon}-\delta) > \hat{t}(r_{\varepsilon})$ and $\hat{t}(r_{\varepsilon}+\delta) > \hat{t}(r_{\varepsilon})$. That is, if $\hat{t}'_L(r_{\varepsilon}) \leq 0$, then for all small enough $\delta > 0$, $\hat{t}(r_{\varepsilon}-\delta) > \hat{t}(r_{\varepsilon})$ (whenever $r_{\varepsilon}-\delta$ is in $[h_{\varepsilon}^{\min}, h_{\varepsilon}^{\max}]$). Similarly, if $\hat{t}'_R(r_{\varepsilon}) \geq 0$, then for small enough $\delta>0$, $\hat{t}(r_{\varepsilon}+\delta)> t(r_{\varepsilon})$ (whenever $r_{\varepsilon}+\delta$ is in $[h_{\varepsilon}^{\min}, h_{\varepsilon}^{\max}]$). Therefore, $r_{\varepsilon}$ is a strict local minimum of of $\hat{t}$.

Since $\hat{t}(\cdot)$ is strictly quasiconvex (Proposition \ref{prop:mcmns_quasiconv}), there can be at most one point $r_{\varepsilon}$ which is a strict local minimum of $\hat{t}(\cdot)$.

Moreover, since $\hat{t}$ is a continuous function over a closed and bounded set, it attains a local minimum over its domain. And, $\hat{t}'_L(r_{\varepsilon}) \leq 0$ and $\hat{t}'_R(r_{\varepsilon}) \geq 0$ is necessary condition for a local minimum. Therefore, there exists at least one point $r_{\varepsilon}$ in the domain such that $\hat{t}'_L(r_{\varepsilon}) \leq 0$ and $\hat{t}'_R(r_{\varepsilon}) \geq 0$.
\end{proof}

\begin{proof}[Proof of Claim \ref{claim:tL=0}]
Recall the notation, $\tilde{t}_{\varepsilon}(\r) = \tilde{r}_{\varepsilon}(\boldsymbol{h}_{\varepsilon}^{-1}(\r)) \tilde{d}(\boldsymbol{h}_{\varepsilon}^{-1}(\r))$.
\paragraph{Case 1}: First, consider the case that $\r(r_{\varepsilon})$ is in the interior of the domain $[h_{\varepsilon}^{\min}, h_{\varepsilon}^{\max}]^{m \lvert \mathcal{C}\rvert}$. In this case we will show that, $\nabla \tilde{t}_{\varepsilon}(\r(r_{\varepsilon})) = 0$. As a contradiction, assume that $\nabla \tilde{t}_{\varepsilon}(\r(r_{\varepsilon})) \neq 0$. 

Let $L(r_{\varepsilon}) \triangleq \{\r: \tilde{r}(\r) = r_{\varepsilon} \}$. Since, $\r(r_{\varepsilon})$ is the minimizer of $\tilde{t}_{\varepsilon}(\r)$ over the set $L(r_{\varepsilon})$, $\nabla \tilde{t}_{\varepsilon}(\r(r_{\varepsilon}))$ has to be normal to $L(r_{\varepsilon})$ at the point $\r(r_{\varepsilon})$. Let $\boldsymbol{n}$ denote the normal to the set $L$ at point $r_{\varepsilon}$. Then, one of the following is true,
\begin{align*}
    \tilde{t}_{\varepsilon}(\r(r_{\varepsilon}) + \delta \boldsymbol{n}) &= \tilde{t}_{\varepsilon}(\r(r_{\varepsilon})) - \Theta(\delta), \\
    &(OR) \\
    \tilde{t}_{\varepsilon}(\r(r_{\varepsilon}) - \delta \boldsymbol{n}) &= \tilde{t}_{\varepsilon}(\r(r_{\varepsilon})) - \Theta(\delta).
\end{align*}
Then, by the definition of $t(\cdot)$, one of the following is true,
\begin{align*}
    t(r_{\varepsilon} + \delta) &= t(r_{\varepsilon}) - \Omega(\delta), \\
    &(OR) \\
    t(r_{\varepsilon} - \delta ) &= t(r_{\varepsilon}) - \Omega(\delta).
\end{align*}
This implies that either $t'_L(r_{\varepsilon}) > 0$ or $t'_R(r_{\varepsilon}) > 0$. This is a contradiction. Therefore, $\nabla \tilde{t}_{\varepsilon}(\r(r_{\varepsilon})) = 0$, for all $\r(r_{\varepsilon})$ in the interior of $[h_{\varepsilon}^{\min}, h_{\varepsilon}^{\max}]^{m \lvert \mathcal{C}\rvert}$.

\paragraph{Case 2: } Consider the case that $\r(r_{\varepsilon})$ is on the boundary of $\left[h_{\varepsilon}^{\min}, h_{\varepsilon}^{\max} \right]^{m \lvert \mathcal{C} \rvert}$ and $\nabla \tilde{t}_{\varepsilon}(\r(r_{\varepsilon})) \neq 0$. 

Here, the left derivative exists and its direction is expressed as,
\[
\r'_{L}(r_{\varepsilon}) \propto \underset{\boldsymbol{s}: \r(r_{\varepsilon})+\boldsymbol{s} \in \left[ h_{\varepsilon}^{\min}, h_{\varepsilon}^{\max} \right]^{m\lvert \mathcal{C} \rvert}}{\text{arg max}} \quad \frac{\boldsymbol{s}^\top \nabla \tilde{t}_{\varepsilon}(\r(r_{\varepsilon}))}{\norm{\boldsymbol{s}}_2}.
\]
Since, $t'_{L}(r_{\varepsilon}) \leq 0$, we obtain $\r'_L(r_{\varepsilon})^\top \nabla \tilde{t}_{\varepsilon}(\r(r_{\varepsilon})) \leq 0$. 

A similar argument holds for the right derivative as well.
\end{proof}

Finally, we summarize how the results in this section prove Proposition \ref{prop:unique_stationary}.

\begin{proof}[Proof Summary of Proposition \ref{prop:unique_stationary}]
Due to Proposition \ref{prop:mcmns_non_ext_pt}, a non-extreme point of $\conv(V_{\varepsilon})$ is not a fixed-point of the FFW update. Further, Proposition \ref{prop:mcmns_equiv} shows that an equivalent condition for an extreme point $\x = (\hat{r}_{\varepsilon}, \bar{d}(\hat{r}_{\varepsilon}))$ of $\conv(V_{\varepsilon})$ being a fixed point of the FFW update is that $t'_{L}(\hat{r}_{\varepsilon}) \leq 0$ and $t'_{R}(\hat{r}_{\varepsilon}) \geq 0$. Then, in Proposition \ref{prop:mcmns_t_local_eq_global} we showed that there is a unique point which satisfies $t'_{L}(\hat{r}_{\varepsilon}) \leq 0$ and $t'_{R}(\hat{r}_{\varepsilon}) \geq 0$. Therefore, this proves that there is a unique fixed point $\x^*$ for the FFW update in $\conv(V_{\varepsilon})$.

Further, since $\x^*$ is an extreme point of $\conv(V_{\varepsilon})$, and $\conv(V_{\varepsilon})$ is the convex hull of the set $V_{\varepsilon}$, $\x^*$ lies in the set $V_{\varepsilon}$.

Finally, to prove that $\x^*$ is the minimizer of $H(\cdot)$ over the set $V_{\varepsilon}$, we make the following observation which is a consequence of IV in Proposition \ref{prop:mcmns_equiv},
\[
\min_{\x \in V_{\varepsilon}} H(\x) = \min_{r \in [h_{\varepsilon}^{\min}, h_{\varepsilon}^{\max}]} \hat{t}(r).
\]
Recall from Proposition \ref{prop:mcmns_equiv} that $\x^* = (r^*, \bar{d}(r^*))$ is such that $t'_{L}(r^*) \leq 0$ and $t'_R(r^*) \geq 0$. Further, since $\hat{t}$ is strictly quasiconvex (Proposition \ref{prop:mcmns_quasiconv}), $r^*$ is the global minimizer of $\hat{t}$. Therefore, $\x^*$ is the minimizer of $H(\cdot)$ over the set $V_{\varepsilon}$.
\end{proof}

\newpage
\section{Proof of Theorem \ref{th:fedcom_informal}}
\label{app:fedcom_proof}

In this section, we prove Theorem \ref{th:fedcom_informal} which bounds the convergence of the FedCOM-V Algorithm shown in Algorithm \ref{algo:fedcom}. Moreover, we explicitly state the choice of local learning rate schedule $\left( \eta^n \right)$, and global learning rate schedule $\left( \gamma^n \right)$ required to achieve the convergence rate stated in Theorem \ref{th:fedcom_informal}. 

In the rest of the section, we violate our convention by sometimes using lower case letters instead of capital letters to denote random variables/vectors in order to stay consistent with the notation used in FL literature. 

We start by defining sigma-algebras and filtrations associated with the probability space.
\begin{remark}
 Let $\sigma(X)$ denote the sigma-algebra generated by the random variable $X$. Let $\sigma(X_1, \ldots, X_n)$ denote the sigma-algebra generated by the set of random variables $X_1$ to $X_n$. Similarly, let $\sigma(\mathcal{F}_1, \mathcal{F}_2, \dots, \mathcal{F}_n)$ denote the smallest sigma-algebra containing the sigma-algebras $\mathcal{F}_1$ to $\mathcal{F}_n$.

First, we describe the sigma-algebra associated with the network state process. Recalling that $C^n$ denotes the network state at round $n$, denote,
\[
\mathcal{F}^{C} \triangleq \sigma\left(  C^n: n \geq 1  \right).
\]
We remark that although the compression parameters $(\q^n)_{n}$ may be random vectors dependent across various rounds $n$, their randomness only depends on the network states under the NAC-FL policy as well as under other baseline policies we consider in this paper. More precisely, $\q^n$ is measurable in $\mathcal{F}^{C}$ for all rounds $n$, and is therefore not dependent on the updates of the FedCOM-V algorithm. Moreover, the aim in this proof is to study the convergence of the FedCOM-V algorithm for arbitrary choices of $(\q^n)_n$. So, in this section, all expectations will be conditioned on $\mathcal{F}^C$, and therefore, $q_j^n$'s will be treated as arbitrary, but known, constants in this Section.

Next, we describe the sigma-algebras associated with the stochastic gradients. Recall that at round $n$, by local-step $b$, client $j$ has sampled mini-batches $\left( \mathcal{Z}_{j}^{a,n} \right)_{a=1}^{b}$ to compute the stochastic-gradients. So, we denote the associated sigma-algebra across all clients as,
\[
\mathcal{D}_{b,n} \triangleq \sigma\left( \mathcal{Z}_{j}^{a,n}: j \in [m], a \in [b]  \right), \quad b \in [\tau_{n}], \\
\]

The sigma-algebra associated with the compressors used in round $n$ is denoted as,
\[
\mathcal{Q}_n \triangleq \sigma\left( \mathcal{Q}(\cdot, q_j^n): j \in [m]  \right).
\]

Finally, we describe the filtration for the entire system. Since, in this Section, we condition all events on the knowledge of the network states, the filtration is initialized as,
\[
\mathcal{F}_{0} = \F^C.
\]
At the $b$\textsuperscript{th} local step of round $n$, the filtration includes the knowledge of all previous rounds and the stochastic-gradients up to the $b$\textsuperscript{th} local step,
\[
\F_{b,n} = \sigma\left(\F_{n-1}, \mathcal{D}_{b,n} \right), \quad, b\in[\tau_n], n \geq 1,
\]
And, finally, at the end of round $n$, the filtration includes the knowledge of all previous rounds, the stochastic gradients and compressors of round $n$,
\[
\F_{n} = \sigma\left( \F_{n-1}, \mathcal{D}_{\tau_n, n},  \mathcal{Q}_n \right), \quad n \geq 1.
\]
\end{remark}

Next we recap the FedCOM-V Algorithm and introduce further notation used in the proof.
\begin{figure}[h]
    \centering
    \includegraphics[width=\textwidth]{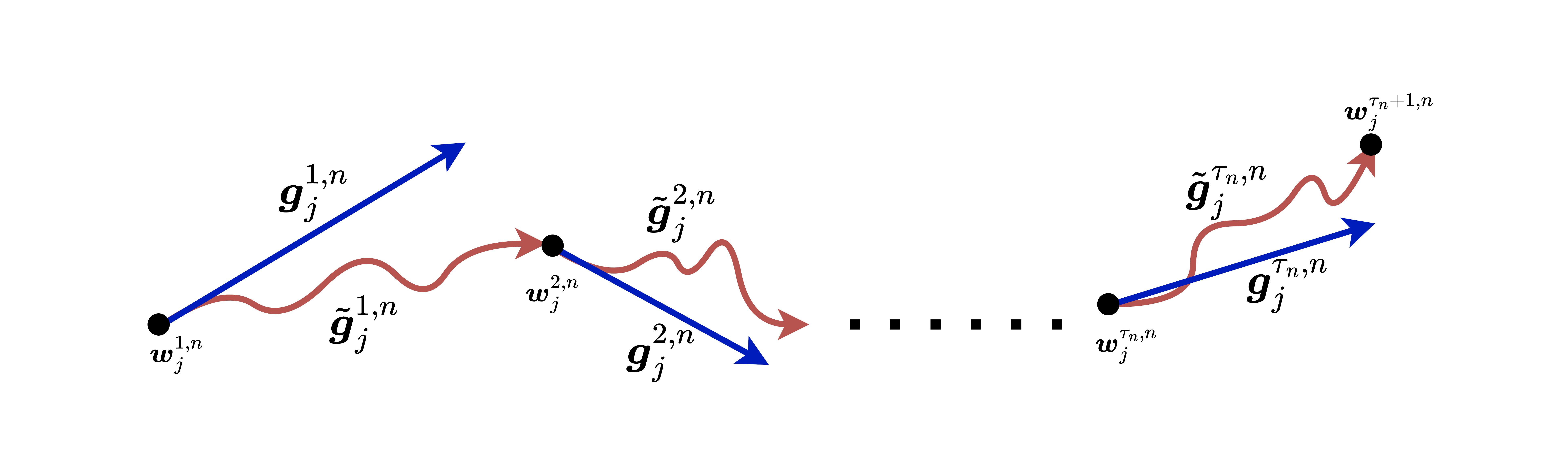}
    \caption{Illustration of the local steps at a client.}
    \label{fig:stochastic_evolution}
\end{figure}

At the start of round $n$, client $j$ recieves the global model $\w^n$ from the server, which it initializes as $\w_j^{1,n}$. Then, at local-step $b$, it samples a mini-batch $\mathcal{Z}_j^{b,n}$ and computes the stochastic gradient, $\tilde{\g}_j^{b,n} \triangleq \nabla f(\w_j^{b,n}, \mathcal{Z}_j^{b,n})$, while performing the local model update as, $\w_j^{b+1,n} = \w_j^{b,n} - \eta_n \tilde{\g}_j^{b,n}$. At this point, we remark that there are two sources of randomness involved in the evaluation of a stochastic gradient at a local step $b$ of round $n$. One is from the model $\w_j^{b,n}$ at which the gradient is evaluated, which is itself obtained by stochastic gradient and compressed aggregation updates of previous rounds and local steps (i.e., $\w_j^{b,n}$ is measurable in $ \mathcal{F}_{b-1,n}$). The second source of randomness is from the mini-batch, $\mathcal{Z}_j^{b,n}$, used to compute the stochastic gradient $\tilde{\g}_j^{b,n}$.  So, $\tilde{\g}_j^{b,n}$ is measurable in $ \mathcal{F}_{b,n}$.

Additionally, for the analysis, we will define the ``true-gradient'' at the local model $\w_j^{b,n}$ as, $\g_{j}^{b,n} \triangleq \nabla f(\w_j^{b,\round})$. Observe that the true gradient at local step $b$ of round $n$ is itself a random vector, as it is evaluated at $\w_j^{b,\round}$. But, it is independent of the mini-batch, $\mathcal{Z}_j^{b,n}$, sampled at that step. Therefore, $\g_j^{b,n}$ is measurable in $ \mathcal{F}_{b-1,n}$. Refer to Figure \ref{fig:stochastic_evolution} for an illustration of this process. 


After $\tau_n$ local computations, the client computes its ``pre-compressed'' update, $\tilde{\g}_{j}^{n} \triangleq \sum_{b=1}^{\tau_n} \tilde{\g}_{j}^{b,n}$, which can also be expressed as, 
$\tilde{\g}_j^n = (\w^n - \w_j^{\tau_n+1,n})/\eta_n$. Next, the client sends the compressed message, $\tilde{\g}_{Qj}^n = \mathcal{Q}(\tilde{\g}_j^n, q_j^n)$, to the server.

The server aggregates the compressed messages received from the clients as, $\tilde{\g}_Q^n = 1/m \sum_{j=1}^m \tilde{\g}_{Qj}^n$, and performs the update $\w^{n+1} = \w^n - \eta_n\gamma_n \tilde{\g}_Q^n$.

For the purpose of analysis, define $\tilde{\g}^n \triangleq 1/m \sum_{j=1}^m \tilde{\g}_{j}^n$, which may be interpreted as the message aggregated at the server had the clients not used any compression. Further, define the ``true-gradient'' analogies of $\tilde{\g}_j^n$ and $\tilde{\g}^n$ as, $\g_j^n = \sum_{b=1}^{\tau_n} \g_j^{b,n}$ and $\g^n = 1/m \sum_{j=1}^m \g_j^n$. $\g^n$ and $\g_j^n$ are random variables since their components, $\g_j^{b,n}$'s, are evaluated at models which are obtained by stochastic gradient updates.

\begin{remark}
   The presence of a tilde and the subscript $Q$, such as in $\tilde{\g}_Q^n$, will indicate that vector is both compressed and has stochastic gradient components. The presence of just a tilde, such as in $\tilde{\g}_j^n$, will indicate that vector (or its components) has two sources of randomness: one from the model at which it (or its components) is evaluated, and second from the mini-batch using which it (or its components) is evaluated. The absence of both the tilde and subscript $Q$, such as in $\g_j^n$, will indicate that the vector (or its components) has one source of randomness, which is from the model at which it (or its components) is evaluated. 
\end{remark}



Also, denote the average noise-variance across clients per round as $\bar{q}^{\round}$:
\[
\bar{q}^{\round} = \frac{1}{m} \sum_{j=1}^m q_j^{\round}.
\]

We start by stating some results which will assist in proving Theorem \ref{th:fedcom_informal}. First is a lemma that bounds the distance between the sum of stochastic gradients at a client in a round to the sum of the true gradients across the local steps at the client in the round.

\begin{lemma}
    $\EXP\left[ \norm{\tilde{\g}_j^n - \g_j^n}^2 \Big\vert \mathcal{F}_{n-1} \right] \leq \tau_n \sigma^2$.
    \label{lm:var_g_n}
\end{lemma}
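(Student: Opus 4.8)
The plan is to recognize the discrepancy $\tilde{\g}_j^n - \g_j^n = \sum_{b=1}^{\tau_n}\left(\tilde{\g}_j^{b,n} - \g_j^{b,n}\right)$ as a sum of martingale-difference increments relative to the local-step filtration $\left(\mathcal{F}_{b,n}\right)_b$, and then expand the squared norm so that the cross terms cancel in conditional expectation. First I would define the per-step noise $\boldsymbol{\xi}_j^{b,n} \triangleq \tilde{\g}_j^{b,n} - \g_j^{b,n}$ for $b \in [\tau_n]$. Two structural facts, both inherited from the measurability conventions established above, drive the argument: (i) the true gradient $\g_j^{b,n} = \nabla f(\w_j^{b,n})$ is $\mathcal{F}_{b-1,n}$-measurable while the mini-batch $\mathcal{Z}_j^{b,n}$ is drawn independently of $\mathcal{F}_{b-1,n}$, so the unbiasedness part of Assumption~\ref{ass:sg_variance} gives $\EXP[\boldsymbol{\xi}_j^{b,n} \mid \mathcal{F}_{b-1,n}] = \EXP[\tilde{\g}_j^{b,n} \mid \mathcal{F}_{b-1,n}] - \g_j^{b,n} = \boldsymbol{0}$; and (ii) the bounded-variance part of the same assumption gives $\EXP[\norm{\boldsymbol{\xi}_j^{b,n}}^2 \mid \mathcal{F}_{b-1,n}] \leq \sigma^2$.

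With these in hand, I would expand
\[
\norm{\tilde{\g}_j^n - \g_j^n}^2 = \sum_{b=1}^{\tau_n} \norm{\boldsymbol{\xi}_j^{b,n}}^2 + 2\!\!\sum_{1 \leq b < b' \leq \tau_n}\!\! \langle \boldsymbol{\xi}_j^{b,n}, \boldsymbol{\xi}_j^{b',n}\rangle.
\]
For a cross term with $b < b'$ I would insert a conditioning on $\mathcal{F}_{b'-1,n}$ via the tower rule: since $b \leq b'-1$, the earlier increment $\boldsymbol{\xi}_j^{b,n}$ is $\mathcal{F}_{b'-1,n}$-measurable and can be pulled out of the inner expectation, leaving $\langle \boldsymbol{\xi}_j^{b,n}, \EXP[\boldsymbol{\xi}_j^{b',n} \mid \mathcal{F}_{b'-1,n}]\rangle = 0$ by fact (i). Hence every cross term vanishes in expectation conditioned on $\mathcal{F}_{n-1}$. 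For the diagonal terms I would apply the tower rule once more, $\EXP[\norm{\boldsymbol{\xi}_j^{b,n}}^2 \mid \mathcal{F}_{n-1}] = \EXP[\EXP[\norm{\boldsymbol{\xi}_j^{b,n}}^2 \mid \mathcal{F}_{b-1,n}] \mid \mathcal{F}_{n-1}] \leq \sigma^2$ by fact (ii), and summing the $\tau_n$ such terms yields the claimed bound $\tau_n \sigma^2$.

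The only genuine subtlety — and the step I would write out most carefully — is the filtration bookkeeping that makes the cross terms disappear: one must confirm the nesting $\mathcal{F}_{b,n} \subseteq \mathcal{F}_{b'-1,n}$ whenever $b \leq b'-1$, so that $\boldsymbol{\xi}_j^{b,n}$ is indeed known at time $b'-1$, and simultaneously that $\boldsymbol{\xi}_j^{b',n}$ is a bona fide martingale difference at that time. Everything else is a routine expansion plus a double use of the tower property, with Assumption~\ref{ass:sg_variance} supplying both the zero conditional mean and the $\sigma^2$ second-moment bound.
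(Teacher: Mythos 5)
Your proof is correct and follows essentially the same route as the paper's: both arguments rest on the martingale-difference structure of the per-step noise $\tilde{\g}_j^{b,n}-\g_j^{b,n}$ with respect to the local-step filtration, using the tower property together with the unbiasedness and $\sigma^2$ variance bound of Assumption~\ref{ass:sg_variance}. The only difference is organizational — you expand the full square and kill all cross terms at once, whereas the paper peels off the last increment and recurses — but the mathematical content is identical.
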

\begin{proof}
    \begin{align*}
        \EXP\left[ \norm{\tilde{\g}_j^n - \g_j^n}^2 \Big\vert \mathcal{F}_{n-1}\right] 
            &\stackrel{(a)}{=} \EXP\left[ \norm{\sum_{b=1}^{\tau_n} \left( \tilde{\g}_j^{b,n} - \g_j^{b,n}\right)}^2 \Bigg\vert \mathcal{F}_{n-1}\right], \\
            &\stackrel{(b)}{=} \EXP\left[ \EXP\left[\norm{\sum_{b=1}^{\tau_n} 
            \left( \tilde{\g}_j^{b,n} - \g_j^{b,n} \right)}^2 \Big\vert \mathcal{F}_{\tau_n-1,n}\right] \Bigg\vert \mathcal{F}_{n-1} \right], \\
            &\stackrel{(c)}{=} \EXP\Bigg[ \EXP\left[\norm{\sum_{b=1}^{\tau_n-1} 
            \left( \tilde{\g}_j^{b,n} - \g_j^{b,n} \right)}^2 \Big\vert \mathcal{F}_{\tau_n-1, n}\right] + \EXP \left[ \norm{\tilde{\g}_j^{\tau_n,n} - \g_j^{\tau_n,n}}^2 \Big\vert \mathcal{F}_{\tau_n-1,n}\right] \\
            &\qquad + 2\EXP\left[ \left(\tilde{\g}_j^{\tau_n,n} - \g_j^{\tau_n,n} \right)^\top \sum_{b=1}^{\tau_n-1} 
            \left( \tilde{\g}_j^{b,n} - \g_j^{b,n} \right) \big\vert \mathcal{F}_{\tau_n-1,n} \right] \Bigg\vert \mathcal{F}_{n-1} \Bigg], \\
            &\stackrel{(d)}{=} \EXP\Bigg[ \norm{\sum_{b=1}^{\tau_n-1} 
            \left( \tilde{\g}_j^{b,n} - \g_j^{b,n} \right)}^2 + \EXP \left[ \norm{\tilde{\g}_j^{\tau_n,n} - \g_j^{\tau_n,n}}^2 \Big\vert \mathcal{F}_{\tau_n-1,n}\right] \\
            &\qquad + 2\EXP\left[ \left(\tilde{\g}_j^{\tau_n,n} - \g_j^{\tau_n,n} \right) \Big\vert \mathcal{F}_{\tau_n-1,n}\right]^\top \sum_{b=1}^{\tau_n-1} 
            \left( \tilde{\g}_j^{b,n} - \g_j^{b,n} \right) \Bigg\vert \mathcal{F}_{n-1} \Bigg], \\
            &\stackrel{(e)}{=} \EXP\left[ \norm{\sum_{b=1}^{\tau_n-1} 
            \left( \tilde{\g}_j^{b,n} - \g_j^{b,n} \right)}^2 + \EXP \left[ \norm{\tilde{\g}_j^{\tau_n,n} - \g_j^{\tau_n,n}}^2 \Big\vert \mathcal{F}_{\tau_n-1,n}\right] \Bigg\vert \mathcal{F}_{n-1}\right], \\
            &\stackrel{(f)}{\leq} \EXP\left[ \norm{\sum_{b=1}^{\tau_n-1} 
            \left( \tilde{\g}_j^{b,n} - \g_j^{b,n} \right)}^2 \Bigg\vert \mathcal{F}_{n-1} \right] + \sigma^2 , \\
            &\vdots \\
            &\leq \tau_n \sigma^2
    \end{align*}
    (a) follows by definition. (b) follows by law of iterated expectations. (c) follows by partially expanding the summation. (d) follows because all random vectors except $\tilde{\g}_j^{\tau_n,n}$ are measurable in $\mathcal{F}_{\tau_n-1,n}$. (e) follows because $\tilde{\g}_j^{\tau_n,n}$ is an unbiased estimate of $\g_j^{\tau_n,n}$ (Assumption \ref{ass:sg_variance}). (f) follows by Assumption \ref{ass:sg_variance} which bounds the variance of the stochastic gradients. Repeating steps (b)-(f) ($\tau_n-1$) more times by taking internal conditional expectations w.r.t., $\mathcal{F}_{\tau_n-2,n}$, ..., $\mathcal{F}_{1,n}$, $\mathcal{F}_{n-1}$ gives us the result.
\end{proof}

Next is a lemma comparing the expected norm of the (compressed and stochastic) gradient received by the server to the true gradients at all the local steps.

\begin{lemma}
Under Assumption \ref{ass:sg_variance} and \ref{ass:quantization},
\[
    \EXP \left[ \norm{\tilde{\g}_Q^{\round}}^2 \Big\vert \mathcal{F}_{n-1}\right]
    \leq \frac{2\tau_n}{m} \left(\frac{q_\max}{m}+1\right)\sum_{j=1}^m \sum_{b=1}^{\tau_n} \EXP\left[\norm{\g_j^{b,\round}}^2 \Big\vert \mathcal{F}_{n-1}  \right] + \left(\bar{q}^{\round} + 1 \right) \frac{2\tau_n \sigma^2}{m}.
\]
\label{lm:fedcom_1}
\end{lemma}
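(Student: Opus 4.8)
The plan is to peel off the two independent sources of randomness in round $n$ one at a time, innermost first: the compression noise, and then the stochastic-gradient (mini-batch) noise. Write $\tilde{\g}^{n}=\frac{1}{m}\sum_{j=1}^{m}\tilde{\g}_j^n$ for the uncompressed aggregate and let $\mathcal{F}_{\tau_n,n}$ be the filtration containing all mini-batch randomness of round $n$ but not the compressors, so that each $\tilde{\g}_j^n$ is $\mathcal{F}_{\tau_n,n}$-measurable while $\tilde{\g}_{Qj}^n=\quant(\tilde{\g}_j^n,q_j^n)$ is not. Conditioning on $\mathcal{F}_{\tau_n,n}$, Assumption \ref{ass:quantization} gives $\EXP[\tilde{\g}_{Qj}^n\mid\mathcal{F}_{\tau_n,n}]=\tilde{\g}_j^n$, hence $\EXP[\tilde{\g}_Q^n\mid\mathcal{F}_{\tau_n,n}]=\tilde{\g}^n$, and a bias--variance split yields
\[
\EXP\!\left[\norm{\tilde{\g}_Q^n}^2\mid\mathcal{F}_{\tau_n,n}\right]=\norm{\tilde{\g}^n}^2+\EXP\!\left[\norm{\tilde{\g}_Q^n-\tilde{\g}^n}^2\mid\mathcal{F}_{\tau_n,n}\right].
\]
Because the clients compress independently and each error is zero-mean given $\mathcal{F}_{\tau_n,n}$, the cross terms in $\tilde{\g}_Q^n-\tilde{\g}^n=\frac{1}{m}\sum_j(\tilde{\g}_{Qj}^n-\tilde{\g}_j^n)$ vanish, leaving $\frac{1}{m^2}\sum_j \EXP[\norm{\tilde{\g}_{Qj}^n-\tilde{\g}_j^n}^2\mid\mathcal{F}_{\tau_n,n}]\le\frac{1}{m^2}\sum_j q_j^n\norm{\tilde{\g}_j^n}^2$ by the variance bound of Assumption \ref{ass:quantization}.

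Next I would apply the tower property to condition down to $\mathcal{F}_{n-1}$ and bound the two surviving terms, $\EXP[\norm{\tilde{\g}^n}^2\mid\mathcal{F}_{n-1}]$ and $\EXP[\norm{\tilde{\g}_j^n}^2\mid\mathcal{F}_{n-1}]$. For each I would split the stochastic gradient into its true-gradient mean plus mini-batch noise, e.g.\ $\tilde{\g}^n=\g^n+(\tilde{\g}^n-\g^n)$ with $\g^n=\frac{1}{m}\sum_j\g_j^n$, use $\norm{a+b}^2\le 2\norm{a}^2+2\norm{b}^2$, and then Cauchy--Schwarz $\norm{\g_j^n}^2=\norm{\sum_b\g_j^{b,n}}^2\le\tau_n\sum_b\norm{\g_j^{b,n}}^2$ to surface the $\sum_{j,b}\EXP[\norm{\g_j^{b,n}}^2]$ terms, with the noise terms controlled by Lemma \ref{lm:var_g_n}. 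The crucial refinement is in bounding $\EXP[\norm{\tilde{\g}^n-\g^n}^2\mid\mathcal{F}_{n-1}]$: conditioned on $\mathcal{F}_{n-1}$ the global model $\w^n$ is fixed and the clients proceed with independent mini-batches, so the per-client errors $\tilde{\g}_j^n-\g_j^n$ are independent and zero-mean; the cross terms again vanish and one gains a factor $1/m$, giving $\EXP[\norm{\tilde{\g}^n-\g^n}^2\mid\mathcal{F}_{n-1}]\le\frac{\tau_n\sigma^2}{m}$ rather than the naive $\tau_n\sigma^2$. This is precisely what produces the $\frac{2\tau_n\sigma^2}{m}$ piece of the bound.

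Finally I would assemble the pieces. The aggregate term contributes $\frac{2\tau_n}{m}\sum_{j,b}\EXP[\norm{\g_j^{b,n}}^2\mid\mathcal{F}_{n-1}]+\frac{2\tau_n\sigma^2}{m}$; the compression-variance term, after the per-client bound $\EXP[\norm{\tilde{\g}_j^n}^2\mid\mathcal{F}_{n-1}]\le 2\tau_n\sum_b\EXP[\norm{\g_j^{b,n}}^2\mid\mathcal{F}_{n-1}]+2\tau_n\sigma^2$ together with $q_j^n\le q_{\max}$ and $\frac{1}{m}\sum_j q_j^n=\bar{q}^n$, contributes $\frac{2\tau_n q_{\max}}{m^2}\sum_{j,b}\EXP[\norm{\g_j^{b,n}}^2\mid\mathcal{F}_{n-1}]+\frac{2\tau_n\sigma^2\bar{q}^n}{m}$. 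Adding them collapses the gradient coefficients into $\frac{2\tau_n}{m}\big(\frac{q_{\max}}{m}+1\big)$ and the noise coefficients into $(\bar{q}^n+1)\frac{2\tau_n\sigma^2}{m}$, which is exactly the claimed inequality. The main obstacle is not the algebra but correctly setting up the nested conditioning and justifying that both families of cross terms—across compressors, and across clients' mini-batch noise—vanish; the conditional-independence-across-clients argument is what yields the sharp $1/m$ factors, so a loose Jensen bound $\norm{\tilde{\g}^n}^2\le\frac{1}{m}\sum_j\norm{\tilde{\g}_j^n}^2$ would overshoot the target $\sigma^2$ term and must be avoided.
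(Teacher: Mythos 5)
Your proposal is correct and follows essentially the same route as the paper's proof: the same bias--variance split conditioned on $\mathcal{F}_{\tau_n,n}$ using the unbiasedness and variance bound of Assumption \ref{ass:quantization}, the same exploitation of conditional independence across clients to get the sharp $\frac{2\tau_n\sigma^2}{m}$ term (via Lemma \ref{lm:var_g_n}), the same Jensen step $\norm{\g_j^n}^2\le\tau_n\sum_b\norm{\g_j^{b,n}}^2$, and the same final assembly with $q_j^n\le q_{\max}$ and $\frac{1}{m}\sum_j q_j^n=\bar{q}^n$. No gaps.
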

\begin{proof}
\begin{align}
    \EXP&\left[\norm{\tilde{\g}_Q^{\round}}^2 \Big\vert \mathcal{F}_{n-1}\right] \nonumber \\
    &\stackrel{(a)}{=} \EXP\left[ \EXP \left[ \norm{\frac{1}{m} \sum_{j=1}^m \tilde{\g}_{Q,j}^{\round}}^2 \Big\vert \mathcal{F}_{\tau_n,n} \right] \Big\vert \mathcal{F}_{n-1} \right], \nonumber \\
    &\stackrel{(b)}{=} \EXP\left[ \EXP \left[ \norm{\frac{1}{m} \sum_{j=1}^m \tilde{\g}_{Q,j}^{\round} - \frac{1}{m} \sum_{j=1}^m \EXP\left[ \tilde{\g}_{Q,j}^{\round} \Big\vert \mathcal{F}_{\tau_n,n} \right]}^2 \Big\vert \mathcal{F}_{n-1} \right] + \norm{\EXP\left[ \frac{1}{m} \sum_{j=1}^m \tilde{\g}_{Q,j}^{\round} \Big\vert \mathcal{F}_{\tau_n,n} \right]}^2 \Big\vert \mathcal{F}_{n-1}\right], \nonumber \\
    &\stackrel{(c)}{=} \EXP \left[ \frac{1}{m^2} \sum_{j=1}^m \left(\EXP \left[\norm{\tilde{\g}_{Q,j}^{\round}-\tilde{\g}_j^{\round}}^2 \Big\vert \mathcal{F}_{\tau_n,n} \right] \right) + \norm{\frac{1}{m} \sum_{j=1}^m \tilde{\g}_j^{\round}}^2 \Big\vert \mathcal{F}_{n-1} \right], \nonumber \\
    &\stackrel{(d)}{\leq} \EXP \left[ \sum_{j=1}^m \frac{q_j^{\round}}{m^2} \norm{\tilde{\g}_j^{\round}}^2 + \norm{\tilde{\g}^{\round}}^2 \Big\vert \mathcal{F}_{n-1} \right].
    \label{eq:fedcom_1_1}
\end{align}
(a) follows by Law of Iterated Expectations. (b) follows by the identity, $\EXP[\norm{\X}^2] = \EXP[\norm{\X-\EXP[\X]}^2]+\norm{\EXP[\X]}^2$ (this is the $\EXP X^2 = \text{var}(X) + (\EXP X)^2$ identity applied to vectors). The first summation term of (c) is obtained by expanding out the first squared norm from (b) and observing that $\tilde{\g}^{\round}_{Q,j}-\tilde{\g}^{\round}_{j}$ is a zero mean random vector, and independent across different $j$'s. The second term in (c) follows from the linearity of expectation and the unbiased property of the compressor (Assumption \ref{ass:quantization}). (d) follows from Assumption \ref{ass:quantization}, which bounds the noise introduced by the compressor.

Let's bound $\EXP\left[ \norm{\tilde{\g}^{\round}}^2 \Big\vert \mathcal{F}_{n-1} \right]$,
\begin{align}
    \EXP\left[ \norm{\tilde{\g}^{\round}}^2 \Big\vert \mathcal{F}_{n-1}\right] 
    &\stackrel{(a)}{\leq} 2\EXP \left[ \norm{\tilde{\g}^{\round}- \g^{\round}}^2 \Big\vert \mathcal{F}_{n-1} \right] + 2\EXP\left[ \norm{\g^{\round}}^2 \mathcal{F}_{n-1}\right], \nonumber \\
    &\stackrel{(b)}{=} 2\EXP \left[ \norm{\frac{1}{m}\sum_{j=1}^m \left(\tilde{\g}_j^{\round}- \g_j^{\round}\right)}^2 \Big\vert \mathcal{F}_{n-1} \right] + 2\EXP \left[ \norm{\frac{1}{m}\sum_{j=1}^m \g_{j}^{\round}}^2 \Big\vert \mathcal{F}_{n-1} \right], \nonumber \\
    &\stackrel{(c)}{=} \frac{2}{m^2}\sum_{j=1}^m \EXP \left[ \norm{ \tilde{\g}_j^{\round}- \g_j^{\round}}^2 \Big\vert \mathcal{F}_{n-1}\right] + 2\EXP \left[ \norm{\frac{1}{m}\sum_{j=1}^m \g_{j}^{\round}}^2 \mathcal{F}_{n-1}\right], \nonumber \\
    &\stackrel{(d)}{\leq} \frac{2}{m^2}\sum_{j=1}^m \EXP \left[ \norm{ \tilde{\g}_j^{\round}- \g_j^{\round}}^2 \Big\vert \mathcal{F}_{n-1}\right] + \frac{2}{m}\sum_{j=1}^m \EXP \left[  \norm{\g_{j}^{\round}}^2 \Big\vert \mathcal{F}_{n-1} \right], \nonumber \\
    &\stackrel{(e)}{\leq} \frac{2\tau_n\sigma^2}{m} + \frac{2}{m} \sum_{j=1}^m \EXP \left[\norm{\g_j^{\round}}^2 \Big\vert \mathcal{F}_{n-1} \right]. \label{eq:fedcom_1_2}
\end{align}
Above, (a) follows from the identity $\norm{\x}^2 \leq 2\norm{\x-\y}^2+2\norm{\y}^2$ (to prove the identity, observe that $\norm{\cdot}^2$ is a convex function, and use Jensen's inequality, $\norm{\x/2}^2 \leq 1/2 \norm{\x-\y}^2 + 1/2 \norm{\y}^2$), and (b) follows by definition. (c) is true because, given $\mathcal{F}_{n-1}$, $(\tilde{\g}_{j}^{n} - \g_{j}^{n})$ is independent of $(\tilde{\g}_{k}^{n}-\g_{k}^{n})$ for $k \neq j$. (d) follows from Jensen's Inequality, and (e) is true by Lemma \ref{lm:var_g_n}.

Performing a similar calculation again,
\begin{align}
    \EXP\left[ \norm{\tilde{\g}_j^{\round}}^2 \Big\vert \mathcal{F}_{n-1} \right] 
    &\leq 2\EXP \left[ \norm{\tilde{\g}_j^{\round} - \g_j^{\round}}^2 \Big\vert \mathcal{F}_{n-1} \right] + 2\EXP \left[ \norm{\g_j^{\round}}^2 \Big\vert \mathcal{F}_{n-1} \right], \nonumber \\
    &\leq 2\tau_n \sigma^2 + 2\EXP \left[ \norm{\g_j^{\round}}^2 \Big\vert \mathcal{F}_{n-1} \right].
    \label{eq:fedcom_1_3}
\end{align}
Using Jensen's inequality, we get,
\begin{equation}
    \norm{\g_j^{\round}}^2 \leq \tau_n \sum_{b=1}^{\tau_n} \norm{\g_j^{b,\round}}^2.
    \label{eq:fedcom_1_4}
\end{equation}
Substituting \eqref{eq:fedcom_1_2}, \eqref{eq:fedcom_1_3} and \eqref{eq:fedcom_1_4} in \eqref{eq:fedcom_1_1}, we get the result,
\begin{equation}
    \EXP\left[\norm{\tilde{\g}_Q^{\round}}^2 \Big\vert \mathcal{F}_{n-1} \right] \leq \frac{2\tau_n}{m} \sum_{j=1}^m \sum_{b=1}^{\tau_n} \left( \frac{q_j^{\round}}{m} + 1 \right) \EXP\left[ \norm{\g_j^{b,\round}}^2 \Big\vert \mathcal{F}_{n-1} \right] + \left(\sum_{j=1}^m\frac{q_j^{\round}}{m} + 1 \right) \frac{2\tau_n \sigma^2}{m}.
\end{equation}
\end{proof}

The following lemma bounds the inner product between the true gradient evaluated at the global model at the start of a round and the approximate gradient received by the server. 
\begin{lemma}
Under Assumption \ref{ass:L-smooth}, the FedCOM-V updates follow,
\begin{equation*}
-\EXP\left[ \left\langle \nabla f(\w^{\round}),
    \tilde{\g}_Q^{\round} \right\rangle \Big\vert \mathcal{F}_{n-1} \right]
   \leq \frac{1}{2m} \sum_{j=1}^m \sum_{b=1}^{\tau_n} \Big( -\norm{\nabla f(\w^{\round})}^2 - \EXP\left[ \norm{\g_j^{b,\round}}^2 \Big\vert \mathcal{F}_{n-1}\right]
    + L^2 \EXP \left[ \norm{\w^{\round} - \w_j^{b,\round}}^2 \Big\vert \mathcal{F}_{n-1} \right] \Big).
\end{equation*}
\label{lm:fedcom_2}
\end{lemma}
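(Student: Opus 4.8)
The plan is to peel away the two sources of randomness in $\tilde{\g}_Q^{\round}$ (compression noise and mini-batch noise) one at a time using the respective unbiasedness properties, thereby reducing the inner product to one involving only the ``true'' local gradients $\g_j^{b,\round}$, and then to finish with the polarization identity together with $L$-smoothness.

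First I would dispense with the compressor. Conditioning on $\mathcal{F}_{\tau_n,n}$, which fixes all stochastic gradients of round $\round$ but not the compressors, Assumption \ref{ass:quantization} gives $\EXP[\tilde{\g}_{Qj}^{\round}\mid \mathcal{F}_{\tau_n,n}] = \tilde{\g}_j^{\round}$, hence $\EXP[\tilde{\g}_Q^{\round}\mid \mathcal{F}_{\tau_n,n}] = \frac{1}{m}\sum_{j=1}^m \tilde{\g}_j^{\round} = \tilde{\g}^{\round}$. Since $\nabla f(\w^{\round})$ is measurable in $\mathcal{F}_{n-1}\subseteq\mathcal{F}_{\tau_n,n}$, the tower rule yields $\EXP[\langle \nabla f(\w^{\round}), \tilde{\g}_Q^{\round}\rangle \mid \mathcal{F}_{n-1}] = \EXP[\langle \nabla f(\w^{\round}), \tilde{\g}^{\round}\rangle\mid\mathcal{F}_{n-1}]$.

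Next I would remove the mini-batch noise. Writing $\tilde{\g}^{\round} = \frac{1}{m}\sum_{j}\sum_{b} \tilde{\g}_j^{b,\round}$ and applying the tower rule with the finer filtration $\mathcal{F}_{b-1,n}$, I note that $\w^{\round}$ and $\w_j^{b,\round}$ (hence $\g_j^{b,\round}=\nabla f(\w_j^{b,\round})$) are measurable in $\mathcal{F}_{b-1,n}$, so the unbiasedness of Assumption \ref{ass:sg_variance} collapses each term to $\langle \nabla f(\w^{\round}), \g_j^{b,\round}\rangle$. This is the step requiring the most care: the later local steps depend on the earlier stochastic gradients through the model updates, but because the inner product is \emph{linear} in $\tilde{\g}_j^{b,\round}$ and we condition on $\mathcal{F}_{b-1,n}$, the reduction proceeds term-by-term with no cross terms, in contrast to the squared-norm estimate of Lemma \ref{lm:fedcom_1}. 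The upshot is $\EXP[\langle \nabla f(\w^{\round}), \tilde{\g}_Q^{\round}\rangle\mid\mathcal{F}_{n-1}] = \frac{1}{m}\sum_{j}\sum_{b} \EXP[\langle \nabla f(\w^{\round}), \g_j^{b,\round}\rangle\mid\mathcal{F}_{n-1}]$.

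Finally I would apply the polarization identity $-\langle \a, \b\rangle = \tfrac{1}{2}(\norm{\a-\b}^2 - \norm{\a}^2 - \norm{\b}^2)$ with $\a = \nabla f(\w^{\round})$ and $\b = \g_j^{b,\round} = \nabla f(\w_j^{b,\round})$, and bound the resulting difference-of-gradients term via $L$-smoothness (Assumption \ref{ass:L-smooth}), $\norm{\nabla f(\w^{\round}) - \nabla f(\w_j^{b,\round})}^2 \leq L^2\norm{\w^{\round} - \w_j^{b,\round}}^2$. Summing over $j$ and $b$ with the prefactor $\frac{1}{m}$, and using that $\norm{\nabla f(\w^{\round})}^2$ is $\mathcal{F}_{n-1}$-measurable, reproduces the claimed inequality exactly. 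I expect the only genuine subtlety to be the filtration bookkeeping in the second step; the remainder is the standard polarization-plus-smoothness manipulation.
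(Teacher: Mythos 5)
Your proposal matches the paper's proof essentially step for step: eliminate the compression noise via the tower rule at $\mathcal{F}_{\tau_n,n}$ and Assumption \ref{ass:quantization}, eliminate the mini-batch noise term-by-term via the tower rule at $\mathcal{F}_{b-1,n}$ and Assumption \ref{ass:sg_variance}, then apply the polarization identity and $L$-smoothness. The filtration point you flag as the main subtlety is handled identically (and implicitly) in the paper, so there is nothing further to add.
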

\begin{proof}
    \begin{align*}
        -\EXP\left[ \left\langle \nabla f(\w^{\round}),
            \tilde{\g}_Q^{\round} \right\rangle \Big\vert \mathcal{F}_{n-1} \right]
            &\stackrel{(a)}{=} -\EXP\left[ \left\langle \nabla f(\w^{\round}),
            \EXP\left[\tilde{\g}_Q^{\round} \Big\vert \mathcal{F}_{\tau_n,n} \right] \right\rangle \Big\vert \mathcal{F}_{n-1} \right], \\
            &\stackrel{(b)}{=} -\EXP\left[ \left\langle \nabla f(\w^{\round}),
            \tilde{\g}^{\round} \right\rangle \Big\vert \mathcal{F}_{n-1} \right], \\
            &\stackrel{(c)}{=} -\EXP\left[ \left\langle \nabla
                f(\w^{\round}), \frac{1}{m} \sum_{j=1}^m \sum_{b=1}^{\tau_n} \tilde{\g}_{j}^{b,n} \right\rangle \Big\vert \mathcal{F}_{n-1} \right], \\
            &\stackrel{(d)}{=} - \EXP \left[ \frac{1}{m} \sum_{j=1}^m \sum_{b=1}^{\tau_n} \left\langle \nabla
                f(\w^{\round}),\EXP\left[\tilde{\g}_{j}^{b,n} \Big\vert \mathcal{F}_{b-1,n} \right]\right\rangle \Big\vert \mathcal{F}_{n-1} \right] , \\
            &\stackrel{(e)}{=} -  \EXP \left[ \frac{1}{m} \sum_{j=1}^m \sum_{b=1}^{\tau_n} \left\langle \nabla
                f(\w^{\round}),\g_{j}^{b,n} \right\rangle \Big\vert \mathcal{F}_{n-1} \right] , \\
            &\stackrel{(f)}{=}  \EXP \left[ \frac{1}{2m} \sum_{j=1}^m \sum_{b=1}^{\tau_n} \left( - \norm{\nabla
                f(\w^{\round})}^2 - \norm{\g_{j}^{b,n}}^2 + \norm{\nabla
                f(\w^{\round}) - \g_{j}^{b,n}}^2 \right) \Big\vert \mathcal{F}_{n-1} \right], \\
            &\stackrel{(g)}{\leq} \EXP \left[ \frac{1}{2m} \sum_{j=1}^m \sum_{b=1}^{\tau_n} \left( - \norm{\nabla
                f(\w^{\round})}^2 - \norm{\g_{j}^{b,n}}^2 + L^2 \norm{\w^{\round} - \w_{j}^{b,n}}^2
                \right) \Big\vert \mathcal{F}_{n-1} \right], \\
    \end{align*}
    (a) follows by Law of Iterated Expectations since $\nabla{f}(\w^{\round})$ is measurable in $\mathcal{F}_{\tau_n,n}$. (b) follows since $\tilde{\g}_Q^n$ is an unbiased estimate of $\tilde{\g}^n$ by Assumption \ref{ass:quantization}. (c) follows from the definition of $\tilde{\g}^{\round}$. (d) follows from the Law of Iterated Expectations. (e) follows from the unbiased property of the stochastic gradients as stated in Assumption \ref{ass:sg_variance}. (f)
    follows from the relation $2\langle \x, \y \rangle = \norm{\x}^2 + \norm{\y}^2 -
    \norm{\x-\y}^2$. (g) follows from $L$-smoothness of Assumption \ref{ass:L-smooth} since $\g_j^{b,n} = \nabla f(\w_j^{b,n})$.
\end{proof}

The following Lemma bounds the distance between the global model at the start of a round to the local model at a local step of a client.
\begin{lemma}
Under Assumption \ref{ass:sg_variance}, FedCOM-V updates follow,
\[
\EXP\left[ \norm{\w^{\round} - \w_j^{b,\round}}^2 \Big\vert \mathcal{F}_{n-1} \right] 
        \leq 2\eta^2\tau_n \sum_{b=1}^{\tau_n} \EXP\left[ \norm{\g_j^{b,\round}}^2 \Big\vert \mathcal{F}_{n-1}  \right]  + 2\eta_\round^2\tau_n\sigma^2.
\]
\label{lm:fedcom_3}
\end{lemma}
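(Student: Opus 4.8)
The plan is to prove the bound by unrolling the local SGD recursion and then splitting the resulting partial sum of stochastic gradients into a ``true gradient'' component and a zero-mean noise component, treating each separately. This mirrors the structure already used in Lemma \ref{lm:var_g_n} and the proof of Lemma \ref{lm:fedcom_1}.

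First I would unroll the recursion. Since client $j$ initializes $\w_j^{1,\round} = \w^{\round}$ and updates as $\w_j^{a+1,\round} = \w_j^{a,\round} - \eta_\round \tilde{\g}_j^{a,\round}$, telescoping gives
\[
\w^{\round} - \w_j^{b,\round} = \eta_\round \sum_{a=1}^{b-1} \tilde{\g}_j^{a,\round},
\]
so that $\norm{\w^{\round} - \w_j^{b,\round}}^2 = \eta_\round^2 \norm{\sum_{a=1}^{b-1} \tilde{\g}_j^{a,\round}}^2$. Next I would apply the elementary inequality $\norm{\x}^2 \leq 2\norm{\x - \y}^2 + 2\norm{\y}^2$ (the same one invoked in the proof of Lemma \ref{lm:fedcom_1}) with $\x = \sum_{a=1}^{b-1} \tilde{\g}_j^{a,\round}$ and $\y = \sum_{a=1}^{b-1} \g_j^{a,\round}$, yielding
\[
\norm{\sum_{a=1}^{b-1} \tilde{\g}_j^{a,\round}}^2 \leq 2\norm{\sum_{a=1}^{b-1}\left(\tilde{\g}_j^{a,\round} - \g_j^{a,\round}\right)}^2 + 2\norm{\sum_{a=1}^{b-1} \g_j^{a,\round}}^2.
\]

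For the second (deterministic-gradient) term I would use Jensen's inequality / Cauchy--Schwarz exactly as in \eqref{eq:fedcom_1_4}, bounding $\norm{\sum_{a=1}^{b-1} \g_j^{a,\round}}^2 \leq (b-1)\sum_{a=1}^{b-1}\norm{\g_j^{a,\round}}^2 \leq \tau_\round \sum_{a=1}^{\tau_\round}\norm{\g_j^{a,\round}}^2$, and then take conditional expectation. For the first (noise) term, I would observe that each increment $\tilde{\g}_j^{a,\round} - \g_j^{a,\round}$ is conditionally zero-mean given $\mathcal{F}_{a-1,\round}$ (since $\g_j^{a,\round} = \nabla f(\w_j^{a,\round})$ is $\mathcal{F}_{a-1,\round}$-measurable and $\tilde{\g}_j^{a,\round}$ is an unbiased estimate by Assumption \ref{ass:sg_variance}), so the cross terms vanish under the tower property. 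This is precisely the telescoping martingale-difference argument carried out in Lemma \ref{lm:var_g_n}; repeating it for the partial sum up to $b-1$ gives $\EXP[\norm{\sum_{a=1}^{b-1}(\tilde{\g}_j^{a,\round} - \g_j^{a,\round})}^2 \mid \mathcal{F}_{\round-1}] = \sum_{a=1}^{b-1}\EXP[\norm{\tilde{\g}_j^{a,\round} - \g_j^{a,\round}}^2 \mid \mathcal{F}_{\round-1}] \leq (b-1)\sigma^2 \leq \tau_\round \sigma^2$.

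Combining the two bounds and multiplying by $\eta_\round^2$ gives
\[
\EXP\left[\norm{\w^{\round} - \w_j^{b,\round}}^2 \Big\vert \mathcal{F}_{\round-1}\right] \leq 2\eta_\round^2 \tau_\round \sum_{a=1}^{\tau_\round} \EXP\left[\norm{\g_j^{a,\round}}^2 \Big\vert \mathcal{F}_{\round-1}\right] + 2\eta_\round^2 \tau_\round \sigma^2,
\]
which is the claimed inequality (the $\eta^2$ in the stated first term being a typo for $\eta_\round^2$). I expect no serious obstacle here; the only point requiring care is ensuring the noise cross-terms truly cancel, which hinges on using the correct nested filtration $\mathcal{F}_{a-1,\round}$ and the unbiasedness in Assumption \ref{ass:sg_variance}, rather than any blanket independence assumption. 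Since that martingale structure is already established in Lemma \ref{lm:var_g_n}, I would simply invoke its partial-sum analogue instead of re-deriving the full telescoping expansion.
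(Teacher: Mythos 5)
Your proposal is correct and follows essentially the same route as the paper's proof: unroll the local update to get $\eta_\round\sum_{a=1}^{b-1}\tilde{\g}_j^{a,\round}$, split via $\norm{\x}^2\leq 2\norm{\x-\y}^2+2\norm{\y}^2$, bound the noise term by the martingale-difference/telescoping argument of Lemma \ref{lm:var_g_n} and the gradient term by Jensen, then relax $b-1$ to $\tau_\round$. Your observation that the $\eta^2$ in the statement should read $\eta_\round^2$ is also consistent with the paper's displayed derivation.
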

\begin{proof}
    \begin{align*}
    \EXP\left[ \norm{\w^{\round} - \w_j^{b,\round}}^2 \Big\vert \mathcal{F}_{n-1} \right]
        &\stackrel{(a)}{=} \EXP
            \left[ \norm{\eta_n \sum_{a=1}^{b-1} \tilde{\g}_{j}^{a, \round}}^2
            \Big\vert \mathcal{F}_{n-1} \right], \\
        &\stackrel{(b)}{\leq} 2\eta_n^2 \EXP \left[ \norm{ \sum_{a=1}^{b-1} \left(\tilde{\g}_{j}^{a, \round} - \g_j^{a,\round}\right)}^2 \Big\vert \mathcal{F}_{n-1} \right] + 2\eta_n^2 \EXP \left[ \norm{ \sum_{a=1}^{b-1} \g_j^{a,\round} }^2 \Big\vert \mathcal{F}_{n-1} \right]
            , \\
        &\stackrel{(c)}{\leq} 2\eta_n^2(b-1)\sigma^2 + 2\eta_n^2 \EXP\left[ \norm{ \sum_{a=1}^{b-1} \g_j^{a,\round} }^2 \Big\vert \mathcal{F}_{n-1} \right]
            , \\
        &\stackrel{(d)}{\leq} 2\eta_n^2(b-1)\sigma^2 + 2\eta_n^2 (b-1) \sum_{a=1}^{b-1} \EXP \left[ \norm{\g_j^{a,\round} }^2 \Big\vert \mathcal{F}_{n-1} \right]
            , \\
        &\leq 2\eta_n^2\tau_n\sigma^2 + 2\eta_n^2 \tau_n \sum_{a=1}^{\tau_n} \EXP\left[ \norm{\g_j^{a,\round} }^2 \Big\vert \mathcal{F}_{n-1} \right].
    \end{align*}
    (a) follows from the local update rule of Algorithm \ref{algo:fedcom}. (b) follows from the identity, $\norm{\x}^2 \leq 2\norm{\x-\y}^2+2\norm{\y}^2$. (c) follows from a similar calculation as in the proof of Lemma \ref{lm:var_g_n}. (d) follows from
    Jensen's inequality.
\end{proof}

We prove Theorem \ref{th:fedcom_informal} in two steps. First, in Theorem \ref{th:fedcom_nonconvex_2}, we bound the convergence rate of FedCOM-V for a general choice of learning rates and local computations. Then, in Theorem \ref{th:fedcom_nonconvex_formal}, we prove a more explicit form of Theorem \ref{th:fedcom_informal} for a specific choice of learning rates and local computations.

\begin{theorem}
Under Assumptions \ref{ass:L-smooth} to \ref{ass:quantization}, if the local learning rates $\left(\eta_{\round}\right)$, local computations $\left( \tau_{\round} \right)$ and global learning rates $\left( \gamma_{\round} \right)$ satisfy,
\[
1 \geq 2\tau_{\round}^2L^2\eta_{\round}^2 + 2\left(\frac{q_{\max}}{m} + 1 \right) \eta_{\round}\gamma_{\round} L\tau_{\round}, \quad \forall \round,
\]
then, the FedCOM-V updates satisfy,
\[
\frac{\sum_{\round=1}^{r} \eta_{\round}\tau_{\round}\gamma_{\round} \EXP\left[\norm{\nabla f(\w^{\round})}^2 \Big\vert \mathcal{F}^C \right]}{\sum_{\round=0}^{r-1} \eta_{\round}\tau_{\round}\gamma_{\round}} 
\leq 
    \frac{2\left(f(\w^{(0)})-f(\w^*)\right)}{ \sum_{\round=0}^{r-1} \eta_{\round}\tau_{\round}\gamma_{\round}} 
    + \frac{2L\sigma^2}{m} \frac{\sum_{\round=0}^{r-1} \eta_{\round}^2\tau_{\round}\gamma_{\round}^2 (\bar{q}^{\round}+1)}{\sum_{\round=0}^{r-1} \eta_{\round}\tau_{\round}\gamma_{\round}} 
    + 2L^2\sigma^2\frac{\sum_{\round=0}^{r-1} \eta_{\round}^3\tau_{\round}^2\gamma_{\round}}{\sum_{\round=0}^{r-1} \eta_{\round}\tau_{\round}\gamma_{\round}}.
\]
\label{th:fedcom_nonconvex_2}
\end{theorem}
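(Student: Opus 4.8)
The plan is to establish a one-step descent recursion from $L$-smoothness, absorb all the auxiliary quadratic terms using the three preliminary lemmas, and then telescope across rounds.

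\textbf{Step 1 (descent inequality).} First I would apply the $L$-smoothness of $f$ (Assumption \ref{ass:L-smooth}) to the consecutive global iterates and substitute the server update rule $\w^{\round+1} = \w^{\round} - \eta_{\round}\gamma_{\round}\tilde{\g}_Q^{\round}$, which yields
\begin{equation*}
f(\w^{\round+1}) \leq f(\w^{\round}) - \eta_{\round}\gamma_{\round}\langle \nabla f(\w^{\round}), \tilde{\g}_Q^{\round}\rangle + \frac{L\eta_{\round}^2\gamma_{\round}^2}{2}\norm{\tilde{\g}_Q^{\round}}^2.
\end{equation*}
Since $\w^{\round}$ is $\mathcal{F}_{\round-1}$-measurable, I would take the conditional expectation $\EXP[\,\cdot\,|\mathcal{F}_{\round-1}]$ and apply Lemma \ref{lm:fedcom_2} to the inner-product term and Lemma \ref{lm:fedcom_1} to $\EXP[\norm{\tilde{\g}_Q^{\round}}^2|\mathcal{F}_{\round-1}]$. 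The resulting upper bound contains only the target quantity $\norm{\nabla f(\w^{\round})}^2$, the client-drift terms $\EXP[\norm{\w^{\round}-\w_j^{b,\round}}^2|\mathcal{F}_{\round-1}]$, and the gradient energies $\EXP[\norm{\g_j^{b,\round}}^2|\mathcal{F}_{\round-1}]$, plus a $\sigma^2$ noise floor.

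\textbf{Step 2 (absorbing the drift and gradient-energy terms).} Next I would bound each drift term via Lemma \ref{lm:fedcom_3}, re-expressing $\EXP[\norm{\w^{\round}-\w_j^{b,\round}}^2|\mathcal{F}_{\round-1}]$ once more through $\EXP[\norm{\g_j^{b,\round}}^2|\mathcal{F}_{\round-1}]$ and a $\sigma^2$ term. Collecting all contributions, the leading term is $-\tfrac{1}{2}\eta_{\round}\gamma_{\round}\tau_{\round}\norm{\nabla f(\w^{\round})}^2$, while the total coefficient multiplying $\sum_{j,b}\EXP[\norm{\g_j^{b,\round}}^2|\mathcal{F}_{\round-1}]$ works out to
\begin{equation*}
\frac{\eta_{\round}\gamma_{\round}}{2m}\left(-1 + 2\Big(\tfrac{q_{\max}}{m}+1\Big)\eta_{\round}\gamma_{\round}L\tau_{\round} + 2\eta_{\round}^2 L^2\tau_{\round}^2\right).
\end{equation*}
The stated learning-rate condition is precisely what forces the bracketed expression to be non-positive, so these non-negative gradient-energy terms can be discarded from the upper bound. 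What survives is the clean recursion
\begin{equation*}
\EXP[f(\w^{\round+1})|\mathcal{F}_{\round-1}] \leq f(\w^{\round}) - \frac{\eta_{\round}\gamma_{\round}\tau_{\round}}{2}\norm{\nabla f(\w^{\round})}^2 + \frac{L\eta_{\round}^2\gamma_{\round}^2\tau_{\round}\sigma^2}{m}(\bar{q}^{\round}+1) + L^2\eta_{\round}^3\gamma_{\round}\tau_{\round}^2\sigma^2.
\end{equation*}

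\textbf{Step 3 (telescoping).} Finally I would take a further expectation conditioned on $\mathcal{F}^C$ via the tower property, rearrange to isolate $\tfrac{1}{2}\eta_{\round}\gamma_{\round}\tau_{\round}\EXP[\norm{\nabla f(\w^{\round})}^2|\mathcal{F}^C]$ on the left, and sum over the rounds. The function-value differences telescope, and lower-bounding the final iterate by $f^* = f(\w^*)$ (Assumption \ref{ass:L-smooth}) controls the boundary term. Multiplying by $2$ and dividing by $\sum_{\round}\eta_{\round}\gamma_{\round}\tau_{\round}$ produces exactly the three claimed summands. The main obstacle is the bookkeeping in Step 2: correctly propagating the double sums over clients $j$ and local steps $b$ through the three lemmas (each inflating the bound by a factor of $\tau_{\round}$), and verifying that the aggregate gradient-energy coefficient matches the two terms of the learning-rate condition so that it can be dropped. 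The nested conditional-expectation and measurability details (that $f(\w^{\round})$ is $\mathcal{F}_{\round-1}$-measurable and that $\tilde{\g}_Q^{\round}$ is handled at the correct filtration level) also warrant care but are otherwise routine.
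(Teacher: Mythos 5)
Your proposal matches the paper's own proof essentially step for step: the same $L$-smoothness descent inequality, the same use of Lemmas \ref{lm:fedcom_1}, \ref{lm:fedcom_2}, and \ref{lm:fedcom_3} to reduce everything to the gradient-energy terms, the same observation that the stated learning-rate condition makes their aggregate coefficient non-positive, and the same telescoping with the lower bound $f(\w^*)$. The one-step recursion you write down is exactly the paper's inequality (d), so the argument is correct and not a different route.
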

\begin{proof}
Recall the global update rule, $\w^{\round+1} = \w^{\round} - \eta_{\round}\gamma_{\round} \tilde{\g}_Q^{\round}$. From the $L$-smoothness of $f(\cdot)$, we can write,
\[
f(\w^{\round+1}) - f(\w^{\round}) 
    \leq 
    -\eta_{\round}\gamma_{\round} \langle \nabla f(\w^{\round}), \tilde{\g}_Q^{\round} \rangle 
    + \frac{\eta_{\round}^2\gamma_{\round}^2L}{2} \norm{\tilde{\g}_Q^{\round}}^2.
\]
Now, we bound the conditional expectation,
\begin{align}
    \EXP \left[ f(\w^{\round+1}) - f(\w^{\round}) \big\vert \mathcal{F}_{n-1} \right] 
    &\leq 
        -\eta_{\round}\gamma_{\round} \EXP \left[ \langle \nabla f(\w^{\round}), \tilde{\g}_Q^{\round} \rangle \Big\vert \mathcal{F}_{n-1} \right]  
        + \frac{\eta_{\round}^2\gamma_{\round}^2L}{2} \EXP \left[ \norm{\tilde{\g}_Q^{\round}}^2 \Big\vert \mathcal{F}_{n-1} \right], \nonumber \\
    &\stackrel{(a)}{\leq} \frac{\eta_{\round}\gamma_{\round}}{2m} \sum_{j=1}^m \sum_{b=1}^{\tau_n} \Big( -\norm{\nabla f(\w^{\round})}^2 - \EXP\left[\norm{\g_j^{b,\round}}^2 \Big\vert \mathcal{F}_{n-1} \right]  + L^2\EXP\left[\norm{\w^{\round} - \w_j^{b,\round}}^2 \Big\vert \mathcal{F}_{n-1} \right] \Big) \nonumber\\
   &\quad+ \frac{2\tau_{\round} L\eta_{\round}^2\gamma_{\round}^2}{2m} \left(\frac{q_\max}{m}+1 \right) \sum_{j=1}^m \sum_{b=1}^{\tau_{\round}} \EXP\left[\norm{\g_j^{b,\round}}^2 \Big\vert \mathcal{F}_{n-1} \right] + \left(\bar{q}^{\round} + 1 \right) \frac{2\tau_{\round} L\eta_{\round}^2\gamma_{\round}^2\sigma^2}{2m}, \\
   &\stackrel{(b)}{\leq} \frac{\eta_{\round}\gamma_{\round}}{2m} \sum_{j=1}^m \sum_{b=1}^{\tau_{\round}} \Big(-\norm{\nabla f(\w^{\round})}^2 - \EXP\left[\norm{\g_j^{b,\round}}^2 \Big\vert \mathcal{F}_{n-1} \right] 
   + 2L^2\eta_{\round}^2\tau_{\round}\sum_{b'=1}^{\tau_{\round}} \EXP\left[\norm{\g_j^{b',\round}}^2 \Big\vert \mathcal{F}_{n-1} \right]
   \nonumber \\  &\quad
    + 2L^2\eta_{\round}^2\tau_{\round}\sigma^2 \Big) 
   + \frac{\tau_{\round} L\eta_{\round}^2\gamma_{\round}^2}{m} \left(\frac{q_\max}{m}+1\right) \sum_{j=1}^m \sum_{b=1}^{\tau_{\round}} \EXP\left[\norm{\g_j^{b,\round}}^2 \Big\vert \mathcal{F}_{n-1}\right] + \left(\bar{q}^{\round} + 1 \right) \frac{\tau_{\round} L\eta_{\round}^2\gamma_{\round}^2\sigma^2}{m}, \\
   &\stackrel{(c)}{=} -\frac{\eta_{\round}\gamma_{\round}\tau_{\round}}{2} \norm{\nabla f(\w^{\round})}^2 + \frac{L\tau_{\round}\eta_{\round}^2\gamma_{\round}}{m}(mL\tau_{\round}\eta_{\round} + \gamma_{\round}(\bar{q}^{\round}+1))\sigma^2 \nonumber \\
   &\quad - \frac{\eta_{\round}\gamma_{\round}}{2m}\left(1 - 2L^2\eta_{\round}^2\tau_{\round}^2 -2L\tau_{\round}\eta_{\round}\gamma_{\round}\left(\frac{q_{\max}}{m} + 1 \right) \right)  \sum_{j=1}^m\sum_{b=1}^{\tau_{\round}} \EXP\left[\norm{\g_j^{b,\round}}^2 \Big\vert \mathcal{F}_{n-1} \right] , \nonumber \\
   &\stackrel{(d)}{\leq}  -\frac{\eta_{\round}\gamma_{\round}\tau_{\round}}{2} \norm{\nabla f(\w^{\round})}^2 + \frac{L\tau_{\round}\eta_{\round}^2\gamma_{\round}}{m}(mL\tau_{\round}\eta_{\round} + \gamma_{\round}(\bar{q}^{\round}+1))\sigma^2, \label{eq:fedcom_4_2}
\end{align}
where, (a) is obtained by using Lemmas \ref{lm:fedcom_1} and \ref{lm:fedcom_2}, and (b) is obtained by using Lemma \ref{lm:fedcom_3}. (c) is a rearrangement of terms, and (d) follows from the premise of the theorem,
\[
1 \geq 2\tau_{\round}^2L^2\eta_{\round}^2 + 2\left(\frac{q_{\max}}{m} + 1 \right) \eta_{\round}\gamma_{\round} L\tau_{\round}.
\]

Taking an expectation, rearranging terms and summing up equation \eqref{eq:fedcom_4_2} for all the rounds $r$, we have a telescopic cancellation to get,
\begin{align}
\frac{\sum_{\round=1}^{r} \eta_{\round}\tau_{\round}\gamma_{\round} \EXP\left[\norm{\nabla f(\w^{\round})}^2 \Big\vert \mathcal{F}^C \right]}{\sum_{\round=0}^{r-1} \eta_{\round}\tau_{\round}\gamma_{\round}} 
&=
    \frac{2\left(f(\w^{(0)})-f(\w^{r})\right)}{ \sum_{\round=0}^{r-1} \eta_{\round}\tau_{\round}\gamma_{\round}} 
    + \frac{2L\sigma^2}{m} \frac{\sum_{\round=0}^{r-1} \eta_{\round}^2\tau_{\round}\gamma_{\round}^2 (\bar{q}^{\round}+1)}{\sum_{\round=0}^{r-1} \eta_{\round}\tau_{\round}\gamma_{\round}} 
    + 2L^2\sigma^2\frac{\sum_{\round=0}^{r-1} \eta_{\round}^3\tau_{\round}^2\gamma_{\round}}{\sum_{\round=0}^{r-1} \eta_{\round}\tau_{\round}\gamma_{\round}}, \nonumber \\
&\leq 
    \frac{2\left(f(\w^{(0)})-f(\w^*)\right)}{ \sum_{\round=0}^{r-1} \eta_{\round}\tau_{\round}\gamma_{\round}} 
    + \frac{2L\sigma^2}{m} \frac{\sum_{\round=0}^{r-1} \eta_{\round}^2\tau_{\round}\gamma_{\round}^2 (\bar{q}^{\round}+1)}{\sum_{\round=0}^{r-1} \eta_{\round}\tau_{\round}\gamma_{\round}} 
    + 2L^2\sigma^2\frac{\sum_{\round=0}^{r-1} \eta_{\round}^3\tau_{\round}^2\gamma_{\round}}{\sum_{\round=0}^{r-1} \eta_{\round}\tau_{\round}\gamma_{\round}}.
\end{align}
\end{proof}

\begin{theorem}
If we choose the learning rates for round $\round$, $\eta_{\round}$ and $\gamma_{\round}$, and the number of local computations $\tau_{\round}$ as,
\[
\eta_{\round} = \frac{c_{\eta}}{L\round}, \quad \gamma_{\round} = \frac{c_{\gamma}}{\sqrt{\bar{q}^{\round}+1}}, \quad \tau_{\round} = \frac{\round}{2c_{\eta}}, 
\]
where, 
\[
c_{\eta} = 2\left(\frac{L\Delta_f \sqrt{m}}{\sigma} \left(\frac{q_{\max}}{m} +1 \right)\right)^2, \quad c_{\gamma} = \frac{1}{2\left(\frac{q_{\max}}{m}+1\right)},
\]
where, $\Delta_f \triangleq \sqrt{2(f(\w^0)-f(\w^*))/L}$, and, if FedCOM-V is run for $r$ communication rounds such that,
\begin{equation}
\frac{r}{1+\log r} \geq \max\left\{\left(\frac{q_\max}{m} + 1\right)^2\frac{4L^2\Delta_f^2m\sqrt{q_\max + 1}}{ \varepsilon}, \left(\frac{q_\max}{m} + 1 \right) \frac{12L^2\Delta_f^2 \sigma}{\varepsilon} \frac{\sum_{\round=1}^r \sqrt{\bar{q}^{\round}+1}}{r} \right\},
\label{eq:fedcom_round_bound}
\end{equation}
then we have,
\begin{equation}
\frac{\sum_{\round=1}^r \eta_{\round}\gamma_{\round}\tau_{\round} \EXP\left[\norm{f(\w^{\round})}^2 \Big\vert \mathcal{F}^C \right]}{\sum_{\round=1}^r \eta_{\round}\tau_{\round}\gamma_{\round}} \leq \varepsilon.
\label{eq:th_fedcom_result}
\end{equation}
\label{th:fedcom_nonconvex_formal}
\end{theorem}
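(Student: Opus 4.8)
The plan is to derive Theorem~\ref{th:fedcom_nonconvex_formal} as a direct specialization of the general bound in Theorem~\ref{th:fedcom_nonconvex_2}, so the first task is to check that the prescribed schedule is admissible, i.e.\ that the hypothesis $1 \ge 2\tau_{\round}^2 L^2 \eta_{\round}^2 + 2(\tfrac{q_{\max}}{m}+1)\eta_{\round}\gamma_{\round} L \tau_{\round}$ holds for every $\round$. The engine of the whole argument is the observation that the chosen rates are tuned so that the product $\eta_{\round}\tau_{\round} = \tfrac{1}{2L}$ is \emph{constant} in $\round$, and that $\gamma_{\round}^2(\bar{q}^{\round}+1) = c_{\gamma}^2$. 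Substituting these into the admissibility condition makes the first term equal to $\tfrac12$ and the second term equal to $(\tfrac{q_{\max}}{m}+1)\gamma_{\round} = \tfrac{1}{2\sqrt{\bar{q}^{\round}+1}}\le \tfrac12$, so the right-hand side is at most $1$ and Theorem~\ref{th:fedcom_nonconvex_2} applies.

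Second, I would substitute the schedule into the three terms on the right-hand side of Theorem~\ref{th:fedcom_nonconvex_2}, after aligning the summation ranges (treating the $\round=0$ index as $\round=1$, since $\tau_0=0$). Using $\eta_{\round}\tau_{\round}=\tfrac{1}{2L}$, the common denominator, which I denote $S$, becomes $S=\sum_{\round}\eta_{\round}\tau_{\round}\gamma_{\round} = \tfrac{c_{\gamma}}{2L}\sum_{\round}(\bar{q}^{\round}+1)^{-1/2}$. The three numerators then collapse: the optimization-gap term is $\tfrac{L\Delta_f^2}{S}$ (using $2(f(\w^{0})-f(\w^*))=L\Delta_f^2$); the compression-variance term reduces, via $\eta_{\round}^2\tau_{\round}\gamma_{\round}^2(\bar{q}^{\round}+1)=\tfrac{c_{\eta}c_{\gamma}^2}{2L^2\round}$, to a multiple of the harmonic sum $\sum_{\round}\tfrac1\round$; and the local-drift term, via $\eta_{\round}^3\tau_{\round}^2\gamma_{\round}=\tfrac{c_{\eta}c_{\gamma}}{4L^3\round\sqrt{\bar{q}^{\round}+1}}$, becomes a multiple of $\sum_{\round}(\round\sqrt{\bar{q}^{\round}+1})^{-1}$.

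Third, I would bound each term. The first two are the easy branch: $\bar{q}^{\round}\le q_{\max}$ gives $\sum_{\round}(\bar{q}^{\round}+1)^{-1/2}\ge r/\sqrt{q_{\max}+1}$, and $\sum_{\round}\tfrac1\round\le 1+\log r$, which together bound the first term by $O(1/r)$ and the second by $O((1+\log r)/r)$; imposing the first branch of \eqref{eq:fedcom_round_bound} and plugging in $c_{\eta},c_{\gamma},\Delta_f$ forces both below $\varepsilon/3$. The delicate term is the local-drift term: here I would relate the numerator sum to the denominator sum through Cauchy--Schwarz, $\big(\sum_{\round}(\bar{q}^{\round}+1)^{-1/2}\big)\big(\sum_{\round}\sqrt{\bar{q}^{\round}+1}\big)\ge r^2$, so that $S^{-1}\le \tfrac{2L}{c_{\gamma}}\cdot\tfrac{1}{r^2}\sum_{\round}\sqrt{\bar{q}^{\round}+1}$; combined with $\sum_{\round}(\round\sqrt{\bar{q}^{\round}+1})^{-1}\le 1+\log r$ this produces exactly the $\tfrac{1+\log r}{r}\cdot\tfrac{1}{r}\sum_{\round}\sqrt{\bar{q}^{\round}+1}$ structure appearing in the second branch of \eqref{eq:fedcom_round_bound}, which caps this term at $\varepsilon/3$. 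Summing the three contributions yields \eqref{eq:th_fedcom_result}.

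I expect the \textbf{main obstacle} to be the local-drift term: it is the only one whose numerator carries the $\bar{q}^{\round}$-dependence in a way that does not cancel against the denominator, so controlling it requires the Cauchy--Schwarz step above to re-express $\sum_{\round}(\bar{q}^{\round}+1)^{-1/2}$ in terms of the averaged quantity $\tfrac1r\sum_{\round}\sqrt{\bar{q}^{\round}+1}$ that drives the round requirement. A secondary bookkeeping hurdle is reconciling the exact constants and the $\sigma$ and $m$ powers between my bound and the stated form of \eqref{eq:fedcom_round_bound}; I would absorb these into the (loose) multiplicative constants of the $\max$, verifying only that the stated condition is sufficient rather than tight.
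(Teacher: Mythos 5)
Your setup is right and matches the paper's: you invoke Theorem~\ref{th:fedcom_nonconvex_2}, verify its step-size hypothesis via $\eta_{\round}\tau_{\round}=\tfrac{1}{2L}$ and $(\tfrac{q_{\max}}{m}+1)\gamma_{\round}\le\tfrac12$, and your substitutions $\eta_{\round}^2\tau_{\round}\gamma_{\round}^2(\bar q^{\round}+1)=\tfrac{c_\eta c_\gamma^2}{2L^2\round}$ and $\eta_{\round}^3\tau_{\round}^2\gamma_{\round}=\tfrac{c_\eta c_\gamma}{4L^3\round\sqrt{\bar q^{\round}+1}}$ are all correct. The gap is in the final assembly: you have swapped which inequality is paired with which term, and the two branches of \eqref{eq:fedcom_round_bound} are tailored to the opposite pairing. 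The paper applies the AM--HM (Cauchy--Schwarz) step $\bigl(\sum_{\round}(\bar q^{\round}+1)^{-1/2}\bigr)^{-1}\le r^{-2}\sum_{\round}\sqrt{\bar q^{\round}+1}$ to the \emph{first two} terms (whose combined coefficient after substituting $c_\eta,c_\gamma$ is $6L^2\Delta_f^2(\tfrac{q_{\max}}{m}+1)$, matching the second branch's $12L^2\Delta_f^2(\tfrac{q_{\max}}{m}+1)$), and the crude bound $\sum_{\round}(\bar q^{\round}+1)^{-1/2}\ge r/\sqrt{q_{\max}+1}$ to the \emph{local-drift} term (whose coefficient $\sigma^2 c_\eta=2L^2\Delta_f^2 m(\tfrac{q_{\max}}{m}+1)^2$ matches the first branch's $4L^2\Delta_f^2 m(\tfrac{q_{\max}}{m}+1)^2\sqrt{q_{\max}+1}$).

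Under your pairing, the local-drift term bounded via Cauchy--Schwarz becomes $2L^2\Delta_f^2 m(\tfrac{q_{\max}}{m}+1)^2\cdot\tfrac{1+\log r}{r}\cdot\tfrac1r\sum_{\round}\sqrt{\bar q^{\round}+1}$, and dividing by the second branch's coefficient leaves a residual factor of $\tfrac{m(q_{\max}/m+1)}{6}=\tfrac{m+q_{\max}}{6}$, i.e.\ a bound of $\tfrac{(m+q_{\max})\varepsilon}{6}$ rather than $\varepsilon/3$. This is not a universal constant that can be ``absorbed into the $\max$'' as you suggest: it is a problem-dependent factor, so the stated round condition does not suffice under your assembly (even in the most favorable case $m=1$, $q_{\max}=0$ your three bounds sum to $\tfrac{5\varepsilon}{3}$). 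Note also that pairing your Cauchy--Schwarz treatment of the drift term with the \emph{first} branch instead does close (via $\tfrac1r\sum_{\round}\sqrt{\bar q^{\round}+1}\le\sqrt{q_{\max}+1}$), but that composition is algebraically identical to the paper's crude bound, so the step you flag as the main obstacle is really a detour; the fix is simply to restore the paper's pairing. (Separately, the $\sigma$ appearing in the second branch of \eqref{eq:fedcom_round_bound} does not arise in the paper's own derivation either and appears to be a typo; you need not account for it.)
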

\begin{proof}[Proof of Theorem \ref{th:fedcom_nonconvex_formal}]
This proof will use the result of Theorem \ref{th:fedcom_nonconvex_2}. Therefore, we first check that the premise of Theorem \ref{th:fedcom_nonconvex_2} is satisfied. Due to the choice of $\eta_{\round}$, $\tau_{\round}$ and $\gamma_{\round}$,
\begin{align*}
    2\tau_{\round}^2L^2\eta_{\round}^2 + 2\left(\frac{q_{\max}}{m} + 1 \right) \eta_{\round}\gamma_{\round} L\tau_{\round} 
    &= 1/2 + \left(\frac{q_{\max}}{m}+1\right)c_{\gamma}, \\
    &= 1.
\end{align*}

Therefore, the following result of Theorem \ref{th:fedcom_nonconvex_2} holds true.
\[
\frac{\sum_{\round=1}^r \eta_{\round}\tau_{\round}\gamma_{\round} \EXP\left[\norm{\nabla f(\w^{\round})}^2 \Big\vert \mathcal{F}^C \right]}{\sum_{\round=1}^r \eta_{\round}\tau_{\round}\gamma_{\round}} 
    \leq 
    \underbrace{\frac{2\left(f(\w^{0})-f(\w^*)\right)}{ \sum_{\round=1}^r \eta_{\round}\tau_{\round}\gamma_{\round}} + \frac{2L\sigma^2}{m} \frac{\sum_{\round=1}^r \eta_{\round}^2\tau_{\round}\gamma_{\round}^2 (\bar{q}^{\round}+1)}{\sum_{\round=1}^r \eta_{\round}\tau_{\round}\gamma_{\round}}}_{\text{\rom{1}}} + \underbrace{2L^2\sigma^2\frac{\sum_{\round=1}^r \eta_{\round}^3\tau_{\round}^2\gamma_{\round}}{\sum_{\round=1}^r \eta_{\round}\tau_{\round}\gamma_{\round}}}_{\text{\rom{2}}}.
\]

Consider \rom{1},
\begin{align}
    \text{\rom{1}} &\stackrel{(a)}{=} \frac{4L(f(\w^{(0)})-f(\w^*))}{ c_\gamma \sum_{\round=1}^r 1/\sqrt{\bar{q}^{\round}+1}} + \frac{2\sigma^2 c_\eta c_\gamma}{m} \frac{\sum_{\round = 1}^r 1/\round}{\sum_{\round=1}^r 1/\sqrt{\bar{q}^{\round}+1}}, \nonumber \\
    &\stackrel{(b)}{\leq}   \frac{4L(f(\w^{(0)})-f(\w^*))}{ c_\gamma \sum_{\round=1}^r 1/\sqrt{\bar{q}^{\round}+1}} + \frac{2\sigma^2 c_\eta c_\gamma}{m} \frac{1+\log r}{\sum_{\round=1}^r 1/\sqrt{\bar{q}^{\round}+1}}, \nonumber \\
    &\stackrel{(c)}{\leq} \frac{2(1+\log r)}{\sum_{\round=1}^r 1/\sqrt{\bar{q}^{\round}+1}} \left( \frac{L^2 \Delta_f^2}{c_\gamma} + \frac{\sigma^2 c_\eta c_\gamma}{m} \right), \nonumber \\
     &\stackrel{(d)}{=} \frac{1+\log r}{\sum_{\round=1}^r 1/\sqrt{\bar{q}^{\round}+1}}6L^2\Delta_f^2 \left(\frac{q_\max}{m} + 1 \right), \nonumber \\
     &\stackrel{(e)}{\leq} 6L^2\Delta_f^2 \left(\frac{q_\max}{m} + 1 \right) \frac{1+\log r}{r}  \frac{\sum_{\round=1}^r \sqrt{\bar{q}^{\round}+1}}{r}.
\end{align}
(a) is obtained by substituting the expressions for $\eta_n$, $\gamma_n$ and $\tau_n$. (b) is obtained by the bound, $\sum_{n=1}^r 1/n \leq 1+\log r$. (c) is a rearrangement of terms and the bound $1 \leq 1 + \log r$. (d) is obtained by substituting the expressions for $c_\eta$ and $c_\gamma$. (e) is obtained using the result that the harmonic mean is smaller than the arithmetic mean.

Consider \rom{2},
\begin{align}
    \text{\rom{2}}
    &= 2L^2\sigma^2\frac{\sum_{\round=1}^r \eta_{\round}^3\tau_{\round}^2\gamma_{\round}}{\sum_{\round=1}^{r} \eta_{\round}\tau_{\round}\gamma_{\round}}, \nonumber \\
    &= \sigma^2 c_\eta \frac{\sum_{\round=1}^{r} 1/(\round\sqrt{\bar{q}^{\round}+1})}{\sum_{\round=1}^r,  1/\sqrt{\bar{q}^{\round}+1}}, \nonumber \\
    &\leq \sigma^2 c_\eta \sqrt{q_\max+1} \frac{\sum_{\round=1}^r 1/\round}{r}, \nonumber \\
    &\leq \sigma^2 c_\eta \sqrt{q_\max+1} \frac{1+\log r}{r}, \nonumber \\
    &= 2\left(\frac{q_\max}{m} + 1\right)^2 L^2\Delta_f^2m\sqrt{q_\max + 1} \frac{1+\log r}{r}.
\end{align}

Therefore, if we chose the total number of communication rounds $r$ such that,
\begin{equation}
6L^2\Delta_f^2 \left(\frac{q_\max}{m} + 1 \right) \frac{1+\log r}{r}  \frac{\sum_{\round=1}^r \sqrt{\bar{q}^{\round}+1}}{r} \leq \frac{\varepsilon}{2},
\label{eq:fedcom_round_bound_1}
\end{equation}
and,
\begin{equation}
2\left(\frac{q_\max}{m} + 1\right)^2 L^2\Delta_f^2m\sqrt{q_\max + 1} \frac{1+\log r}{r} \leq \frac{\varepsilon}{2},
\label{eq:fedcom_round_bound_2}
\end{equation}
then \eqref{eq:th_fedcom_result} is satisfied. \eqref{eq:fedcom_round_bound_1} and \eqref{eq:fedcom_round_bound_2} are simply a restatement of \eqref{eq:fedcom_round_bound}. This completes the proof.
\end{proof}

If the compression parameters $\left( \Q^n \right)_n$ formed a stationary process with a stationary distribution according to a random variable $\Q$, then $\sum_{n=1}^r \sqrt{\bar{Q}^n+1}/r \to \EXP[\sqrt{\bar{Q}+1}]$. Therefore, Theorem \ref{th:fedcom_nonconvex_formal} proves Theorem \ref{th:fedcom_informal}.

\end{appendices}
\end{document}